\newtheorem{theorem}{Theorem}[section]
\newtheorem{corollary}[theorem]{Corollary}
\newtheorem{definition}[theorem]{Definition}
\newtheorem{example}[theorem]{Example}
\newtheorem{exercise}[theorem]{Exercise}
\newtheorem{lemma}[theorem]{Lemma}
\newtheorem{proposition}[theorem]{Proposition}
\newtheorem{remark}[theorem]{Remark}
\newtheorem{notation}[theorem]{Notation}
\newcommand{\bE}{\mathbb{E}}
\newcommand{\N}{\mathbb{N}}
\newcommand{\R}{\mathbb{R}}
\newcommand{\wC}{\mathcal{C}}
\newcommand{\cH}{\mathcal{H}}
\newcommand{\cL}{\mathcal{L}}
\newcommand{\cX}{\mathcal{X}}
\newcommand{\floor}[1]{\lfloor #1 \rfloor}
\newcommand{\dd}{\,\mathrm{d}}
\DeclareMathOperator*{\argmin}{arg\,min}
\definecolor{ocean}{rgb}{0,0.1,0.6}
\newcommand{\notthis}[1]{}
\newcommand{\norm}[1]{\left\lVert#1\right\rVert}
\title{Lecture Notes on Rough Paths and Applications to Machine Learning}
\author{Thomas Cass and Cristopher Salvi}
\date{Spring 2024}
\begin{document}

\maketitle


\tableofcontents

\addcontentsline{toc}{chapter}{Introduction}
\chapter*{Introduction}
These lecture notes introduce a mathematical concept, that of describing a path through its iterated integrals, and explain how the resulting theory can be used in data science and machine learning tasks involving streamed data. The study of iterated integrals has deep-seated roots in both mathematics and physics, with the early mathematical work stretching back at least to a series of papers by K-T. Chen, see e.g. \cite{chen1954iterated, chen1957integration, chen1961formal}. Chen's contribution spanned various domains, with a central emphasis being on homotopy theory and the use of path space integration to illuminate the interplay between topology and analysis\footnote{An overview of Chen's contributions can be found in the introduction to his collected works \cite{chen2001collected}}. Without providing an exhaustive historical narrative, related contemporaneous developments also appeared in physics, in the form of the Dyson exponential \cite{dyson1949radiation}, and the subsequent work of W. Magnus \cite{magnus1954exponential}.  Later insights revealed how similar principles can be applied in the control theory of nonlinear systems, catalysed by the work of Fliess \cite{fliess1983algebraic}. Consequently, what we term the ``signature'' often appears as the eponymous Chen-Fliess series in certain parts of the literature. 

By building on these foundations, B. Hambly and T. Lyons showed in a landmark 2010 paper \cite{hambly2010uniqueness} how, for two continuous paths of finite 1-variation, the agreement of signatures coincides with the notion that the pair of paths are tree-like equivalent. The collection of iterated integrals thus determines a path up to tree-like equivalence analogously to the way in which integrable functions on the circle are determined by their Fourier coefficients, up to Lebesgue-null sets.  

A later, no less influential development was the theory of rough paths, which was developed by T. Lyons as a branch of stochastic analysis in the 1990s \cite{lyons1998differential}, \cite{lyons2007differential}. A core achievement of this theory is to show how a path-by-path and robust notion of solution can be ascribed to differential equations of the form:  
\begin{equation}
dy_{t}=V(y_{t}) dx_{t}, \text{ started at $y_{0}$}.
\end{equation}
This construction holds even in cases where the input path $x$ may be highly irregular as a function of time. The crux is to enhance $x$ with iterated integral type terms, up to certain order, to form a rough path. The Hambly and Lyons theorem allows an extension to the setting of (geometric) rough paths \cite{boedihardjo2016signature}.  A breakthrough in merging these ideas with modern data science tools came in 2013, when B. Graham \cite{graham2013sparse} combined the signature features of pen strokes with sparse neural network methods to win part of an international competition on Chinese handwriting recognition.  

The purpose of these notes is thus to give an account of the extraordinarily rapid progress of the application of this mathematics to data science since these initial developments. By adopting simplifications, opting for a streamlined treatment over generality, we hope that this text can be used by researchers and graduate students seeking a quick introduction to the subject from a cross section of disciplines including mathematics, data science, computer science statistics and engineering. 

Our presentation proceeds in the order of abstraction-first, applications-second.  Accordingly, our first section treats the fundamental mathematical underpinnings of the theory, focusing on the core properties of the signature transform.  We freely make compromises to maintain the tempo. One such concession is to develop the theory specifically for continuous, finite-length paths, and not the more delicate cases of (geometric) rough paths or discontinuous paths. In the same vein, we keep algebraic notions to a minimum. A pivotal theme throughout the section is how the iterated integral perspective on signatures can be enriched by treating the ensemble as the solution to a particular controlled differential equation. This observation often facilitates simpler and more direct proofs. It also highlights another key theme of the subject: the idea of controlled differential equations as paradigmatic examples of functions on path space. The coordinate iterated integrals emerge as a universal feature set, or ``basis", for functions on path space, and serve as a foundation for functional approximation. To capture this idea using mathematics, we need to introduce unparametrised path space and understand some elementary topology on this space. The section culminates in a practical demonstration illustrating how this concept can be applied to execute nonlinear regression on path space using signature features.

Signature methods offer powerful techniques for handling streamed data, although they pose challenge themselves due to the inherently high dimensionality of the feature space. In statistical learning, kernel methods have emerged as effective tools for managing such difficulties for vector-valued data in an inner product space. Kernel methods, including signature kernel methods, can be applied in tasks where it is sufficient to know only the inner products of observations and not the exact values of the features themselves. Our second section develops the core concepts behind signature kernel methods building upon the foundations established in the first section. The exposition deliberately mirrors that of classical kernel methods for vector-valued data. We establish the main theoretical properties of these kernels -- universality and characteristicness -- and illustrate their practical implications for concrete applications, such as distribution regression for streamed data. Computational considerations predominate when assessing the effectiveness of a kernel; a kernel only becomes useful if there is an efficient means for evaluating it at a pair of observations. For a certain class of signature kernels, this can be achieved through solving a partial differential equation known as the signature kernel PDE. This approach is particularly effective for sufficiently smooth input paths, allowing for efficient computation of inner products. 

The concluding section provides an overview of the theory of rough differential equations, which naturally extends the concept of a controlled differential equation. In deep learning, neural ordinary differential equation (Neural ODE) models have recently emerged, offering a continuous-time approach for modelling the latent state of residual neural networks. Neural controlled differential equations (Neural CDEs) are continuous-time analogues of recurrent neural networks and provide a memory-efficient way to model functions of incoming data streams. When the driving signal is Brownian, the resulting neural stochastic differential equations (Neural SDEs) can be used as powerful generative models for time series. These models can be studied in a unified way by using the overarching framework of neural rough differential equation (Neural RDEs). We provide an overview of the fundamental constituents of rough path theory, placing emphasis on how they are used in practice through numerical schemes, such as the log-ODE method, for the training of Neural CDEs, SDEs and RDEs.

These notes have emerged from a series of lecture courses delivered at Imperial College London by the authors during the periods of 2020 to 2021, initially by TC and then, from 2022 to 2024, by CS.  We extend our gratitude to successive cohorts of students in the MSc in Mathematics and Finance programme, as well as the students in the EPSRC CDT in the Mathematics of Random Systems, whose insightful comments and feedback have enriched these sessions and hardened up the presentation of these notes. Special acknowledgments are owed to Nicola Muca Cirone, Francesco Piatti, Will Turner, and Nikita Zozoulenko for their invaluable feedback and discussions on earlier versions of these notes. TC is grateful to the Institute of Advanced Study where he spent the academic year 2023-24, with the generous support of a Erik Ellentuck Fellowship, and where a first draft of these notes were finalised.

Thomas Cass, Princeton  \\
Cristopher Salvi, London  

April 2024 

\chapter{The signature transform}\label{sec:signatures}

\section{Controlled differential equations} 

At the heart of rough path theory \cite{lyons1998differential} is the challenge of modelling the response over an interval of time generated by the interaction of a driving signal with a nonlinear control system.
The mathematical theory of controlled differential equations (CDEs) provides a rich setting in which complex interactions of this nature can be described and analysed. The essential data needed to understand this framework consist of a control path
$x:\left[  a,b\right]  \rightarrow V$, a map $f:W\rightarrow L\left(
V,W\right),$ and an initial state $y_{a} \in W$, where $[a,b] \subset\mathbb{R}_+$ is a closed  interval, $V,W$ are two finite-dimensional Banach spaces, and $L(V,W)$ is the space of linear maps from $V$ to $W$. The response $y$, itself a path
$y:\left[  a,b\right]  \rightarrow W$, will be uniquely determined whenever
there exists a unique solution to the following integral equation defined in terms of these data
\begin{equation}
y_{t}=y_{a}+\int_{a}^{t}f\left(  y_{s}\right)  dx_{s} \quad \text{ for all }t\text{
in }\left[  a,b\right]  .\label{integral}%
\end{equation}
We will refer to (\ref{integral}) as a \emph{controlled differential equation} (CDE),  usually adopting the more concise notation expressed in terms of differentials
\begin{equation}
dy_{t}=f\left(  y_{t}\right)  dx_{t} \quad \text{ started at }y_{a}%
\label{differential}%
\end{equation}
This notation makes it apparent that the role of the
function $f$ is to model how, at every point along the output trajectory,
 changes in $y$ are brought about by changes in
$x$. Before we develop these points, we recall the  definition of a \emph{vector field}. 

\begin{definition}[Vector field]
Let $k\in 
\mathbb{N}
\cup\left\{  0\right\}$. A $k$-times continuously differentiable
function $h:W\rightarrow W$ is called a $C^{k}$-vector field on
$W.$\ Denote the space of $C^{k}$ and infinitely differentiable vector fields by $\mathcal{T}^{k}\left(
W\right)$ and
$\mathcal{T}\left(  W\right)$ respectively.
\end{definition}

If we assume that $f$ in equation (\ref{differential}) is $C^k$, $f$ can be equivalently defined as a
linear map $v\mapsto f_{v}$ from $V$ into the (real) vector space $\mathcal{T}
^{k}\left(W\right)$ of $C^{k}$-vector fields on $W$, where $f_{v}$ and $f$ are related by
\[
f_{v}\left(  w\right)  =f(w)\left(  v\right)  \text{ for all }v\in V,w\in W.
\]

With this point of view, one can use the alternative notation
for the CDE (\ref{differential})
\[
dy_{t}=f_{dx_{t}}\left(  y_{t}\right)  \text{ started at }y_{a}.
\]
In particular, if the driving path $x$ is differentiable, then this CDE becomes $\dot
{y}_{t}=f_{\dot x_t}\left(y_{t}\right)$ so that $y$ is the integral curve of the time-dependent vector field
\[
F_{x}\left(t,y\right)  =f_{\dot x_t}\left(  y\right)  .
\]
The CDE terminology originates from optimal control theory, where typically some desired feature of the output $y$ is realised by appropriately selecting some \textit{control} $\dot x$.

\paragraph{Geometric view} A vector field $h$ in $\mathcal{T}\left(  W\right)$, defined below as a smooth function from $W$ to itself, can equivalently be
defined as a first-order differential operator acting on functions in
$C^{\infty}(W).$ From this point of view $h:C^{\infty}(W)\rightarrow
C^{\infty}(W)$ is the linear map 
\[
h\phi\left(  w\right)  =D\phi\left(  w\right)  \left(  h\left(  w\right)
\right)  =\left.  \frac{\partial}{\partial\eta}\right\vert _{0}\phi\left(
w+\eta h\left(  w\right)  \right)  .
\]

A choice of basis $\left\{  b_{i}\right\}  _{i=1}^{n}$ for $W$  gives rise
to a global coordinate system$\ w=\left(  w^{1},...,w^{n}\right)  $ on $W\,,$
which then allows us to write $h\phi$ in terms of partial derivatives
\begin{align}
h\phi\left(  w\right)  =\sum_{i=1}^{n}h^{i}\left(  w\right)  \partial_{i}%
\phi\left(  w\right) &:=\sum_{i=1}^{n}h^{i}\left(  w\right)  \left.
\frac{\partial}{\partial\eta}\right\vert _{0}\phi\left(  w+\eta b_{i}\right),\label{hphi} \\
&\text{ with }h^{i}\left(  w\right)  =\left[  h\left(  w\right)  \right]
^{i}. \nonumber
\end{align}
It can be convenient to write as $h\phi=h^{i}\partial_{i}\phi$,
implicitly summing over repeated indices.

Viewing vector fields as differential operators allows us to consider their composition as well as the important operation of taking their Lie brackets.

\begin{definition}
Suppose that $g$ and $h$ are two vector fields in $\mathcal{T}\left(
W\right)  ,$ then the composition $gh$ is the differential operator defined by
$\left(  gh\right)  \phi=g\left(  h\phi\right)  $ for all $\phi$ in
$C^{\infty}\left(  W\right)  .$ Furthermore, the Lie bracket of $f$ and $g$ is
the differential operator defined by $\left[  g,h\right]  \phi=\left(
gh-hg\right)  \phi$ for all $\phi$ in $C^{\infty}\left(  W\right)  .$
\end{definition}

The Lie bracket $\left[  g,h\right]  $ is in fact another vector field in the sense that it is a first-order differential operator, as can
be seen from the fact that the terms involving second-order derivatives
disappear:
\[
\left[  g,h\right]  \phi=g^{j}\partial_{j}\left(  h^{i}\partial_{i}\right)
-h^{j}\partial_{j}\left(  g^{i}\partial_{i}\right)  =\left(  g^{j}\partial
_{j}h^{i}-h^{j}\partial_{j}g^{i}\right)  \partial_{i}.
\]

As a consequence $\left[  \cdot,\cdot\right]  :\mathcal{T}\left(  W\right)
\times\mathcal{T}\left(  W\right)  \rightarrow\mathcal{T}\left(  W\right)  $
is a bilinear map.

One of the achievements of rough path theory is to give a framework for interpreting solutions to controlled differential equations when the driving signal $x$ is very far from differentiable. The focus of this chapter and the next will not be on this theory, but many of the underpinning ideas are nonetheless inspired by it. A common theme is the central role will be played by the collection of iterated integrals of the path $x$, and it is with an our account of their properties that we begin.

\subsection{Coordinate iterated integrals}
To understand why iterated integrals emerge from studying CDEs, we recall the classical method of Picard iteration for constructing solutions to CDEs. The basic idea is to inductively define a sequence of paths $\left(
y^{n}\right)  _{n=0}^{\infty}$, the Picard iterates, as follows
\begin{align*}
y^{(0)}   \equiv y_{a} \quad \text{ and } \quad y_{t}^{(n+1)} = y_{a}+\int_{a}^{t}f(y_{s}^{(n)})dx_{s}.
\end{align*}
Provided sufficient regularity of the data $f$ and $x$ is assumed, the sequence of Picard iterates can be shown to converge to a limit in a suitable sense. A final step is then to show that this limit solves (\ref{integral}), and to understand whether this solutions is unique.

We consider a simple special case of this scheme when
$f:W\rightarrow L\left(  V,W\right)  $ is a linear function; that is, when
$f$ belongs to the space $L\left(  W,L\left(  V,W)\right)  \right)  $. If we allow that $x$ is differentiable, a general Picard
iterate then has the form
\begin{equation}
y_{t}^{\left(  n\right)  }=\sum_{k=0}^{n}\int_{a<t_{1}<...<t_{k}<b}f^{\otimes
k}\left(  y_{a}\right)  \left(  \dot{x}_{t_{1}},\dot{x}_{t_{2}},...,\dot
{x}_{t_{k}}\right)  dt_{1}dt_{2}...dt_{k}\label{n-Pic}%
\end{equation}
where $f^{\left(  0\right)  }=$id$_{W},$ the identity function on $W\,$, and
$f^{\otimes k}\left(  y\right)  :V\times V\times....\times V\rightarrow W$ is
the $k$-fold multilinear map defined by
\begin{equation}
f^{\otimes k}\left(  y\right)  \left(  v_{1},..,v_{k}\right)  =f(f^{\otimes
\left(  k-1\right)  }\left(  y\right)  \left(  v_{1,...,}v_{k-1}\right)
)(v_{k}).\label{fk}%
\end{equation}
Multilinear maps can be better understood through the notion of tensor product that we now recall.

\begin{definition}\label{tensor_product}
Let $S$ and $T$ be two vector spaces. The tensor product of $S$ and $T$ is a
vector space $S\otimes T$ together with a bilinear map from $S\times T$ to
$S\otimes T\,,$ whose action is denoted by $\left(  s,t\right)  \mapsto
s\otimes t,$~which has the universal property that for any bilinear function
$\phi:S\times T\rightarrow U$ into another vector space $U$ there exists a
unique linear map $\hat{\phi}:S\otimes T\rightarrow U$ such that $\phi\left(
s,t\right)  =\hat{\phi}\left(  s\otimes t\right)  .$
\end{definition}

We will most often use this in the case $V=S=T$ when we will write
$V^{\otimes2}$ as short hand for  $V\otimes V.$ The definition is associative,
$V_{1}\otimes(V_{2}\otimes V_{3})$ and $\left(  V_{1}\otimes V_{2}\right)
\otimes V_{3}$ are isomorphic, and hence we can write $V^{\otimes k}$
unambiguously to denote the $k$-fold tensor product of $V.$ Under this
isomorphism we have that $v_{1}\otimes\left(  v_{2}\otimes v_{3}\right)
=\left(  v_{1}\otimes v_{2}\right)  \otimes v_{3}$ and thus $v_{1}\otimes
v_{2}\otimes v_{3}$ is also well defined. As a consequence of the universal
property, the multilinear map $f^{\otimes k}\left(  y\right)  $ in
(\ref{fk}) is such that%
\[
f^{\otimes k}\left(  y\right)  \left(  v_{1},...,v_{k}\right)
=\widehat{f^{\otimes k}\left(  y\right)  }\left(  v_{1}\otimes...\otimes
v_{k}\right)  ,
\]
for a unique linear map $\widehat{f^{\otimes k}\left(  y\right)  } : V^{\otimes k} \to W$. To ease
the notational burden we abuse notation and refer to this map also as
$f^{\otimes k}\left(  y\right)$. It will also be unwieldy to write $\otimes$
each time a tensor product of vectors occurs and so we adopt the more concise
notation $v_{1}...v_{k}$ instead of $v_{1}\otimes...\otimes v_{k}$. Using this
and the linearity of the integral we write the $n^{\text{th}}$ Picard iterate
(\ref{n-Pic}) as%
\begin{equation}\label{eqn:picard_iterate}
y_{t}^{\left(  n\right)  }=\sum_{k=0}^{n}f^{\otimes k}\left(  y_{a}\right)
\left(  \int_{a<t_{1}<...<t_{k}<b}\dot{x}_{t_{1}}\dot{x}_{t_{2}}...\dot
{x}_{t_{k}}dt_{1}dt_{2}...dt_{k}\right)  ,
\end{equation}
where
\begin{equation}
\int_{a<t_{1}<...<t_{k}<b}\dot{x}_{t_{1}}\dot{x}_{t_{2}}...\dot{x}_{t_{k}%
}dt_{1}dt_{2}...dt_{k}=\int_{a<t_{1}<...<t_{k}<b}dx_{t_{1}}dx_{t_{2}%
}...dx_{t_{k}}\in V^{\otimes k}\label{kii}%
\end{equation}
is the $k$-fold iterated integral of $x.$ The assumption that $x$ is differentiable is an unnecessarily strong one to define these iterated integrals. For a less restrictive notion, we will need the following way to quantify the regularity of a path.

\begin{definition}[$p$-variation]\label{def:p-variation}
    Let $p \geq 1$ be a real number. The $p$-variation of a path $x: [a,b] \to V$ be on any subinterval $[s,t] \subset [a,b]$ is the following quantity
    \begin{equation}\label{eqn:p_var}
        \norm{x}_{p,[s,t]} = \left( \sup_{\mathcal{D} \subset [s,t]} \sum_{t_i \in \mathcal{D}} \norm{x_{t_{i+1}} - x_{t_i}}^p \right)^{1/p},
    \end{equation}
    where $||\cdot||$ is any norm on $V$, and the supremum is taken over all partitions $\mathcal{D}$ of the interval $[s,t]$, i.e. over all increasing finite sequences such that $\mathcal{D}=\{s=k_0 < k_1 <... < k_r=t\}$. We say that $x$ has finite $p$-variation on $[s,t]$ if $\norm{x}_{p,[s,t]} < \infty$. If $x$ has finite $1$-variation we will say that $x$ has bounded variation (or has finite length).
\end{definition}

\begin{notation}
 We will denote by $C_{p}([a,b],V)$, to be the space of continuous paths from the interval $[a,b]$ to $V$ which have finite $p$-variation. If the intervals can be understood from the context, then we use the abridged notation $ \norm{x}_{p}$ and $C_{p}(V)$ respectively in place of $ \norm{x}_{p,[s,t]}$ and $C_{p}([a,b],V)$.
\end{notation}

Using the definition of integration theory due to Young (e.g. \cite[Chapter 6]{friz2010multidimensional}), the integrals in equation (\ref{kii}) will continue to exist whenever $x$ is in $C_{p}([a,b],V)$ and for any $1\leq p <2$. As these objects will appear
repeatedly, we introduce the notation
\[
S\left(  x\right)  ^{\left(  k\right)}=S\left(  x\right)  ^{\left(  k\right)   }_{[a,b]}=\int_{a<t_{1}<...<t_{k}<b}dx_{t_{1}%
}dx_{t_{2}}...dx_{t_{k}}\in V^{\otimes k}%
\]
to denote the $k$-fold iterated integral in (\ref{kii}). A central theme in the sequel will be the way in which the collection of \emph{all} such iterated integrals determines the effect of the path $x$ on general \emph{non-linear} CDEs.

\begin{definition}[Signature transform]\label{def:signature}
For any path $x\in C_p([a,b],V)$ with $p \in [1,2)$ and any subinterval $[s,t] \subset [a,b]$, the signature transform (ST) $S(x)_{[s,t]}$ of $x$ on $[s,t]$ is defined as the following collection of iterated integrals
\begin{equation}\label{eqn:signature}
S(x)_{[s,t]} = \left(1,S\left(  x\right)  ^{\left(  1\right)   }_{[s,t]},....S\left(  x\right)  ^{\left(  k\right)   }_{[a,b]},...\right) \in \prod_{i=0}^{\infty}V^{\otimes i}.
\end{equation}
When it is clear from the context, we will suppress the dependence on the interval and instead denote the ST of $x$ by $S(x)$. 
\end{definition}

\begin{remark}
    It is also worth recording notation for a version of the ST in which only finitely many iterated integrals are retained. Let $n$ be in $\mathbb{N}.$ The $n$-step truncated signature transform (TST) $S_{n}\left(  x\right)  _{\left[a,b\right]  }$ is defined by
    \begin{equation*}
        S_n\left(  x\right)  _{\left[  a,b\right]  }=\left(  1,S\left(  x\right)
        _{\left[  a,b\right]  }^{\left(  1\right)  },...,S\left(  x\right)  _{\left[
        a,b\right]  }^{\left(  n\right)  },0,0,..\right)  \in
        {\textstyle\prod\nolimits_{i=0}^{\infty}}
        V^{\otimes i}
    \end{equation*} 
    As with the ST, we omit reference to the interval $S_{n}\left(
    x\right)  $ when the interval can be inferred from the context.
\end{remark}

The iterated integrals characterized by the ST have been the object of numerous mathematical studies prior to rough paths. Initially presented in a geometric framework by Chen in \cite{chen1957integration} and later by Fliess and Kawski \cite{fliess1983algebraic, kawski1997noncommutative} in control theory, many researchers explored this sequence of iterated integrals of a path to derive pathwise Taylor series for solutions of differential equations. In fact, the ST is sometime referred to as the "Chen-Fliess series". This book delves into integrating the ST in machine learning to model functions on sequential data.

\subsection{Algebras of tensors}

As anticipated in the previous paragraph, terms of the ST act as non-commutative monomials consistituing the main building blocks to perform Taylor expansions for continuous functions on paths. To clarify this point, consider the space of continuous real-valued functions from the interval $\left[ 0,1\right]$. The classical Weierstrass theorem states that for any level of precision $\epsilon >0$, there exists a polynomial in one variable $"x"$ that uniformly approximates the function up to error $\epsilon$. More generally, let $U$ be a Banach space and consider the vector space of polynomials $\mathbb{R}[U]$ on $U$ 
\begin{equation}\label{eqn:monomials}
    \mathbb{R}[U] = \text{Span}\{\phi_1(\cdot)...\phi_k(\cdot) \mid \phi_i \in U^*, k \geq 0\}
\end{equation}
where $U^*$ denotes the dual space of $U$. Consider a compact subset $K \subset U$ such that the ring of polynomials $\mathbb{R}[K]$ restricted to $K$ separates points (i.e. for any $x\neq y \in K$ there exists a polynomial $P \in \mathbb{R}[K]$ such that $P(x)\neq P(y)$). Then, by the Stone-Weierstrass Theorem (e.g. \cite[Thm. 4.45]{folland1999real}), for any continuous function on $K$ and for any level of precision $\epsilon >0$, there exists a polynomial $P \in \mathbb{R}[K]$ that uniformly approximates the function up to error $\epsilon$. 

In this book we are interested in the setting where $U$ is the set of continuous paths of bounded $p$-variation for $p \in [1,2)$. Monomials will be given by terms of the ST. This motivates the need to study the algebraic structure of the space of tensors $\prod_{i=0}^{\infty}V^{\otimes i}$. Subsequently, the algebraic and anlytic properties of the ST will yield the fundamental approximation result for the ST in \Cref{thm: universal approximation} which will be derived as a simple consequence of the Stone-Weierstrass Theorem.
    
It will be important to have the structure of an algebra on the product space ${\textstyle\prod\nolimits_{i=0}^{\infty}}V^{\otimes i}$. To introduce this, we observe from the universal property of
the tensor product that the space $V^{\otimes n}\otimes V^{\otimes m}$ is
isomorphic to $V^{\otimes\left(  n+m\right)  }$ and therefore that $v_{m}v_{n}$ is a well-defined element of $V^{\otimes\left(  n+m\right)  }.$ This allows to define an associative product on ${\textstyle\prod\nolimits_{i=0}^{\infty}}
V^{\otimes i}$ which is compatible with the tensor products on the individual levels $V^{\otimes
i}$. By this we mean that if $\mathfrak{i}_{n}:V^{\otimes n}\hookrightarrow$ $
{\textstyle\prod\nolimits_{i=0}^{\infty}}
V^{\otimes i}$ denotes the canonical inclusion, then the tensor product on $
{\textstyle\prod\nolimits_{i=0}^{\infty}}
V^{\otimes i}$ satisfies
\[
\mathfrak{i}_{n+m}\left(  v_{n}v_{m}\right)  =\mathfrak{i}_{n}\left(
v_{n}\right)  \mathfrak{i}_{m}\left(  v_{m}\right)
\]
for every $n$ and $m.$ The following definition provides this.

\begin{definition}\label{tensor product}
Suppose that $v=\left(v_{0}%
,v_{1},...\right)  $ and $w=\left(  w_{0},w_{1},...\right)$ are two elements
of ${\textstyle\prod\nolimits_{i=0}^{\infty}}
V^{\otimes i}.$ Then we define the (extended) tensor product 
$v \cdot w=\left( z_{0},z_{1},...\right) $ as
\begin{equation}
z_{i}=\sum_{l=0}^{i}v_{l}w_{i-l}\in V^{\otimes i}, \quad \forall k \in \mathbb{N}
\cup\left\{  0\right\}\label{tp}
\end{equation}
\end{definition}

\begin{definition}[Extended tensor algebra]
The product space $
{\textstyle\prod\nolimits_{i=0}^{\infty}}
V^{\otimes i}$ equipped with the product $\cdot$ is an algebra. Sometimes it is
convenient to interpret elements of $
{\textstyle\prod\nolimits_{i=0}^{\infty}}
V^{\otimes i}$ as formal series of tensors; that is, to identify $v=\left(
v_{0},v_{1},...\right)  $ with $\sum_{i=0}^{\infty}v_{i}$. We denote by $T\left(  \left(  V\right)  \right)$ the algebra of formal tensor
series with the product (\ref{tp}).  
\end{definition}

\begin{remark}[Tensor algebra]
\label{series poly}
The more commonly encountered tensor algebra $T(V)=\bigoplus
_{i=0}^{\infty}V^{\otimes i}$ is a subalgebra of $T((V)).$ Note that that the
difference between $T\left(  V\right)  $ and $T\left(  \left(  V\right)
\right)  $ is that $T\left(  V\right)  $ consists only of tensor polynomials,
i.e. $\sum_{i=0}^{\infty}v_{i}$ for which $v_{i}=0$ for all but finitely many
$i$ in $
\mathbb{N}
\cup\left\{  0\right\}  .$ By identifying again $v=\left(  v_{0}
,v_{1},...\right)  $ and $\sum_{i=0}^{\infty}v_{i}$, we can equivalently
describe $T\left(  V\right)  $ as the subalgebra of ${\textstyle\prod\nolimits_{i=0}^{\infty}}V^{\otimes i}$ in which all but finitely many projections are zero.
\end{remark}

The tensor algebra $T(V)$ has the following universal property, which in fact can be used
as an alternative way to uniquely characterise it (up to isomorphism). 

\begin{definition}\label{def:freeness_tensor_algebra}
Let $A$ be an associative algebra
and let $F:V\rightarrow A$ be a linear map. Then there exists a unique algebra
homomorphism $\hat{F}:T\left(  V\right)  \rightarrow A$ which extends $F$ in
the sense that $F=\hat{F}\circ\mathfrak{i}_{1},$where $\mathfrak{i}%
_{1}:V\hookrightarrow T\left(  V\right)  $ is the canonical inclusion.
\end{definition}

\begin{example}
Let End$\left(  W\right)  =L\left(  W,W\right)  $ denote the algebra of
endomorphisms of $W$ with an (associative) product given by $A\ast B=B\circ A$
where $\circ$ denotes the usual composition of linear maps. In the case of our
earlier discussion of a differential equation with linear vector fields we
used $f$ in $L\left(  W,L\left(  V,W\right)  \right)  $. Any such $f$ is
determined by the map $F$ in $L\left(  V,\text{End}\left(  W\right)  \right)
$ through $F\left(  v\right)  \left(  w\right)  =f\left(  w\right)  \left(
v\right)  .$ By using the universal property of the tensor algebra, we can
extend $F$ to an algebra homomorphism $\hat{F}$ on $T\left(  V\right)  $ into
$($End$\left(  W\right)  ,\ast).$ We note that for any $k$ this map is
determined by
\begin{align*}
\hat{F}\left(  v_{1}...v_{k}\right)  \left(  w\right)    & =\left[  F\left(
v_{1}\right)  \ast F\left(  v_{2}\right)  \ast...\ast F\left(  v_{k}\right)
\right]  \left(  w\right)  \\
& =f\left(  ...f\left(  f\left(  w\right)  \left(  v_{1}\right)  \right)
\left(  v_{2}\right)  ...\right)  \left(  v_{k}\right)  \\
& =f^{\otimes k}\left(  w\right)  \left(  v_{1},..,v_{k}\right)  .
\end{align*}
With this formulation, we can further simplify the $n^{th}$ Picard iterate (\ref{eqn:picard_iterate})
by writing
\begin{equation}\label{eqn:n_th_Picard_iterate}
    y_{t}^{\left(  n\right)  }=\hat{F}\left(  S_{n}\left(  x\right)_{a,t}\right)  \left(  y_{a}\right).
\end{equation}
\end{example}

\paragraph{Inner products} Given inner products on $V$ and $W$, we can
endow $V\otimes W$ with a canonical inner product.

\begin{definition}
Let $V,W$ be vector spaces endowed with the inner-products $\left\langle
\cdot,\cdot\right\rangle _{V}$ and \thinspace$\left\langle \cdot
,\cdot\right\rangle _{W}$ respectively. Define the (Hilbert-Schmidt)
inner product on $V\otimes W$ by
\[
\left\langle v_1 \otimes w_1, v_2 \otimes w_2\right\rangle_{V\otimes W} = \left\langle v_1, v_2\right\rangle _{V}\left\langle w_1,w_2\right\rangle _{W} 
\]
for any $v_1,v_2 \in V$ and $w_1,w_2 \in W$.
\end{definition}

Thus, we can equip  $V^{\otimes k}$ with the norm determined by 
$$\norm{v}_{V^{\otimes k}} = \sqrt{\prod_{i=1}^k \langle v_i, v_i\rangle_{V}}, \text{ for } v=v_{1}...v_{k}.$$
In the sequel, we will consider Banach spaces $E \subset T((V))$ defined as follows 
\begin{equation*}
    E=\left\{  v\in T\left(  \left(  V\right)  \right)  :\norm{v} < \infty\right\},
\end{equation*}
in which the norm will typically be an $l_{q}$-norm, with $q$ in $[1,\infty)$, of the form 
$$\norm{v}=\left(\sum_{k=0}^{\infty}\left\vert \left\vert
v_{k}\right\vert \right\vert_{V^{\otimes k}}^{q}\right)^{1/q}.$$
\begin{example}\label{norm properties}
Two cases are of particular interest. When $q=2$ the resulting space is a Hilbert space with the inner product given by  
\[
\langle v, w \rangle = \sum_{k=0}^{\infty} \langle v_{k}, w_{k} \rangle_{V^{\otimes k}}.
\]
If $q=1$ then $(E,\norm{\cdot})$ is a Banach algebra, i.e. $\norm{ab} \leq \norm{a} \norm{b}$, as is easily verified.
\end{example}

\subsection{Functions on the range of the signature transform}

As noted in equation (\ref{eqn:monomials}) at the beginning of the previous section, to construct the vector space of polynomials $\mathbb{R}[U]$ over a Banach space $U$, one needs to consider monomials of the form $\phi \in U^*$, where $U^*$ denotes the dual space of $U$. 

In this section we outline how the dual space  $T(V)^*$ of the tensor algebra $T(V)$ can be constructed from the dual space $V^*$ of $V$. Dual elements in $T(V)^*$, when restricted to functionals on the range of the ST, will furnish the main building blocks (or monomials) of our Taylor expansion for functions on path space.

The (algebraic) dual space \thinspace$V^{\ast}$ of $V$ is the space of linear
functionals on $V,$ in other words $V^{\ast}=L\left(  V,%
\mathbb{R}
\right)  .$ Any basis $\left\{  e_{i}:i=1,..,d\right\}  $ of $V$ has a
corresponding dual basis of linear functionals $\left\{  e_{i}^{\ast
}:i=1,..,d\right\}  $ which are uniquely determined by
\[
e_{i}^{\ast}\left(  e_{j}\right)  =\left\{
\begin{array}
[c]{cc}%
1 & i=j\\
0 & i\neq j
\end{array}
\right.
\]
and by linearity. In particular, we see that $e_{i}^{\ast}\left(  v\right)
=v_{i}$ when $v=\sum_{i=1}^{d}v_{i}e_{i}.$ This leads naturally to a dual
basis for $\left(  V^{\ast}\right)  ^{\otimes k}$.

\begin{notation}\label{tensor basis}
Given a multi-index $I=\left(  i_{1},...,i_{k}\right)  $ of length $\left\vert I\right\vert
=k$ and a basis for $V$ as
above, we denote $e_{I}=e_{i_{1}}...e_{i_{k}}$. The set $\left\{ e_{I}:\left\vert I\right\vert =k\right\} $ is a basis for $%
V^{\otimes k}$ and so it also has a dual basis whose elements we denote by $%
e_{I}^{\ast }.$
\end{notation}
Any choice of basis for $V$ induces a vector space isomorphism between $%
\left( V^{\otimes k}\right) ^{\ast }$ and $\left( V^{\ast }\right) ^{\otimes
k}$ which is characterised by 
\[
e_{I}^{\ast }\mapsto e_{i_{1}}^{\ast }e_{i_{2}}^{\ast }...e_{i_{n}}^{\ast }.
\]%
This isomorphism is, in fact, independent of the original choice of basis
for $V$, as can be easily shown, and as such we will often implicitly
identify elements of $\left( V^{\otimes k}\right) ^{\ast }$ with the
corresponding element of  $\left( V^{\ast }\right) ^{\otimes k}$ under this
canonical isomorphism. A similar observation can be made for the dual space $%
T\left( V\right) ^{\ast }$, although more care is needed because $T(V)$ is
infinite dimensional; the follow lemma describes the situation.

\begin{lemma}
The vector spaces $T\left( \left( V^{\ast }\right) \right) $ and $T\left(
V\right) ^{\ast \ }$can be identified through the explicit isomorphism $\Phi
:T\left( \left( V^{\ast }\right) \right) \rightarrow T\left( V\right) ^{\ast
\ }$ 
\begin{equation}
\sum_{k\in \mathbb{N}_{0}}f_{k}=f\mapsto \Phi _{f}\in T\left( V\right)
^{\ast \ }  \label{isom}
\end{equation}
where, for any $v=\sum_{k\in A}v_{k}\in T\left( V\right) $ with $A\subset 
\mathbb{N}_{0}$ a finite subset, $\Phi _{f}\left( v\right) $ is defined to
equal $\sum_{k\in A}f_{k}\left( v_{k}\right) .$
\end{lemma}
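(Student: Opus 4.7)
The plan is to verify the lemma by showing that $\Phi$ is a well-defined linear map and then establishing it is a bijection through a direct construction of its inverse. The essential idea is that $T(V)$ consists of tensor \emph{polynomials} (finite sums), so pairing an element $v=\sum_{k\in A}v_k$ of $T(V)$ with a formal series $f=\sum_{k\in\mathbb{N}_0}f_k$ of dual tensors gives only finitely many nonzero contributions $f_k(v_k)$. This is the key reason the definition makes sense despite $T(V)$ being infinite-dimensional and $T((V^*))$ being a product rather than a direct sum.

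First I would carefully set out what each $f_k$ means: by the canonical identification $(V^\ast)^{\otimes k}\cong (V^{\otimes k})^\ast$ (available in full generality here because $V$ is finite-dimensional, hence so is each $V^{\otimes k}$), the component $f_k$ acts as a linear functional on $V^{\otimes k}$. Then for $v=\sum_{k\in A}v_k\in T(V)$ with $A\subset\mathbb{N}_0$ finite, $\Phi_f(v):=\sum_{k\in A}f_k(v_k)$ is a finite sum of real numbers, independent of the representation of $v$ by levels (since the decomposition $v=(v_k)_{k\in\mathbb{N}_0}$ with $v_k=0$ for $k\notin A$ is unique). Linearity of $\Phi_f$ in $v$, and linearity of $\Phi$ in $f$, then follow from the linearity of each $f_k$ and the componentwise vector-space structure on $T((V^*))$.

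For injectivity, suppose $\Phi_f\equiv 0$. For each $k\in\mathbb{N}_0$ and each $u\in V^{\otimes k}$, the element $\mathfrak{i}_k(u)\in T(V)$ has all levels zero except the $k$th, so $0=\Phi_f(\mathfrak{i}_k(u))=f_k(u)$. Since this holds for all $u$ we conclude $f_k=0$ for each $k$, hence $f=0$. For surjectivity, take any $\psi\in T(V)^\ast$ and define $\psi_k\in (V^{\otimes k})^\ast$ by $\psi_k:=\psi\circ\mathfrak{i}_k$. Via the canonical isomorphism $(V^{\otimes k})^\ast\cong (V^\ast)^{\otimes k}$ let $f_k\in (V^\ast)^{\otimes k}$ be the element representing $\psi_k$, and set $f:=\sum_{k\in\mathbb{N}_0}f_k\in T((V^\ast))$. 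For any $v=\sum_{k\in A}v_k\in T(V)$ we compute $\Phi_f(v)=\sum_{k\in A}f_k(v_k)=\sum_{k\in A}\psi(\mathfrak{i}_k(v_k))=\psi(v)$ by linearity of $\psi$, so $\Phi_f=\psi$.

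The only genuinely subtle point is the asymmetry between $T(V)$ (a direct sum, finite support) and $T((V^\ast))$ (a direct product, arbitrary support): it is crucial that we do \emph{not} try to invert the roles, since a formal series $v\in T((V))$ paired with a formal series of duals would produce an infinite sum of reals with no \textit{a priori} meaning. Once this asymmetry is correctly exploited via the inclusions $\mathfrak{i}_k$, the rest is bookkeeping. I expect no substantial obstacle beyond being careful that the levelwise identification $(V^{\otimes k})^\ast\cong (V^\ast)^{\otimes k}$ is used as a canonical (basis-free) isomorphism, as noted just before the statement of the lemma.
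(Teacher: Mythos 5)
Your proof is correct and follows essentially the same route as the paper: you show well-definedness by exploiting the finite support of elements of $T(V)$, prove injectivity by composing with the canonical inclusions $\mathfrak{i}_k$, and construct a preimage for any $\psi\in T(V)^\ast$ by setting $f_k=\psi\circ\mathfrak{i}_k$ under the identification $(V^{\otimes k})^\ast\cong(V^\ast)^{\otimes k}$. The only cosmetic difference is that you prove injectivity by showing the kernel is trivial, whereas the paper shows $\Phi_f=\Phi_g\Rightarrow f=g$; these are equivalent by linearity.
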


\begin{remark}
Note that here $f_{k}$ in $\left( V^{\ast }\right) ^{\otimes k}$ is
identified with $\left( V^{\otimes k}\right) ^{\ast }$ in the way described
above.
\end{remark}

\begin{proof}
It is clear that $\Phi _{f}$ is a well-defined element of $T\left( V\right) .
$ The linear map $f\mapsto \Phi _{f}$  is injective since if $\Phi _{f}$ $%
=\Phi _{g}$ then also $f_{k}=\Phi _{f}\circ \mathfrak{i}_{k}=\Phi _{g}\circ 
\mathfrak{i}_{k}=g_{k}$ for all $k$ in $\mathbb{N}_{0}$, where $\mathfrak{i}%
_{k}:V^{\otimes k}\rightarrow T\left( V\right) $ is the canonical inclusion
map, and hence $f=g$. It is a bijection as $\phi $ in $T\left( V\right)
^{\ast \ }$ is determined by the collection of linear maps $\left\{ \phi
_{k}=\phi \circ \mathfrak{i}_{k}:k\in \mathbb{N}_{0}\right\} $ and $\phi
\left( v\right) =$ $\sum_{k\in \mathbb{N}_{0}}\phi _{k}\left( v\right) $,
the sum being finite for any $v$ in $T\left( V\right) .$ Each $\phi _{k}$ is
in $\left( V^{\otimes k}\right) ^{\ast }$ and therefore it can be identified
with an element of $\left( V^{\ast }\right) ^{\otimes k}$ canonically. By
letting $f=\sum_{k\in \mathbb{N}_{0}}\phi _{k}\in T\left( \left( V^{\ast
}\right) \right) $ we then see that $\phi =\Phi _{f}$.
\end{proof}

\begin{remark}
By considering the restriction of the linear map (\ref{isom}) to $T\left(
V^{\ast }\right) $ we can identify $T\left( V^{\ast }\right) $ with a
subspace of $T\left( \left( V\right) \right) ^{\ast }$. This is because $%
\Phi _{f}$ can be extended to $T\left( \left( V\right) \right) ^{\ast }$
whenever $f$ is a tensor polynomial.
\end{remark}

By regarding $e_{I}^{\ast }\in \left( V^{\otimes k}\right) ^{\ast }\cong
\left( V^{\ast }\right) ^{\otimes k}$ as an element of $T\left( V^{\ast
}\right) $ under the usual inclusion of $\left( V^{\ast }\right) ^{\otimes k}
$ into $T\left( V^{\ast }\right) ,$ the previous remark allows us to define
a collection of scalars       
\[
S\left( x\right) ^{I}=e_{I}^{\ast }\left( S\left( x\right) \right) \,
\]%
as $I$ ranges over all multi-indices $I$ of finite length. This offers
another view on the signature which is captured by the following definition.
\begin{definition}[Coordinate iterated integrals]
Let $\{e_{i}\}_{i=1,..,d}$ be a basis for $V$ with corresponding dual basis $\{e_{i}^{*}\}_{i=1,...d}$ and suppose that $1\leq p<2$. If $x$ is in $C_{p}(V)$ and $I=\left(  i_{1},...,i_{k}\right)$ is a multi-index of integers from $\{1,...,d\}$, then the coordinate iterated integral $S(x)^{I}$ is the real number given by $S(x)^{I}=e_{I}^{*}(S(x))$.
\end{definition}

\section{Analytical properties}

We will now apply the tools we have developed for studying tensors to understand some important analytic  properties of the ST. Firstly, the ST exhibits \emph{invariance under reparameterizations}; this essentially allows the ST to act as a filter that removes an infinite dimensional group of symmetries given by time reparameterizations. Secondly, the \emph{factorial decay} of the terms in the ST  implies that truncating the ST at a sufficiently high level retains the bulk of the critical information. This is especially pertinent for describing solutions to CDEs, as the omitted infinite number of ST coefficients diminish in importance factorially. Finally, the \emph{uniqueness} of ST for a certain class of paths, ensuring a one-to-one identifiability of a path with its ST.

\subsection{Sampling invariance}

An important obstacle that machine learning models have to face is the potential presence of symmetries in the data\footnote{In computer vision for example, a good model should be able to recognize an image even if the latter is rotated by a certain angle. The $3$D rotation group, often denoted $SO(3)$, is low dimensional ($3$), therefore it is relatively easy to add components to a model that build a rotation invariance, for example through \emph{data-augmentation}.}. When dealing with sequential data one is confronted with a (infinite dimensional) group of symmetries given by all \emph{reparametrizations} of a path, i.e. continuous, increasing surjective functions from the interval $[a,b]$ to itself. Practically speaking, the action of reparameterizing a path can be thought of as the action of sampling its observations at a different frequency. For example, consider the reparametrization $\phi: [0,1] \to [0,1]$ given by $\phi(t) = t^2$ and the path $\gamma : [0,1] \to \mathbb{R}^2$ defined by $\gamma_t = (\gamma^x_t, \gamma^y_t)$ where $\gamma^x_t=\cos(10t)$ and $\gamma^y_t=\sin(3t)$. As it is clearly depicted in \Cref{fig:reparam}, both channels ($\gamma^x,\gamma^y$) of $\gamma$ are individually affected by the reparametrization $\phi$, but the shape of the curve $\gamma$ is left unchanged. Building an invariance into a model is usually very hard. However, the ST has the useful property of being invariant to reparameterization, as stated in the following lemma. 

\begin{figure}[h]
    \centering
    \makebox[\textwidth]{\includegraphics[scale=0.4]{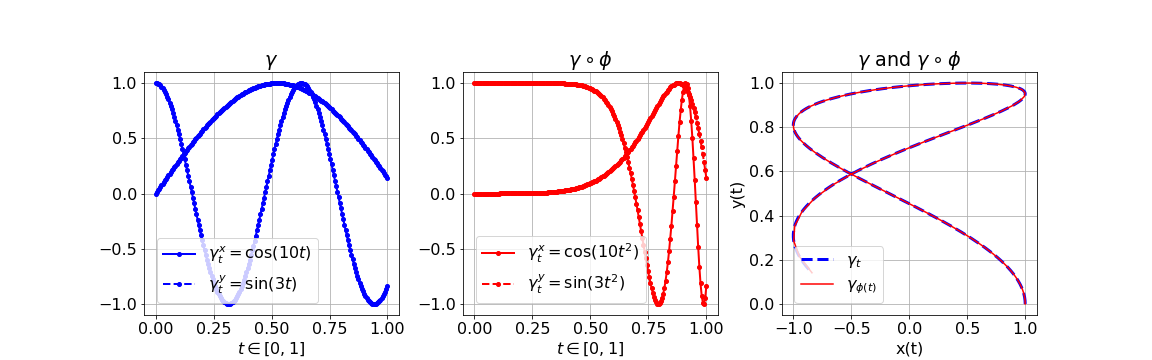}}
    \caption{{\small On the \textbf{left} are the individual channels $(\gamma^x, \gamma^y)$ of a $2$D paths $\gamma$. In the \textbf{middle} are the channels reparametrized under $\phi: t \mapsto t^2$. On the \textbf{right} are the path $\gamma$ and its reparametrized version $\gamma \circ \phi$. The two curves overlap, meaning that the reparametrization $\phi$ represents irrelevant information if one is interested in understanding the shape of $\gamma$.} Figure taken from \cite{salvi2021signature}.} 
    \label{fig:reparam}
\end{figure}

\begin{lemma}[Reparameterization invariance]\label{lemma:repearam_invariance} Suppose that $1\leq p <2$ and let $x$ be in $C_{p}([a,b],V)$ and let $[a,b]$ and assume that $\lambda :[c,d] \to [a,b]$ is a continuous non-decreasing surjection. Then $S(x)_{[a,b]} = S(x \circ \lambda)_{[c,d]}$.
\end{lemma}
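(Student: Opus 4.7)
The plan is to prove the equality level-by-level in $V^{\otimes k}$ by induction on $k$, since $S(x)_{[a,b]}$ and $S(x\circ\lambda)_{[c,d]}$ are elements of $\prod_{i=0}^{\infty} V^{\otimes i}$ and equality in this product space is determined componentwise. The zeroth level is trivially equal (both equal $1$), and the first level reduces to the telescoping identity $S(x)^{(1)}_{[a,b]} = x_b - x_a = x_{\lambda(d)} - x_{\lambda(c)} = S(x\circ\lambda)^{(1)}_{[c,d]}$, where surjectivity of $\lambda$ together with $\lambda(c) = a$ and $\lambda(d) = b$ is essential.

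For the inductive step I would exploit the recursive structure of iterated Young integrals, namely
\[
S(x)^{(k)}_{[a,t]} = \int_a^t S(x)^{(k-1)}_{[a,u]} \otimes dx_u,
\]
which follows directly from the Fubini-type rewriting of the simplex $\{a < t_1 < \ldots < t_k < t\}$. Assuming inductively that $S(x)^{(k-1)}_{[a,\lambda(s)]} = S(x\circ\lambda)^{(k-1)}_{[c,s]}$ for every $s\in [c,d]$, the goal then becomes the change of variables identity
\[
\int_a^b S(x)^{(k-1)}_{[a,u]} \otimes dx_u = \int_c^d S(x\circ\lambda)^{(k-1)}_{[c,s]} \otimes d(x\circ\lambda)_s,
\]
which is the heart of the argument. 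I would establish this by going back to the construction of the Young integral as a limit of Riemann sums along partitions whose mesh tends to zero: given any partition $\mathcal{D}' = \{c = s_0 < \ldots < s_r = d\}$ of $[c,d]$, form the image $\lambda(\mathcal{D}') = \{\lambda(s_0) \leq \ldots \leq \lambda(s_r)\}$, which, after collapsing repeated points, is a partition $\mathcal{D}$ of $[a,b]$. The Riemann sums for the two integrals agree term by term because $x_{\lambda(s_{i+1})} - x_{\lambda(s_i)} = (x\circ\lambda)_{s_{i+1}} - (x\circ\lambda)_{s_i}$, while the (continuous) integrand values are identified by the inductive hypothesis.

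The main obstacle I anticipate is the fact that $\lambda$ is only assumed continuous and non-decreasing rather than strictly increasing, so it may have plateaus and $\lambda(\mathcal{D}')$ need not be a genuine partition of $[a,b]$. On a plateau $\lambda$ is constant and $x\circ\lambda$ is constant there as well, so the corresponding Riemann sum contributions on the $[c,d]$ side vanish, matching the collapsed partition on the $[a,b]$ side; care is needed to verify that the mesh of the collapsed partition $\mathcal{D}$ also tends to zero, which follows from uniform continuity of $\lambda$ on the compact interval $[c,d]$. A related subtlety is ensuring that the $p$-variation of $x\circ\lambda$ on $[c,d]$ is bounded by that of $x$ on $[a,b]$, which is immediate from Definition~\ref{def:p-variation} since any partition of $[c,d]$ induces a partition of $[a,b]$ via $\lambda$, keeping us within the Young regime $p\in [1,2)$ throughout the induction.
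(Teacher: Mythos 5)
Your overall skeleton — reduce to componentwise equality, induct on the level of the signature, use the recursive structure of the iterated integral, and isolate the change-of-variables identity for the integral as the heart of the induction step — is exactly the structure of the paper's proof. The difference is in how the change-of-variables step is established. The paper first restricts to $p=1$, passes to Lebesgue–Stieltjes measures, and invokes the image-measure theorem (Bogachev, Theorem~3.6) to transport the integral from $[c,d]$ to $[a,b]$; it then disposes of the remaining range $1 < p < 2$ by an approximation argument that relies on the density of $C_1$ in $C_q$ and the continuity of the Young integral in $q$-variation, with the details left to the reader. You instead argue directly at the level of Riemann sums: push a partition of $[c,d]$ forward through $\lambda$, collapse the repeated points produced by plateaus (whose contributions vanish because the increment of $x\circ\lambda$ there is zero), match the Riemann sums term by term via the inductive hypothesis, and use uniform continuity of $\lambda$ to control the mesh of the image partition. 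Your route is more elementary and, crucially, handles the whole range $p\in[1,2)$ in one pass, avoiding both the measure-theoretic machinery and the separate approximation step; what it assumes instead is that the Young integral is characterised as a limit of Riemann sums as the mesh tends to zero (a standard fact, but it should be cited rather than taken for granted), and, implicitly, that $S(x)^{(k-1)}_{[a,\cdot]}$ has finite $p$-variation so that the Young pairing with $x$ is well-posed at each step of the induction. If you state those two inputs explicitly, your argument is complete and arguably cleaner than the paper's for $p>1$.
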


\begin{proof}
We prove the result first under the assumption that $p=1$.
Let $x^{\lambda}:=x\circ \lambda$ denote the reparameterisation of $x$ by $\lambda$. We prove the statement by an induction on the level of the ST. The $1^{st}$-order iterated integral is  the increment, and so we have
\begin{equation*}
S(x^{\lambda})^{(1)}_{[c,t]} = x^{\lambda}_{t}-x^{\lambda}_{c}  = \int_{a}^{\lambda \left( t\right) }dx_{u} = S(x)^{(1)}_{[a,\lambda_t]} ,
\end{equation*}
which holds for all $t \in [c,d]$. Fixing a basis $\{v_{i}:i=1,...,d\}$ for $V$, we assume that for any $k \geq 0$ and any multi-index $(i_1,...,i_m)$ of integers from $\{1,...,d\}$ of length $m\leq k$, we have  equality of the coordinate iterated integrals defined with respect to this basis, i.e.
\begin{equation*}
S(x^{\lambda})_{[c,t]}^{(i_1,...,i_m)} = S(x)_{[a,\lambda_t]}^{(i_1,...,i_m)} \text{for all $t \in [c,d]$.}
\end{equation*}
We then consider an arbitrary multi-index $(i_1,...,i_k,i_{k+1})$ of length $k+1$. For any $t \in [c,d]$ we can use the induction hypothesis to write
\begin{eqnarray*}
S(x^{\lambda})_{[c,t]}^{(i_1,...,i_k,i_{k+1})} =\int_{c}^{t} S(x^{\lambda})_{[c,u]}^{(i_1,...,i_k)}d(x^{\lambda} _{u})^{(i_{k+1})}
 =\int_{c}^{t} S(x)_{[a,\lambda_u]}^{(i_1,...,i_k)}d(x^{\lambda}_{u})^{(i_{k+1})}.
\end{eqnarray*}
Let $\mu_{x}^{i}$ (resp. $\mu_{x^{\lambda}}^{i}$) be the unique (signed) Lebesgue-Stieltjes measure, defined on the Borel subsets of $[a,b]$ (resp. on $[c,d]$), which is associated to the function $x^{i}$ (resp. $x^{i}\circ \lambda$). It is easy to see that the image measure $\lambda_{*}\mu_{x^{\lambda}}^{i}:=\mu_{x^{\lambda}}^{i} \circ \lambda^{-1}$ equals $\mu_{x}^{i}$. Then, for any function $f:[a,b]\mapsto \mathbb{R}$ for which $f\circ \lambda$ is $\mu_{x}^{i}$-integrable, it holds that
\[
\int_{c}^{t}f(\lambda_u)\mu_{x^{ \lambda}}^{i}(du)=\int_{a}^{\lambda_t}f(r)(\lambda_{*}\mu_{x^{ \lambda}}^{i})(dr)=\int_{a}^{\lambda_t}f(r)\mu_{x}^{i}(dr);
\]
see, e.g., Theorem 3.6 of \cite{bogachev2007}.  By applying this result with the choices $f(r)=S(x)_{[a,r]}^{(i_1,...,i_k)} $ and $i=i_{k+1}$ we obtain 
\[
S(x^{\lambda})_{[c,t]}^{(i_1,...,i_k,i_{k+1})}=\int_{a}^{\lambda_t}S(x)_{[a,r]}^{(i_1,...,i_k)}\mu_{x}^{i}(dr)=S(x)_{[a,t]}^{(i_1,...,i_k,i_{k+1})}
\]
whereupon the induction step is complete.

The case $1<p<2$ can be obtained using a simple 
approximation argument by combining the fact that the closure of $C_{1}(V)$ in $C_{q}(V)$ contains $C_{p}(V)$ for any $p<q$. The argument is then concluded by considering such $q$ with $q<2$ and then using the joint continuity, in $q$-variation, of the Young integral map (e.g. \cite[Prop. 6.11]{friz2010multidimensional})
\[
C_{p}(V) \times C_{p}(V) \owns (f,g) \mapsto \int f dg.
\]
It is left to the reader to fill in the fine details.
\end{proof}

\begin{remark}
\label{Lipschitz reparam}It is useful sometimes to work with
re-parameterisations which may fail to be continuous or even surjective. In
these cases, it is possible for a version of the previous theorem to
still hold. For example, suppose that $x$ is a path in $C_{1}\left(
[a,b],V\right)  $ of length $L=\left\vert \left\vert x\right\vert \right\vert
_{1;\left[  a,b\right]  }$ then we can define a right-continuous function
$\tau:\left[  0,L\right]  \rightarrow\lbrack a,b]$
\begin{equation}
\tau\left(  s\right)  =\inf\left\{  t\geq a:\left\vert \left\vert x\right\vert
\right\vert _{1;[a,t]}>s\right\}  =\sup\left\{  t\geq a:\left\vert \left\vert
x\right\vert \right\vert _{1;[a,t]}\leq s\right\}  \label{reparam}%
\end{equation}
with the convention that $\inf\emptyset=b$. In spite of the lack of continuity
of $\tau,$ it can be seen easily that $x\circ\tau$ is itself still continuous
and indeed still has finite length. In fact more is true: under this choice of
reparameterisation, the path $x\circ\tau$ is Lipschitz continuous. The reader is invited to prove
this in Exercise \ref{ex:Lip}. The steps in the proof of the previous theorem can
be retraced in this case to show again that $S\left(  x^{\tau}\right)
=S\left(  x\right)  $.
\end{remark}

\subsection{Fast decay in the magnitude of coefficients}

As an application of this property we now prove two key results which expose
properties of the ST, and which we will use repeatedly. The
first, the so-called factorial decay estimate, quantifies the magnitude of the
terms $S\left(  x\right)  ^{\left(  k\right)  }$  with respect
to appropriate terms norms. We state it for paths of finite length, although
generalisations to less regular paths are possible, see for instance \cite[Sec. 9.1.1]{friz2010multidimensional}.

\begin{proposition}[Factorial decay]\label{prop:factorial_decay} Let $x$ be a path in $C_{1}\left([a,b],V\right)$. Then
for any $k$ in $\mathbb{N}$ we have 
\begin{equation}
\norm{S(x)^{(k)}}_{V^{\otimes k}} \leq \frac{\norm{x}_{1,[a,b]}^{k}}{k!},\label{fac decay}
\end{equation}
\end{proposition}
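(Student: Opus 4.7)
The plan is to reduce the estimate to the case of a Lipschitz path via reparameterisation and then apply the triangle inequality for Bochner integrals together with the cross-norm property of the Hilbert-Schmidt tensor norm.

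First, I would invoke Remark \ref{Lipschitz reparam}: since $x$ has finite $1$-variation with total length $L = \norm{x}_{1,[a,b]}$, one can reparameterise by arc length to obtain a path $x^\tau: [0,L] \to V$ that is Lipschitz continuous with Lipschitz constant $1$; in particular $x^\tau$ is absolutely continuous with $\norm{\dot{x}^\tau_s}_V \leq 1$ for almost every $s \in [0,L]$. Lemma \ref{lemma:repearam_invariance} (or the reparameterisation statement in Remark \ref{Lipschitz reparam}) gives $S(x)^{(k)} = S(x^\tau)^{(k)}$, so it suffices to bound the latter.

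Second, because $x^\tau$ is Lipschitz, all the iterated integrals may be written as Bochner integrals over the $k$-simplex:
\[
S(x^\tau)^{(k)} = \int_{\Delta^k_L} \dot{x}^\tau_{s_1} \otimes \cdots \otimes \dot{x}^\tau_{s_k} \, ds_1 \cdots ds_k, \qquad \Delta^k_L = \{0 < s_1 < \cdots < s_k < L\}.
\]
Applying the triangle inequality for Bochner integrals together with the identity $\norm{v_1 \otimes \cdots \otimes v_k}_{V^{\otimes k}} = \norm{v_1}_V \cdots \norm{v_k}_V$ for simple tensors (immediate from the definition of the Hilbert-Schmidt inner product given earlier), one obtains
\[
\norm{S(x^\tau)^{(k)}}_{V^{\otimes k}} \leq \int_{\Delta^k_L} \prod_{i=1}^k \norm{\dot{x}^\tau_{s_i}}_V \, ds_1 \cdots ds_k \leq \int_{\Delta^k_L} 1 \, ds_1 \cdots ds_k = \frac{L^k}{k!}.
\]
Since $L = \norm{x}_{1,[a,b]}$, this is exactly the stated bound.

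The potentially delicate step is the reduction to the Lipschitz case, since $\tau$ is only right-continuous and need not be surjective; one must trust that the reparameterisation invariance promised in Remark \ref{Lipschitz reparam} genuinely extends to this more general $\tau$. If one preferred to avoid this reduction, an alternative is to argue directly from the Young-integral definition: the total variation measures $\mu^i_x$ associated to the coordinates of $x$ are dominated by the total variation measure of $x$, and one can bound $\norm{S(x)^{(k)}}_{V^{\otimes k}}$ by $\int_{a<t_1<\cdots<t_k<b} d|x|_{t_1}\cdots d|x|_{t_k} = \norm{x}_{1,[a,b]}^k/k!$ using Fubini and the ordered simplex. Either route is essentially a one-line estimate once the analytic machinery is in place.
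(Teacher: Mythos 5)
Your proposal is correct and follows essentially the same route as the paper's proof: reparameterise by arc length using Remark \ref{Lipschitz reparam} to reduce to a Lipschitz path $x^\tau$ with $\norm{\dot{x}^\tau}_V \leq 1$ a.e., write the $k$-th level as a Bochner integral over the simplex, and apply the triangle inequality together with the cross-norm property of the Hilbert--Schmidt tensor norm to obtain $L^k/k!$. You also correctly flag the subtle point about the non-continuity of $\tau$, which the paper handles via the same Remark.
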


\begin{proof}
Set $L:=\norm{x}_{1,[a,b]}$ to be the $1$-variation of $x$, or its \emph{length}. Let $\tau$ be the reparamterisation defined by (\ref{reparam}) then $x^{\tau
}=x\circ\tau$ $:\left[  0,L\right]  \rightarrow V$ is Lipschitz continuous; see \Cref{ex:Lip}.
Using Remark \ref{Lipschitz reparam} we have that $S\left(x\right)
=S\left(x^{\tau}\right)  $ and, as the length of $x$ is also a
parameterisation-invariant quantity, it suffices to work with $x^{\tau}$ in
place  of $x$ when proving (\ref{fac decay}).  Any Lipschitz continuous
function is absolutely continuous and hence, by the Radon-Nikodym Theorem (e.g. \cite[Thm. 3.8]{folland1999real}),
there exists an integrable function $\left(  x^{\tau}\right)  ^{\prime}$ in
$L^{1}\left(  \left[  0,L\right]  ,V\right)  $ such that
\[
x_{u,v}^{\tau}=\int_{u}^{v}\left(  x_{r}^{\tau}\right)  ^{\prime}dr
\]
for all $\left[  u,v\right]  $ in $\left[  0,L\right]  .$ The derivative of
$x_{t}^{\tau}$ exists for almost every $t$ in $\left[  0,L\right]  $ and it
equals $\left(  x_{r}^{\tau}\right)  ^{\prime}$; the fact that $x^{\tau}$ has
Lipschitz norm bounded by one, cf. (\ref{Lip norm 1}), then shows that
$||\left(  x^{\tau}\right)  ^{\prime}||\leq1$ almost everywhere. Using these
observations, together with Fubini's Theorem (e.g. \cite[Thm. 2.37]{folland1999real}), it is easy to see that
\[
S\left(  x^{\tau}\right)  ^{\left(  k\right)  }=\int_{0<t_{1}<...<t_{k}%
<L}\left(  x_{t_{1}}^{\tau}\right)  ^{\prime}\left(  x_{t_{2}}^{\tau}\right)
^{\prime}...\left(  x_{t_{k}}^{\tau}\right)  ^{\prime}dt_{1}dt_{2}...dt_{k}\in
V^{\otimes k},
\]
and therefore
\begin{align*}
\norm{S\left(  x^{\tau}\right)  ^{\left(  k\right)
}}_{k}  & \leq\int_{0<t_{1}<...<t_{k}<L} \norm{\left(  x_{t_{1}}^{\tau}\right)  ^{\prime}\left(  x_{t_{2}}^{\tau
}\right)  ^{\prime}...\left(  x_{t_{k}}^{\tau}\right)  ^{\prime}}_{k}dt_{1}dt_{2}...dt_{k}\\
& \leq\int_{0<t_{1}<...<t_{k}<L}\norm{\left(  x_{t_{1}}^{\tau}\right)  ^{\prime
}} \cdot \norm{\left(  x_{t_{2}}^{\tau}\right)  ^{\prime}}\cdot \cdot \cdot \norm{\left(  x_{t_{k}}^{\tau
}\right)  ^{\prime}}dt_{1}dt_{2}...dt_{k}\\
& \leq\frac{L^{k}}{k!}.
\end{align*}
\end{proof}

\subsection{Uniqueness}

The next result we show is that for any two paths $x$ and $y$ in
$C_{p}\left(  V\right)$ which share an identical, strictly monotone
coordinate, one has $S\left(  x\right)  =S\left(  y\right)$ if and only if
the paths $x$ and $y$ are identically equal. This is the first instance of a
result, which we will revisit later in a more general form, which validates that the ST is injective. 

\begin{proposition}[Injectivity for time-augmented paths]
Suppose that $x$ and $y$ are two paths in $C_{p}\left([a,b],V\right)$, with $p \in [1,2)$, for
some choice of basis for $V,$ their expression in these coordinates $x=\left(
x^{1},...,x^{d}\right)  $ and $y=\allowbreak\left(  y^{1},...,y^{d}\right)  $
are such that
\[
x^{1}=y^{1}=\rho:\left[  a,b\right]  \rightarrow
\mathbb{R}
\]
for some strictly monotone path $\rho.$ Suppose that $x_{a}=y_{a}$, then
$$S\left(  x\right) =S\left(  y\right) \iff x=y.$$
\end{proposition}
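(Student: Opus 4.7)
The idea is to exploit the strictly monotone coordinate $\rho$ to reduce the problem to one in which the first coordinate is literally time, and then to single out a specific family of coordinate iterated integrals that recover the remaining coordinates of the path.

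\textbf{Step 1: Time-change by $\rho$.} Since $\rho:[a,b]\to[\rho(a),\rho(b)]$ is continuous and strictly monotone, it is a homeomorphism; WLOG (by time reversal if needed) we may take it to be increasing. Let $\lambda=\rho^{-1}:[\rho(a),\rho(b)]\to[a,b]$ and set $\tilde x=x\circ\lambda$, $\tilde y=y\circ\lambda$. Both $\tilde x$ and $\tilde y$ lie in $C_p([\rho(a),\rho(b)],V)$, have first coordinate equal to the identity $t\mapsto t$ on $[\rho(a),\rho(b)]$, and start at the common point $x_a=y_a$. By Lemma \ref{lemma:repearam_invariance} (reparameterisation invariance of the ST), $S(\tilde x)=S(x)=S(y)=S(\tilde y)$.

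\textbf{Step 2: Isolate the iterated integrals that recover the remaining coordinates.} Fix an index $i\in\{2,\dots,d\}$. For any $t\in[\rho(a),\rho(b)]$, consider the coordinate iterated integral $S(\tilde x)^{(i,1)}_{[\rho(a),t]}$. Since $d\tilde x^{1}_{s}=ds$ (Young integration against the identity is just Lebesgue integration) and the inner Young integral $\int_{\rho(a)}^{s}d\tilde x^{i}_{u}=\tilde x^{i}_{s}-\tilde x^{i}_{\rho(a)}$ is continuous in $s$, we obtain
\begin{equation*}
S(\tilde x)^{(i,1)}_{[\rho(a),t]}=\int_{\rho(a)}^{t}\bigl(\tilde x^{i}_{s}-\tilde x^{i}_{\rho(a)}\bigr)\,ds,
\end{equation*}
and similarly for $\tilde y$. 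Equating these and using $\tilde x^{i}_{\rho(a)}=x^{i}_{a}=y^{i}_{a}=\tilde y^{i}_{\rho(a)}$ yields
\begin{equation*}
\int_{\rho(a)}^{t}\tilde x^{i}_{s}\,ds=\int_{\rho(a)}^{t}\tilde y^{i}_{s}\,ds \quad\text{for all }t\in[\rho(a),\rho(b)].
\end{equation*}

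\textbf{Step 3: Conclude.} Because $\tilde x^{i}$ and $\tilde y^{i}$ are continuous, the Lebesgue differentiation theorem (or the fundamental theorem of calculus for continuous integrands) gives $\tilde x^{i}=\tilde y^{i}$ pointwise on $[\rho(a),\rho(b)]$. Together with $\tilde x^{1}=\tilde y^{1}=\mathrm{id}$ this shows $\tilde x=\tilde y$, and composing with $\rho$ gives $x=y$. The reverse implication $x=y\Rightarrow S(x)=S(y)$ is immediate from the definition.

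\textbf{Main obstacle.} The only delicate point is the manipulation of $S(\tilde x)^{(i,1)}$ in the Young regime $1\le p<2$: one needs to know both that $d\tilde x^{1}_s=ds$ (which follows from $\tilde x^1=\mathrm{id}$ and the definition of the Young integral against a $C^1$ path) and that the map $s\mapsto \int_{\rho(a)}^{s}d\tilde x^{i}_{u}=\tilde x^{i}_{s}-\tilde x^{i}_{\rho(a)}$ is a continuous integrand so that the outer integral collapses to an ordinary Lebesgue integral. Both facts are standard properties of Young integration (see the reference to \cite[Chap.~6]{friz2010multidimensional} cited earlier); once they are invoked, the proof is essentially a one-line reduction.
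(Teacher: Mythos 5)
You equate the \emph{subinterval} quantities $S(\tilde x)^{(i,1)}_{[\rho(a),t]} = S(\tilde y)^{(i,1)}_{[\rho(a),t]}$ for \emph{every} $t\in[\rho(a),\rho(b)]$, but the hypothesis $S(x)_{[a,b]}=S(y)_{[a,b]}$ only gives you equality of signatures over the \emph{full} interval, i.e.\ at $t=\rho(b)$. Equality of full-interval signatures does not imply equality of subinterval signatures: for a single index $i$, the full-interval datum $S(\tilde x)^{(i,1)}_{[\rho(a),\rho(b)]}=\int_{\rho(a)}^{\rho(b)}(\tilde x^i_s-\tilde x^i_{\rho(a)})\,ds$ is one real number, which says nothing about the running integral $\int_{\rho(a)}^t(\tilde x^i_s-\tilde x^i_{\rho(a)})\,ds$ for $t<\rho(b)$. (Indeed, asserting subinterval signature equality for all $t$ for time-augmented paths is essentially equivalent to the conclusion you are trying to prove, since the level-1 subinterval signature is the increment $\tilde x_t-\tilde x_{\rho(a)}$.)

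The paper's proof gets around this by extracting a full-interval quantity for \emph{each} level: it uses the coordinate iterated integrals $S(x)^{(1,\ldots,1,j)}_{[a,b]}$ with $k+1$ leading $1$'s, which (after integration by parts and using the agreement of endpoints and of the first-level signature) reduce to the polynomial moments $\int_a^b t^k x^-_t\,dt = \int_a^b t^k y^-_t\,dt$ for all $k\ge 0$. Density of polynomials in $L^2([a,b])$ then forces $x^-=y^-$ almost everywhere, hence everywhere by continuity. This infinite family of full-interval iterated integrals is what replaces your (unavailable) knowledge of the running integral at every $t$. The reduction to the case $\rho=\mathrm{id}$ via $\rho^{-1}$ in your Step 1 is correct and matches the paper.
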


\begin{proof}
It is obvious that two identical paths have the same ST. To prove the
converse we first notice, by considering $-x$ and $-y$ if necessary, that it
is sufficient to prove the result assuming that $\rho$ is strictly increasing.
Further by reparameterising $x$ and $y$ by the (necessarily continuous and
strictly increasing) function $\sigma=\rho^{-1}$ we can assume that the first
coordinate of both $x$ and $y$ is the identity function $i\left(  t\right)
\equiv t$. By making these simplification, we can write
\[
x_{t}=\left(  t,x_{t}^{-}\right)  \text{ and }y_{t}=\left(  t,y_{t}%
^{-}\right)
\]
where $x^{-}$ and $y^{-}$ are paths with values in $%
\mathbb{R}
^{d-1}.$ The aim is then to prove that $x^{-}$ and $y^{-}$ are equal whenever
$S\left(  x\right)  =S\left(  y\right).$ In the given basis of $V$ we have $S(x)^{w}=S(y)^{w}$ where $w$ is any multi-index of the form $w=(1,....,1,j)$ of length $k+2$. Written explicitly, this gives that  
\[
\int_{a}^{b}t^{k+1}d\left(  x_{t}^{-}\right)  ^{j}=\int_{a}^{b}t^{k+1}d\left(
y_{t}^{-}\right)  ^{j}%
\]
for every $k$ and every $j$. For simplicity we drop the index $j$ and refer only to $x^{-}$.
Integration-by-part then yields that
\[
b^{k+1}x_{b}^{-}-a^{k+1}x_{a}^{-}-\left(  k+1\right)  \int_{a}^{b}t^{k}%
x_{t}^{-}dt = b^{k+1}y_{b}^{-}-a^{k+1}y_{a}^{-}-\left(  k+1\right)  \int_{a}^{b}t^{k}%
y_{t}^{-}dt
\]
Noting that $x_{a}=y_{a}$ together with $x_{a,b}=S\left(  x\right)
_{a,b}^{\left(  1\right)  }=S\left(  y\right)  _{a,b}^{\left(  1\right)
}=y_{a,b}$ results in
\[
\int_{a}^{b}t^{k}x_{t}^{-}dt=\int_{a}^{b}t^{k}y_{t}^{-}dt\text{ for every
}k=0,1,2...
\]
and hence that $x^{-}-y^{-}$ as an element of $L^{2}\left(  \left[
a,b\right]  \right)$ is in the orthogonal complement $\mathcal{P}^{\perp}$ of
the space $\mathcal{P}$ of finite degree polynomials. Since $\mathcal{P}$ is
dense subspace of $L^{2}\left(  \left[  a,b\right]  \right)  $ it follows that
$\mathcal{P}^{\perp}=\left\{  0\right\}  $ and hence $x^{-}=y^{-}$ almost
everywhere. Using the continuity of $x^{-}$ and $y^{-}$ then gives that
$x^{-}$and $y^{-}$ are identically equal.
\end{proof}

So far we have seen that the ST satisfies two important analytic properties: the factorial decay allows for fast approximation of solutions of differential equations, while the invariance to reparameterization allows to remove possibly detrimental symmetries from the data. The ST also benefits from a rich algebraic structure which allow for efficient computations as we shall see next.

\section{Algebraic properties}

\subsection{Chen's relation}

At first sight, the ST looks like an object very difficult to compute in practice. However, it turns out that these computations can be carried out elegantly and efficiently by using simple classes of paths as building blocks for more complicated paths. To do this we will make use of an algebraic property of the ST called Chen's relation. We will later refine our view on this point but, for the moment, we assume that $x \in C_1([a,b],V)$ is a linear path, i.e. a straight line, so that for any $t \in [a,b]$ 
\begin{equation*}
    x_t = x_a + \frac{t-a}{b-a} (x_{b}-x_{a}).
\end{equation*}
Denote by $x_{a,b}:=i_1(x_b-x_a)$, where $i_1 : V \to T((V))$ is the canonical inclusion. Then, in this special case, we can calculate the ST of $x$ over $[a,b]$ very easily and show that 
\begin{align}\label{eqn:tensor_exp}
    S(x)_{[a,b]} = \exp (x_{a,b}),
\end{align}
where the map $\exp : T((V)) \to T_1((V))$ is the tensor exponential defined as
\begin{align}\label{eqn:tensor_exp_}
    \exp (v) := \sum_{k=0}^\infty \frac{v^{\otimes k}}{k!},
\end{align}
where $T_{1}\left(  \left(  V\right)  \right)  =\left\{  v\in T\left(  \left(
V\right)  \right)  :\pi_{0}v=1\right\}  .$ 

The proof of this observation follows because the derivative of $x$ is the constant vector $\frac{x_b-x_a}{(b-a)}$ on $[a,b]$, and therefore 
\begin{equation*}
S(x)^{(k)}_{[a,b]}=\frac{(x_b-x_a)^{\otimes k}}{(b-a)^{k}}\int_{a<t_{1}<...<t_{k}<b}dt_{1}...dt_{k}= \frac{(x_{a,b})^{\otimes k}}{k!}
\end{equation*}
Suppose we now have two paths $x$ in  $C_p([a,b],V)$ and $y$ in $C_{p}([b,c],V)$. Then there is a very natural binary operation on these paths which allows us to combine $x$ and $y$ into a single path. We define the \emph{concatenation} of $x$ and $y$, which we denote by $x \ast y$, to be the path in $C_p([a,c],V)$ given by
\begin{eqnarray*}
\left( x \ast y \right)_t &=&\left\{ 
\begin{array}{cc}
x_t & \text{if }t\in \left[ a,b\right] \\ 
y_t - y_b + x_b & \text{%
if }t\in \lbrack b,c]%
\end{array}%
\right.
\end{eqnarray*}
An important property is that the ST $S(x*y)$ can be described in terms of the ST of $x$ and the ST of $y$ and, in fact, it equals the tensor product $S(x) \cdot S(y)$. This property, which is called \emph{Chen's relation}, is the content of the following lemma.
\begin{lemma}[Chen's relation] \label{lemma:chen}
Let $1 \leq p <2$. Suppose that $x$ is in $C_p([a,b],V) $ and $y$ is in $C_p([b,c],V)$. Then 
\begin{equation}\label{eqn:chen}
S(x*y)_{[a,c]} = S(x)_{[a,b]} \cdot S(y)_{[b,c]}.
\end{equation}
\end{lemma}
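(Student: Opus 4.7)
My plan is to establish the identity level by level: for each $k \in \mathbb{N}_0$, I will show that the $k$-th tensor component of $S(x*y)_{[a,c]}$ equals the $k$-th component of $S(x)_{[a,b]}\cdot S(y)_{[b,c]}$, which by \Cref{tensor product} amounts to proving
\[
S(x*y)^{(k)}_{[a,c]} \;=\; \sum_{j=0}^{k} S(x)^{(j)}_{[a,b]} \otimes S(y)^{(k-j)}_{[b,c]}.
\]
The starting observation is that $z := x*y$ agrees with $x$ on $[a,b]$ and differs from $y$ on $[b,c]$ only by the constant $x_b - y_b$; therefore the Young--Stieltjes measure $\mu_z^i$ coincides with $\mu_x^i$ on the Borel subsets of $[a,b]$ and with $\mu_y^i$ on those of $[b,c]$. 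In particular $\int_u^v dz = \int_u^v dx$ for $[u,v]\subset[a,b]$ and $\int_u^v dz = \int_u^v dy$ for $[u,v]\subset[b,c]$.

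The cleanest way to proceed is by induction on $k$. The case $k=0$ is trivial (both sides equal $1$), and $k=1$ follows from the telescoping identity $(x*y)_c - (x*y)_a = (x_b - x_a) + (y_c - y_b)$. For the induction step, I would use the recursive definition of the iterated integrals, writing
\[
S(z)^{(k+1)}_{[a,t]} \;=\; \int_a^t S(z)^{(k)}_{[a,s]}\otimes dz_s
\]
and splitting the integration interval at $s=b$ when $t\in [b,c]$. On the piece $s\in[a,b]$ the integrand $S(z)^{(k)}_{[a,s]}$ equals $S(x)^{(k)}_{[a,s]}$ by restriction, so that part of the integral reproduces $S(x)^{(k+1)}_{[a,b]}$ (when integrated up to $b$). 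On the piece $s\in[b,t]$ I would apply the inductive hypothesis to $S(z)^{(k)}_{[a,s]}$, which splits as $\sum_{j=0}^{k} S(x)^{(j)}_{[a,b]}\otimes S(y)^{(k-j)}_{[b,s]}$, and then pull $S(x)^{(j)}_{[a,b]}$ out of the $ds$-integral as a constant. Collecting terms and reindexing yields exactly the right-hand side at level $k+1$.

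The only genuine subtlety is the handling of the integration theory in the range $1\le p<2$: one needs that the Young integral satisfies the chain/splitting identity $\int_a^t = \int_a^b + \int_b^t$ for the tensor-valued integrands appearing here, and that Fubini-type rearrangements of the iterated integrals remain valid. Both are standard consequences of Young's construction, see e.g.\ \cite[Chapter 6]{friz2010multidimensional}, and can be obtained by first proving the identity for $x,y\in C_1(V)$ (where the integrals are classical Lebesgue--Stieltjes and the decomposition of the simplex $\{a<t_1<\cdots<t_k<c\}$ into the $k+1$ pieces indexed by the number $j$ of times $t_i\le b$ is immediate and gives the result in one stroke), and then extending to $1\le p<2$ by the same approximation argument used at the end of the proof of \Cref{lemma:repearam_invariance}, namely density of $C_1(V)$ in $C_q(V)$ for $p<q<2$ combined with the joint continuity of the Young integral map in $q$-variation.

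I expect the main obstacle to be purely notational: organising the sum-over-split-points bookkeeping so that the decomposition of the $k$-simplex $\{a<t_1<\cdots<t_k<c\}$ into the subsimplices with exactly $j$ indices on the left of $b$ translates cleanly into the convolution formula (\ref{tp}). Once the finite-length case is written down, the passage to $1\le p<2$ is routine.
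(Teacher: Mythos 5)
Your proof is correct and its outer structure mirrors the paper's exactly: establish the identity for bounded-variation paths and then lift to $1 \le p < 2$ using density of $C_1$ in $C_q$ together with joint continuity of the Young integral. Where you differ is in the $p=1$ case itself. Your primary plan is an induction on the level $k$, using the recursion $S(z)^{(k+1)}_{[a,t]} = \int_a^t S(z)^{(k)}_{[a,s]}\otimes dz_s$, splitting the integral at $s=b$, and invoking the inductive hypothesis for $S(z)^{(k)}_{[a,s]}$ on the piece $s\in[b,t]$. The paper instead decomposes the $k$-simplex $\{a<u_1<\dots<u_k<c\}$ directly into the $k+1$ sub-simplices indexed by how many variables lie below $b$ and applies Fubini once, reading off the convolution in a single step. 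You actually mention this direct decomposition in passing and note that it gives the result ``in one stroke''; that is precisely the paper's argument. Both routes are valid and rest on the same underlying Young-integral machinery. Your inductive version has the small virtue of reducing Fubini to a single split per step, but you need the induction hypothesis stated parametrically in the right endpoint $s\in[b,c]$ rather than only at $s=c$; your write-up gets this right by applying the hypothesis to $S(z)^{(k)}_{[a,s]}$ and pulling the $[a,b]$-factors out of the $ds$-integral as constants.
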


\begin{proof}
We prove the result in the case $p=1$; the identity for arbitrary $1<p<2$ can be obtained by the now-familiar routine of approximating $x$ and $y$ in $C_{q}$ with $p<q<2$ by respective sequences bounded variation paths. See the final paragraph of the proof of \ref{lemma:repearam_invariance}. The identity (\ref{eqn:chen}) for $p=1$ then carries over to general $p$ by the continuity of Young integration in the $q$-variation topology. With this simplification being understood, we let $z :=x \ast y$. Then, for each $k\geq 1$, the level $k$ of the ST
\begin{align*}
S(z)_{[a,c]}^{(k)} &= \int_{a <u_1<...<u_k<c} dz_{u_1}  ... dz_{u_k}= \sum_{i=1}^k \int_{a <u_1<...<u_i<b<u_{i+1}<...<u_k<c} dz_{u_1}  ...  dz_{u_k}
\end{align*}
By Fubini's theorem, this is equivalent to
\begin{align*}
S^k(z)_{[a,c]} &= \sum_{i=1}^k \left( \int_{a <u_1<...<u_i<b} dx_{u_1} ...  dx_{u_i} \right) \otimes \left( \int_{b <u_{i+1}<...<u_k<c} dy_{u_{i+1}} ... dy_{u_k} \right)\\ 
& = \sum_{i=1}^k S(x)^{(i)}_{[a,b]} \otimes S(y)^{(k-i)}_{[b,c]}
\end{align*}
which concludes the proof.
\end{proof}

As we will discuss in details in later sections, the paths one deals with in many data science applications are obtained by piecewise linear interpolation of discrete time series\footnote{Other interpolation methods can be used, such as by cubic splines, rectilinear, Gaussian kernel smoothing etc.}.  More precisely, consider a time series $\mathbf{x}$ as a collection of points $\mathbf{x}_i \in \mathbb{R}^{m-1}$ with corresponding time-stamps $t_i \in \mathbb{R}$ such that $\mathbf{x} = ((t_0, \mathbf{x}_0), (t_1, \mathbf{x}_1), ..., (t_n, \mathbf{x}_n))$, and $t_0 < ... < t_n$ and let $T=t_n$. 

Let $x : [t_0, t_n] \to \mathbb{R}^m$ be the piecewise linear interpolation of the data such that $x_{t_i} = (t_i, \mathbf{x}_i)$. Using Chen's relation (\ref{eqn:chen}) one sees that
\begin{equation*}
    S(x)_{[t_0,t_n]} = S(x)_{[t_0,t_1]}\otimes S(x)_{[t_1,t_2]} \otimes ... \otimes S(x)_{[t_{n-1},t_n]}
\end{equation*}
On each sub-interval $[t_i,t_{i+1}]$ the path $x$ is linear, therefore \cref{eqn:tensor_exp} yields
\begin{equation}\label{eqn:comp_sig}
    S(x)_{[t_0,t_n]} = \exp_\otimes(x_{t_1}-x_{t_0})\cdot \exp_\otimes(x_{t_2}-x_{t_1}) \cdot ... \cdot \exp_\otimes(x_{t_n}-x_{t_{n-1}})
\end{equation}
Expression (\ref{eqn:comp_sig}) is leveraged in all the main  python packages for  computing signatures such as \texttt{esig} \cite{esig}, \texttt{iisignature} \cite{reizenstein2018iisignature} and \texttt{signatory} \cite{signatory}. 

We conclude this section by showing that at any level of truncation, the truncated tensor algebra coincides with the linear span of signatures truncated at that level. Several proofs of this statement can be found in the literature, see for example \cite[Lemma 3.4]{diehl2019invariants}. The one we propose below, (which one of us learnt from Bruce Driver), solely requires tools introduced so far in this chapter and the well-known fact that the exponential map in (\ref{eqn:tensor_exp_}) and the tensor logarithm $\log : T_1((V)) \to T((V))$ defined as 
\begin{align}\label{eqn:tensor_log}
\log(v) = \sum_{k=1}^{\infty}\frac{(-1)^{k+1}}%
{k}(v-\mathbf{1})^{k},
\end{align}
are mutually inverse.

\begin{lemma}
\label{Driver}For every $N$ in $\mathbb{N}$ the truncated tensor algebra $T_{N}\left(  V\right)  =$span$\left(
\mathcal{S}_{N}\right)  $ where $\mathcal{S}_{N}=\pi_{N}\mathcal{S}$. The set
$T\left(  V\right)  $ is a linear subspace of span$\left(  \mathcal{S}\right)
.$
\end{lemma}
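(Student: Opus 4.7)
I would prove the two claims separately, starting with the equality $T_{N}(V)=\mathrm{span}(\mathcal{S}_{N})$ for a fixed $N\in\mathbb{N}$. The inclusion $\mathrm{span}(\mathcal{S}_{N})\subseteq T_{N}(V)$ is immediate, since every truncated signature $\pi_{N}S(x)$ lies in $T_{N}(V)$ by construction. For the reverse inclusion, because $T_{N}(V)=\bigoplus_{k=0}^{N}V^{\otimes k}$ is spanned by the images $\mathfrak{i}_{k}(v_{1}\otimes\cdots\otimes v_{k})$ of pure tensors of length $k\leq N$, it suffices to exhibit each such embedded pure tensor (having $v_{1}\otimes\cdots\otimes v_{k}$ in the level-$k$ slot and zeros at all other levels $\leq N$) as a finite linear combination of elements of $\mathcal{S}_{N}$.

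The key construction is to use concatenations of linear paths. Given $v_{1},\ldots,v_{k}\in V$ and parameters $t_{1},\ldots,t_{k}\in\mathbb{R}$, let $x_{\mathbf{t}}$ denote the piecewise linear path obtained by concatenating $k$ segments of respective directions $t_{1}v_{1},\ldots,t_{k}v_{k}$. Combining Chen's relation (\Cref{lemma:chen}) with the identity $S(\text{linear}(u))=\exp(u)$ from \eqref{eqn:tensor_exp} gives $S(x_{\mathbf{t}})=\exp(t_{1}v_{1})\cdots\exp(t_{k}v_{k})$, whose projection to level $\ell\leq N$ is the polynomial
\[
\pi_{\ell}S(x_{\mathbf{t}})=\sum_{\substack{j_{1}+\cdots+j_{k}=\ell\\ j_{i}\geq 0}}\frac{t_{1}^{j_{1}}\cdots t_{k}^{j_{k}}}{j_{1}!\cdots j_{k}!}\,v_{1}^{\otimes j_{1}}\otimes\cdots\otimes v_{k}^{\otimes j_{k}}.
\]
The crucial observation is that the monomial $t_{1}t_{2}\cdots t_{k}$ appears in this expansion only when every $j_{i}=1$, which forces $\ell=k$, and its coefficient at level $k$ is precisely $v_{1}\otimes\cdots\otimes v_{k}$.

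To extract this coefficient by a finite linear combination, I would use Vandermonde inversion. Fix $N+1$ distinct reals $\tau_{0},\ldots,\tau_{N}$; there exist scalars $\alpha_{0},\ldots,\alpha_{N}\in\mathbb{R}$ such that $\sum_{r=0}^{N}\alpha_{r}p(\tau_{r})$ returns the coefficient of $t$ in $p(t)$ for every polynomial $p$ of degree $\leq N$. Since each $\pi_{\ell}S(x_{\mathbf{t}})$ has total degree $\ell\leq N$ in $(t_{1},\ldots,t_{k})$, and hence degree $\leq N$ in each variable separately, applying the $k$-fold tensor product of this functional over the grid $\{(\tau_{r_{1}},\ldots,\tau_{r_{k}}):0\leq r_{i}\leq N\}$ extracts the coefficient of $t_{1}\cdots t_{k}$ level by level. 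By the previous observation this linear combination of elements of $\mathcal{S}_{N}$ vanishes at every level $\ell\neq k$ in $T_{N}(V)$ and equals $v_{1}\otimes\cdots\otimes v_{k}$ at level $k$; this produces $\mathfrak{i}_{k}(v_{1}\otimes\cdots\otimes v_{k})\in\mathrm{span}(\mathcal{S}_{N})$ and completes the first part.

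For the second statement, any $w\in T(V)$ lies in $T_{N}(V)$ for some $N$ and the argument above yields coefficients $c_{r_{1},\ldots,r_{k}}$ together with paths $x_{\tau_{r_{1}},\ldots,\tau_{r_{k}}}$ whose truncated signatures combine to $w$. The natural lift replaces $\pi_{N}S$ by the full signatures to get $\sum c_{r_{1},\ldots,r_{k}}S(x_{\tau_{r_{1}},\ldots,\tau_{r_{k}}})\in\mathrm{span}(\mathcal{S})$, an element that agrees with $w$ through level $N$ but may leave a residue at levels above $N$. The main obstacle here is controlling and cancelling this residue; I would do so by iterating the first-part construction at progressively higher truncation levels and exploiting that $\exp$ and $\log$ in \eqref{eqn:tensor_exp_}--\eqref{eqn:tensor_log} are mutually inverse to organise the subtraction in a way that terminates in finitely many steps for each $w\in T(V)$ (since $w$ itself has only finitely many nonzero levels, the cancellations only need to be done within a bounded range of levels dictated by the depth of the residues generated at each stage).
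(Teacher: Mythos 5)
Your proof of the truncated equality $T_{N}(V)=\mathrm{span}(\mathcal{S}_{N})$ is correct and takes a genuinely different route from the paper's. The paper writes $\mathrm{i}_{1}(a)=\log(S(x^{a}))$, expands the tensor logarithm as a series, uses the binomial theorem together with Chen's relation to rewrite each term via signatures of the self-concatenated linear path $(x^{a})^{*l}$, and lands in $\mathrm{span}(\mathcal{S}_{N})$ after projecting because the resulting (a priori infinite) combination lives in the finite-dimensional space $T_{N}(V)$; a variant of the same computation handles pure tensors $a_{1}\cdots a_{n}$. You instead scale the segments of the concatenated piecewise-linear path by independent parameters $t_{1},\dots,t_{k}$, use Chen's relation to get $S(x_{\mathbf{t}})=\exp(t_{1}v_{1})\cdots\exp(t_{k}v_{k})$, and extract the coefficient of $t_{1}\cdots t_{k}$ by multivariate Vandermonde inversion; this isolates $v_{1}\otimes\cdots\otimes v_{k}$ at level $k$ and annihilates all other levels within $T_{N}(V)$. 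Both methods rest on piecewise-linear paths, the tensor exponential, and Chen's relation, but yours has the merit of being a manifestly \emph{finite} computation: no infinite series, no interchange of summation, and no appeal to closedness of subspaces of $T_{N}(V)$.

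The sketch you offer for the second assertion, $T(V)\subset\mathrm{span}(\mathcal{S})$ with the span taken algebraically in $T((V))$ and $\mathcal{S}$ the set of untruncated signatures, does not close. You correctly identify the obstruction — the finite linear combination of full signatures that matches $w\in T_{N}(V)$ through level $N$ carries residue at every level above $N$ — but the proposed ``iterate at progressively higher truncation levels ... terminates in finitely many steps'' cannot work: the new signatures introduced to cancel the residue at level $N+1$ inject fresh residue at all levels $N+2,N+3,\dots$, so there is no bounded range of levels within which the cancellation is confined and the process never terminates. In fact the algebraic version of this assertion already fails for $\dim V=1$: there $\mathcal{S}=\{\exp(r):r\in\mathbb{R}\}$ and any finite relation $\sum_{j}c_{j}\exp(r_{j})=(0,1,0,0,\dots)$ would require $\sum_{j}c_{j}r_{j}^{k}=0$ for all $k\neq1$, forcing all $c_{j}=0$ by Vandermonde, a contradiction. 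The statement does hold once $\mathrm{span}$ is interpreted as the closure in the product topology — this is immediate from the first part, since one can match $w\in T_{N}(V)$ through any prescribed level $N'\geq N$, and the resulting sequence of combinations converges to $w$ levelwise — and this is presumably what is intended; note the paper's own proof never addresses the second assertion, ending with the truncated inclusion $T_{N}(V)\subset\mathrm{span}(\mathcal{S}_{N})$. You should flag this distinction rather than attempt to prove the unclosed version.
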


\begin{proof}
(Driver \cite{Driver}) Let $a$ be an element in $V$ and let $x^a$ denote the linear path on $[0,1]$ defined as
$x^a_{t}=ta$. Recall for any $k \in \N$, that $\mathrm{i}_{k} : V^{\otimes k} \to T((V))$ denotes the canonical inclusion. Then, we have that
\begin{align*}
\mathrm{i}_1(a) =\log(\exp(\mathrm{i}_{1}(a))) 
 & =\log(S\left(  x^a\right)  )\\
& =\sum_{k=1}^{\infty}\frac{(-1)^{k+1}}{k}(S\left(  x^a\right)
-\mathbf{1})^{\otimes k}\\
& =\sum_{k=1}^{\infty}\sum_{l=0}^{k}\frac{(-1)^{k+l+1}}{k}\binom{k}{l}S\left(
x^a\right)  ^{\otimes l}\\
& =\sum_{k=1}^{\infty}\sum_{l=0}^{k}\frac{(-1)^{k+l+1}}{k}\binom{k}{l}S\left(
(x^{a})^{*l}\right)  \\
& =\sum_{l=0}^{\infty}\sum_{k=l}^{\infty}\frac{(-1)^{k+l+1}}{k}\binom{k}%
{l}S\left((x^{a})^{*l}\right)  \\
&:=\sum_{l=0}^{\infty}c_{l}S\left(  (x^{a})^{*l}\right),
\end{align*}
where $(x^{a})^{*l}$ is the path $x^a$ concatenated with itself $l$ times and where the fifth equality follows from the Chen's identity of \Cref{lemma:chen}. From this we have
\[
i_1^N(a)=\pi_{N} i_1(a)=\sum_{l=0}^{\infty}c_{l}\pi_{N}S\left((x^{a})^{*l}\right)  \in
\overline{\text{span}\left(  \mathcal{S}_{N}\right)  }=\text{span}\left(
\mathcal{S}_{N}\right),
\]
where $i_1^N : V \to T_N(V)$ is the canonical inclusion.
A variation on the same idea yields for $n\leq N$ that
\begin{align*}
    i_1^N(a_{1}...a_{n}) &= \pi_{N}i_n\left(  a_{1}...a_{n}\right)  \\
    &=\sum_{l_{1},...,l_{n}%
=0}^{\infty}c_{l_{1}}...c_{l_{n}}\pi_{N}S\left((x^{a_1})^{*l_1}\ast...\ast
(x^{a_n})^{*l_n}\right)\in\text{span}\left(  \mathcal{S}_{N}\right)  ,
\end{align*}
for any collection $a_{1},....,a_{n}$ in $V.$ 

From this it follows as above
that $T_{N}\left(  V\right)  \subset$ span$\left(  \mathcal{S}_{N}\right)  .$
\end{proof}

\begin{remark}
    In \Cref{prop:RKHS} in the next chapter we will show that this statement not only holds for the TST, but also for the (untruncated) ST.
\end{remark}

\subsection{The signature as a controlled differential equation}

Many of the important properties of the ST of a path $x$ can be derived most easily by observing that it is the solution of a specific CDE driven by $x$. Indeed, we will see that we could have used this as a starting point to \emph{define} the ST without ever introducing iterated integrals. Formally this CDE reads  
\begin{equation}\label{eqn:sig_cde}
dY_{t}=Y_{t} \cdot dx_{t}  \text{ on $[a,b]$, with } Y_{0}= \mathbf{1} \in T((V)) 
\end{equation}
where we continue to use $\cdot$ to denote the multiplication in $T((V))$. We will shortly introduce the analysis needed to ensure that the equation is well posed and then explore some of its properties. For the moment, we comment on the structure of the equation by observing that it is a linear CDE. 

If the tensor product $\cdot$ in equation (\ref{eqn:sig_cde}) were replaced by a commutative product, then we would expect the resulting solution to be the exponential of the driving signal $x$. This allows us to view the signature, at least impressionistically,  as a form of non-commutative exponential. The analysis below develops the tools needed to put this intuition on a firmer foundation. A key eventual goal will be to show, in a precise sense, the universality of the ST: under certain conditions, all continuous input-response relationships are expressible in terms of it. This class contains a wide family of CDEs, the canonical example with which we started this chapter. The reader may find is useful to look ahead at the statement of Theorem \ref{thm: universal approximation} at this stage. 

We begin by introducing some notation for the range of the signature map, which will be heavily used throughout the rest of this chapter.
\begin{notation}
Let $1\leq p< 2$. We use $\mathcal{S}_{p}\subset T((V))$ to denote the image of $C_{p}(V)$ under the ST.
\end{notation}
The following result then makes precise the CDE-formulation of the signature which we described above.

\begin{theorem}\label{sigCDE}
Suppose that $x$ is in $C_{p}([a,b],V)$ for some $p$ in $[1,2)$. Let $(E,\left\vert \left\vert \cdot \right\vert \right\vert) $ be a Banach algebra which is a subalgebra of $T((V))$ with the property that $\mathcal{S}_{p}\subset E$. Then the controlled differential equation 
\begin{equation}
dY_{t}=Y_{t} \cdot dx_{t}  \text{ on $[a,b]$, with } Y_{0}= A \in E \label{cde sig}
\end{equation}
admits a unique solution in $E$ which is given explicitly by $Y_{t}=A\cdot S(x)_{[a,t]}$. In particular, if $ A=\mathbf{1}=(1,0,...)$ then the ST uniquely solves (\ref{cde sig}).
\end{theorem}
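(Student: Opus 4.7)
The natural candidate is $Y_t := A \cdot S(x)_{[a,t]}$, which lies in $E$ for each $t \in [a,b]$ because $A \in E$ by hypothesis, $S(x)_{[a,t]} \in \mathcal{S}_p \subset E$ by the assumption on $E$, and $E$ is closed under the tensor product as a subalgebra. My plan is to verify by direct computation, level-by-level, that this candidate satisfies the integral form of the CDE, and then to establish uniqueness by a short induction on the grading rather than by invoking a Gronwall-type estimate.

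For existence, I would project onto $V^{\otimes k}$, denoting this projection by $\pi_k$. Using \Cref{tensor product} of the extended tensor product, one has
\[
\pi_k(Y_t) = \sum_{i=0}^{k} \pi_i(A) \cdot S(x)^{(k-i)}_{[a,t]}.
\]
The defining recursion of the iterated integrals, $S(x)^{(j)}_{[a,t]} = \int_a^t S(x)^{(j-1)}_{[a,s]} \otimes dx_s$ for $j \geq 1$ (interpreted in Young's sense when $p \in (1,2)$), combined with linearity of integration, then yields
\[
\pi_k(Y_t) - \pi_k(A) = \int_a^t \pi_{k-1}(Y_s) \otimes dx_s,
\]
which is exactly the level-$k$ projection of $\int_a^t Y_s \cdot dx_s$. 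Assembling over all $k$ confirms the integral form $Y_t = A + \int_a^t Y_s \cdot dx_s$ of (\ref{cde sig}).

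For uniqueness, suppose $Y^1$ and $Y^2$ are two $E$-valued solutions with common initial value $A$. Then $Z := Y^1 - Y^2$ satisfies $Z_a = 0$ and, by linearity, $Z_t = \int_a^t Z_s \cdot dx_s$. Projecting onto level $k$ gives the clean recursion
\[
\pi_k(Z_t) = \int_a^t \pi_{k-1}(Z_s) \otimes dx_s,
\]
and since $\pi_0(Z_t) \equiv 0$ (the level-$0$ component of $Z \cdot dx$ vanishes because $dx$ takes values in level $1$), a straightforward induction on $k$ forces $\pi_k(Z) \equiv 0$ for every $k$. Hence $Z \equiv 0$ in $T((V))$, and therefore in $E$.

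The main technical obstacle lies in justifying the Young integration for $p \in (1,2)$: each level-wise integral $\int_a^t \pi_{k-1}(Y_s) \otimes dx_s$ must be shown to be a genuine Young integral in $V^{\otimes k}$, which in turn requires that iterated integrals of a $p$-variation path themselves possess finite $p$-variation (see \cite[Chapter 6]{friz2010multidimensional}). The Banach-algebra structure of $E$, together with the factorial decay of \Cref{prop:factorial_decay}, then ensure that reassembling the levels produces an honest $E$-valued equation rather than merely a formal one. Once these analytic ingredients are in place, the level-wise viewpoint renders both existence and uniqueness essentially combinatorial, which I expect to be cleaner than a direct contraction-mapping argument in $E$.
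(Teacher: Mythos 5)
Your approach handles the two halves of the theorem in the opposite order from the paper. The paper invokes the abstract Picard theorem for Young CDEs in a Banach space to secure existence and uniqueness of a solution in $E$ at once, and only then uses the level-wise recursion $Y^k_t = A^k + \int_a^t Y^{k-1}_u\,dx_u$ to identify that solution with $A \cdot S(x)_{[a,\cdot]}$; you instead use the level-wise recursion as the proof of both halves, verifying the candidate directly for existence and showing $Z \equiv 0$ for uniqueness. Your uniqueness argument is essentially the paper's identification step and is correct as such.

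Where you diverge --- existence --- there is a gap you partly acknowledge but do not close. Your level-wise check shows the candidate satisfies the integral equation coordinate-by-coordinate in $T((V))$, but ``a solution in $E$'' requires the path $t \mapsto A \cdot S(x)_{[a,t]}$ to have finite $p$-variation in the \emph{norm of $E$}, so that the $E$-valued Young integral $\int_a^t Y_s \cdot dx_s$ exists as an $E$-norm limit of Riemann sums and commutes with the projection onto each $V^{\otimes k}$. For the $\ell^1$-weighted algebra of Example \ref{norm properties} this can be assembled from the factorial decay of \Cref{prop:factorial_decay} together with Chen's identity, but the theorem is stated for an arbitrary Banach algebra $E$ with $\mathcal{S}_p \subset E$, and there you have no a priori control on $\|Y_t - Y_s\|_E$. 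The Picard invocation is precisely what buys this: it delivers an honest $E$-valued solution first, making the regularity of the candidate a conclusion rather than a hypothesis. Your route trades the external Picard reference for a regularity estimate that is still owed; if you supply it (for instance by showing that $t \mapsto S(x)_{[a,t]}$ is of finite $p$-variation in $E$ whenever $\mathcal{S}_p \subset E$ and $E$ is Banach), the argument becomes fully self-contained and arguably cleaner than the paper's.
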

\begin{remark}
For instance $E$ could be taken as the Banach algebra described in Example \ref{norm properties}.
\end{remark}
\begin{proof}
The map $f:E\rightarrow L\left(  V,E\right)  $ given by
$f \left(  Y\right)  =Y\cdot a$ is well defined; it takes values in $E$ as $E$ is a Banach algebra and, by virtue of being linear in $Y$, $f$ can
easily been seen to satisfy the conditions of the classical versions of Picard theorem for Young CDEs (e.g. \cite[Thm. 1.28]{lyons2007differential}). A solution to (\ref{cde sig}) thus exists uniquely in $E$.
On the other hand, any solution $Y_{t}=\left(  Y_{t}^{0},Y_{t}^{1},Y_{t}%
^{2},...\right)  $ to (\ref{cde sig}) must satisfy 
\[
Y_{t}^{0}=A^{0},\quad Y_t^{1}=A^{1}+A^{0}\int_{a}^tdx_u = (A\cdot S(x)_{[a,t]})^{1}
\]
and 
\[
Y_{u}^{k}
=A^{k}+\int_{a}^{t}Y_{u}^{k-1}dx_{u}\quad \text{ for }k=1,2,...
\]
An induction gives that $Y_{u}^{k}=(A \cdot S\left(  x\right)
_{\left[  0,u\right]  })^{k}$ for any $k$ in $\mathbb{N}$ and any $u$ in $\left[  a,b\right]$.
%
\end{proof}
We give an example of how this characterisation of the ST can be used to extract useful algebraic properties. To do so we first observe that, in addition to the binary operation of concatenating two paths, there is another natural unary operation consisting of running the path backwards starting from its end point. Symbolically, given $x \in C_{p}([a,b],V)$ we define the time reversal of $x$ to be $\overleftarrow{x}$ where 
\[
\overleftarrow{x_{t}}:=x_{a+b-t}  \text{ for $t \in [a,b]$}.
\]
It is easily verified that $\overleftarrow{x}$ again belongs to $C_{p}([a,b],V)$. The following result shows that the signature of $\overleftarrow{x}$ is the mutliplicative inverse of the ST of $x$ in $T((V))$.
\begin{lemma}\label{lemma:inverse}
Let $p \in [1,2)$ and let $x \in C_{p}([a,b],V)$. Then 
$$S(x)_{[a,b]}\cdot S(\overleftarrow{x})_{[a,b]}=S(\overleftarrow{x})_{[a,b]} \cdot S(x)_{[a,b]}=\bf{1}.$$
\end{lemma}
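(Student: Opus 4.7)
The strategy is to realise $S(x)_{[a,b]} \cdot S(\overleftarrow{x})_{[a,b]}$ as the signature of a single concatenated ``out-and-back'' path and then show that this signature is the unit $\mathbf{1}$ of $T((V))$ via the uniqueness part of the CDE characterisation in Theorem \ref{sigCDE}. First, I would shift $\overleftarrow{x}$ in time so that it lives on an interval adjacent to $[a,b]$: set $\tilde{x}_s := x_{2b-s}$ for $s\in [b,2b-a]$, so that by the reparameterisation invariance (Lemma \ref{lemma:repearam_invariance}) we have $S(\tilde{x})_{[b,2b-a]} = S(\overleftarrow{x})_{[a,b]}$. The concatenation $z := x \ast \tilde{x} \in C_{p}([a,2b-a],V)$ satisfies $z_a = z_{2b-a} = x_a$, and Chen's relation (Lemma \ref{lemma:chen}) gives
\[
S(x)_{[a,b]} \cdot S(\overleftarrow{x})_{[a,b]} \;=\; S(z)_{[a,2b-a]},
\]
so it suffices to prove $S(z)_{[a,2b-a]} = \mathbf{1}$.

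For this, I would invoke Theorem \ref{sigCDE}: the path $Y_t := S(z)_{[a,t]}$ is the \emph{unique} solution (in a suitable Banach subalgebra $E \subset T((V))$, e.g.\ the $\ell^1$ one from Example \ref{norm properties}) to $dY_t = Y_t \cdot dz_t$ with $Y_a = \mathbf{1}$. On $[a,b]$, since $z \equiv x$, we already have $Y_t = S(x)_{[a,t]}$ and in particular $Y_b = S(x)_{[a,b]}$. The idea on the second half is to exhibit an explicit solution that reaches $\mathbf{1}$ at the endpoint: I would propose
\[
\tilde{Y}_t \;:=\; S(x)_{[a,\,2b-t]} \qquad \text{for } t \in [b,2b-a],
\]
which satisfies $\tilde{Y}_b = S(x)_{[a,b]} = Y_b$ and $\tilde{Y}_{2b-a} = S(x)_{[a,a]} = \mathbf{1}$. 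The key check is that $\tilde{Y}$ solves the same CDE $d\tilde{Y}_t = \tilde{Y}_t \cdot dz_t$ on $[b,2b-a]$; this follows from the fact that $S(x)_{[a,\cdot]}$ itself solves the signature CDE driven by $x$, combined with the change-of-variable formula for Young integrals under the monotone substitution $s = 2b - r$. Unique solvability then forces $Y \equiv \tilde{Y}$ on $[b,2b-a]$, and in particular $S(z)_{[a,2b-a]} = Y_{2b-a} = \mathbf{1}$.

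The reverse identity $S(\overleftarrow{x})_{[a,b]} \cdot S(x)_{[a,b]} = \mathbf{1}$ follows from the first by replacing $x$ with $\overleftarrow{x}$ and using the involution $\overleftarrow{\overleftarrow{x}} = x$. The main technical hurdle is the change-of-variable step verifying that $\tilde{Y}$ solves the CDE on $[b,2b-a]$: one must track two sign changes coming from $dz_r = d(x_{2b-r})$ on the one hand, and from differentiating $S(x)_{[a,2b-t]}$ in $t$ on the other, and confirm via the Young change-of-variable theorem that they produce matching incremental identities $\tilde Y_{t_2} - \tilde Y_{t_1} = \int_{t_1}^{t_2} \tilde Y_r \cdot dz_r$. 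Everything else is a bookkeeping application of Chen's relation, reparameterisation invariance, and CDE uniqueness.
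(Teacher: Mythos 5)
Your proof is correct and rests on the same two pillars as the paper's: the CDE characterisation of the signature (Theorem \ref{sigCDE}) and a time-reversal change of variables combined with CDE uniqueness. The packaging differs modestly. You first reduce $S(x)_{[a,b]}\cdot S(\overleftarrow{x})_{[a,b]}$ to the signature of the concatenated out-and-back path via Chen's relation, then exhibit the explicit candidate $\tilde{Y}_t = S(x)_{[a,2b-t]}$ on the second half and invoke forward uniqueness from $t = b$. The paper avoids the concatenation altogether: it identifies $U_t = S(x)_{[a,b]}\cdot S(\overleftarrow{x})_{[a,t]}$ directly as the CDE solution driven by $\overleftarrow{x}$ started at $S(x)_{[a,b]}$, time-reverses it to $V_t := U_{a+b-t}$ (which solves the CDE driven by $x$), and matches terminal values at $t = b$ to conclude $V_a = \mathbf{1}$ --- a backward-uniqueness step, valid via invertibility of the CDE flow though a hair beyond the literal IVP statement of Theorem \ref{sigCDE}. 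Your route is slightly longer, owing to the detour through Chen's relation and the auxiliary reparametrisation $\tilde{x}$, but it buys the small cleanliness of relying only on the forward IVP. Both arguments hinge on the same change-of-variable check that the two sign flips --- one from $dz_r = d(x_{2b-r})$ and one from the decreasing substitution $u = 2b - r$ --- cancel, and you have correctly flagged this as the only delicate point.
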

\begin{proof}

Let $A=S\left( x\right) _{[a.b]}$ then, by using Theorem \ref{sigCDE}, we have $%
U_{t}=A\cdot S\left( \overleftarrow{x}\right) _{[a,t]}$ uniquely solves the
CDE\ 
\[
dU_{t}=U_{t}\cdot d\overleftarrow{x}_{t},\text{ on }\left[ a,b\right] \text{%
, started at }U_{a}=A.
\]
We need to prove that $U_{b}=\mathbf{1.}$ To do this we define $
V_{t}:=U_{a+b-t}$ for $t$ in $\left[ a,b\right] $ and observe that $V$
solves 
\begin{equation}
dV_{t}=V_{t}\cdot dx_{t}\text{ started at }V_{a}=U_{b}. \label{cdeV}
\end{equation}
Because $V_{b}=S\left( x\right) _{\left[ a,b\right] }$ and $S\left(
x\right) _{\left[ a,\cdot \right] }$ solves (\ref{cdeV}) we have by
uniqueness that $V_{a}=S\left( x\right) _{\left[ a,a\right] }=\mathbf{1.}$
It follows that $S\left( x\right) _{\left[ a,b\right] }$ $\cdot S\left( 
\overleftarrow{x}\right) _{[a,b]}=\mathbf{1}$ for any \thinspace $x$.
The fact that $\overleftarrow{\overleftarrow{x}}=x$ completes the proof.    
\end{proof}
\begin{remark}
Readers who may be sceptical about the advantages of working with the analytical, CDE approach to the signature are invited to compare the proof of this result with the same proof attempted using algebraic methods.
\end{remark}

\subsection{The shuffle identity}

The tensor product $\cdot$ is not the only product that makes sense on $T((V))$. In this section we will discuss another product, called the \emph{shuffle product} $\shuffle$, which will turn out to be a fundamental tool for some more advanced manipulations of the ST, and is an important tool underpinning many version of the universal approximation theorems using signature features that arise in machine learning.

We first defined this product, and then explain through examples how it is used.

\begin{definition}[Shuffle product]\label{shuffle def}
    We define $\shuffle:T(V)\times T(V)\mapsto T(V)$ by 
\begin{align}
f\shuffle r = r \shuffle f & = rf \text{ for any $r\in \mathbb{R}$ and $f \in V,$} \text{and then  extend it inductively by } \nonumber \\
    f \shuffle g & =  (f_{-} \shuffle g)\cdot a + (f \shuffle g_{-})\cdot b
\end{align}
for any $f \in V^{\otimes k} $ and $g \in V^{\otimes l}$ of the form $f=f_{-}\cdot a $ and $g=g_{-} \cdot b$, where $a$ and $b$ are in $V$. Given this definition, $\shuffle$ extends uniquely to an algebra product on $T(V)\times T(V)$ by linearity in each term of the product.
\end{definition}

 For reasons that will be clear soon, it will be useful to work with the $\shuffle$ on the co-tensor algebra $T(V^{*})$. This does not affect the definition above, which holds when $V$ is an arbitrary vector space (and so, in particular can be taken to be $V^{*}$).

Why is the shuffle product relevant to the ST? One important tool which we encounter in foundational calculus is integration-by-parts. To put this familiar result in the present context, suppose that $x$ is a smooth path in $V$ and that $f$ and $g$ are linear functionals in $V^{*}$, then classical integration-by-parts is the identity
\[
(f,x_{a,b})(g,x_{a,b})=\int_{a}^{b}(f,x_{a,s})d(g,x_{s})+\int_{a}^{b}(g,x_{a,s})d(f,x_{s}),
\]
and this be rewritten in terms of shuffle products as
\[
(f,S(x)_{[a,b]}^{(1)})(g,S(x)_{[a,b]}^{(1)})=(f\shuffle g, S(x)_{[a,b]}^{(2)})
\]
This observation can be generalised: products of iterated integrals can be re-expressed as a linear combination of higher order iterated integrals using integration-by-parts. The shuffle product can be used to succinctly capture the algebraic identities which result from this observation.

\begin{theorem}[Shuffle identity]\label{lemma:shuffle_identity}
Suppose that $x \in C_p([a,b],V)$ for $p$ in $[1,2)$. For any two linear functionals $f$ and $g$ in $T((V))^{*}\cong T(V^{*})$ it holds that 
\begin{equation}\label{shuff id}
(f,S(x)_{[a,b]})(g,S(x)_{[a,b]})=(f\shuffle g, S(x)_{[a,b]})
\end{equation}
\end{theorem}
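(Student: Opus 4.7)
The plan is to reduce to dual basis elements in $T(V^*)$ and then induct on combined multi-index length using Young integration by parts. By bilinearity of both sides of (\ref{shuff id}) in $f$ and $g$, it suffices to prove the pointwise identity
$$S(x)^I_{[a,t]} \cdot S(x)^J_{[a,t]} = (e_I^* \shuffle e_J^*)\bigl(S(x)_{[a,t]}\bigr), \qquad t \in [a,b],$$
for every pair of multi-indices $I, J$ with entries in $\{1,\ldots,d\}$, where $\{e_i\}_{i=1}^d$ is any fixed basis of $V$ and $e_I^*$ is the dual basis element of \Cref{tensor basis}.

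I would then induct on $n := |I| + |J|$. The case $n = 0$ is trivial, and whenever one of the indices is empty the identity follows at once from $\emptyset \shuffle K = K$ and the fact that the empty-word coordinate iterated integral equals $1$. For the inductive step, write $I = I' \cdot i$ and $J = J' \cdot j$ so that
$$S(x)^I_{[a,t]} = \int_a^t S(x)^{I'}_{[a,s]} \, dx^i_s, \qquad S(x)^J_{[a,t]} = \int_a^t S(x)^{J'}_{[a,s]} \, dx^j_s.$$
Applying integration by parts to the product $S(x)^I_{[a,t]} \cdot S(x)^J_{[a,t]}$ (which vanishes at $t = a$) gives
$$S(x)^I_{[a,t]} \cdot S(x)^J_{[a,t]} = \int_a^t S(x)^{I'}_{[a,s]}\, S(x)^J_{[a,s]} \, dx^i_s + \int_a^t S(x)^I_{[a,s]}\, S(x)^{J'}_{[a,s]} \, dx^j_s.$$
By the inductive hypothesis, the two integrands equal $(e_{I'}^* \shuffle e_J^*)(S(x)_{[a,s]})$ and $(e_I^* \shuffle e_{J'}^*)(S(x)_{[a,s]})$, respectively. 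The recursive definition of $\shuffle$ (\Cref{shuffle def}) reads
$$e_I^* \shuffle e_J^* = (e_{I'}^* \shuffle e_J^*) \cdot e_i^* + (e_I^* \shuffle e_{J'}^*) \cdot e_j^*,$$
and the elementary relation $(h \cdot e_k^*)(S(x)_{[a,t]}) = \int_a^t h(S(x)_{[a,s]}) \, dx^k_s$, valid for any $h \in T(V^*)$ by linearity and the definition of the iterated integrals, then shows that the right-hand side of the claimed identity satisfies exactly the same integral expression as the left-hand side, closing the induction.

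The main obstacle is justifying the product rule in the Young regime $p \in [1,2)$. I would handle this by first proving the identity for $p = 1$, where integration by parts reduces to the classical Riemann--Stieltjes rule and all iterated integrals $S(x)^K_{[a,\cdot]}$ are of bounded variation, so the computation above is unambiguous. The extension to arbitrary $p \in [1,2)$ then follows by the density/continuity argument already used in the last paragraph of the proof of \Cref{lemma:repearam_invariance}: approximate $x$ in the $q$-variation topology, with $p < q < 2$, by a sequence of bounded-variation paths, and exploit the joint continuity of the Young integral to pass the identity to the limit. Both sides of (\ref{shuff id}) are continuous functionals of $x$ in this topology, so the limiting identity is immediate.
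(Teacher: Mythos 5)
Your proof is correct and takes essentially the same route as the paper: induction on tensor degree combined with integration by parts and the recursive definition of the shuffle product. The only differences are cosmetic (you reduce to dual basis elements and run a single induction on $|I|+|J|$ where the paper keeps general tensors and a double induction on $(m,n)$), and you are slightly more careful about justifying the Young-regime integration by parts for $1<p<2$, a point the paper's proof leaves implicit.
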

\begin{proof}

It suffices to prove the result for arbitrary $f$ $\in V^{\ast \otimes k}$
and $g\in V^{\ast \otimes l}$ the general result then follows from the
distributivity of the shuffle product. 

We will prove, by double induction, the following statement 
\begin{center}
$H(m,n)$:  $\forall 0\leq k\leq m,0 \leq l\leq n$, identity (\ref{shuff id}) holds $\forall f\in V^{\ast \otimes k}, g\in V^{\ast \otimes l}.$
\end{center}
We can check that $H(m,0)$ and $H(0,n)$ hold using elementary properties of scalar multiplication in $T\left( V^{\ast }\right)$. We assume therefore that $H(m-1,n)$ and $H(m,n-1)$ hold and verify $H(m,n)$. To do so, take $f\in V^{\ast \otimes
m}$ and $g\in V^{\ast \otimes
n}$ and use the definition of the shuffle product to see that 
\begin{equation}
\left( f \shuffle g,S\left( x\right) _{a,t}\right) =\left( ( f_{-} \shuffle g ) \cdot
p,S\left( x\right) _{a,t}\right) +\left( (f \shuffle g_{-}) \cdot
q,S\left( x\right) _{a,t}\right) ,  \label{shuff id 1}
\end{equation}%
where $f=f_{-}\cdot p$ and $g=g_{-}\cdot q.$ We can then simplify the first
term by using $H(m-1, n)$ and the second using $H(m, n-1)$ to obtain 
\begin{equation}
\left( f \shuffle g,S\left( x\right) _{a,t}\right)
=\int_{a}^{t}G_{u}dF_{u}+\int_{a}^{t}F_{u}dG_{u}  \label{IBP}
\end{equation}
with $F_{s}:=\left( f,S\left( x\right) _{a,s}\right) $ and $G_{s}:=\left(
g,S\left( x\right) _{a,s}\right) .$ Integration-by-parts then yields the
desired conclusion 
\[
\left( f \shuffle g,S\left( x\right) _{a,t}\right) =F_{t}G_{t}-F_{a}G_{a}=\left(
f,S\left( x\right) _{a,t}\right) \left( g,S\left( x\right) _{a,t}\right) .
\]

\end{proof}


\begin{remark}
Given a basis $\{f_{i}\}_{i=1}^{d}$ for $V^{*}$ we could also define the shuffle product directly on $T(V^{*})$ by letting $I=(i_{1},...i_{n})$ and $J=(i_{n+1},...i_{n+m})$ be arbitrary multi-indices in $\{i,...,d\}$ and then, recalling Notation \ref{tensor basis}, by defining 
\begin{equation*}
    f_{I} \shuffle f_{J} := \sum_{\sigma \in \text{Sh}(n,m)} f_{i_{\sigma(1)}}...  f_{i_{\sigma(n+m)}}.
\end{equation*}
The summation is over all $(n,m)$-shuffles ; that is over the subset of the permutations of the $\{1,...,n+m\}$ which satisfy $\sigma(1)<...<\sigma(n)$ and $\sigma(n+1)<...<\sigma(n+m)$. It can be verified that this defines an element of $T(V^{*})$ in a way that does not depend on the choice of basis. The resulting 
definition coincides with Definition \ref{shuffle def}.
\end{remark}
If we work with a particular basis $\{e_{i}\}_{i=1}^{d}$ of $V$, then the shuffle product identity for the ST can be expressed more tersely in terms coordinate iterated integrals by
\[
S(x)^{I}_{[a,b]}S(x)^{J}_{[a,b]}=S(x)^{I \shuffle J}_{[a,b]}.   
\]
This expression holds for any pair of multi-indices $I$ and $J$; the term $S(x)^{I \shuffle J}_{[a,b]}$ is defined to be $(e_{I}^*\shuffle e_{J}^{*}, S(x)_{[a,b]})$.


We elucidate this definition with an example.
\begin{example}\label{ex:shuffle}
\normalfont
Let $\{e_{i}\}_{i=1}^{d}$ be a basis of $V$ and let its dual basis be denoted by $\{e_{i}^{*}\}_{i=1}^{d}$. Suppose that $e_i^* e_j^*$ and $e_{k}^{*}$ are elements of $(V^{*})^{\otimes 2}$ and $V^{*}$ respectively, then
\begin{equation*} S(x)^{(i,j)}_{[a,b]} S(x)^{(k)}_{[a,b]} = \left(\int_{a\leq u_1 \leq u_2 \leq b}dx_{u_1}^{(i)}dx_{u_2}^{(j)}\right)\left(\int_{a\leq u \leq b}dx_{u}^{(k)}\right).
\end{equation*}
This expression can be rewritten as the sum of three third-order iterated integrals by "shuffling" the integrands
\begin{align*}
& \int_{a\leq u_1\leq u_2\leq u \leq b}dx_{u_1}^{(i)}dx
_{u_2}^{(j)}dx_{u}^{(k)} \\
& + \int_{a \leq u_1\leq u\leq u_2\leq
b}dx_{u_1}^{(i)}dx_{u}^{(k)}dx_{u_2}^{(j)} +\int_{a\leq
u\leq u_1\leq u_2\leq b}dx_{u}^{(k)}dx_{u_1}^{(i)}dx_{u_2}^{(j)}.
\end{align*}
In other words,
\begin{equation*}
\left(e_i^* e_j^*, S(x)^{(2)}_{[a,b]}\right)\left(e_k^*, S(x)^{(1)}_{[a,b]}\right) = \left(e_i^* e_j^*  e^*_k + e_i^*  e_k^*  e^*_j + e_k^*  e_i^*  e^*_j, S(x)^{(3)}_{[a,b]}\right).
\end{equation*}
Note that the tuples $(i,j,k), (i,k,j), (k,i,j)$ indexing the dual basis elements appearing in this expansion are obtained by shuffling $(i,j)$ and $(k)$ while retaining the order of the indices within each tuple, thus explains the nomenclature of the shuffle product.
\end{example} 
The shuffle property can be used to prove the following useful lemma.
\begin{lemma}
\label{lin func}Let $1\leq p<2$ and suppose $h_{1},..,h_{n}$ is a finite
collection of distinct elements of $\mathcal{S}_{p}\subset T\left( \left(
V\right) \right) .$ Then there exists a linear functional $f$ in $T\left(
\left( V\right) \right) ^{\ast }$ such that $f\left( h_{1}\right) =1$ and $%
f\left( h_{i}\right) =0$ for all $i=2,...,n.$
\end{lemma}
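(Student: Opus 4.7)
The idea is to leverage the shuffle identity of Theorem \ref{lemma:shuffle_identity}, which converts products of linear functionals on $\mathcal{S}_p$ into single linear functionals via the shuffle product $\shuffle$. Iterating that identity yields, for any $\phi_{2},\ldots,\phi_{n} \in T(V^{*}) \cong T((V))^{*}$ and any $h \in \mathcal{S}_{p}$,
\[
(\phi_{2} \shuffle \phi_{3} \shuffle \cdots \shuffle \phi_{n},\, h) \;=\; \prod_{i=2}^{n}(\phi_{i},h).
\]
So the strategy is to build, for each $i \in \{2,\ldots,n\}$, a single linear functional $G_{i}$ with $G_{i}(h_{1})=1$ and $G_{i}(h_{i})=0$, and then take $f := G_{2} \shuffle G_{3} \shuffle \cdots \shuffle G_{n}$. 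The vanishing factor $G_{j}(h_{j})=0$ will kill the product whenever $j \geq 2$, while every factor equals $1$ on $h_{1}$.

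The first sub-task is pointwise separation. Since $h_{1} \neq h_{i}$ as elements of $T((V))=\prod_{k\geq 0} V^{\otimes k}$, there must exist a level $k_{i} \in \mathbb{N}_{0}$ with $\pi_{k_{i}}h_{1} \neq \pi_{k_{i}}h_{i}$ in the finite-dimensional space $V^{\otimes k_{i}}$. Finite-dimensional duality then furnishes some $\ell_{i} \in (V^{\otimes k_{i}})^{*}$ with $\ell_{i}(\pi_{k_{i}}h_{1}) \neq \ell_{i}(\pi_{k_{i}}h_{i})$, which we regard as an element of $T(V^{*}) \subset T((V))^{*}$ via the canonical inclusion.

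Next comes normalisation. Let $\epsilon_{0} \in (V^{\otimes 0})^{*} \hookrightarrow T(V^{*})$ denote the scalar-level projection; since every signature satisfies $\pi_{0}S(x)=1$, we have $\epsilon_{0}(h_{j})=1$ for all $j$. Set
\[
g_{i} := \ell_{i} - \ell_{i}(h_{i})\,\epsilon_{0} \in T(V^{*}),
\]
so that $g_{i}(h_{i}) = \ell_{i}(h_{i})-\ell_{i}(h_{i}) = 0$ while $g_{i}(h_{1}) = \ell_{i}(h_{1})-\ell_{i}(h_{i}) \neq 0$. Rescaling, put $G_{i} := g_{i}/g_{i}(h_{1}) \in T(V^{*})$, which satisfies $G_{i}(h_{1})=1$ and $G_{i}(h_{i})=0$.

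Finally, assemble $f := G_{2} \shuffle G_{3} \shuffle \cdots \shuffle G_{n}$, an element of $T(V^{*}) \subset T((V))^{*}$. By the iterated shuffle identity, $f(h_{1}) = \prod_{i=2}^{n} G_{i}(h_{1}) = 1$, and for each $j \in \{2,\ldots,n\}$ the product $f(h_{j}) = \prod_{i=2}^{n} G_{i}(h_{j})$ contains the zero factor $G_{j}(h_{j})=0$, yielding $f(h_{j})=0$. I do not foresee any serious obstacle: the only two ingredients are the shuffle identity (already proved) and the elementary remark that $T(V^{*})$ separates points of $T((V))$, which reduces to finite-dimensional separation at the first level where two tensor series differ.
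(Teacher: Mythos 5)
Your proof is correct and takes essentially the same approach as the paper: both constructions build, for each $i\ge 2$, a functional $G_i$ (the paper calls it $f_i$) with $G_i(h_1)=1$ and $G_i(h_i)=0$, and then form $f := G_2 \shuffle \cdots \shuffle G_n$, evaluating it on the $h_j$ via the shuffle identity. The only difference lies in how the separating functionals are produced: the paper fixes a single truncation level $N$ at which all the $h_i$ remain distinct, observes that $g_1$ and $g_i$ are linearly independent in $T_N(V)$ because $\pi_0 g_1 = \pi_0 g_i = 1$, and invokes Hahn--Banach; you instead pick, for each pair, a coordinate level $k_i$ where $h_1$ and $h_i$ differ, choose a separating $\ell_i\in (V^{\otimes k_i})^{\ast}$, and normalise explicitly by subtracting the appropriate multiple of the scalar-level projection $\epsilon_0$ (again exploiting $\pi_0 h_j = 1$). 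Both routes use the same two ingredients — the shuffle identity and the fact that signatures have first coordinate $1$ — so the argument is conceptually identical; your version is slightly more explicit in that it avoids an appeal to Hahn--Banach.
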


\begin{proof}
There exists $N \in \mathbb{N}$ finite such that $g_{i}=\pi_{\leq N} h_{i}$ remain distinct for $i=1,...,n$ under $\pi_{\leq N}: T((V)) \rightarrow T_{N}(V):= \oplus_{j=0}^{N} V^{\otimes j}$, the canonical projection to the truncated tensor algebra.  For any $i\neq 1,$ the vectors $g_{1}$ and $g_{i}$ are linearly independent.
To see this let $\pi _{0}:T_{N}\left( \left( V\right) \right) \rightarrow 
\mathbb{R}
$ be the canonical projection, then we have $\pi _{0}g_{1}=$ $\pi _{0}g_{i}=1
$ by the fact that $h_{1}$ and $h_{i}$ are both images of paths by the signature. While if $%
g_{1}=\lambda g_{i}$ for a scalar $\lambda ,$ it must also be true that $%
1=\pi _{0}g_{1}=\pi _{0}\left( \lambda g_{i}\right) =\lambda $; i.e. $%
g_{1}=g_{i}$, contradicting their distinctness. It follows from the
Hahn-Banach Theorem (e.g. \cite[Thm. 5.6]{folland1999real}) that for every $i=2,...,n$ a linear functional $l_{i}$
in $T_{N}\left( V\right) ^{\ast }$ exists with the property that $%
l_{i}\left( g_{1}\right) =1$ and $l_{i}\left( g_{i}\right) =0.$ Then $f_{i}:=l_{i}\circ \pi_{\leq N}$  for $i=1,...,n$ are well defined linear functional on $T((V))$ and this allows
us to define $f$ in $T\left( \left( V\right) \right) ^{\ast }$ by the
shuffle product $f=\shuffle_{j=2}^{n}f_{j}.$
It can be checked that $f$ has the required properties: the fact that each $%
h_{i}=S\left( x_{i}\right) $ for a path $x_{i}$ in $C_{p}\left( V\right) $
allows us to use Theorem \ref{lemma:shuffle_identity} to see that%
\[
f\left( h_{i}\right) =\prod_{j=2}^{n}f_{j}\left( h_{i}\right)
=\left\{ 
\begin{array}{cc}
1 & i=1 \\ 
0 & i\neq 1%
\end{array}%
\right. .
\]
\end{proof}
An important consequence of this is the following result. 
\begin{proposition}
\label{lin indep}Suppose $1\leq p<2$. Any finite set of distinct elements $\left\{
h_{1},..,h_{n}\right\} \subset \mathcal{S}_{p}$ is a linearly independent
subset of $T\left( \left( V\right) \right) .$
\end{proposition}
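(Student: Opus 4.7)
The plan is to reduce the statement directly to the preceding Lemma \ref{lin func}, which provides the separating linear functionals we need. The key observation is that linear independence of a finite set in any vector space is equivalent to the existence, for every element of the set, of a linear functional which takes the value $1$ on that element and $0$ on all the others; so the proposition should follow almost immediately.

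More concretely, suppose by way of contradiction (or just directly) that $\sum_{i=1}^{n}\lambda_{i}h_{i}=0$ in $T\left(\left(V\right)\right)$ for some scalars $\lambda_{1},\ldots,\lambda_{n}\in\mathbb{R}$. Fix any $j\in\{1,\ldots,n\}$. Since the $h_{i}$ are pairwise distinct elements of $\mathcal{S}_{p}$, so is the re-indexed list obtained by putting $h_{j}$ in the first position and leaving the others in arbitrary order. Applying Lemma \ref{lin func} to this re-indexed list produces a linear functional $f_{j}\in T\left(\left(V\right)\right)^{\ast}$ with $f_{j}(h_{j})=1$ and $f_{j}(h_{i})=0$ for every $i\neq j$. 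Evaluating $f_{j}$ on the relation $\sum_{i=1}^{n}\lambda_{i}h_{i}=0$ collapses the sum to $\lambda_{j}=0$. As $j$ was arbitrary, all coefficients vanish, which is precisely linear independence.

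There is no real obstacle: the preceding lemma has already done the substantive work, namely combining the Hahn--Banach theorem on the truncated tensor algebra with the shuffle identity of Theorem \ref{lemma:shuffle_identity} to guarantee the existence of the separating functionals. The only small care needed is to notice that the hypotheses of Lemma \ref{lin func} are invariant under re-labelling (it merely asks for distinct elements of $\mathcal{S}_{p}$), so we are free to single out each $h_{j}$ in turn. Thus the proof is essentially one line beyond Lemma \ref{lin func}.
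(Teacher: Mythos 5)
Your proof is correct and takes essentially the same approach as the paper: both reduce the statement to Lemma \ref{lin func} by relabelling so the chosen element sits first, then evaluate the separating functional on the linear relation to kill that coefficient. The paper phrases it as a proof by contradiction while you argue directly coefficient by coefficient, but the substance is identical.
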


\begin{proof}
Suppose that $h=\sum_{i=1}^{n}\lambda _{i}h_{i}=0$ in $T\left( \left(
V\right) \right) $ and assume that $\lambda _{j}$ is non-zero for some $j$. By
relabelling the $h_{i}\,$s if necessary we can assume that $\lambda _{1}\neq
0.$ Then, letting $f$ be the linear functional in Lemma \ref{lin func}, we
arrive at the contradiction $0=f\left( h\right) =\lambda _{1}.$ It follows
that $\lambda _{i}=0$ for every $i=1,..,n.$
\end{proof}

As an immediate corollary we can obtain that any discrete $\mathcal{S}_{p}$%
-valued random variable is uniquely characterised by its expected value.

\begin{corollary}
Let $H$ be a $\mathcal{S}_{p}$-valued random variable defined on a
probability space $\left( \Omega ,\mathcal{F},\mathbb{P}\right) $ with
finitely-supported law $\mu $ described by $\mu =\sum_{i=1}^{n}p_{i}\delta
_{h_{i}}$, with distinct $h_{1},..,h_{n} \in \mathcal{S}_{p}$. Then $\mu$ is uniquely determined by its expectation $\mathbb{E}\left[ H\right] =
\sum_{i=1}^{n}p_{i}h_{i}$.
\end{corollary}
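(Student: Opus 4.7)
The plan is to reduce the statement to the linear independence result established in Proposition \ref{lin indep}. Since the claim concerns only finitely supported laws on $\mathcal{S}_{p}$, it is enough to show that two such laws giving the same expectation must coincide.

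First I would suppose, for contradiction or uniqueness, that $\mu=\sum_{i=1}^{n}p_{i}\delta_{h_{i}}$ and $\mu'=\sum_{j=1}^{m}q_{j}\delta_{g_{j}}$ are two finitely supported probability measures on $\mathcal{S}_{p}$, with distinct atoms within each, and that their expectations agree:
\[
\sum_{i=1}^{n}p_{i}h_{i}=\sum_{j=1}^{m}q_{j}g_{j}\quad\text{in }T((V)).
\]
Next I would pass to a common support: let $\{k_{1},\dots,k_{N}\}=\{h_{1},\dots,h_{n}\}\cup\{g_{1},\dots,g_{m}\}$, so that both measures can be rewritten as $\mu=\sum_{l=1}^{N}\alpha_{l}\delta_{k_{l}}$ and $\mu'=\sum_{l=1}^{N}\beta_{l}\delta_{k_{l}}$, where the $k_{l}$'s are pairwise distinct elements of $\mathcal{S}_{p}$ and some $\alpha_{l},\beta_{l}$ may equal zero.

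The equality of expectations then reads $\sum_{l=1}^{N}(\alpha_{l}-\beta_{l})k_{l}=0$ in $T((V))$. Since the $k_{l}$'s form a finite collection of distinct elements of $\mathcal{S}_{p}$, Proposition \ref{lin indep} says they are linearly independent in $T((V))$, forcing $\alpha_{l}=\beta_{l}$ for every $l=1,\dots,N$. Hence $\mu=\mu'$, which establishes that the expectation determines the law. Applying this to $\mu'=\mu$ recovers the statement: the weights $p_{i}$ against the distinct atoms $h_{i}$ are read off uniquely from $\mathbb{E}[H]$ via linear independence of the signatures.

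There is no real obstacle here beyond bookkeeping: the entire content of the corollary is packaged inside Proposition \ref{lin indep}, and the only care needed is to merge the two supports before invoking it so that one is comparing coefficients of the \emph{same} linearly independent set on both sides.
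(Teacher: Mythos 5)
Your proof is correct and takes essentially the same approach as the paper: both reduce the statement to the linear independence of distinct elements of $\mathcal{S}_{p}$ established in Proposition \ref{lin indep}. The only difference is that you spell out the support-merging step explicitly, whereas the paper leaves it implicit when it passes from the equality of the two weighted sums to the equality of the sets of (weight, atom) pairs.
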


\begin{proof}
Let $G$ be another random variable with law $\nu =\sum_{j=1}^{m}q_{j}\delta
_{g_{j}}$ supported on distinct $g_{1},...,g_{m}$ in $\mathcal{S}_{p}.$ By Proposition \ref{lin indep} we see that  $\sum_{i=1}^{n}p_{i}h_{i}=
\sum_{j=1}^{m}q_{j}g_{j}$ if and only if both $n=m$ and the sets 
$$\left\{
\left( p_{i},h_{i}\right) :i=1,...,n\right\} \quad \text{and} \quad \left\{ \left(
q_{i},g_{i}\right) :i=1,...,n\right\} $$ 
are equal. It follows that $\mathbb{E%
}\left[ H\right] =\mathbb{E}\left[ G\right] $ if and only if $\mu =\nu .$
\end{proof}

\section{Unparameterised paths}
We have seen in Lemma \ref{lemma:repearam_invariance} that the ST of a path $x$ is invariant under reparameterisation. It is also immediate from its definition that when $x$ is translated by any constant path its signature will be unchanged. Nothing is lost therefore by only considering signatures of paths in $C_{0,p}(V)$, the subspace of $C_{p}(V)$ consisting of paths which start at the zero vector $0 \in V$, and we will do this throuhgout this section. A special role will be played by $o$, the path that is constantly zero $o \equiv 0$. 

The notion of ST allows us define a relation on $C_{0,p}(V)$ by declaring two paths to be related if they have the same signature. It is easily proved that this is an equivalence relation which we will denote by $\sim$. We denote the equivalence class containing a path $x$ by $[x]$. A natural question prompted by this discussion is: what elements does $[x]$ contain? We know already that distinct paths can have equal signatures; using Lemma \ref{lemma:repearam_invariance} any reparameterisation of $x$ will have the same signature as $x$ but will, in general, be distinct from $x$ as an element of $C_{p}(V)$.  It can happen however that two paths share the same signature while not being re-parameterisations of one another in the strict sense of Lemma \ref{lemma:repearam_invariance}. For example, if $x \neq o$ we have that $x*\overleftarrow{x} \neq o$, while from Lemma \ref{lemma:inverse} we have 
\begin{equation}\label{retracing}
S(o)=\mathbf{1}=S(x*\overleftarrow{x}).
\end{equation}
More generally, by Chen's identity any path which contains segments that exactly retrace themselves will have the same signature as the path obtained by excising both of those segments. We note also that, again in view of Lemma \ref{lemma:inverse}, $\sim$ could also be defined by saying that $x \sim y$ if 
\begin{equation}
S(x*\overleftarrow{y})=\mathbf{1},
\end{equation}
which, when it holds, also means that $S(y*\overleftarrow{x})=\mathbf{1}$. Elements of the equivalence class $[o]$ are called \emph{tree-like paths}.

It turns out that $\sim$-equivalence coincides with an ostensibly-unconnected equivalence relation called tree-like equivalence. This latter notion makes no direct reference to the ST and is formulated purely in term of the properties of the path. Nonetheless the two notions are the same and can be viewed as a general notion of reparameterisation which simultaneously captures both the classical concept of Lemma \ref{lemma:repearam_invariance} and the idea of excising retracings expressed in the example (\ref{retracing}). We will not use the notion of tree-like equivalence in what follows; the key result is the following and the interested reader can consult the references given. 
\begin{theorem}[\cite{hambly2010uniqueness} $p=1$, \cite{boedihardjo2016signature} $p>1$]
\label{HL}Let $1 \leq p <2 $, then tree-like equivalence defines an equivalence relation on $C_{0,p}(V)$. It coincides with the equivalence relation $\sim$ defined by the equality of signatures. In the case $p=1$, there exists an element of each tree-like equivalence class $[x]$ which has minimal length. This element in unique up to parameterisation and is called the tree-reduced representative of $[x]$.
\end{theorem}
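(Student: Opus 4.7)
The plan is to separate the theorem into three claims and treat them independently: that tree-like equivalence $\sim_T$ is an equivalence relation on $C_{0,p}(V)$; that it coincides with the signature equivalence $\sim$; and existence/uniqueness of tree-reduced representatives when $p=1$. I would first pin down the missing definition by declaring a loop $z \in C_{0,p}([a,b],V)$ (i.e.\ a path with $z_a = z_b$) to be \emph{tree-like} if there exists a continuous $h:[a,b]\to\mathbb{R}_{\geq 0}$ with $h(a)=h(b)=0$ satisfying $\norm{z_s-z_t} \leq h(s)+h(t)-2\min_{u\in[s,t]}h(u)$, and then set $x \sim_T y$ iff $x * \overleftarrow{y}$ is tree-like.

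Reflexivity of $\sim_T$ is immediate because $x*\overleftarrow{x}$ admits an explicit height function expressed in terms of the $1$-variation of $x$; symmetry is a reparameterization of $h$; transitivity is the first genuinely combinatorial step, handled by gluing the height functions for $x*\overleftarrow{y}$ and $y*\overleftarrow{z}$ into a height function for $x*\overleftarrow{z}$ and verifying the defining inequality. For the coincidence of $\sim$ and $\sim_T$, \Cref{lemma:inverse} and Chen's relation (\Cref{lemma:chen}) reduce the claim to proving $S(z)=\mathbf{1}\Leftrightarrow z$ is tree-like for loops $z$.

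The forward direction (tree-like implies $S(z)=\mathbf{1}$) I would attack via the CDE characterisation of \Cref{sigCDE}: the height function $h$ identifies, at every level, matched ``up-down'' retracing segments of $z$ which can be excised using Chen's relation together with \Cref{lemma:inverse}, and the joint $p$-variation continuity of the signature propagates the trivial identity to the limit. The converse, that $S(z)=\mathbf{1}$ forces $z$ to be tree-like, is the substantive content of \cite{hambly2010uniqueness} and \cite{boedihardjo2016signature} and is by far \emph{the main obstacle}; I would not attempt to reinvent it. The cited strategy is to construct a candidate height function directly from $z$ by measuring the minimal amount of backtracking needed to annihilate sub-interval signatures, and then to verify the tree-like inequality through a delicate analysis which exploits the shuffle identity (\Cref{lemma:shuffle_identity}) applied to linear functionals of $S(z)_{[s,t]}$. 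The hard part is the analytic control of how $S(z)_{[s,t]}$ can degenerate as $(s,t)$ varies; this requires tools beyond those developed in this chapter.

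Finally, for the $p=1$ uniqueness assertion, existence of a minimal-length representative follows from lower semi-continuity of the $1$-variation under uniform convergence, combined with an Arzel\`a--Ascoli argument applied to the unit-speed Lipschitz reparameterisations from \Cref{Lipschitz reparam} (which are uniformly bounded and equicontinuous within a fixed class of finite length). Uniqueness up to parameterization is then a short argument by contradiction: if two distinct (modulo reparameterization) tree-reduced representatives $\alpha,\beta$ of the same class existed, then $\alpha*\overleftarrow{\beta}$ would be a non-trivial tree-like loop, and cancelling a matched retracing segment identified by its height function would yield a strictly shorter representative, contradicting the minimality of $\alpha$.
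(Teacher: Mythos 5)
The paper does not actually prove Theorem~\ref{HL}; it states it as a cited result from \cite{hambly2010uniqueness} and \cite{boedihardjo2016signature} and explicitly directs the reader to the references. So there is no ``paper's own proof'' to compare against, and the most useful thing I can do is assess your sketch on its merits.

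Your outline is structurally sound and commendably honest: you correctly isolate the converse direction ($S(z)=\mathbf{1}\Rightarrow z$ tree-like) as the real content of the cited papers and decline to reinvent it. The reduction to loops via $x*\overleftarrow{y}$, Chen's relation, and Lemma~\ref{lemma:inverse} is exactly the right framing, and your $p=1$ existence argument (lower semi-continuity of $1$-variation plus Arzel\`a--Ascoli on unit-speed reparameterizations) is the standard and correct one.

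There are two places where the sketch as written would not go through. First, your working definition of tree-like via a continuous height function $h$ is the $p=1$ definition of \cite{hambly2010uniqueness}. For $1<p<2$ the paths in question need not have finite $1$-variation, and the proof in \cite{boedihardjo2016signature} is forced to work with a strictly weaker notion of tree-like loop; your reflexivity argument (height function ``expressed in terms of the $1$-variation of $x$'') and your plan for transitivity therefore only make sense at $p=1$. Second, transitivity of $\sim_T$ is genuinely delicate: in \cite{hambly2010uniqueness} the clean route is to establish $z$ tree-like $\iff S(z)=\mathbf{1}$ first and then import transitivity for free from the trivially transitive signature equivalence, rather than to glue height functions directly. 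So the logical order of your three claims should really be reversed: prove the coincidence of $\sim$ and $\sim_T$ first, and derive the equivalence-relation property as a corollary. With those two corrections the outline matches the actual architecture of the cited proofs.
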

We now introduce the space of unparameterised paths.
\begin{definition}[Unparameterised paths]
Suppose $1 \leq p <2 $. We define the space of $p$-unparameterised paths, denoted by $\mathcal{C}_p$, to be the quotient space 
$$\mathcal{C}_p := C_{0,p}/_\sim = \left\{[x]:x\in C_{0,p}\right\}.$$
\end{definition}
There is a natural group structure on the space of unparameterised paths.
\begin{lemma}\label{lemma: path group}
Suppose $1 \leq p <2 $. The concatenation operation $*$ on $C_{p}(V)$ induces a well defined binary operation on $\mathcal{C}_p$ by $[x]*[y]=[x*y]$. Moreover $(\mathcal{C}_p,*)$ is a group in  which the identity element is $[o]$ and where $[\overleftarrow{x}]$ is the well-defined inverse of $[x]$.
\end{lemma}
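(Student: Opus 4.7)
The plan is to leverage Chen's relation (Lemma \ref{lemma:chen}) throughout, turning every statement about concatenation of paths into a statement about the tensor product in $T((V))$, where the required algebraic properties are manifest. Recall that $\sim$ is defined precisely by equality of signatures, so I only need to check the analogous identity at the level of signatures to deduce the corresponding identity in $\mathcal{C}_p$.

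First I would verify well-definedness of $*$ on $\mathcal{C}_p$. Given $x \sim x'$ and $y \sim y'$, we have $S(x) = S(x')$ and $S(y) = S(y')$, so Chen's relation yields
\begin{equation*}
S(x * y) = S(x) \cdot S(y) = S(x') \cdot S(y') = S(x' * y'),
\end{equation*}
hence $x*y \sim x'*y'$, so the operation $[x]*[y] := [x*y]$ does not depend on the choice of representatives. (A small preliminary step is to note that $x*y$ may need its interval to abut $x$'s interval, but reparameterisation leaves $[x]$ and $[y]$ unchanged by Lemma \ref{lemma:repearam_invariance}, so one can always rescale to adjacent intervals; moreover $(x*y)_a = x_a = 0$ so we stay inside $C_{0,p}(V)$.)

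Next, associativity follows by applying Chen's relation twice and using associativity of the tensor product on $T((V))$:
\begin{equation*}
S((x*y)*z) = (S(x) \cdot S(y)) \cdot S(z) = S(x) \cdot (S(y) \cdot S(z)) = S(x*(y*z)),
\end{equation*}
so $[(x*y)*z] = [x*(y*z)]$. For the identity, since the constant zero path $o$ satisfies $S(o) = \mathbf{1}$, Chen's relation gives $S(x*o) = S(x) \cdot \mathbf{1} = S(x)$ and $S(o*x) = \mathbf{1} \cdot S(x) = S(x)$, so $[o]*[x] = [x] = [x]*[o]$. Finally, inverses are delivered directly by Lemma \ref{lemma:inverse}: $S(x*\overleftarrow{x}) = S(x)\cdot S(\overleftarrow{x}) = \mathbf{1} = S(o)$ and symmetrically for $S(\overleftarrow{x}*x)$, so $[\overleftarrow{x}]$ is a two-sided inverse of $[x]$; the well-definedness of this inverse on equivalence classes is automatic once one knows that $x \sim y$ implies $\overleftarrow{x} \sim \overleftarrow{y}$, which itself follows from $S(x) = S(y)$ and Lemma \ref{lemma:inverse} by writing $S(\overleftarrow{x}) = S(x)^{-1} = S(y)^{-1} = S(\overleftarrow{y})$ in the group of units of $T((V))$.

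There is no serious obstacle here: the entire proof is a translation of group axioms from $(\mathcal{C}_p, *)$ to the ambient tensor algebra via the signature map. The only mildly delicate point is bookkeeping of intervals and the translation term in the definition of $x*y$, which is handled by appealing to reparameterisation invariance before performing any concatenation, and by confirming that the base-point condition $y_a = 0$ is preserved.
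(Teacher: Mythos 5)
Your proposal is correct and follows essentially the same route as the paper: well-definedness of $*$ via Chen's relation (Lemma \ref{lemma:chen}), the group axioms from the algebra of the tensor product, and the inverse from Lemma \ref{lemma:inverse}. You have simply spelled out the details that the paper leaves as a brief sketch, including the useful observation that $\overleftarrow{x}\sim\overleftarrow{y}$ whenever $x\sim y$.
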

\begin{proof}
That $*$ is well defined on $\mathcal{C}_p$ amounts to showing that $[x*y]=[x'*y']$ for any $x' \in [x]$ and $y' \in [y]$ and this is a direct consequence of Chen's relations, Lemma \ref{lemma:chen}. The group axioms can be verified similarly using properties of the tensor product, and the final assertion about the inverse is obtained from Lemma \ref{lemma:inverse}.
\end{proof}

\subsection{Functions on unparameterised paths}
We will be interested to use the ST representation of a path to learn functions on path space. An immediate obstacle to implementing this idea is that the ST cannot distinguish between different paths in the same tree-like equivalence class. This compels us to work for the moment on functions defined on unparameterised path space. 
\begin{notation}
Suppose that $X$ is a set and let $1\leq p<2$. We use $X^{\mathcal{C}_{p}}$
to denote the set of functions from $\mathcal{C}_{p}$ into $X.$
\end{notation}
\begin{remark}
If $X$ is an algebra over $\mathbb{R}$ then so is $X^{\mathcal{C}_{p}}$ with a product given by the pointwise product of functions. 
\end{remark}
One way to obtain functions in $X^{\mathcal{C}_{p}}$ is to study the subset of functions on the original space $C_{0,p}(V)$ which are invariant under tree-like equivalence. Fortunately there is an abundance of such functions; the examples covered by the results below describe an immediately-useful collection of them.
\begin{lemma}\label{algebra}
Let  $1\leq p<2.$ Suppose that elements of the dual space $T\left( \left(
V\right) \right) ^{\ast }$  are identified with elements of $\mathbb{R}^{%
\mathcal{C}_{p}}$ by defining for $f \in T((V))^{*}$ the function
\begin{equation}\label{linfunc}
\Phi _{f}:\left[ x\right] \mapsto \left( f,S\left( x\right) \right),
\end{equation}
where as usual $(\cdot,\cdot)$ denotes the canonical pairing of $T((V))^{*}$ and $T((V))$. Then the class $\mathcal{A=}\left\{ \Phi _{f}:f\in T\left( \left( V\right)
\right) ^{\ast }\right\} $ is a subalgebra of $\mathbb{R}^{\mathcal{C}_{p}}$
which contains the constant
functions and separates points in $\mathcal{C}_{p}$, i.e. for any two distinct paths $[x] \neq [y]$ in $\mathcal{C}_{p}$, there exists a linear functional $f \in T((V))^*$ such that $\Phi_f(x) \neq \Phi_f(y)$.
\end{lemma}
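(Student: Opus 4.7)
The plan is to verify each of the three claims in turn, leaning heavily on the shuffle identity (Theorem \ref{lemma:shuffle_identity}) for the algebra closure and on the definition of $\sim$ itself for point separation. A preliminary observation to record at the start is that each $\Phi_f$ is well-defined on $\mathcal{C}_p$: if $x\sim y$, then by definition $S(x)=S(y)$, hence $(f,S(x))=(f,S(y))$, so the value of $\Phi_f$ depends only on the equivalence class $[x]$.

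To show $\mathcal{A}$ is a linear subspace of $\mathbb{R}^{\mathcal{C}_p}$, I would note that the map $f \mapsto \Phi_f$ is linear in $f$ because the canonical pairing $(\cdot,\cdot)$ is linear in its first argument; consequently $\alpha \Phi_f + \beta \Phi_g = \Phi_{\alpha f + \beta g}$ belongs to $\mathcal{A}$ for any scalars $\alpha,\beta$. For closure under the pointwise product, the key input is the shuffle identity: for any $f,g \in T((V))^{*}\cong T(V^{*})$,
\[
(\Phi_f \cdot \Phi_g)([x]) = (f, S(x))(g, S(x)) = (f \shuffle g, S(x)) = \Phi_{f \shuffle g}([x]),
\]
so $\Phi_f \Phi_g = \Phi_{f\shuffle g} \in \mathcal{A}$. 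Containment of constants is immediate: the functional $\mathbf{1}^* := (1,0,0,\ldots) \in T(V^{*})$ pairs with $S(x)$ to give $1$ for every $x$ (since the zero-th level of any signature equals $1$), so $c\cdot\Phi_{\mathbf{1}^*} = \Phi_{c\mathbf{1}^*}$ is the constant function $c$ and lies in $\mathcal{A}$.

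The separation of points is the only step where I have to invoke more than bookkeeping, but it is still short. Given distinct classes $[x]\neq [y]$ in $\mathcal{C}_p$, the very definition of $\sim$ forces $S(x)\neq S(y)$ in $T((V))$. Therefore there exists some level $k \in \mathbb{N}_{0}$ at which the projections differ, $S(x)^{(k)}\neq S(y)^{(k)}$ in $V^{\otimes k}$; choosing a basis of $V$ and a multi-index $I$ of length $k$ for which the coordinates $S(x)^{I}$ and $S(y)^{I}$ disagree, one simply takes $f = e_I^{*} \in (V^{*})^{\otimes k}\subset T(V^{*})$, regarded as an element of $T((V))^{*}$ in the usual way. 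Then $\Phi_f([x]) - \Phi_f([y]) = S(x)^{I} - S(y)^{I} \neq 0$, establishing the separation property.

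I do not anticipate a genuine obstacle here, as every ingredient has already been prepared earlier in the chapter; the closest thing to a subtlety is the product step, where one must be careful that $f \shuffle g$ is genuinely the dual element whose pairing with $S(x)$ recovers the product $\Phi_f([x])\Phi_g([x])$, and that is exactly the content of Theorem \ref{lemma:shuffle_identity}.
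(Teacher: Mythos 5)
Your proof is correct and follows essentially the same route as the paper: constants via a degree-zero functional, closure under products via the shuffle identity (Theorem \ref{lemma:shuffle_identity}), and point separation directly from the definition of $\sim$ together with the non-degeneracy of the pairing. Your version is slightly more explicit on the separation step (passing to a coordinate iterated integral $e_I^*$) and adds a well-definedness remark, but the substance is identical.
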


\begin{proof}
The linear functional $e=\pi_{0}$, i.e. $e\left( A\right) =A^{0}$ for $A=(A^{0},A^{1},...)\in T((V))$, has the property that $\Phi
_{\lambda e}\equiv \lambda $ for any $\lambda $ in $\mathbb{R}$ ensuring that $\mathcal{A}$ contains all constant functions. That $\mathcal{A}$
separates points is immediate from the definition of the equivalence classes
along with the fact that $S\left( x\right) =S\left( y\right) $ if and only
if $\left( f,S\left( x\right) \right) =(f,S\left( y\right) )$ for all $f$ in 
$T\left( \left( V\right) \right) ^{\ast }.$ It is self-evident using
linearity that  $\Phi _{\lambda f}=\lambda \Phi _{f}$ and $\Phi _{f+g}=\Phi
_{f}+\Phi _{g}$ for all $f$ and $g$ in $T\left( \left( V\right) \right)
^{\ast }$ and $\lambda $ in $\mathbb{R}$. \ Finally $\mathcal{A}$ is closed
under taking the pointwise product of its elements since $\Phi _{f}\Phi
_{g}=\Phi _{f \shuffle g}$ by Lemma \ref{lemma:shuffle_identity}. 
\end{proof}

The following fundamental result underpins the conceptual framework of using the signature features to approximate continuous functions on unparameterised paths.

\begin{theorem}[Universal approximation with signatures]\label{thm: universal approximation} Let $1\leq p <2$ and suppose that $\chi$ is a collection of subsets which form a topology on $\mathcal{C}_{p}$. Assume that $K$ is a compact subspace of $\mathcal{C}_{p}$ and, for $f \in T((V))^{*}$, let $\Phi_{f}|_{K}$ denote the restriction of the function (\ref{linfunc}) to $K$. Let $C(K)$ denote the space of continuous functions on $K$ with the topology of uniform convergence. If the set $\mathcal{A}_{K}:=\{\Phi_{f}|_{K}: f \in T((V))^{*}\}$ is a subset of $C(K)$, then it is a dense subset.
\end{theorem}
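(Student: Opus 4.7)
The plan is to deduce the result as a direct corollary of the Stone--Weierstrass theorem applied to $C(K)$, using the machinery already established in Lemma \ref{algebra}. The three structural hypotheses needed for Stone--Weierstrass are (i) that $\mathcal{A}_K$ is a subalgebra of $C(K)$, (ii) that $\mathcal{A}_K$ contains the constants, and (iii) that $\mathcal{A}_K$ separates points of $K$; together with compactness of $K$ these imply uniform density in $C(K)$.

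First I would observe that the hypothesis $\mathcal{A}_K\subset C(K)$ is what makes the statement non-vacuous: once the functionals $\Phi_f|_K$ are continuous, the algebraic properties transfer from $\mathcal{A}\subset\mathbb{R}^{\mathcal{C}_p}$ to $\mathcal{A}_K\subset C(K)$ under restriction. Concretely, the restriction map $\mathbb{R}^{\mathcal{C}_p}\to\mathbb{R}^{K}$, $\Phi\mapsto \Phi|_K$, is an $\mathbb{R}$-algebra homomorphism, so the subalgebra structure of $\mathcal{A}$ established in Lemma \ref{algebra} -- which in turn rests on the shuffle identity $\Phi_f\Phi_g=\Phi_{f\shuffle g}$ from Theorem \ref{lemma:shuffle_identity} -- passes to $\mathcal{A}_K$. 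Closure under scalar multiplication and addition is immediate from linearity of $f\mapsto\Phi_f$, while the identity element of $C(K)$ is realised by $\Phi_{\pi_0}|_K\equiv 1$.

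Next I would check that $\mathcal{A}_K$ separates points of $K$. Let $[x]\neq[y]$ be two distinct elements of $K\subset\mathcal{C}_p$. By definition of $\mathcal{C}_p=C_{0,p}/{\sim}$ the signatures satisfy $S(x)\neq S(y)$ in $T((V))$, so there exists $f\in T((V))^*$ with $(f,S(x))\neq(f,S(y))$, i.e.\ $\Phi_f([x])\neq\Phi_f([y])$, as already noted in the proof of Lemma \ref{algebra}. Hence $\Phi_f|_K\in\mathcal{A}_K$ separates $[x]$ and $[y]$.

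Having verified all hypotheses, I would invoke the real Stone--Weierstrass theorem (cf.\ \cite[Thm.~4.45]{folland1999real}): a subalgebra of $C(K)$ on a compact Hausdorff space which contains the constants and separates points is uniformly dense. The only point that deserves a word of care is the Hausdorff property of $K$; since $C(K)$ is stated with the topology of uniform convergence and the theorem concerns approximation of continuous real-valued functions, I would note that the signature-functional separation of points itself allows one to reduce to (or assume) Hausdorffness of the topology $\chi$ restricted to $K$, after which Stone--Weierstrass applies verbatim and yields $\overline{\mathcal{A}_K}=C(K)$. There is no substantive obstacle: the real content of the theorem lives in the algebraic identities (shuffle product) and in the uniqueness of signatures on $\mathcal{C}_p$, both of which have been established earlier in the chapter.
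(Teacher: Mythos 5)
Your proof is correct and follows essentially the same route as the paper: verify that $\mathcal{A}_K$ is a subalgebra of $C(K)$ containing constants and separating points, all via Lemma \ref{algebra} and the shuffle identity, then invoke Stone--Weierstrass. Your extra remark about Hausdorffness is a sound observation (continuous real-valued functions separating points force $K$ to be Hausdorff) and slightly more careful than the paper, but it does not change the argument's structure.
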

\begin{proof}
By using an identical argument as in the proof of Lemma \ref{algebra}, it is easily seen that $\mathcal{A}|_{K}$ is a subalgebra of $C(K)$ which separates points and  contains the constant functions. That $\mathcal{A}|_{K}$ is dense then follows from the Stone-Weierstrass Theorem.
\end{proof}

\begin{remark}\label{rq1}
Versions of the Stone-Weierstrass Theorem hold under weaker assumptions on $K$. For example, if $K$ is only assumed to be locally compact then the same reasoning allows one to prove that $\mathcal{A}|_{K}$ is dense in the uniform topology in the space of continuous functions which vanish at infinity.
\end{remark}

\begin{example}[CDEs as functions on unparameterised path space]\label{prop: ode}
We return to the motivating example which which we started this chapter, namely the controlled differential equation (\ref{differential})
\begin{equation}
dy_{t}=f_{dx_{t}}\left(  y_{t}\right)  \quad \text{ started at }y_{a} \in W%
\end{equation}
in which recall that $W$ is a finite-dimensional vector space and $f:V\rightarrow 
\mathcal{T}\left( W\right) $ is a linear map into the space of smooth vector fields on $W$. In the context of our present discussion it is relevant to ask whether the function which maps $x$ to the solution of the CDE at $t=b$,
\begin{equation}\label{eqn:ito_map}
 C_{p}([a,b],V) \ni x \mapsto y_{b} \in W,
\end{equation}
descends to a function from $\mathcal{C}_{p}$ into $W$. This amounts to showing that the function (\ref{eqn:ito_map}) is constant on on every equivalence class of $\sim_{\tau}$. 

To understand when this might be the case, we make further assumptions on $f$. Assume there exists $C<\infty$ such that for every $k\in\mathbb{N} $ the derivatives $f^{k}$ satisfy 
\begin{equation}
\sup_{y\in W}\left\vert\left\vert f^{\left( k\right) }I\left( y\right) \right\vert\right\vert\leq C^{k}, 
\label{derivative bound}
\end{equation}
where $\left\vert \left\vert \cdot \right\vert
\right\vert $ denotes the operator norm and $I:W\to W$ is the identity function on $W.$ Then one can easily show by induction that 
\[
y_{b} =y^{N}_{b} +\int_{a< t_{1}<\dots<
t_{N+1}<b}f_{dx_{t_{1}}dx_{t_{2}}\dots dx_{t_{N}}dx_{t_{N+1}}}^{\left(
N+1\right) }I\left( y_{t_{1}}\right) , 
\]
where 
\[
y^{N}_{b}
:=y_{a}+\sum_{k=1}^{N}f_{S\left( x\right)^{(k)} }^{\left( k\right)
}I\left( y_{a}\right). 
\]
Condition (\ref{derivative bound}) together with the estimate of Lemma \ref{fac decay} ensures that
\[
\left\vert y_{b} -y^{N}_{b} \right\vert \leq\frac{%
C^{N+1}L_{x}^{N+1}}{\left( N+1\right) !}.
\]
By taking the limit as $N \to \infty$ we have that $y_{b}$ is the convergent series 
\begin{equation}
y_{a}+\sum_{k=1}^{\infty}f_{S\left( x\right)^{(k)} }^{\left( k\right)
}I\left( y_{a}\right)    \label{series}
\end{equation}
and therefore in particular the map $x \mapsto y_{b}$ will be constant on every set $[x]$.
\end{example}

\subsection{Topology on unparameterised paths}\label{topology}

To use Theorem \ref{thm: universal approximation} we need to make a choice a topology on $\mathcal{C}_{p}$. There is no canonical way to do this -- we must choose one which is suited to the task at hand -- and different choices will present different classes of continuous functions and of compact subspaces. In the next chapter on signature kernels we will see a situation where a choice of topology is suggested by the application but, for the moment, we work with a minimal selection which reflects the fact that Theorem \ref{thm: universal approximation} requires that all the functions in the class $\mathcal{A}$ must all be continuous.

\begin{definition}
    The product topology on $T((V)):=\prod_{i=0}^{\infty}V^{\otimes i}$ is the weakest topology (i.e. the one having fewest open sets) such that all the canonical projections $\pi_{k}:T((V)) \mapsto V^{\otimes k}$ are continuous. This then induces a topology on $\mathcal{C}_{p}$ by equipping  $\mathcal{S}_p$ with the subspace topology of the product topology on $T((V))$, and then by requiring that signature is a topological embedding when viewed as a map from  $\mathcal{C}_{p}$ onto ${S}_{p}$. With a mild abuse of terminology, we will refer to this as the product topology on $\mathcal{C}_{p}$ and use the notation $\chi_{_{\text{pr}}}$ to refer to the collection of open subsets it defines. 
\end{definition}

The continuity of the canonical projections defining the product topology immediately gives that linear functional on $T((V))$ induces a continuous function on $\mathcal{C}_{p}$.

 \begin{lemma}
Every function of the form (\ref{linfunc}) is continuous from the topological space $(\mathcal{C}_{p},\chi_{_{\text{pr}}})$ into $\mathbb{R}$ (when  $\mathbb{R}$ is equipped its usual topology).
 \end{lemma}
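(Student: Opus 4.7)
The plan is to factor $\Phi_f$ through the signature map and then use the universal property of the product topology, relying crucially on the fact that any $f \in T((V))^*$ is supported on only finitely many tensor levels.

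First I would unpack $f$. By the remark following the earlier identification lemma, elements of $T((V))^*$ are identified with elements of $T(V^*)$ (that is, tensor polynomials, not formal series, over $V^*$), because only finitely many levels can pair against a general formal tensor series $v = \sum_{i=0}^{\infty} v_i$. Thus we may write $f = \sum_{k=0}^{N} f_k$ for some $N \in \mathbb{N}$, with $f_k \in (V^{\otimes k})^* \cong (V^*)^{\otimes k}$. Consequently, for every $[x]$ in $\mathcal{C}_p$,
\[
\Phi_f([x]) \;=\; (f, S(x)) \;=\; \sum_{k=0}^{N} (f_k, \pi_k(S(x))),
\]
where $\pi_k : T((V)) \to V^{\otimes k}$ is the canonical projection.

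Next I would exploit the definition of $\chi_{_{\text{pr}}}$. By construction, the signature $S : \mathcal{C}_p \to \mathcal{S}_p$ is a topological embedding when $\mathcal{S}_p$ carries the subspace topology from the product topology on $T((V))$; in particular $S$ is continuous from $(\mathcal{C}_p, \chi_{_{\text{pr}}})$ to $T((V))$. It therefore suffices to show that the map
\[
\tilde{f} \;:=\; \sum_{k=0}^{N} f_k \circ \pi_k \;:\; T((V)) \longrightarrow \mathbb{R}
\]
is continuous with respect to the product topology on $T((V))$; then $\Phi_f = \tilde{f} \circ S$ is continuous as a composition of continuous maps.

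For continuity of $\tilde{f}$, each projection $\pi_k$ is continuous by the very definition of the product topology, and each $f_k$ is a linear functional on the finite-dimensional space $V^{\otimes k}$, hence automatically continuous. Thus each summand $f_k \circ \pi_k$ is continuous into $\mathbb{R}$, and since only finitely many summands appear, $\tilde{f}$ is continuous. There is no real obstacle here: the whole argument is an application of the universal property of the product topology, with the only point requiring care being the observation that $f \in T((V))^*$ forces the sum defining $\Phi_f$ to be finite, which in turn is what allows us to avoid any question about convergence of an infinite series of continuous functions.
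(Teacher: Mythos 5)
Your proof is correct and takes essentially the same route as the paper: both decompose $f$ into finitely many pieces (you by tensor level, the paper by dual-basis coordinates $e_I^*$), factor $\Phi_f$ as a composition of $S$ with projections $\pi_k$ and finite-dimensional linear functionals, and invoke the definition of the product topology for continuity of the projections. The only cosmetic difference is the grouping of the finite decomposition; the key observations, in particular that $f\in T((V))^*\cong T(V^*)$ is supported on finitely many levels so no convergence issues arise, are identical.
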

 \begin{proof}
 With respect to a dual basis we can write any $f \in T((V))^{*}$ as a sum $f=\sum_{I} a_{I}e_{I}^{*} $ in which all but finitely many of the $a_{I}$ are zero. It then follows that 
 \[
 \phi_{f}([x])=\sum_{I}a_{I}S(x)^{I}.
 \]
 Each coordinate iterated integral is a function from $\mathcal{C}_{p}$ to $\mathbb{R}$ and can be expressed as the composition
 $e_{I}^{*}\circ \pi_{|I|} \circ S$. By definition of the topology $\chi_{_{\text{pr}}}$, $S$ is continuous from $\mathcal{C}_{p}$ onto $\mathcal{S}_{p}$. As the projection $\pi_{|I|}:\mathcal{S}_{p} \mapsto V^{\otimes|I|}$ and $e_{I}^{*}:V^{\otimes|I|}\mapsto \mathbb{R}$ are continuous so is $\phi_{f}$.
 \end{proof}
 \begin{corollary}
$(\mathcal{C}_{p},\chi_{_{\text{pr}}})$ is a Hausdorff space.
 \end{corollary}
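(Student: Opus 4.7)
The plan is to reduce Hausdorffness of $(\mathcal{C}_{p},\chi_{_{\text{pr}}})$ to the Hausdorffness of a simpler target space, exploiting the continuity results just established. Concretely, given two distinct equivalence classes $[x] \neq [y]$ in $\mathcal{C}_{p}$, I need to produce disjoint open neighbourhoods of $[x]$ and $[y]$ in the product topology.

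The first step is to separate their signatures in $T((V))$. Since $[x] \neq [y]$, the definition of $\sim$ gives $S(x) \neq S(y)$, so there exists some smallest index $k \in \mathbb{N}\cup\{0\}$ with $\pi_{k} S(x) \neq \pi_{k} S(y)$ in $V^{\otimes k}$. Because $V^{\otimes k}$ is a finite-dimensional normed space and hence Hausdorff, I can pick disjoint open sets $U, U' \subset V^{\otimes k}$ with $\pi_{k}S(x) \in U$ and $\pi_{k}S(y) \in U'$.

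Next I pull these back to the level of $\mathcal{C}_{p}$. By the very definition of $\chi_{_{\text{pr}}}$, the map $\pi_{k}\circ S : \mathcal{C}_{p} \to V^{\otimes k}$ is continuous (it is the composition of the topological embedding $S : \mathcal{C}_{p} \to \mathcal{S}_{p}$ with the canonical projection $\pi_{k}$, both of which are continuous in the product topology). Therefore $(\pi_{k}\circ S)^{-1}(U)$ and $(\pi_{k}\circ S)^{-1}(U')$ are disjoint open subsets of $\mathcal{C}_{p}$ containing $[x]$ and $[y]$ respectively, which is exactly the Hausdorff separation axiom.

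There is no real obstacle here; the result is essentially a soft corollary of the preceding lemma. An equivalent route, which I would mention as an alternative, is to invoke the previous lemma directly: the family $\{\Phi_{f} : f \in T((V))^{*}\}$ consists of continuous real-valued functions on $\mathcal{C}_{p}$ (by the preceding lemma) and separates points (by Lemma \ref{algebra}); since $\mathbb{R}$ is Hausdorff, any space admitting a point-separating family of continuous maps into a Hausdorff target is itself Hausdorff, by taking preimages of disjoint neighbourhoods of $\Phi_{f}([x]) \neq \Phi_{f}([y])$ in $\mathbb{R}$. This second approach is slightly more abstract but bypasses the explicit choice of level $k$.
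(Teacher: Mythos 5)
Your proof is correct, and your "alternative" second route is in fact exactly the paper's own proof: the paper invokes that the functions in $\mathcal{A}$ separate points (Lemma \ref{algebra}) and are continuous on $(\mathcal{C}_p,\chi_{\text{pr}})$, then concludes Hausdorffness. Your primary route, pulling back disjoint neighbourhoods through $\pi_k\circ S$ at the first level $k$ where the signatures differ, is a slightly more concrete instance of the same mechanism and is equally valid.
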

 \begin{proof}
This follows at once from the fact that the functions in $\mathcal{A}$ separate points in $\mathcal{C}_{p}$ and are continuous with respect to $(\mathcal{C}_{p},\chi_{_{\text{pr}}})$ 
 \end{proof}
 The following lemma captures some further basic facts about this product topology.
 \begin{lemma}\label{lem: sep and met}
 The topological space $(\mathcal{C}_{p},\chi_{_{\text{pr}}})$ is both separable and metrisable.
 \end{lemma}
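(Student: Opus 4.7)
The plan is to exploit the fact that the product topology on $T((V)) = \prod_{i=0}^{\infty} V^{\otimes i}$ is a countable product of finite-dimensional topological vector spaces, which automatically inherits both metrisability and separability. These properties then transfer, first to the subspace $\mathcal{S}_p \subset T((V))$, and then to $\mathcal{C}_p$ via the topological embedding given by $S$.

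For metrisability, I would first note that each factor $V^{\otimes k}$ is a finite-dimensional real vector space, hence metrisable via any norm such as $\|\cdot\|_{V^{\otimes k}}$. A countable product of metrisable spaces is metrisable; concretely, one may equip $T((V))$ with a metric of the form
\[
d(v,w) = \sum_{k=0}^{\infty} 2^{-k}\, \frac{\|v_k - w_k\|_{V^{\otimes k}}}{1+\|v_k - w_k\|_{V^{\otimes k}}},
\]
and check that the topology it generates coincides with the product topology. Since metrisability passes to subspaces, $\mathcal{S}_p$ is metrisable, and transporting along the homeomorphism $S:\mathcal{C}_p \to \mathcal{S}_p$ yields metrisability of $\mathcal{C}_p$.

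For separability, I would observe that each $V^{\otimes k}$ is separable (take, for example, the countable set of vectors with rational coordinates in a fixed basis). A countable product of separable spaces is separable: fixing countable dense subsets $D_k \subset V^{\otimes k}$ and a basepoint $0$ in each $V^{\otimes k}$, the set of elements of $T((V))$ that take values in $\bigcup_k D_k$ on finitely many coordinates and equal $0$ elsewhere is countable, and is dense in the product topology by the definition of that topology. Hence $T((V))$ is separable; since it is also metrisable, every subspace is separable, so $\mathcal{S}_p$ is separable, and the homeomorphism transfers this to $\mathcal{C}_p$.

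The only mildly delicate point is that, in arbitrary topological spaces, separability need not be inherited by subspaces. This is neutralised here by first establishing metrisability, since in a metric space separability is equivalent to second countability, which is hereditary. This is the step where one must be careful about the order of arguments, but no genuine obstacle arises.
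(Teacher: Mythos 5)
Your proof is correct and takes essentially the same route as the paper: both pass to the range $\mathcal{S}_p \subset T((V)) = \prod_{i \geq 0} V^{\otimes i}$, exploit that a countable product of finite-dimensional spaces is separable and metrisable, note that both properties pass to subspaces, and transfer them to $\mathcal{C}_p$ via the topological embedding $S$. The paper simply compresses your explicit metric and your heredity-of-separability remark into the single invocation that countable products of Polish spaces are Polish and subspaces of Polish spaces are separable and metrisable; your version unpacks that black box and is commendably careful about the fact that separability is not hereditary in general but is recovered via second countability once metrisability is in hand.
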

\begin{proof}
We need to prove that $\mathcal{S}_{p}$, endowed with the subspace topology in $\prod_{i=0}^{\infty}V^{\otimes i}$, has these properties. Recall that a Polish space is a topological space that is separable and completely metrisable. Then the fact that the product of a countable collection of Polish spaces is itself a Polish space (with the product topology) shows that $\prod_{i=0}^{\infty}V^{\otimes i}$ is Polish.  That any subspace of a Polish space is also separable and metrisable allows us to conclude the argument.
\end{proof}
Convergence of a sequence $([x_{n}])_{n=1}^{\infty}$ in $(\mathcal{C}_{p},\chi_{_{\text{pr}}})$ is characterised by pointwise convergence of the terms in the ST, i.e. $[x_{n}] \rightarrow [x]$ as $n \rightarrow \infty$ if and only if
\[
S(x_{n})^{(k)} \rightarrow S(x)^{(k)} \text{ as $n \rightarrow \infty$,  for every $k \in \mathbb{N}$}
\]
This choice of topology also ensures that the group operations on unparameterised path sapce are continuous.

 \begin{proposition}\label{prop: group cts} The space
$(\mathcal{C}_{p},\chi_{_{\text{pr}}})$ with the operations $\left[ \gamma\right] \cdot\left[ \sigma\right] :=\left[\gamma\ast\sigma\right]$ and $\left[\gamma\right]^{-1}:=\left[\overleftarrow{\gamma}\right]$ is a topological group.
\end{proposition}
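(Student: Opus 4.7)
The plan is to exploit the fact that, by construction of $\chi_{_{\text{pr}}}$, the signature $S:\mathcal{C}_{p}\to \mathcal{S}_{p}$ is a homeomorphism onto its image equipped with the subspace topology from the product topology on $T((V))$. Via Chen's relation (\Cref{lemma:chen}) we have $S([\gamma]\ast[\sigma])=S(\gamma)\cdot S(\sigma)$, and via \Cref{lemma:inverse} we have $S([\overleftarrow{\gamma}])=S(\gamma)^{-1}$ in $T((V))$. Consequently, under $S$, the group operations on $\mathcal{C}_{p}$ are conjugated to tensor multiplication and tensor inversion on $\mathcal{S}_{p}\subset T_{1}((V))$. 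So it suffices to prove that tensor multiplication $T((V))\times T((V))\to T((V))$ and tensor inversion $T_{1}((V))\to T_{1}((V))$ are continuous in the product topology, and then restrict.

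For multiplication, the formula from \Cref{tensor product} gives
\[
\pi_{k}(v\cdot w)=\sum_{l=0}^{k}\pi_{l}(v)\,\pi_{k-l}(w).
\]
Each summand is a bilinear map between the finite-dimensional spaces $V^{\otimes l}\times V^{\otimes(k-l)}\to V^{\otimes k}$, hence continuous, and the projections $\pi_{l}:T((V))\to V^{\otimes l}$ are continuous by definition of the product topology. Therefore $\pi_{k}\circ\mathrm{mult}$ is continuous for every $k$, and the universal property of the product topology yields continuity of multiplication.

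For inversion, note that $\mathcal{S}_{p}\subset T_{1}((V))=\{v:\pi_{0}v=1\}$. For $v\in T_{1}((V))$, writing $w=v^{-1}$ and solving $v\cdot w=\mathbf{1}$ level by level gives $w_{0}=1$ and the recursion
\[
w_{k}=-\sum_{l=1}^{k}v_{l}\,w_{k-l},\qquad k\geq 1,
\]
so that $\pi_{k}(v^{-1})$ is a polynomial expression in $\pi_{1}(v),\ldots,\pi_{k}(v)$, in particular a continuous function of $v$ in the product topology. Again invoking the universal property, inversion is continuous on $T_{1}((V))$. Restricting both operations to $\mathcal{S}_{p}$ (they map $\mathcal{S}_{p}\times\mathcal{S}_{p}$ and $\mathcal{S}_{p}$ into $\mathcal{S}_{p}$ by Chen and \Cref{lemma:inverse}) and transporting back via the homeomorphism $S$ gives continuity of $\ast$ and $[\cdot]\mapsto[\overleftarrow{\cdot}]$ on $\mathcal{C}_{p}$. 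Combined with the group axioms established in \Cref{lemma: path group}, this is exactly the topological group property.

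There is no real obstacle here; the only point warranting care is checking that the product topology on $\mathcal{C}_{p}\times\mathcal{C}_{p}$ matches, under $S\times S$, the subspace topology on $\mathcal{S}_{p}\times\mathcal{S}_{p}\subset T((V))\times T((V))$, which is immediate from the fact that products of homeomorphisms are homeomorphisms and that the product topology commutes with taking products.
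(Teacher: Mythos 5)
Your proposal is correct and takes essentially the same approach as the paper: both reduce continuity of the group operations to continuity of each projection $\pi_n$ applied to multiplication and inversion, observing that these projections depend only on finitely many levels of the input. The only cosmetic difference is that the paper expresses the inverse via the Neumann series $\psi(a)=\mathbf{1}+\sum_{n\geq 1}(\mathbf{1}-a)^n$ and computes an explicit formula for $\pi_n\circ\psi$, whereas you use the equivalent level-by-level recursion $w_k=-\sum_{l=1}^{k}v_l w_{k-l}$, which arrives at the same polynomial-dependence-on-finitely-many-levels conclusion a bit more directly.
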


\begin{proof}
We provide a proof from first principles. First for $\left[ \cdot\right] ^{-1}$, we define a map on $T_{1}((V))=\{a\in T((V)): a_{0}=1\}$ by:
\[
\psi:T_{1}((V)) \longrightarrow T_{1}((V)), \text{such that }  
\psi: a \mapsto \mathbf{1} + \sum_{n=1}^\infty (\mathbf{1}-a)^n,
\]
which has the property that $(\psi \circ S)(x)=S(\overleftarrow{x})$ allowing us to write 
\[
\left[ \cdot \right]^{-1} =S^{-1} \circ \psi \circ S : \mathcal{C}_{p}\xrightarrow{} \mathcal{C}_{p}
\]
where $S^{-1}$ denote the inverse of the signature  when regarded as function from $\mathcal{C}_{p}$ onto $\mathcal{C}_{p}$. The definition of the product topology ensures that $\mathcal{S}$ and $S^{-1}$ are both continuous. It therefore suffices to show continuity of $\psi$ on $T_{1}((V))$ again in the subspace topology with respect to $T((V))$ and, by definition of the product topology, the amounts to show continuity of the projection onto each $V^{\otimes n}$ of $\psi$. It is not difficult to see that if $a=(1,a_{1},a_{2},...)$ then $(\pi_{n} \circ \psi)(a)$ depends only on $a_{1},....a_{n}$, indeed we have the explicit formula
\[
(\pi_{n} \circ \psi)(a)=\sum_{m=1}^{n}(-1)^{n}\sum_{(k_{1},...,k_{m}) \in P(k,m)}a_{k_{1}}a_{k_{2}}...a_{k_{m}},
\]
where the sum is over the set $P(k,m) \subset (\mathbb{N}_{\geq 1})^{m}$ of multi-indices $(k_{1},...,k_{m})$ with sum equal to $n$. The continuity of $\pi_{n} \circ \psi$ then follows from the continuity of the following operations: the tensor product, the canonical isomorphisms $V^{\otimes k}\otimes V^{\otimes l}\cong V^{\otimes k+l}$, the canonical inclusions of $V^{\otimes n}$ into $T((V))$, and of addition in $T((V))$. The continuity of group multiplication follows by a similar argument.
\end{proof}

\subsection{Compact sets in the product topology}

 To work with Theorem \ref{thm: universal approximation} it is important to have an understanding of the compact subsets of $\mathcal{C}_{p}$ in the chosen topology. To describe particular examples of such sets in the case $p=1$, we recall from Theorem \ref{HL} that each equivalence class $[x]$ contains a tree-reduced representative of minimal length. This tree-reduced path is unique up to reparametisation; we will make frequent use of the its constant-speed parameterisation and so we adopt a notation to refer to it.
 \begin{notation}
 We use $x^{*}$ to denote the tree-reduced representative of the equivalence class $[x]$, which is parameterised at constant speed.     
 \end{notation}
We note that $x^{*}$ will be Lipschitz continuous with Lipschitz norm bounded by the tree-reduced length of $x$; see Remark \ref{Lipschitz reparam}. We now define the subset $B(r) \subset \mathcal{C}_{1}$ to be the set of unparameterised paths for which the tree-reduced length is bounded by the prescribed potive real number $r$.
That these sets are compact subspaces of $(\mathcal{C}_{p},\chi_{_{\text{pr}}})$ is the context of the next result.
 \begin{proposition}\label{prop: B(r)}
Let $p$ be in $[1,2)$ and $r>0$. Then the set $B\left( r\right)$
is a compact subspace of $(\mathcal{C}_{p}, \chi_{\text{pr}
})$.  
\end{proposition}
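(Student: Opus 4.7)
The plan is to reduce the problem to a standard compactness argument via Arzelà--Ascoli by working with the canonical tree-reduced, constant-speed representatives. Since the topology $\chi_{_{\text{pr}}}$ on $\mathcal{C}_{p}$ is the one that makes $S$ a topological embedding onto $\mathcal{S}_{p}\subset T((V))$, it suffices to show that $S(B(r))$ is compact in $T((V))$ with the product topology.

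First, I would introduce the set $\mathcal{L}(r)\subset C([0,1],V)$ of Lipschitz paths $\gamma$ with $\gamma_{0}=0$ and Lipschitz constant at most $r$. By Arzelà--Ascoli (uniform equicontinuity from the common Lipschitz bound, and pointwise boundedness since $\|\gamma_{t}\|\leq r$), $\mathcal{L}(r)$ is relatively compact in $C([0,1],V)$ equipped with the uniform topology, and, since a uniform limit of $r$-Lipschitz paths starting at $0$ is again $r$-Lipschitz starting at $0$, $\mathcal{L}(r)$ is actually compact. Next, I would observe that every $[x]\in B(r)$ has a tree-reduced representative $x^{*}$ of length $\ell\leq r$, which after reparameterisation at constant speed on $[0,1]$ lies in $\mathcal{L}(r)$; conversely, any $\gamma\in\mathcal{L}(r)$ has $1$-variation at most $r$, so $[\gamma]\in B(r)$. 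Combined with reparameterisation invariance (Lemma \ref{lemma:repearam_invariance}), this yields $S(B(r))=S(\mathcal{L}(r))$.

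The main analytic step, and the one I expect to be the chief obstacle, is to prove that the signature map
\[
S:\bigl(\mathcal{L}(r),\|\cdot\|_{\infty}\bigr)\longrightarrow \bigl(T((V)),\chi_{_{\text{pr}}}\bigr)
\]
is continuous. Uniform continuity of each iterated Young integral is not automatic from uniform convergence alone, but we can exploit the common $1$-variation bound. For $\gamma_{n},\gamma\in\mathcal{L}(r)$ with $\gamma_{n}\to\gamma$ uniformly and for any $q\in(1,2)$, a standard interpolation estimate gives
\[
\|\gamma_{n}-\gamma\|_{q,[0,1]}^{q}\leq (2\|\gamma_{n}-\gamma\|_{\infty})^{q-1}\bigl(\|\gamma_{n}\|_{1}+\|\gamma\|_{1}\bigr)\leq 2r(2\|\gamma_{n}-\gamma\|_{\infty})^{q-1},
\]
so $\gamma_{n}\to\gamma$ in $q$-variation. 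By the continuity of the Young integration map in $q$-variation (e.g.\ \cite[Prop.\ 6.11]{friz2010multidimensional}, already invoked in the proof of Lemma \ref{lemma:repearam_invariance}), each level $S(\gamma_{n})^{(k)}$ converges to $S(\gamma)^{(k)}$ in $V^{\otimes k}$, which is precisely convergence of $S(\gamma_{n})$ to $S(\gamma)$ in the product topology.

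Putting the pieces together, $S(\mathcal{L}(r))$ is the continuous image of a compact set, hence compact in $T((V))$; thus $S(B(r))=S(\mathcal{L}(r))$ is compact, and by the defining property of $\chi_{_{\text{pr}}}$ on $\mathcal{C}_{p}$, the set $B(r)$ itself is compact. Note that the bound $p<2$ enters only through the availability of Young integration in the continuity-of-signature step; the result holds uniformly across $p\in[1,2)$ because the product topology does not depend on $p$.
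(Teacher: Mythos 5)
Your proposal is correct and rests on the same three analytic ingredients the paper uses: equicontinuity of the constant-speed tree-reduced representatives together with Arzel\`a--Ascoli, the interpolation inequality upgrading uniform convergence to $q$-variation convergence on a set with a common $1$-variation bound, and the continuity of the signature (via Young integration) in the $q$-variation topology. The only real difference is the packaging: the paper first notes metrisability of $\chi_{\text{pr}}$ and then runs a sequential-compactness argument (pick a sequence, extract a uniformly convergent subsequence, show Cauchy in $q$-variation, identify the limit as an element of $B(r)$, conclude level-wise convergence of signatures), whereas you reorganise the same content as ``$\mathcal{L}(r)$ is compact in the uniform topology; $S:\mathcal{L}(r)\to T((V))$ is continuous; therefore $S(B(r))=S(\mathcal{L}(r))$ is compact.'' Your version bundles the work into a clean continuity-of-$S$ lemma and avoids any explicit appeal to metrisability, while the paper's version makes the verification that the limit stays in $B(r)$ slightly more explicit. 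Both are correct; neither requires anything the other doesn't. One small point worth noting: your identification $S(B(r))=S(\mathcal{L}(r))$ implicitly uses that the tree-reduced representative minimises length (so $\mathcal{L}(r)$ maps onto all of $B(r)$ and into it), which is exactly the content of Theorem \ref{HL} that the paper also relies on, so this is fine.
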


\begin{proof}
We have seen in Lemma \ref{lem: sep and met} that the product topology is metrisable and it therefore suffices to prove that $B(r)$ is sequentially compact. Let $\left( \left[ x_{n}\right] \right) _{n=1}^{\infty}$ 
be a sequence in $B\left( r\right) $ then the sequence $\left(
x_{n}^{\ast}\right) _{n=1}^{\infty}$ of tree-reduced representatives satisfies%
\begin{equation}
\left\vert x_{n}^{\ast}\left( t\right) -x_{n}^{\ast}\left(
s\right) \right\vert \leq r\left( t-s\right) \text{ for all }s\leq t\text{
in }\left[ a,b\right] \text{ and }n\in%
\mathbb{N},
  \label{equi}
\end{equation}
and this collection of paths is therefore equicontinuous. The Arzela-Ascoli theorem gives that there exists a uniformly convergent subsequence of $\left(
x_{n}^{\ast}\right) _{n=1}^{\infty}$ which we again call $\left(
x_{n}^{\ast}\right) _{n=1}^{\infty}$ for convenience. 

We denote this uniform limit by $x$
 and observe, by using (\ref{equi}), that $\left\vert \left\vert
x\right\vert \right\vert _{1}\leq r$ so that $\left\vert
\left\vert x^{\ast}\right\vert \right\vert _{1}\leq r$, and therefore $\left[
x\right] $ is in $B\left( r\right) .$ Let $1<q<2,$ then a standard 
inequality interpolating between $1-$ and $q-$ variation norms gives that
\[
\left\vert \left\vert x_{n}^{\ast}-x_{m}^{\ast}\right\vert
\right\vert _{q}\leq 2r ^{1/q}\left\vert \left\vert x
_{n}^{\ast}-x_{m}^{\ast}\right\vert \right\vert _{\infty}^{1-1/q}.
\]
This allows us to deduce that conclude that $\left( x_{n}^{\ast}\right) _{n=1}^{\infty }
$ is a Cauchy sequence and thus convergent to $x$ in $q$-variation. For
each $m$ the map%
\[
C_{q}\ni\sigma\mapsto S_{m}\left( \sigma\right) \in V^{\otimes m} 
\]
is continuous. It thus holds that $S_{m}\left( x_{n}^{\ast
}\right) \rightarrow S_{m}\left( x\right) $ as $%
n\rightarrow\infty$ and hence $\left[ x_{n}\right] \rightarrow%
\left[ x\right] $ in $B\left( r\right) $ as $n\rightarrow \infty$ in
the product topology. 
\end{proof}

\begin{corollary}
The space $(\mathcal{C}_{1}, \chi_{\text{pr}})$ is $\sigma$-compact.
\end{corollary}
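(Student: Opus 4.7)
The plan is simply to exhibit $(\mathcal{C}_{1}, \chi_{\text{pr}})$ as a countable union of the compact sets $B(r)$ from Proposition \ref{prop: B(r)}. Since $\sigma$-compactness means expressibility as a countable union of compact subspaces, and since we already know that $B(r)$ is compact for each $r>0$, it suffices to show that
\[
\mathcal{C}_{1} = \bigcup_{n=1}^{\infty} B(n).
\]

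The inclusion $\bigcup_{n=1}^{\infty} B(n) \subset \mathcal{C}_{1}$ is immediate by definition. For the reverse inclusion, take an arbitrary $[x] \in \mathcal{C}_{1}$. By Theorem \ref{HL}, the equivalence class $[x]$ contains a tree-reduced representative $x^{*}$, whose length $\|x^{*}\|_{1}$ is finite (indeed, $\|x^{*}\|_{1} \le \|y\|_{1}$ for every $y \in [x]$, and $[x]$ contains at least one representative of finite $1$-variation by assumption). Choosing any natural number $n \ge \|x^{*}\|_{1}$ gives $[x] \in B(n)$, and the reverse inclusion follows.

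This expresses $\mathcal{C}_{1}$ as a countable union of compact subspaces, so $(\mathcal{C}_{1}, \chi_{\text{pr}})$ is $\sigma$-compact. There is no real obstacle here: the only non-routine input is the existence of tree-reduced representatives of minimal length from Theorem \ref{HL}, together with Proposition \ref{prop: B(r)}, both of which are already established.
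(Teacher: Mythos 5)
Your proof is correct and follows exactly the same route as the paper: write $\mathcal{C}_1 = \bigcup_{n} B(n)$, invoke Proposition \ref{prop: B(r)} for the compactness of each $B(n)$, and use Theorem \ref{HL} to guarantee a tree-reduced representative of finite length for each class. Nothing further to add.
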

\begin{proof}
This is immediate from the previous proposition and the fact that $\mathcal{C}_{1}=\cup_{r=1}^{\infty} B(r)$. The latter being the case since, in view of Theorem \ref{HL}, we know that every continuous and finite length path has a tree-reduced representative (which trivially must also have finite length).
\end{proof}
With the understanding developed through the previous results, we now revisit the setting of Example \ref{prop: ode} and investigate whether the family of controlled differential equations, presented there as functions on $\mathcal{C}_{1}$, are continuous on the sets $B(r)$.
\begin{example}[Continuity of CDEs as functions on unparameterised paths]
Suppose that $W$ is a finite-dimensional vector space. Let $f:V\rightarrow 
\mathcal{T}\left( W\right) $ be a linear map which satisfies the conditions described in Example \ref{prop: ode}. There we considered the map $x \mapsto y_{b}$, showing that
\[
\left\vert y_{b} -y^{N}_{b} \right\vert \leq\frac{%
C^{N+1}L_{x}^{N+1}}{\left( N+1\right) !},
\]
where 
\[
y^{N}_{b}
:=y_{a}+\sum_{k=1}^{N}f_{S\left( x\right)^{(k)} }^{\left( k\right)
}I\left( y_{a}\right). 
\]
To prove continuity on $B\left(
r\right) $ we take a sequence $\left[ x_{n}\right] \rightarrow\left[
x \right] $ in $B\left( r\right) $, and then making the dependence of $y$ on $[x]$ explicit by writing $y([x])$, we 
note that for any $N\geq1$ the above estimates easily yield that
\[
\left\vert y_{b}([x]) -y_{b}([x_{n}]) \right\vert
\leq \left\vert\sum_{k=1}^{N} f_{S\left( x^{\ast}\right)^{(k)} }^{\left(
k\right) }I\left( y_{0}\right) -f_{S\left( x_{n}^{\ast}\right)^{(k)}
}^{\left( k\right) }I\left( y_{0}\right) \right\vert +\frac{2C^{N+1}r^{N+1}}{%
\left( N+1\right) !}. 
\]
By letting \thinspace$n\rightarrow\infty$ and then $N\rightarrow\infty$ in this estimate we establish the convergence of $y([x_{n}])$ to $y([x])$ as $n \to \infty$.
\end{example}
The following lemma provides a class of examples of tree-reduced paths.
\begin{lemma}[\cite{cass2022topologies}]\label{tree-reduced}
For $v_1,v_2,...,v_m \in V$, let $x_{v_i}$ denote the linear path in $C_{1}$ whose derivative $x'_{v_i} \equiv v$ on $[0,1]$. Let $x:\left[ 0,1\right] \rightarrow V$ in $
C_{1}$ be the piecewise linear path obtained by concatenating these linear pieces $\gamma=\gamma_{v_{1}}*\gamma_{v_{2}}*\dots*\gamma_{v_{m}}$. Assume further that for every consecutive pair of vectors $\left\langle v_{i},v_{i+1} \right\rangle=0$ for $i=1,\dots,m-1$. Then $\gamma$ is tree reduced.
\end{lemma}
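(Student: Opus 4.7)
By Theorem~\ref{HL}, $\gamma$ is tree-reduced iff it achieves the infimum of $1$-variation in its tree-like equivalence class. Since $\|\gamma\|_{1}=\sum_{i=1}^m\|v_i\|$, this reduces to proving
\begin{equation*}
\|\tilde\gamma\|_{1,[a,b]}\;\geq\;\sum_{i=1}^m \|v_i\|
\end{equation*}
for every $\tilde\gamma\in C_{0,1}([a,b],V)$ with $S(\tilde\gamma)=S(\gamma)$. By Chen's identity (Lemma~\ref{lemma:chen}) combined with~\eqref{eqn:tensor_exp}, the concrete description $S(\gamma)=\exp(v_1)\cdot\exp(v_2)\cdots\exp(v_m)$ underpins every manipulation.

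I would proceed by induction on $m$. The base case $m=1$ is immediate: the first-level signature gives $\tilde\gamma_b-\tilde\gamma_a=v_1$, hence $\|\tilde\gamma\|_1\geq\|v_1\|$, with equality realized by $\gamma_{v_1}$. For the inductive step, the natural peeling argument uses Lemma~\ref{lemma:inverse} to note that $\tilde\gamma*\overleftarrow{\gamma_{v_m}}$ has signature $S(\gamma_{v_1}*\cdots*\gamma_{v_{m-1}})$, to which the inductive hypothesis applies. This yields only $\|\tilde\gamma\|_1+\|v_m\|\geq\sum_{i=1}^{m-1}\|v_i\|$ after a crude triangle inequality on concatenation lengths, which is far too weak. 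The gap is precisely what the orthogonality $\langle v_i,v_{i+1}\rangle=0$ must close.

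The main obstacle is therefore to replace this crude bound by a sharp additive one that exploits consecutive orthogonality. My preferred strategy is to construct a functional $\phi\in T((V))^*$ whose evaluation on $S(\tilde\gamma)$ gives a sharp lower bound for $\|\tilde\gamma\|_1$. A natural starting point is the real-valued projections $\pi_i(\tilde\gamma):=\langle v_i,\tilde\gamma-\tilde\gamma_a\rangle/\|v_i\|$, whose increments are read off from the first-level signature: consecutive orthogonality makes $\pi_i(\gamma)$ flat on the adjacent $(i{\pm}1)$-segments, so its individual contribution $\|v_i\|$ does not overlap with the contributions flanking it. Since non-adjacent pairs $\langle v_i,v_j\rangle$ are \emph{not} controlled by the hypothesis, the delicate step is to correct these first-level projections by higher-order terms built from shuffle products (Theorem~\ref{lemma:shuffle_identity}), so that the resulting functional evaluates exactly to $\sum\|v_i\|$ on $\gamma$ while still bounding $\|\tilde\gamma\|_1$ from below in general. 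Together with the factorial decay (Proposition~\ref{prop:factorial_decay}) to guarantee convergence of this infinite-level construction, making this correction precise and verifying the bound is the principal technical step; once established, it closes the induction.
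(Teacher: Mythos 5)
The paper itself does not prove this lemma: it attributes it to \cite{cass2022topologies} and is immediately followed by an exercise reading ``Find a proof of this lemma,'' so there is no internal proof to compare against. Evaluating your plan on its own merits: the reduction via Theorem~\ref{HL} to showing $\|\tilde\gamma\|_{1}\geq\sum_{i}\|v_i\|$ for every $\tilde\gamma$ with $S(\tilde\gamma)=S(\gamma)$ is correct, and you rightly diagnose that the naive peeling induction loses a triangle inequality's worth of slack at every step. The problem is precisely where you flag it: you never construct the functional, and there is a strong indication that no $\phi\in T((V))^{*}$ of the kind you describe can exist. A universal lower bound $\phi(S(\tilde\gamma))\leq\|\tilde\gamma\|_{1}$ must respect the scaling $\tilde\gamma\mapsto\lambda\tilde\gamma$, under which the left side is a polynomial in $\lambda$ (term of level $k$ scaling like $\lambda^{k}$) while the right side is $|\lambda|\,\|\tilde\gamma\|_{1}$; testing also against $\gamma^{*n}$ and $\overleftarrow{\gamma}^{*n}$ as $n\to\infty$ forces all contributions beyond level one to be non-positive in the relevant directions, and then the level-one part alone, having operator norm at most $1$ (test on short straight segments), can contribute at most $\|\sum_i v_i\|$, which is strictly less than $\sum_i\|v_i\|$ once $m\geq 2$ and some $\langle v_i,v_{i+1}\rangle=0$. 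So the plan stalls exactly at the step you yourself call the principal technical obstacle.

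The route that does close the argument, and which I expect is the one in \cite{cass2022topologies}, sidesteps the variational inequality entirely. One uses the Hambly--Lyons description of tree-reduced bounded-variation paths: $\gamma$ is tree-reduced if and only if no restriction $\gamma|_{[s,t]}$ with $s<t$ is tree-like, i.e.\ has trivial signature. For a piecewise-linear path with nonzero segments, a tree-like subpath forces an ``anti-parallel cancellation'' between some pair of consecutive effective segments (the combinatorial reduction that terminates at the unique tree-reduced representative in Theorem~\ref{HL}). Any subinterval $[s,t]$ of $\gamma$ is again piecewise linear with endpoints lying inside segments or at breakpoints, so its consecutive segments are drawn from (possibly truncated copies of) the $v_i$'s and still satisfy $\langle v_i,v_{i+1}\rangle=0$; orthogonal nonzero vectors are never anti-parallel, so no cancellation is available and no proper subpath is tree-like. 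Hence $\gamma$ is tree-reduced. This argument is local and combinatorial; yours is global and variational, and the variational side is exactly where the orthogonality hypothesis is hardest to exploit.
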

\begin{exercise}
Find a proof of this lemma.    
\end{exercise}
The following construction can be found in \cite{lyons2018inverting}. It illustrates how the class of paths considered in the previous lemma can be used to construct a sequence of pairs of distinct paths whose signatures agree up to some level; the details are worked through in Exercise \ref{ex:concatenation_paths}. The example can be used to exhibit a feature of the product topology.
\begin{example} [\cite{lyons2018inverting}] \label{LyonsXu}
Let $W$ be a two-dimensional vector subspace of $V$ which is identified with $\mathbb{R}^{2}$ through an orthonormal basis $\{v_{i}:i=1,2\}$. We define two sequences $\{\rho_{n}:n=1,2,\dots\}$ and $\{\sigma_{n}:n=1,2,\dots\}$ of so-called axis paths; that is, piecewise linear paths which always move parallel to one of the coordinate axes. These are defined by $\rho_{1}:=\gamma_{v_{1}}*\gamma_{v_{2}}$,  $\sigma_{1}:=\gamma_{v_{2}}*\gamma_{v_{1}}$ and then for $n=2,3,\dots$ by 
\[
\rho_{n}=\rho_{n-1}*\sigma_{n-1} \text{ and }
\sigma_{n}=\sigma_{n-1}*\rho_{n-1}.
\]
See Figure \ref{fig: fig2} below for visualisations of the first two pairs of paths in this sequence. If two consecutive line segments $v_i$ and $v_{i+1}$ are positively co-linear, then by replacing $\gamma_{v_{i}}*\gamma_{v_{i+1}}$ with $\gamma_{v_i+v_{i+1}}$, we may write each path in the form required for Lemma \ref{tree-reduced}. By this same Lemma, each of these paths is tree-reduced. For every $n$, the paths $\rho_{n}$ and $\sigma_{n}$ have length $2^{n}$ and it was further shown in \cite{lyons2018inverting} that the terms in their STs coincide up to degree $n$, i.e.
\[
S(\rho_{n})^{(k)}=S(\sigma_{n})^{(k)} \text{ for $k=1,2,\dots,n$ }.
\]
Moreover, it can be shown that $S_{n+1}(\rho_{n})$ and $S_{n+1}(\sigma_{n})$ are not equal for any $n$. Consequently, each tree-reduced path $\Gamma_n:= \sigma_n*\overleftarrow{\rho}_{n}$ has length $2^{n+1}$ and satisfies
\[
S\left( \Gamma_{n}\right)^{(m)}
=S\left(\sigma_n*\overleftarrow{\rho}_{n}\right)^{(m)}=0 \text{ for  $m=1,2,\dots,n$, while } S\left( \Gamma_{n}\right) \neq\mathbf{1},
\]
\begin{figure}
    \centering
    \includegraphics[height=0.25\paperheight,keepaspectratio]{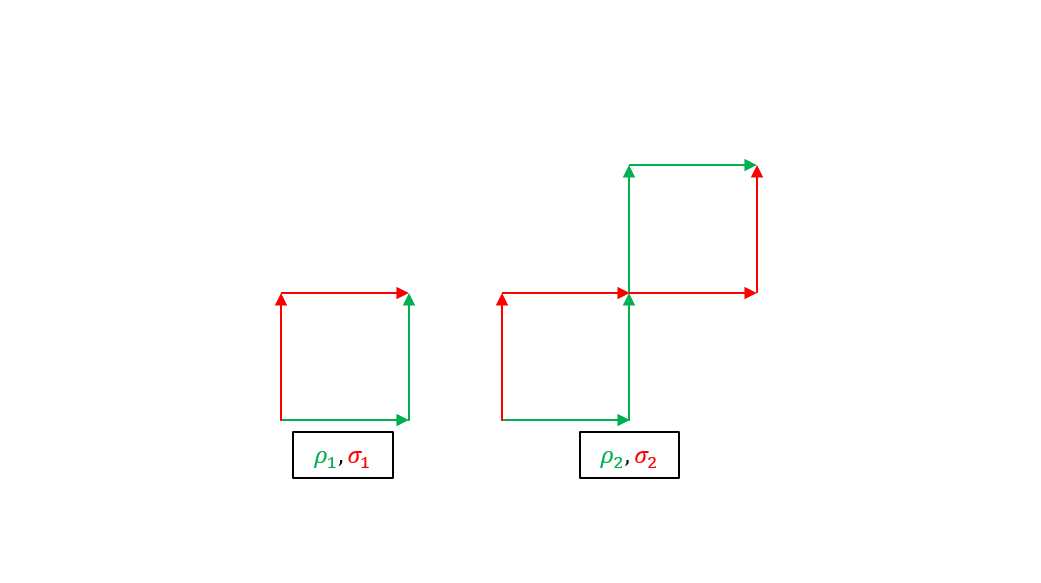}
    \caption{ Distinct planar axis paths having the same signatures when truncated at levels one and two respectively (left to right). } \label{fig: fig2} 
\end{figure}
\end{example}
We can use this construction to prove the following property of the product topology, which we will later use.

\begin{proposition}\label{prop: unbounded nhds}
The function $d([x])=||x^{*}||$ is unbounded on every non-empty open set in $(\mathcal{C}_{1},\chi_{\text{pr}})$
\end{proposition}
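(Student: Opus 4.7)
The plan is, given any non-empty open $U\subset(\mathcal{C}_1,\chi_{\text{pr}})$ and any fixed $[x]\in U$, to exhibit a sequence $([y_n])\subset U$ with $d([y_n])\to\infty$. My candidate is $[y_n]:=[x\ast\Gamma_n]$, where $\Gamma_n=\sigma_n\ast\overleftarrow{\rho}_n$ is the tree-reduced axis path from Example \ref{LyonsXu}: it has length $d([\Gamma_n])=2^{n+1}$, yet satisfies $\pi_k S(\Gamma_n)=0$ for every $1\leq k\leq n$.

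First I would check that $[y_n]\in U$ for all sufficiently large $n$. By Chen's relation (Lemma \ref{lemma:chen}), $S(y_n)=S(x)\cdot S(\Gamma_n)$, and the convolution formula $\pi_k(a\cdot b)=\sum_{l=0}^k \pi_l a\otimes\pi_{k-l} b$ together with $\pi_0 S(\Gamma_n)=1$ and $\pi_j S(\Gamma_n)=0$ for $1\leq j\leq n$ gives $\pi_k S(y_n)=\pi_k S(x)$ for every $k\leq n$. Since $\chi_{\text{pr}}$ is generated by open constraints on finitely many coordinate projections, $U$ contains a basic neighborhood of $[x]$ described by open sets on $\pi_k$ for $k\leq N$ only; for $n\geq N$, the agreement on levels $k\leq n$ forces $[y_n]\in U$.

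Next I would lower-bound $d([y_n])$ via sub-additivity of tree-reduced length under the group operation on $\mathcal{C}_1$: for any $[a],[b]\in\mathcal{C}_1$, the concatenation $a^{\ast}\ast b^{\ast}$ is a representative of $[a]\ast[b]$ of $1$-variation $d([a])+d([b])$, so $d([a]\ast[b])\leq d([a])+d([b])$; similarly $d([\overleftarrow{a}])=d([a])$ because reversal preserves length. Using $[\overleftarrow{x}]\ast[y_n]=[\Gamma_n]$ (Lemma \ref{lemma:inverse}) yields
$$2^{n+1}=d([\Gamma_n])\leq d([\overleftarrow{x}])+d([y_n])=d([x])+d([y_n]),$$
and hence $d([y_n])\geq 2^{n+1}-d([x])\to\infty$.

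The main obstacle is locating a suitable perturbation sequence: the construction needs paths that are simultaneously tree-reduced with arbitrarily large length and invisible in any prescribed finite level of the signature. Example \ref{LyonsXu} supplies exactly this, and from that point the argument is routine. The only remaining subtlety is that naive concatenation with $x$ might a priori shrink the tree-reduced length through cancellations; this is ruled out by applying sub-additivity to the reverse decomposition $[\Gamma_n]=[\overleftarrow{x}]\ast[y_n]$, which transfers the growth of $d([\Gamma_n])$ to $d([y_n])$.
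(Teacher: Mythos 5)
Your proof is correct and rests on the same central device as the paper's: the Lyons--Xu sequence $(\Gamma_n)$ from Example \ref{LyonsXu}. The organisation differs slightly. The paper first observes that $\Gamma_n\to[o]$ in $\chi_{\text{pr}}$ (since $\pi_m S(\Gamma_n)=0$ for $m\le n$), concludes that every open neighbourhood of $[o]$ contains $\Gamma_n$ for all large $n$ and hence that $d$ is unbounded near $[o]$, and then transfers to an arbitrary $[x]$ by invoking the fact that $m_{[x]}:[y]\mapsto[x]\ast[y]$ is a homeomorphism (Proposition \ref{prop: group cts}). You instead translate the sequence itself, setting $[y_n]=[x\ast\Gamma_n]$, verify via Chen's relation that $[y_n]$ eventually lies in any prescribed neighbourhood of $[x]$, and bound $d([y_n])$ from below via subadditivity of tree-reduced length. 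This last step, namely $d([a]\ast[b])\le d([a])+d([b])$ giving $d([y_n])\ge 2^{n+1}-d([x])$, is precisely what the paper leaves implicit in its translation argument: the homeomorphism $m_{[x]}$ carries open sets to open sets but does not preserve $d$, so some estimate of this type is needed to conclude that unboundedness survives the translation. Your version makes that step explicit, which is a small but genuine gain in completeness over the paper's phrasing.
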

\begin{proof}
Consider the two-dimensional subspace of $V$ spanned by two orthonormal vectors $\{v_1,v_2\}$, and let $(\Gamma_n)_{n=1}^\infty$ be the sequence of axis paths defined in the example just considered, Example \ref{LyonsXu}. Each $\Gamma_n$ is tree reduced by Lemma \ref{tree-reduced} so that $d([\Gamma_n])=2^{n+1}$. On the other hand, because for every $n \in \mathbb{N}$ we have that $\lim_{m \to \infty} S(\Gamma_{n})^{(m)}=0$ the sequence $(\Gamma_n)_{n=1}^\infty$ converges to $[o]$ in the product topology. Every open neighbourhood of $[o]$ therefore contains all but finitely many terms of the sequence $(\Gamma_n)_{n=1}^\infty$ and $d$ is unbounded on this set. 
This shows that every open neighbourhood of $[o]$ is unbounded in $d$. The same then holds for any open neighbourhood of an arbitrary $[x]\in\mathcal{C}_1$ by using the fact, which follows from Proposition \ref{prop: group cts}, that the multiplication map $m_{[x]}:\mathcal{C}_1 \to \mathcal{C}_1$, $[y]\mapsto[x]*[y]$ is a homeomorphism. 
\end{proof}

\subsection{Probability on unparameterised paths}
A natural next step is to develop tools for studying probability measures on the measurable space $(\mathcal{C}_{p},\mathcal{B}(\mathcal{C}_{p}))$ which is obtained from the Borel $\sigma$-algebra of the topology $\tau_{\text{pr}}$. We first gather some fundamental facts from elementary topology. Recall that a Polish space is a separable topological space that is completely metrisable, while a Lusin space is topological space that is homeomorphic to closed subset of a compact metric space. Every Polish space is a Lusin space, and every subspace of Lusin space is a Lusin space if and only if it is a Borel set. We have the following.
\begin{lemma}
The space $(\mathcal{C}_{1},\tau_{\text{pr}})$ is a Lusin space.  
\end{lemma}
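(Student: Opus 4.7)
The plan is to transfer the problem to the signature image $\mathcal{S}_1 \subset T((V))$ using the fact that, by construction of $\tau_{\text{pr}}$, the signature map $S:(\mathcal{C}_1,\tau_{\text{pr}})\to \mathcal{S}_1$ is a homeomorphism onto its image. Since a subspace homeomorphic to a Lusin space is Lusin, it suffices to show that $\mathcal{S}_1$ is a Lusin subspace of $T((V)) = \prod_{i=0}^{\infty}V^{\otimes i}$.

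The first step is to recall that $T((V))$ is itself Polish. This was essentially used in the proof of Lemma \ref{lem: sep and met}: $V^{\otimes i}$ is a finite-dimensional Banach space, hence Polish, and a countable product of Polish spaces (with the product topology) is Polish. Any Polish space embeds as a Borel (in fact $G_\delta$) subset of a compact metric space, so by the equivalent characterisation recalled in the paragraph preceding the lemma, it suffices to prove that $\mathcal{S}_1$ is a Borel subset of the Polish space $T((V))$.

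To establish Borelness of $\mathcal{S}_1$, I would use the $\sigma$-compact decomposition already in hand. By the corollary to Proposition \ref{prop: B(r)}, we can write
\[
\mathcal{C}_1 = \bigcup_{r \in \mathbb{N}} B(r),
\]
where each $B(r)$ is compact in $(\mathcal{C}_1,\tau_{\text{pr}})$. Continuity of $S$ from $\mathcal{C}_1$ onto $\mathcal{S}_1$ (automatic by definition of the product topology) ensures that $S(B(r))$ is compact in $T((V))$, hence closed. Therefore
\[
\mathcal{S}_1 \;=\; \bigcup_{r \in \mathbb{N}} S(B(r))
\]
is an $F_\sigma$ subset of $T((V))$, and in particular Borel. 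Combining this with the remarks above yields that $\mathcal{S}_1$ is Lusin, and hence so is $(\mathcal{C}_1,\tau_{\text{pr}})$.

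I do not expect any genuine obstacle here: the content is essentially bookkeeping on top of the $\sigma$-compactness result and the fact that $T((V))$ is Polish. The only point worth being careful about is to justify that $S$, viewed as a map into $T((V))$ (not merely into $\mathcal{S}_1$), sends compacts to compacts — but that is immediate from continuity of $S$ with respect to $\tau_{\text{pr}}$, which is built into the definition of $\tau_{\text{pr}}$ via the subspace topology on $\mathcal{S}_p$.
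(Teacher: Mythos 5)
Your proof is correct and follows essentially the same route as the paper: both show $T((V))$ is Polish, use the $\sigma$-compact decomposition from Proposition \ref{prop: B(r)} to exhibit $\mathcal{S}_1$ (equivalently $\mathcal{S}$) as a countable union of compact, hence closed, hence Borel subsets of $T((V))$, and invoke the fact that a Borel subset of a Polish space is Lusin. The only cosmetic difference is that you explicitly phrase the transfer through the homeomorphism $S$, whereas the paper identifies $\mathcal{C}_1$ with $\mathcal{S}$ and speaks of the inclusion $\iota:\mathcal{S}\to X$.
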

\begin{proof}
Any Borel subset of a Polish space is a Lusin space, and so noting that the space $X=\prod_{i=0}^{\infty} V^{\otimes i}$ is a Polish space, as we have already seen in the proof of Lemma \ref{lem: sep and met}, we aim to show that $\mathcal{S}$ is a Borel subset of $X$. The inclusion map $\iota:\mathcal{S}\to X$ is continuous and hence the sets $B(r)$ from Proposition \ref{prop: B(r)} are also compact subsets of $X$. They are therefore closed in $X$ and hence also Borel subsets of $X$. Thus $\mathcal{S}$ being a countable union of Borel subsets is itself a Borel subset. 
\end{proof}

Assuming (at least) that $X$ is a Lusin space, we will use $\mathcal{P}(X)$ to denote the space of probability measures on $(X,\mathcal{B}(X))$. A widely-used topology on $\mathcal{P}(X)$ is the topology of weak convergence, and we recall it definition here. As preparation we introduce $C_{b}(X)$ denote the set of continuous, bounded real-valued function on $X$ and use the notation $\mu(f)$ to denote the integral 
\[
\mu(f)=\int f d \mu.
\]
\begin{definition}[Topology of weak convergence]
Suppose that $\nu$ is in $\mathcal{P}(X)$. We define a topology on $\mathcal{P}(X)$ by letting a neighbourhood base of the point $\nu$ consist of set of the form
\[
\{\mu \in \mathcal{P}(X): |\mu(f_{i})-\nu(f_{i})|<\epsilon_{i} \},
\]
where $n\in \mathbb{N}$, each $\epsilon_{i}>0$ and each $f_{i} \in C_{b}(X)$. We call this topology the topology of weak convergence on $\mathcal{P}(X)$.      
\end{definition}
It is an important fact that for every Lusin space $X$, the topology of weak convergence on $\mathcal{P}(X)$ is metrisable. This means that we can characterise the topology of weak convergence using sequences where $(\mu_{n})_{n=1}^{\infty}$ converges to $\mu$ if $\mu_{n}(f) \rightarrow \mu(f) $ as $n \rightarrow \infty$ for every $f \in C_{b}(X)$. As is usual in this case, we write $\mu_{n} \Rightarrow  \mu$ as $n \rightarrow \infty$.
The following result, known as Prohorov's Theorem, describes a sufficient condition for a subset of probability measure to be relatively compact in the the topology of weak convergence. When the underlying space $X$ is a Polish space, then this condition is also necessary.
\begin{theorem}[Prohorov's Theorem]
Suppose $X$ is a Lusin space. Then a subset $H$ of  $\mathcal{P}(X)$ is relatively compact in the topology of weak convergence if $H$ is tight in the sense that for any $\epsilon>0$ there exists compact subset $K$ of $X$ such that $\mu(K^{c})<\epsilon$ for all $\mu$ in $H$.
\end{theorem}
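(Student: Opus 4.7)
The plan is to reduce to the compact case by embedding $X$ in a compact metric space, apply Banach--Alaoglu to get weak-$*$ compactness there, and then use tightness to show that the limit points of our sequence concentrate on $X$ and yield the desired weak convergence in $\mathcal{P}(X)$.

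First, I would invoke the Lusin property to obtain a compact metric space $\bar X$ and a Borel embedding $\iota:X\hookrightarrow\bar X$ realising $X$ as a Borel subset of $\bar X$. Each $\mu\in H$ can then be viewed as a Borel probability measure $\tilde\mu$ on $\bar X$ by setting $\tilde\mu(B)=\mu(\iota^{-1}(B))$. Since $\bar X$ is compact metric, $C(\bar X)$ is separable, so by the Riesz representation theorem $\mathcal{P}(\bar X)$ sits inside the closed unit ball of $C(\bar X)^{*}$, which is weak-$*$ metrisable and sequentially compact by Banach--Alaoglu. Being weak-$*$ closed (it is cut out by the conditions $\nu\geq 0$ and $\nu(\bar X)=1$, both expressible by pairings with continuous functions), $\mathcal{P}(\bar X)$ is itself sequentially compact in the topology of weak convergence.

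Second, given any sequence $(\mu_n)\subset H$, extract a subsequence $(\tilde\mu_{n_k})$ converging weakly in $\mathcal{P}(\bar X)$ to some $\tilde\mu$. I would then use tightness to argue $\tilde\mu$ is carried by $\iota(X)$. For each $\epsilon>0$ there is a compact $K_\epsilon\subset X$ with $\mu_n(K_\epsilon^c)<\epsilon$ for all $n$; its image $\iota(K_\epsilon)$ is compact in $\bar X$ and therefore closed, so the Portmanteau implication
\[
\tilde\mu(\iota(K_\epsilon))\geq\limsup_k\tilde\mu_{n_k}(\iota(K_\epsilon))\geq 1-\epsilon
\]
gives $\tilde\mu(\iota(X))=1$. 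Hence $\tilde\mu$ descends to a Borel probability measure $\mu$ on $X$, and by the same Portmanteau bound $\mu(K_\epsilon^c)\leq\epsilon$ for every $\epsilon$.

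Third, I would upgrade weak convergence $\tilde\mu_{n_k}\Rightarrow\tilde\mu$ in $\mathcal{P}(\bar X)$ to $\mu_{n_k}\Rightarrow\mu$ in $\mathcal{P}(X)$. Fix $f\in C_b(X)$ and $\epsilon>0$ and choose the tight compact $K_\epsilon\subset X$ above, so that $\mu(K_\epsilon^c)<\epsilon$ and $\mu_n(K_\epsilon^c)<\epsilon$ for all $n$. Since $\iota(K_\epsilon)$ is a compact (hence closed) subset of the metric space $\bar X$, Tietze extension produces $\tilde f\in C(\bar X)$ agreeing with $f\circ\iota^{-1}$ on $\iota(K_\epsilon)$ with $\|\tilde f\|_\infty\leq\|f\|_\infty$. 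Writing $\mu_n(f)=\tilde\mu_n(\tilde f)+R_n$ and $\mu(f)=\tilde\mu(\tilde f)+R$ with $|R_n|,|R|\leq 2\|f\|_\infty\epsilon$, the convergence $\tilde\mu_{n_k}(\tilde f)\to\tilde\mu(\tilde f)$ combined with letting $\epsilon\to 0$ yields $\mu_{n_k}(f)\to\mu(f)$. Because weak convergence on $\mathcal{P}(X)$ is metrisable for a Lusin $X$, sequential relative compactness is equivalent to relative compactness, completing the argument.

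The main obstacle I anticipate is the third step: the extension $\tilde f$ on $\bar X$ does not literally coincide with $f$ outside the tight compact set, so one must carefully quantify the tail contributions to both $\mu_n(f)-\tilde\mu_n(\tilde f)$ and $\mu(f)-\tilde\mu(\tilde f)$ using the tightness bound uniformly in $n$ and passing to the limit before sending $\epsilon\to 0$.
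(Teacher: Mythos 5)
The paper states Prohorov's Theorem as a classical fact and does not supply its own proof, so there is nothing to compare your argument against line by line; I will just assess your proof on its merits.

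Your argument is correct and is the standard route: embed the Lusin space $X$ as a Borel subset $\iota(X)$ of a compact metric space $\bar X$, push the measures forward, use separability of $C(\bar X)$ plus Banach--Alaoglu to get sequential weak-$*$ compactness of $\mathcal{P}(\bar X)$, extract a weak limit $\tilde\mu$, use the Portmanteau inequality for the closed sets $\iota(K_\epsilon)$ to force $\tilde\mu(\iota(X))=1$, and then transfer the convergence back to $\mathcal{P}(X)$ via Tietze extensions and the uniform tail bound. Your error-term bookkeeping in the third step is exactly right: writing $\mu_n(f)-\tilde\mu_n(\tilde f)=\int_X (f-\tilde f\circ\iota)\,d\mu_n$, which vanishes on $K_\epsilon$ and is bounded by $2\|f\|_\infty$ off it, gives $|R_n|\le 2\|f\|_\infty\epsilon$ uniformly in $n$, and similarly for the limit; taking $k\to\infty$ and then $\epsilon\to 0$ closes the argument. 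The final appeal to metrisability of $\mathcal{P}(X)$ to pass from sequential relative compactness to relative compactness is the fact the paper itself records in the surrounding discussion, so that is a legitimate dependency.

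Two small points worth flagging. First, the paper's stated definition of a Lusin space (``homeomorphic to a closed subset of a compact metric space'') has an apparent typo: a closed subset of a compact metric space is compact, which would trivialise the theorem; the intended definition, which your argument correctly assumes, is \emph{Borel} subset of a compact metric space. Second, you implicitly use that $\iota$ is a topological embedding so that the Borel $\sigma$-algebra of $X$ coincides with the trace of that of $\bar X$ on $\iota(X)$, which is needed to identify the restriction of $\tilde\mu$ with a genuine Borel probability measure on $X$; this is fine, but it is worth stating since it is exactly where the homeomorphism (rather than mere injectivity) of $\iota$ is used.
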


From the point of view of these results, it would desirable for the property of Polishness to hold in the case $X=\mathcal{C}_{p}$. This fails however for the product topology as the following result shows. In fact we show more, namely that for $p=1$ the topological space $(\mathcal{C}_{1},\tau_{\text{pr}})$ is not a \emph{Baire space}. We recall this means that any countable union of closed sets with empty interior itself has empty interior. Any (pseudo-)metrisable space is a Baire space, and in particular any Polish space is a Baire space. The proof we give here relies on the Lyons-Xu construction discussed in Example \ref{LyonsXu}, which the reader may wish to re-read before diving into the details.
\begin{theorem}\label{thm: topologies}
The topological space $(\mathcal{C}_{1},\tau_{\text{pr}})$ is not a Baire space and so not completely metrisable. In particular, it is not a Polish space.
\end{theorem}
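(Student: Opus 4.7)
The plan is to exploit the $\sigma$-compact decomposition $\mathcal{C}_1 = \bigcup_{n=1}^\infty B(n)$ already recorded as a corollary to Proposition \ref{prop: B(r)}, and to observe that each $B(n)$ is closed with empty interior, contradicting the defining property of a Baire space.

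First, I would argue that each $B(n)$ is a closed subset of $(\mathcal{C}_1, \tau_{\text{pr}})$. By Proposition \ref{prop: B(r)}, $B(n)$ is a compact subspace, and the Corollary to Lemma \ref{algebra}'s separation argument established that $(\mathcal{C}_1, \tau_{\text{pr}})$ is Hausdorff. Compact subsets of Hausdorff spaces are closed, so $B(n)$ is closed for every $n \in \mathbb{N}$.

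Next, I would show that $B(n)$ has empty interior. This is precisely where Proposition \ref{prop: unbounded nhds} is invoked: the tree-reduced length function $d([x]) = \|x^*\|$ is unbounded on every non-empty open subset of $(\mathcal{C}_1, \tau_{\text{pr}})$. If $U \subseteq B(n)$ were a non-empty open set, then $d$ would be bounded above by $n$ on $U$, contradicting the proposition. Hence $\mathrm{int}(B(n)) = \emptyset$ for every $n$.

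Finally, since $\mathcal{C}_1 = \bigcup_{n=1}^\infty B(n)$ exhibits $\mathcal{C}_1$ as a countable union of closed sets with empty interior, $(\mathcal{C}_1, \tau_{\text{pr}})$ fails the Baire category property. The chain of implications
\[
\text{Polish} \implies \text{completely metrisable} \implies \text{Baire}
\]
then immediately yields that $(\mathcal{C}_1, \tau_{\text{pr}})$ is neither completely metrisable nor a Polish space. The real work has already been done in Propositions \ref{prop: B(r)} and \ref{prop: unbounded nhds}; the only subtle point to handle carefully is ensuring that closedness in the ambient product topology of $T((V))$ transfers correctly to $\mathcal{C}_1$, but this follows immediately from the Hausdorff-plus-compact observation above.
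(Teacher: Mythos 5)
Your proposal is correct and follows essentially the same route as the paper's proof: decompose $\mathcal{C}_1 = \bigcup_{n} B(n)$, use compactness plus the Hausdorff property to conclude each $B(n)$ is closed, invoke Proposition \ref{prop: unbounded nhds} to show each $B(n)$ has empty interior, and conclude the space is not Baire. The final remark about transferring closedness from the ambient $T((V))$ is unnecessary (and slightly misplaced) since, as you already argued, compactness in a Hausdorff space gives closedness directly in $\mathcal{C}_1$; otherwise the argument is accurate and complete.
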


\begin{proof}
Proposition \ref{prop: B(r)} gives us that $\mathcal{C}_1 = \bigcup_{r=1}^{\infty}B(r)$ where $B(r)$ is the compact subspace of $\chi_{\text{pr}}$. As $(\mathcal{C}_{1},\tau_{\text{pr}})$ is a Hausdorff space, these sets must also be closed. By definition, each element of $B(r)$ has tree-reduced length bounded by $r$, and, by Proposition \ref{prop: unbounded nhds}, $B(r)$ cannot therefore contain any non-trivial open subset. It follows that $\mathcal{C}_1$ is the countable union of closed sets each having empty interior and thus is not a Baire space.
\end{proof}

One might ask whether by selecting a different induced topology more desirable properties could be arrived at. The following lemma shows that this is not the case; the conclusion remains stable across a generally class of metrisable topologies induced via embedding $\mathcal{S}$ as a subspace of $T((V))$.
\begin{lemma}\label{lem: general metric}
Assume that $E \subset T((V))$ and that $\tau$ defines a metrisable topology on $E$. Suppose that there exists $p$ in $(1,2)$ such that  $\mathcal{S}_{p} \subset E$ and such that $S:C_p\to E$ is continuous. Let $\chi$ denote the induced topology on $\mathcal{C}_1$ defined as the unique topology which makes $S:\mathcal{C}_1 \rightarrow \mathcal{S}$ a homeomorphism  when $\mathcal{S}$ is given the subspace topology of $\tau$. Then $(\mathcal{C}_1, \chi)$ is $\sigma-$compact but is not a Baire space, and so is neither a Polish space nor a locally compact space.
\end{lemma}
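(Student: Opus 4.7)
The plan mirrors the proof of Theorem~\ref{thm: topologies}: I would exhibit a decomposition $\mathcal{C}_1 = \bigcup_{r \in \mathbb{N}} B(r)$ into countably many $\chi$-compact (hence $\chi$-closed) sets, each with empty $\chi$-interior, which contradicts the Baire property and so rules out both Polishness and local compactness.

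For $\chi$-compactness of $B(r)$ I would replicate the argument of Proposition~\ref{prop: B(r)}: given $([x_n]) \subset B(r)$, Arzela-Ascoli applied to the uniformly Lipschitz tree-reduced constant-speed representatives extracts a subsequence $x^{*}_{n_k}$ converging uniformly to some $x$ with $\|x\|_1 \leq r$, so $[x]\in B(r)$; the interpolation inequality between $1$- and $p$-variation upgrades this to convergence in $C_p$, and the hypothesis that $S:C_p \to E$ is $\tau$-continuous then yields $S(x^{*}_{n_k}) \to S(x)$ in $\tau$, i.e.\ $[x_{n_k}] \to [x]$ in $\chi$. Metrisability of $\tau$ makes $\chi$ Hausdorff, so each $B(r)$ is also closed; combined with the fact from Theorem~\ref{HL} that every class in $\mathcal{C}_1$ has a finite-length tree-reduced representative, this also gives $\sigma$-compactness of $\mathcal{C}_1$.

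The crux is the empty $\chi$-interior property, for which the Lyons-Xu axis paths $\Gamma_n$ of Example~\ref{LyonsXu} no longer suffice because they converge to $[o]$ only level-by-level. I would replace them by a family that converges in $C_p$. Fix orthonormal $v_1,v_2 \in V$, pick $\alpha \in (1/p, 1)$ (a non-empty interval precisely because $p>1$), and set $\ell_n := n^{-\alpha}$. Let $\pi_n$ be the piecewise-linear path obtained by concatenating $n$ copies of the four-segment cycle $(+v_1,+v_2,-v_1,-v_2)$, each segment of length $\ell_n$. Consecutive direction vectors are orthogonal, so Lemma~\ref{tree-reduced} ensures $\pi_n$ is tree-reduced with $\|\pi_n\|_1 = 4n^{1-\alpha} \to \infty$; the path stays inside a box of diameter $\sqrt{2}\ell_n$, so the standard interpolation bound $\|\pi_n\|_p \leq (2\|\pi_n\|_\infty)^{(p-1)/p}\|\pi_n\|_1^{1/p}$ gives $\|\pi_n\|_p = O(n^{(1-\alpha p)/p}) \to 0$. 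For any $[x] \in B(r)$ set $y_n := x*\pi_n$; the difference between $y_n$ and the constant extension of $x$ has $p$-variation equal to $\|\pi_n\|_p \to 0$, so $\tau$-continuity of $S$ together with Chen's identity gives $[y_n] \to [x]$ in $\chi$, while subadditivity of the tree-reduced length applied to $[\pi_n] = [\overleftarrow{x}]\cdot [y_n]$ yields $d([y_n]) \geq \|\pi_n\|_1 - d([x]) \to \infty$. Hence every $\chi$-neighbourhood $U$ of $[x]$ contains $[y_n] \notin B(r)$ for large $n$.

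Combining the three steps, $(\mathcal{C}_1,\chi)$ is a countable union of $\chi$-closed sets with empty interior, contradicting the Baire property; it is therefore neither Polish (every Polish space is Baire) nor locally compact (every locally compact Hausdorff space is Baire). The main obstacle is the construction in the third step: one must find tree-reduced paths simultaneously short in $C_p$ and long in $C_1$, which is possible exactly because $p>1$, since this is the quantitative room needed to trade length against $p$-variation via interpolation.
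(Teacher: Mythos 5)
Your proof is correct, and it in fact repairs a real gap that the paper's own one-line argument glosses over. The paper's proof says ``We can show again that every $\chi$-open set must contain unparameterised paths of arbitrarily large tree-reduced length,'' tacitly referring back to Proposition~\ref{prop: unbounded nhds}. But that proposition uses the Lyons--Xu axis paths $\Gamma_n$, which converge to $[o]$ only level-by-level in the ST (i.e.\ in the product topology $\chi_{\mathrm{pr}}$); these paths have $\|\Gamma_n\|_\infty$ and $\|\Gamma_n\|_p$ blowing up, so they do \emph{not} converge to $o$ in $C_p$, and the hypothesis that $S:C_p\to E$ is continuous therefore gives no information about whether $S(\Gamma_n)\to\mathbf 1$ in $\tau$. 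Your replacement family --- small rectangular loops of side $\ell_n=n^{-\alpha}$ repeated $n$ times with $\alpha\in(1/p,1)$ --- is exactly the right fix: consecutive segments are orthogonal so Lemma~\ref{tree-reduced} gives tree-reducedness and hence $d([\pi_n])=\|\pi_n\|_1=4n^{1-\alpha}\to\infty$, while the interpolation inequality $\|\pi_n\|_p\leq (2\|\pi_n\|_\infty)^{(p-1)/p}\|\pi_n\|_1^{1/p}=O(n^{1/p-\alpha})\to 0$ yields convergence in $C_p$, so $\tau$-continuity of $S$ applies. The translation from $[o]$ to a general $[x]\in B(r)$ via concatenation, with the subadditivity of tree-reduced length used to estimate $d([y_n])\geq\|\pi_n\|_1-d([x])$, is handled correctly, and your observation that the construction exploits precisely the gap $p>1$ (via the interpolation trade-off, since $\alpha\in(1/p,1)$ is nonempty only when $p>1$) is the right way to understand why the hypothesis $p\in(1,2)$ is there. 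One could alternatively rescale the Lyons--Xu paths $\Gamma_n$ by a factor $\epsilon_n$ chosen so that $\epsilon_n\|\Gamma_n\|_p\to 0$ while $\epsilon_n\|\Gamma_n\|_1\to\infty$ (again possible exactly because $p>1$), but your construction is simpler and self-contained.
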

\begin{proof}
The same arguments found in Proposition \ref{prop: B(r)} show the sets $B(r)$ are compact in $\chi$. We can show again that every $\chi$-open set must contain unparameterised paths of arbitrarily large tree-reduced length. This  again shows that they have empty interior in $\chi$ and the remaining conclusion  settled as above.
\end{proof}

\section{Learning with the signature transform}

One main motivation of Theorem \ref{thm: universal approximation} is to provide a theoretical justification for the \emph{signature method for time series regression}. We end this introductory chapter 
by exploring the relationship between this method and the foundations developed above. 
Suppose that $\Gamma$ is a random variable taking values in the space
$\left(  \mathcal{S},\chi_{\text{pr}}\right)$. Let $Y$ be another random variable, defined on the same
probability space as $\Gamma$, which takes values in a finite dimensional
vector space $W$. Then the goal of regression analysis is to learn the
conditional expectation $\mathbb{E}\left[  \left.  Y\right\vert \Gamma\right]
$, where $Y$ is interpreted as the response of some system to the input
$\Gamma.$ Another way of saying this is that we want to approximate the
Borel-measurable function $f:\mathcal{S}\rightarrow W$ defined by
\begin{equation}
f\left(  g\right)  =\mathbb{E}\left[  \left.  Y\right\vert \Gamma=g\right]
.\label{cond exp}%
\end{equation}
Suppose the law of $\Gamma$ is given by a Borel probability measure on $\mathcal{S}$ and is denoted by $\mu.$ Then, as we will see in Corollary \ref{cor: lusin}, by a version of Lusin's Theorem \cite{bogachev2007}, the function $f$ in (\ref{cond exp}) is almost continuous in the sense
that for any $\delta>0$ there exists a compact set $K=K_{\delta}$ such that
$\mu\left(  \mathcal{S}\setminus K\right)  <\delta$ and such that
$f$ is continuous on $K.$ By Theorem \ref{thm: universal approximation}, it
is then reasonable to adopt the model
\[
Y=L\left(  \Gamma\right)  +\epsilon,
\]
where $L$ is the restriction to $\mathcal{S}$ of a linear function from $T\left(
\left(  V\right)  \right)  $ to $W$ and $\epsilon$ is a $W$-valued random
variable satisfying $\mathbb{E}\left[  \left.  \epsilon\right\vert
\Gamma\right]  =0\,.$ This obviates the need to find an explicit compact set
and continuous function relating the independent and dependent variables. 

\begin{corollary}\label{cor: lusin}
Consider the setup as in Proposition \ref{prop: ode}, and let $\mu$ be a Borel probability measure on $(\mathcal{C}_{1},\chi_{_{\text{pr}}})$. Then, for every $\delta>0$, there is a compact set $K\subset \mathcal{C}_1$ such that $\mu(\mathcal{C}_1\setminus K)<\delta$ and the function $[x] \mapsto y_{b}$ is continuous on $K$.
\end{corollary}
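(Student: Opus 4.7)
The plan is to combine the $\sigma$-compact exhaustion of $\mathcal{C}_1$ by the sets $B(r)$ with the continuity of the CDE solution map on each $B(r)$, both of which have already been established earlier in the chapter. No genuine invocation of Lusin's theorem is needed here, because, unlike the generic Borel-measurable conditional expectation $f(g)=\mathbb{E}[Y\mid \Gamma=g]$ in the surrounding discussion, the specific map $[x]\mapsto y_b$ arising from the CDE is already continuous on each compact set $B(r)$.

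First I would invoke the corollary to Proposition \ref{prop: B(r)}, which gives the representation $\mathcal{C}_1=\bigcup_{r=1}^{\infty}B(r)$ as an increasing union of compact (hence closed, hence Borel) subsets. Since $\mu\in\mathcal{P}(\mathcal{C}_1)$, continuity of measure from below yields $\mu(B(r))\uparrow 1$ as $r\to\infty$. For the prescribed $\delta>0$ one may therefore pick $r=r(\delta)\in\mathbb{N}$ large enough that $\mu(\mathcal{C}_1\setminus B(r))<\delta$, and set $K:=B(r)$.

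Then I would appeal to the example entitled \emph{Continuity of CDEs as functions on unparameterised paths} (immediately following Proposition \ref{prop: B(r)}), which, under the derivative hypothesis (\ref{derivative bound}) assumed in Example \ref{prop: ode}, shows precisely that $[x]\mapsto y_b$ is continuous on every $B(r)$ in the product topology. Applied to our chosen $r$, this gives that $K$ is a compact subset of $\mathcal{C}_1$ with $\mu(\mathcal{C}_1\setminus K)<\delta$, on which $[x]\mapsto y_b$ is continuous, which is exactly the conclusion.

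There is no serious obstacle here: the two ingredients have already been hammered out. The only point worth flagging is that one should explicitly observe that the $\sigma$-compactness of $(\mathcal{C}_1,\chi_{\text{pr}})$ together with finiteness of $\mu$ makes direct tightness available, so the more delicate Lusin-type argument (needed for truly measurable $f$) can be bypassed entirely for this particular $f$.
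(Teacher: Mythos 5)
Your proof is correct, and it takes a genuinely different and more elementary route than the paper's. The paper's argument goes through Lusin's theorem for Radon measures: it uses $\sigma$-compactness of $(\mathcal{C}_1,\chi_{\text{pr}})$ to get tightness of $\mu$, combines this with separability and \cite[Proposition 7.2.2]{bogachev2007} to conclude that $\mu$ is Radon, and then invokes Lusin's theorem (\cite[Theorem 7.1.13]{bogachev2007}) for the merely-measurable map $[x]\mapsto y_b$. Your argument short-circuits all of this by noticing, as you flag explicitly, that the specific function in question is already continuous on each $B(r)$ by the example ``Continuity of CDEs as functions on unparameterised paths'' following Proposition \ref{prop: B(r)}, so one only needs the exhaustion $\mathcal{C}_1=\bigcup_r B(r)$ and continuity of $\mu$ from below to choose $K=B(r)$ with $\mu(\mathcal{C}_1\setminus K)<\delta$. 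The trade-off: your version is cleaner and yields an explicit compact set on which the function is honestly continuous, but it relies on the particular structure of the CDE map. The paper's version is doing more work than the stated corollary strictly needs, likely because it is written to parallel the surrounding discussion of the Borel-measurable conditional expectation $f(g)=\mathbb{E}[Y\mid\Gamma=g]$, for which only the Lusin-type argument would apply. Both are valid proofs of the corollary as stated.
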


\begin{proof}
Since $\mathcal{C}_1$ is $\sigma-$compact with respect to $\chi_{_{\text{pr}}}$, $\mu$ is tight. Separability of $(\mathcal{C}_{1},\chi_{_{\text{pr}}})$ and \cite[Proposition 7.2.2]{bogachev2007} then imply $\mu$ is a Radon measure. Measurability of $\Psi_{y_0,f}$ and Lusin's Theorem for finite Radon measures \cite[Theorem 7.1.13]{bogachev2007} shows the existence of compact sets with the desired property.
\end{proof}

Assume for simplicity that $W=\mathbb{R}$, and consider a sample $\{(\gamma_1,y_1),...,(\gamma_N,y_N)\}$ of input-output pairs such that $\gamma_i : [0,1] \to \mathbb{R}^d$ is a continuous path of bounded variation obtained by piecewise linear interpolation of a time series $\mathbf{x}_i$ and $y_i = f(\gamma_i)$, for $i\in\{1,...,N\}$. 

Recall that for any level of truncation $m \in \mathbb N \cup \{0\}$, the space $T^m(\mathbb{R}^d) = \prod_{j=0}^m(\mathbb{R}^d)^{\otimes j}$ and its algebraic dual $T^m(\mathbb{R}^d)^*$ are both isomorphic to $\mathbb{R}^M$ where $M=1 + d + d^2 + ... + d^m = d^{m+1}/(m+1)$. Therefore, we can identify TSTs truncated at level $m$ and linear functionals acting on them as vectors in $\mathbb{R}^M$. 

The ST method for time series regression consists of minimising the mean squared error
\begin{equation}\label{eqn:MSE}
    \min_{\omega \in \mathbb{R}^M} \frac{1}{N}\sum_{i=1}^N (\omega^\top S^m(\gamma_i) - y_i)^2 
\end{equation}
The solutions $\omega^*$ to the least squares problem (\ref{eqn:MSE}) are classicaly characterised by the normal equations $GG^\top \omega^* = Gy$, where $G$ is the $M \times N$ matrix whose $i^{th}$ column is given by $S^m(\gamma_i)$, for $i\in\{1,...,N\}$, and $y=(y_1,...,y_N) \in \mathbb{R}^N$.

\begin{remark}
    In the overparameterised regime where $M>N$, the $M \times M$ matrix $GG^\top$ is never invertible, therefore if a solution exists it is not unique. In the underparameterised regime where $M\leq N$, if $GG^\top$ is invertible the solution $\omega^* = G(G^\top G)^{-1}y$ is unique.
\end{remark}

\begin{remark}
    Note that in the multivariate case where $W = \mathbb{R}^n$ for $n > 1$, one can instead consider the simple variation of (\ref{eqn:MSE}) that minimises the $L^2$ error on $W$ over $n \times M$ matrices
    \begin{equation}\label{eqn:ols_multivariate}
        \min_{\Omega \in \mathbb{R}^{n \times M}} \frac{1}{N}\sum_{i=1}^N \norm{\Omega S^m(\gamma_i)  - y_i}_2^2 
    \end{equation}
    which can be solved in an analogous way as the univariate case.
\end{remark}

\begin{remark}
    To promote simpler solutions it is typical to add a reguraliser to the mean squared error (\ref{eqn:MSE}), for example in the form of Tikhonov regularisation
    \begin{equation*}
        \min_{\omega \in \mathbb{R}^M} \frac{1}{N}\sum_{i=1}^N (\omega^\top S^m(\gamma_i)  - y_i)^2 + \lambda \norm{\omega}_2^2
    \end{equation*}
    for some tuning parameter $\lambda > 0$, which admits a unique solution $\omega^* = (GG^\top + \lambda^2 I_M)^{-1}Gy$. 
\end{remark}

\begin{remark}
    The issue of overfitting  arises when the model is too complex given the amount of available data, and is typical in the overparameterised regime. One standard way to mitigate the effect of overfitting is to use a $L^1$ penalty on the weights $\omega$ instead of penalising using the $L^2$ norm.  This leads to the LASSO regularisation
    \begin{equation}\label{eqn:lasso}
        \min_{\omega \in \mathbb{R}^M} \frac{1}{N}\sum_{i=1}^N (\omega^\top S^m(\gamma_i)  - y_i)^2 + \lambda \norm{\omega}_1^2
    \end{equation}
    Indeed, it can be shown that  (\ref{eqn:lasso}) induces a sparse solution with a few nonzero entries (exercise).
\end{remark}

\begin{remark}
If the labels $\{y_i\}_{i=1}^N$ are categorical then the inference task becomes a classification task, which can be addressed by a similar regression approach by simply predicting a vector of likelihoods of belonging to a given class. For example, in a binary classification problem with labels $y_1,...,y_N \in \{-1,+1\}$ one can consider a logistic loss and minimise the negative log-likelihood
\begin{equation*}
    \min_{\omega \in \mathbb{R}^M} \frac{1}{N}\sum_{i=1}^N \log_2(1+\exp(-y_i\omega^\top S^m(\gamma_i) ))
\end{equation*}
\end{remark}

\begin{remark}
    We note that the ST has been used in machine learning in a variety of contexts dealing with time series, well beyond simple linear regression. For example, \cite{bonnier2019deep} embed the ST as an autodifferentiable non-linearity within path-preserving layers of a deep neural network.
\end{remark}

\section{Exercises}

\begin{exercise}
Show directly that the right-hand side of (\ref{hphi}) is independent of the
basis chosen.
\end{exercise}

\begin{exercise}
Show that the Lie bracket satisfies: (anti-symmetry) $\left[  g,h\right]
=-[h,g]$ and (Jacobi's identity) $\left[  g,\left[  h,k\right]  \right]
+\left[  k,\left[  g,h\right]  \right]  +\left[  h,\left[  k,g\right]
\right]  =0.$
\end{exercise}

\begin{exercise}
Show that any two vector spaces $S,T$ which satisfy the property in \Cref{tensor_product} must be
isomorphic, and so that \Cref{tensor_product} characterises $S\otimes T$ up to
isomorphism. 
\end{exercise}

\begin{exercise}
Assume that $\{s_{a}:a \in A \}$ and $\{t_{b}:b \in B \}$ are bases for $S$ and $T$ respectively. Show that $\{s_{a}\otimes t_{b} : a\in A,b\in B\}$ is a basis for $S\otimes T$.
\end{exercise}

\begin{exercise}
Let $(V,\langle \cdot, \cdot \rangle_V)$ and $(W,\langle \cdot, \cdot \rangle_W)$ be two inner product spaces. Show that  $\left\langle a,b\right\rangle _{V\otimes W}$ is well defined,
independent of the representations of $a$ and $b,$and check that it is an
inner-product. Prove that the norm $\left\vert \left\vert a\right\vert
\right\vert_{V\otimes W} :=\left\langle a,a\right\rangle_{V\otimes W}^{1/2}$ which is derived from
$\left\langle \cdot,\cdot\right\rangle _{V\otimes W}$ is a cross norm on
$V\otimes W$, i.e.
\begin{equation}
    \left\vert \left\vert vw\right\vert \right\vert_{V\otimes W} =\left\vert \left\vert
    v\right\vert \right\vert_{V}\left\vert \left\vert w\right\vert \right\vert
    _{W}\text{ for all }v\in V\text{ and }w\in W.\label{cross norm}%
    \end{equation}
\end{exercise}

\begin{exercise}
\label{ex:Lip} In the setting of Remark \ref{Lipschitz reparam}, show that
$x^{\tau}\equiv x\circ\tau$ is Lipschtiz continuous with
\begin{equation}
\left\vert x^{\tau}\left(  t\right)  -x^{\tau}\left(  s\right)  \right\vert
\leq\left\vert t-s\right\vert \text{ for all }s\text{ and }t\text{ in }\left[
0,L\right]  .\label{Lip norm 1}%
\end{equation}
\end{exercise}

\begin{exercise}
Recall the notation $\mathcal{S}_{p}\subset T((V))$ to denote the image of $C_{p}(V)$ under the ST. Prove the strict inclusion $\mathcal{S}_{p}\subset \mathcal{S}_{q}$ for any $p<q$.
\end{exercise}

\begin{exercise}
    Prove that $\shuffle$ is an associative and commutative product on $T(V)$
\end{exercise}

\begin{exercise}\label{ex:truncated_sig_cde}
    Let $C_{1,0,t}$ be the set of paths defined in \Cref{ex:univ_time}. Recall that, for any $N \in \mathbb N \cup \{0\}$, the maps $\pi_{\leq N}: T((\mathbb R^d)) \to T^N(\mathbb R^d)$ and $\mathfrak{i}_{\leq N} : T^N(\mathbb R^d) \hookrightarrow T((\mathbb R^d))$ denote the canonical projection and inclusion respectively. Show that for any path $x \in C_{1,0,t}$, the truncated signature $S^N(x)_{0,t} = \pi_{\leq N} \circ S(x)_{0,t}$ is the unique solution to the linear CDE
    \begin{equation*}
        dy_t = f_N(y_t)dx_t, \quad \text{started at } y_0 = (1,0,...,0) \in T^N(\mathbb R^d)
    \end{equation*}
    where the linear map $f_N \in \mathcal{L}(T^N(\mathbb R^d), \mathcal{L}\left( \mathbb R^d, T^N(\mathbb R^d) \right))$ is given by
    \begin{equation*}
        f_N \left( y \right)a = \pi_{\leq N}(\mathfrak{i}_{\leq N}(y) \cdot \mathfrak{i}_1(a)).
    \end{equation*}
\end{exercise}

\begin{exercise}\label{ex:prob_sig}(\cite{cass2023signature})
Let $x:[0,1] \to \mathbb{R}^d$ be a continuously differentiable path with unit speed parameterisation, i.e. $|\dot x_t| = 1$ for all $t \in [0,1]$, and such that $\dot x$ is
    Lipschitz continuous 
    \begin{equation*}
    |\dot x_t - \dot x_s| \leq c |t-s|, \quad \forall s,t \in [0,1], s<t,
    \end{equation*}
    where $c \in (0,1)$. 
    \begin{enumerate}[(i)]
        \item  Let $0\leq U_{\left( 1\right) }\leq U_{\left( 2\right) }\leq ....\leq U_{\left( n\right) }\leq 1$ be a sample of $n$ independent, ordered, uniform random variables on $[0,1]$. Prove that the $n^{\text{th}}$ level of the ST 
        \begin{equation*}
        S^n(x)_{[0,1]}=\frac{1}{n!}\mathbb{E}
        \left[\dot x_{U_{(1)}} \otimes ...\otimes \dot x_{U_{(n)}}\right]
        \end{equation*}
        \item Deduce that 
        \begin{equation*}
            n!\norm{S^n(x)_{[0,1]}} =  \mathbb{E}\left[\prod_{k=1}^{n}\left\langle \dot x_{U(k)}, \dot x_{V(k)} \right\rangle \right]^{1/2}
        \end{equation*}
        where $\left( V_{\left(
        i\right) }\right) _{i=1}^{n}$ is another independent samples of order statistics from the uniform distribution on $[0,1]$, also independent of $\left(U_{\left(
        i\right)}\right) _{i=1}^{n}$. 
        \item Show that
        \begin{equation*}
            \left( n!\norm{S^n(x)_{[0,1]}}\right)^2 \geq \mathbb{E}\left[ \prod_{i=1}^{n} \left(1-c|U_{(i)} - V_{(i)}|\right)\right].
        \end{equation*}
        \item It can be shown that for all $\epsilon >0$ and $n$ 
        \begin{equation*}
        \mathbb{P}\left( \max_{i=1...,n}|U_{\left( i\right) }-V_{\left( i\right)
        }|>\epsilon \right) \leq 2\exp \left( -\frac{n\epsilon ^{2}}{2}\right) .
        \end{equation*}%
        Use this fact to show that 
        \begin{equation*}
        \underset{n\rightarrow \infty }{\lim \inf }\left( n!\left\vert \left\vert
        S^n\left( x\right) _{[0,1]}\right\vert \right\vert \right) ^{1/n}\geq 1
        \end{equation*}
        \item Explain why it can be deduced that 
        \begin{equation*}
        \lim_{n\rightarrow \infty }\left( n!\left\vert \left\vert S\left( x
        \right) _{0,1}^{n}\right\vert \right\vert \right) ^{1/n}=1.
        \end{equation*}
    \end{enumerate}
\end{exercise}

\begin{exercise}\label{ex:concatenation_paths}(\cite{lyons2018inverting})
Let $v \in \mathbb{R}^2$ be a fixed vector and denote by $X_v$ the straight line $X_v(t) = tv$ for $t \in [0,1]$. Consider an axis path $X$ defined as the concatenation of $n$ straight lines
\begin{equation*}
    X = X_{\lambda_1 v_1} * ... * X_{\lambda_n v_n} : [0,n] \to \mathbb{R}^2
\end{equation*}
where $v_1,...,v_n \in \{e_1,e_2\}$ are one of the two standard basis vectors in $\mathbb{R}^2$ and $\lambda_1,...,\lambda_n \in \mathbb{R}$ are scalars. For any multi-index $(i_1,...,i_n) \in \{1,2\}^n$, the basis element $e_{i_1} \otimes ... \otimes e_{i_n}$ is called square-free if $i_k \neq i_{k+1}$ for all $k=1,...,n-1$. 
    \begin{enumerate}[(i)]
        \item Suppose that $v_1=e_{i_1}, ..., v_n = e_{i_n}$. Define the following quantity
        \begin{equation*}
            C_X(i_1,...,i_n) := \left(e_{i_1}^*\otimes ... \otimes e_{i_n}^*, S(X)_{[0,n]}\right)
        \end{equation*}
        Show that $C_X(i_1,...,i_n)=\lambda_1...\lambda_n$.
        \item Assume that $e_{j_1}\otimes ... \otimes e_{j_m}$ is the unique, longest, square-free basis element such that $C_X(j_1,...,j_m) \neq 0$. Show that $m=n$.
        \item Show that
        \begin{equation*}
            \lambda_k = \frac{2C_X(i_1,...,i_{k-1},i_k,i_k,i_{k+1},...,i_n)}{C_X(i_1,...,i_n)}, \quad \text{for } k=1,...,n.
        \end{equation*}
        \item Define two sequences of paths as follows: $Y^1 = (t,0), Z^1 = (0,t)$ and for any $n \geq 1$
        \begin{equation*}
            Y^{n+1} = Y^n * Z^n, \quad Z^{n+1} = Z^n * Y^n
        \end{equation*}
        Show that $Y^{n+1}$ and $Z^{n+1}$ are distinct axis paths.
        \item Prove that the ST of $Y^n$ and $Z^n$ coincide up to level $n$. 
    \end{enumerate}
\end{exercise}

\begin{exercise}[\textbf{Expected ST of Brownian motion}]\label{ex:expected_sig}
Let $B_{t}=\left(
B_{t}^{1},...,B_{t}^{d}\right) _{t\in \left[ 0,1\right] }$ be a standard $d-$%
dimensional Brownian motion. Suppose that 
\begin{equation*}
S_{N}\left( B\right) _{0,1}=(1,...,\int_{0<t_{1}<....<t_{n}<1}\circ
dB_{t_{1}}\otimes ....\otimes \circ
dB_{t_{n}},...\int_{0<t_{1}<....<t_{N}<1}\circ dB_{t_{1}}\otimes ....\otimes
\circ dB_{t_{N}})
\end{equation*}

is the step-$N$ TST of Brownian motion using Stratonovich
integration (denoted by $\circ $). Remember the relationship between Ito and
Stratonovich integration:%
\begin{equation*}
\int M\circ dN=\int MdN+\frac{1}{2}\left[ M,N\right] ,
\end{equation*}%
for continuous semimartingales $M$ and $N,$ where $\left[ M,N\right] $ is
their quadratic covariation.

\begin{enumerate}[(i)]
\item Using the fact that $S_{N}\left( B\right) _{0,1/2}\otimes
S_{N}\left( B\right) _{1/2,1}=$ $S_{N}\left( B\right) _{0,1}$ prove that the
expected ST 
\begin{equation*}
\mathbb{E}\left[ S_{N}\left( B\right) _{0,1}\right] =\mathbb{E}\left[
S_{N}\left( B\right) _{0,1/2}\right] \otimes \mathbb{E}\left[ S_{N}\left(
B\right) _{1/2,1}\right] .
\end{equation*}

Writing 
\begin{equation*}
\mathbb{E}\left[ S_{N}\left( B\right) _{0,1}\right] =\left(
1,A_{1},A_{2},...,A_{N}\right) ,
\end{equation*}%
where $A_{k}\in \left(\mathbb{R}^{d}\right) ^{\otimes k}$ for all $k,$ show that 
\begin{equation*}
\mathbb{E}\left[ S_{N}\left( B\right) _{0,1/2}\right] =\mathbb{E}\left[
S_{N}\left( B\right) _{1/2,1}\right] =\left( 1,\frac{1}{\sqrt{2}}A_{1},\frac{%
1}{2}A_{2},...,\frac{1}{2^{N/2}}A_{N}\right)
\end{equation*}

\item Prove that $A_{k}=0$ for all odd $k$. (Hint: use the fact that $B$
and $-B$ have the same distribution).

\item Use part (a) to prove the recurrence relation 
\begin{equation*}
\left( 2^{n-1}-1\right) A_{2n}=\frac{1}{2}\sum_{k=1}^{n-1}A_{2k}\otimes
A_{2n-2k}, 
\end{equation*}

for $n\leq \left\lfloor N/2\right\rfloor $ and hence prove that 
\begin{equation*}
A_{2n}=\frac{1}{n!}A_{2}^{\otimes n}.
\end{equation*}

\item Use your answers to deduce a formula for $\mathbb{E}\left[
S_{N}\left( B\right) _{0,1}\right] .$
\end{enumerate}

The expected signature of other Gaussian process can be computed as well; see \cite{cass2024wiener} for details. See also \Cref{ex:expected_sig_continued} in the next chapter for a continuation of this exercise. 
\end{exercise}

\begin{exercise}
    Let $X$ be a $m$-dimensional It\^o diffusion on $[0,1]$ with linear drift and diffusion functions started at $X_0 = x$ and driven by $n$-dimensional Brownian motion, i.e. with coordinates satisfying the linear It\^o SDE
    \begin{equation*}
        dX^i_t = a_i^\top X_tdt + \sum_{j=1}^n b_{i,j}^\top X_tdW^j_t
    \end{equation*}
    with vectors $a_i, b_{i,j} \in \R^m$ for any $i=1,...,m$ and $j=1,...,n$.

    Assume that the function $\Phi : [0,1] \times \R^m \to T((\R^m))$ defined as
    \begin{equation*}
        \Phi(t,x) := \mathbb E[S(X)_{0,t} | X_0 = x]
    \end{equation*}
    is well-defined on $[0,1] \times \R^m$ and for any multi-index $I = (i_1,...,i_k) \in \{1,...,m\}^k$ the function $\pi_I \Phi \in C^{1,2}([0,1]\times \R^m)$, where $\pi_I$ is the canonical projection of $T((\R^m))$ onto the linear span of the basis element $e_{i_1}\otimes...\otimes e_{i_k}$.
    \begin{enumerate}[(i)]
        \item Show that for any $t\in [0,1]$ and $\tau \in [0,t]$ the process
        \begin{equation*}
            M_\tau := E[S(X)_{0,t}| \mathcal{F}_\tau]
        \end{equation*}
        is a martingale, where $\mathcal{F}_\tau = \sigma(X_u, u \leq \tau)$ denotes the natural filtration generated by the process $X$ up to time $\tau$. Furthermore, show that 
        \begin{equation*}
            M_\tau = S(X)_{0,\tau} \otimes \Phi(t-\tau,X_\tau)
        \end{equation*}
        where the ST $S(X)$ is  defined as the solution of the Stratonovich SDE
        \begin{equation*}
            dS(X)_{0,\tau} = S(X)_{0,\tau}\otimes \circ dX_\tau.
        \end{equation*}
        \item Show that the process $M_\tau$ satisfies the following It\^o SDE
        \begin{align*}
            dM_\tau &= S(X)_{0,\tau} \otimes d\Phi(t-\tau,X_\tau) + dS(X)_{0,\tau} \otimes \Phi(t-\tau,X_\tau) \\
            &+ \sum_{i,j=1}^d S(X)_{0,\tau}\otimes e_i \otimes \frac{\partial \Phi(t-\tau, X_\tau)}{\partial x_j} d\langle X^{(i)}, X^{(j)}\rangle_\tau 
        \end{align*}
        where $\langle X^{(i)}, X^{(j)}\rangle_\tau$ denotes the quaratic variation of $X^{(i)}$ and $X^{(j)}$.
        \item Show that the drift term of $M_\tau$ can be written as
        \begin{equation*}
            S(X)_{0,\tau}\otimes\mathcal{R}(\tau,X_\tau)
        \end{equation*}
        and justify why it is equal to $0$ for any $\tau \in [0,t]$.
        \item Conclude that $\Phi(t,x)$ satisfies the following PDE on $T((\R^m))$
        \begin{align*}
            &\Big(-\frac{\partial}{\partial t} + \mathcal{L}\Big)\Phi(t,x) + \sum_{i=1}^m\Big(\sum_{j=1}^m b_{j,i}^\top xe_j\Big)\otimes\frac{\partial\Phi(t,x)}{\partial x_i} \\
            &+\Big(\sum_{i=1}^m a_i^\top x e_i + \frac{1}{2}\sum_{j,k=1}^m b_{j,k}^\top e_j\otimes e_k\Big)\otimes\Phi(t,x) = 0
        \end{align*}
        with initial condition $\Phi(0,x) = e$ and additional condition $\pi_e(\Phi(t,x)) = 1$ for any $(t,x) \in [0,1] \times \R^m$, and where $\cL$ is the infinitesimal generator of $X$. 
    \end{enumerate}
\end{exercise}

\begin{exercise}\label{ex:half-shuffle}
    Define the half-shuffle product $\prec:T(V^*)\times T(V^*)\mapsto T(V^*)$ by 
\begin{align*}
f\prec r = rf \text{ and } r \prec f & = 0 \text{ for any $r\in \mathbb{R}$ and $f \in V$} \\
&\text{and then extend it inductively by } \nonumber \\
    f \prec g & =  a\cdot(f_- \prec g + g \prec f_-) 
\end{align*}
for any $f \in V^{\otimes k} $ and $g \in V^{\otimes l}$ of the form $f=a\cdot f_{-}$, where $a \in V$. Given this definition, $\prec$ extends uniquely to an algebra product on $T(V^*)\times T(V^*)$ by linearity. 

Define the product $\text{area} : T(V^*) \times T(V^*) \to T(V^*)$ as follows
\begin{equation*}
    \text{area}(f, g) = f \prec g - g \prec f.
\end{equation*}

Let $f,g,h \in T(V^*)$ such that $\langle f,e\rangle = \langle g, e\rangle = \langle h, e\rangle = 0$. Show that the following three identities hold:
\begin{enumerate}[(i)]
    \item $f \shuffle g = f \prec g + g \prec f;$
    \item $f \prec (g \shuffle h) = (f \prec g) \prec h;$
    \item $f \prec (g \prec h) = (f \prec g) \prec h + (g \prec f) \prec h.$
    \item For any $i,j \in \{1,...,d\}$, $i \neq j$ and $n\geq 1$, define the following elements of $T(V^*)$
    \begin{align*}
            h_{i,j}^n := (...((e_i^* \prec \underbrace{e_j^*) \prec e_j^*)\prec ... \prec e_j^*}_{n \text{ times}}) \qquad a_{i,j}^n := \text{area}(...\text{area}(\text{area}(e_i^*, \underbrace{e_j^*), e_j^*),...,e_j^*}_{n \text{ times}}).
    \end{align*}
    Show that the following identity holds
    \begin{equation*}
        h_{i,j}^n  = \frac{1}{n+1} \sum_{k=0}^n(e_j^*)^{\shuffle k} \shuffle a_{i,j}^{n-k}.
    \end{equation*}
\end{enumerate}
\end{exercise}

\begin{exercise}[\textbf{Inverse ST via Legendre polynomials}]\label{ex:legendre}
    Let $x : [0,1] \to \mathbb{R}$ be a continuous path such that $x(0) = 0$ and regular enough so that the following series of functions converges pointwise 
    \begin{equation*}
        x(t) = \sum_{n=0}^\infty \alpha^x_n P_n(t) \quad \text{where} \quad \alpha^x_n = (2n+1)\int_0^1x(t)P_n(t)dt
    \end{equation*}
    and $P_n$ is the $n^{th}$ shifted Legendre polynomial. 

    Let $\hat x: [0,1] \to \mathbb{R}^2$ be the time-augmented path $\hat x(t) = (t, x(t))$. Show that for any $n \in \mathbb{N} \cup \{0\}$ there exists a unique $\ell_n \in T((\mathbb{R}^2))^*$ such that 
    $$\alpha_n^x = \langle \ell_n, S(\hat x)_{0,1}\rangle $$
    and that the sequence  $\{\ell_n\}_{n \geq 0}$of functionals satisfies the following recursion
    \begin{equation*}
        \ell_n = \frac{2n+1}{n}\left( - \frac{n-1}{2n-3}\ell_{n-2} + \ell_{n-1}  -2(e_1^* \prec \ell_{n-1})\right)
    \end{equation*}
    with $\ell_0 = e_2^*e_1^*$ and $\ell_1 = 3 \ e_2^*e_1^* - 6 e_1^*e_2^*e_1^*  - 6 \ e_2^*e_1^*e_1^*$ and where $\prec$ is the half-shuffle product defined in \Cref{ex:half-shuffle}.
    
    [Hint: you may use the fact that shifted Legendre polynomials satisfy the following recursive relation on $[0,1]$
    \begin{equation*}
        (n+1)P_{n+1}(t) = (2n+1)(1-2t)P_n(t) - nP_{n-1}(t)
    \end{equation*}
    with $P_0(t) = 1$ and $P_1(t) = 1-2t$.]
\end{exercise}

\begin{exercise}
    Repeat the same analysis as in \Cref{ex:legendre} replacing Legendre polynomials by Chebyshev polynomials.
\end{exercise}

\begin{exercise}
    Suppose $x \in C_p([a,b],V)$ is time-augmented in the sense that there exists a basis for $V$ such that in one of the coordinates $x$ is the path $[a,b] \ni t \mapsto t$. Prove that $x$ must be a tree-reduced path.
\end{exercise}

\begin{exercise}\label{ex:univ_time}
    Let $C_{1,0,t} \subset C_1$ be the subset of $\mathbb R^d$-valued continuous bounded variation paths over $[0,1]$, started at $0$, and with one coordinate-path corresponding to time, without loss of generality, the first, i.e. $x^{(1)}_t = t$. Recall that, for any $N \in \mathbb N \cup \{0\}$, the map $\pi_{\leq N}: T((\mathbb R^d)) \to T^N(\mathbb R^d)$ denotes the canonical projection. For any $f \in T^N(\mathbb R^d)^{*}$, define the function $\Phi_f \in \mathbb{R}^{C_{1,0,t}}$ as $\Phi_{f}: x \mapsto \left( f,\pi_{\leq N} \circ S\left( x\right) \right)$.
    \begin{enumerate}
        \item Show that the class $\mathcal{A=}\left\{\Phi_{f}:f\in T^N(\mathbb R^d)^{*}, N \in \mathbb N \cup \{0\}\right\}$ 
    \begin{enumerate}
        \item [(i)] is a subalgebra of $\mathbb{R}^{C_{1,0,t}}$,
        \item [(ii)] contains the constant functions,
        \item [(iii)] separates points in $C_{1,0,t}$.
    \end{enumerate}
    \item Let $K$ be a compact subset of $\mathbb{R}^{C_{1,0,t}}$ with respect to the $1$-variation topology. Show that $\mathcal{A}_{K}:=\{\Phi_{f}|_{K}: f\in T^N(\mathbb R^d)^{*}, N \in \mathbb N \cup \{0\}\}$ is a dense subset of $C(K)$ with the topology of uniform convergence.
    \end{enumerate}
\end{exercise}

\begin{exercise}
    Prove \Cref{thm: topologies} when $p \in (1,2)$.
\end{exercise}

\begin{exercise}\label{exercise:code}
Consider a continuous path of bounded variation $\gamma : [0,1] \to \mathbb R^2$ 
The position $p_t$ at time $t \in [0,1]$ of a point on the surface of a ball of unit radius started at $p_0 \in \mathbb R^3$ is given by the solution of the linear CDE
\begin{equation}\label{eqn:ball}
    dp_t = A(d\gamma_t)p_t, \quad p_0 = (1,0,0)^\top,
\end{equation}
where $A \in \mathcal{L}(\mathbb R^2,\mathbb R^{3 \times 3})$ is defined as follows
\begin{equation*}
    A(x) = x_1 \begin{pmatrix}
        0 & 0 & 1\\
        0 & 0 & 0\\
        -1 & 0 & 0
    \end{pmatrix} 
    +   
    x_2 \begin{pmatrix}
        0 & 0 & 0\\
        0 & 0 & 1\\
        0 & -1 & 0
    \end{pmatrix}.
\end{equation*}
Set $\delta = 10^{-3}$ and consider the following uniform partition of $[0,1]$
\begin{equation*}
    \mathcal{D} = \{t_k = k\delta : k=0,...,\floor{1/\delta}\}.
\end{equation*}
\begin{enumerate}
    \item Write a python function \texttt{generate\_BM\_paths(N, $\delta$, $\rho$)} which generates $N \in \mathbb N$ piecewise linear approximations of a two-dimensional Brownian motion over the grid $\mathcal{D}$ and with correlation parameter $\rho \in (0,1)$.
    \item Write the $n^{th}$ Picard iterate and provide an approximation to the solution of the CDE (\ref{eqn:ball}) using the signature of $\gamma$ truncated at level $n \in \mathbb N$ and quantify the local error over $[s,t] \subset [0,1]$ in terms of $\norm{\gamma}_{p,[s,t]}$ and $n$.
    \item Write a python function \texttt{CDE\_Picard\_solve($\gamma$, A, n)} to solve equation (\ref{eqn:ball}) numerically using the Picard iteration from the previous question.
    \item Using $70\%, 20\%, 10\%$ splitting ratios, split into training, test and evaluation the dataset of $N$ input-output pairs $\{(\gamma^1,p^1),...,(\gamma^N,p^N)\}$ where $p^i = p^i_1 \in \mathbb R^3$ is the approximate solution at $t=1$ of the CDE (\ref{eqn:ball}) driven by the path $\gamma^i$, for $i\in\{1,...,N\}$ obtained using \texttt{CDE\_Picard\_solve($\gamma$, A, n)} with $n=15$.
    \item Using any publicly available signature python package, run a LASSO regression from \href{https://scikit-learn.org/stable/modules/generated/sklearn.linear_model.Lasso.html}{scikit-learn}, with penalty parameter $\lambda$, from the signature of $\gamma^i$ truncated at level $n=0,1,...,8$ to $p^i$ by performing a grid-search over the hyperparameters $n,\lambda$ (or any other you might need) using only the training and test datasets. 
    \item Using the calibrated signature linear regression model, report mean-squared-error between the predicted $p^i$ and the true $p^i$ on the validation set.
\end{enumerate}














\end{exercise}

\chapter{Signature kernels}\label{sec:signature_kernels}

Kernel methods form a well-established class of algorithms that constitute the essential building blocks of several machine learning models such as Support Vector Machines (SVMs) and Gaussian Processes (GPs) in Bayesian inference. These models have been successfully employed in a wide range of applications including bioinformatics \cite{scholkopf2004kernel, leslie2001spectrum}, genetics \cite{noble2006support}, natural language processing \cite{lodhi2002text} and speech recognition \cite{cuturi2007kernel}.

The central idea of kernel methods is to transform input data points in a typically low dimensional space to a higher (possibly infinite) dimensional one by means of a nonlinear function which is called a feature map. In many cases the higher dimensional space will be a Hilbert space, allowing a kernel to be defined on pairs of input points by taking the inner product of the images of these two points under the feature map. 

The advantage of this approach can be seen in typical non-linear, infinite-dimensional regression or classification tasks. These can sometimes be formulated as optimisation problem expressed only in terms of kernel evaluations at pairs of points in a training set, as we will illustrate in Section \ref{sec:representer_theorem}. In many situations, the kernel can be efficiently evaluated with no reference to the feature map, a property commonly referred to as kernel trick. This property allows one to benefit from the advantages of working in a higher dimensional feature space without the associated drawbacks.

The selection of an effective kernel will usually be task-dependent problem, and this challenge is exacerbated when the data are sequential. This chapter builds on the fundamentals presented in the previous chapter to lay out the mathematical foundations of signature kernels, a class of kernels tailored for tasks that involve sequential data, and  which have received attention in recent years  \cite{kiraly2019kernels, salvi2021signature, cass2021general, salvi2021higher, cochrane2021sk, lemercier2021siggpde, toth2020bayesian, horvath2023optimal, manten2024signature, pannier2024path}.

\section{From signatures to signature kernels}

\subsection{Weighted inner products on tensor algebras}

We recall that for any $k \in \N$, an inner product $\langle \cdot, \cdot \rangle_V$ on $V$ yields a canonical Hilbert-Schmidt inner product $\langle \cdot, \cdot \rangle_{V^{\otimes k}}$ on $V^{\otimes k}$ defined as
\begin{equation*}
	\langle v,w \rangle_{V^{\otimes k}} = \prod_{i=1}^k \langle v_i,w_i \rangle_V.
\end{equation*}
for any $v=(v_1,...,v_k), w = (w_1,...,w_k)$ in $V^{\otimes k}$. 

As a first step before introducing the definition of signature kernels, we observe how the linearity of the tensor algebra $T(V) = \bigoplus_{k=0}^\infty V^{\otimes k}$ allows us to construct a family of weighted inner products.  

\begin{definition}\label{defintion: weighted spaces}
    Given a weight function $\phi : \N \cup \{0\} \to \R_+$,  define for any the $\phi$-inner product $\langle \cdot, \cdot \rangle_\phi$ on the tensor algebra $T(V)$ as 
    \begin{equation}\label{eqn:weighted_inner_product}
        \langle v, w \rangle_\phi := \sum_{k=0}^\infty \phi(k) \langle v_k, w_k \rangle_{V^{\otimes k}}
    \end{equation}
    for any $v=(v_0,v_1,...), w = (w_0,w_1,...,)$ in $T(V)$.
    We will denote by $T_\phi((V))$ the Hilbert space obtained by completing $T(V)$ with respect to $\langle \cdot, \cdot \rangle_\phi$. 
\end{definition}

We will make use of a topology on these Hilbert spaces and, unless otherwise stated, we will assume that we work with the norm topology. 

Recall that $\mathcal{S}$ denotes the image of $\mathcal{C}$ by the ST. The following result present a condition on the weighting function $\phi$ which ensures that $\mathcal{S} \subset T_\phi((V))$.

\begin{lemma}\label{lemma:condition_phi}
If $\phi : \mathbb{N} \cup \{0\} \to \mathbb{R}_+$ is so that for any $C>0$ the series $\sum_{k\geq 0}\frac{C^k\phi(k)}{(k!)^2}$
converges, then $\mathcal{S} \subset T_\phi((V))$.
\end{lemma}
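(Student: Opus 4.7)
The plan is to apply the factorial decay estimate from \Cref{prop:factorial_decay} term-by-term and then invoke the hypothesis on $\phi$ to conclude summability of the defining series of $\|S(x)\|_\phi^2$. The whole argument is effectively a single-line computation, so the main task is to identify the correct constant $C$ at which the hypothesis is applied.

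Concretely, I would fix an arbitrary $S(x) \in \mathcal{S}$. By reparameterisation invariance (\Cref{lemma:repearam_invariance}) one may work with any convenient representative $x:[a,b]\to V$ of the underlying equivalence class in $\mathcal{C}$. Setting $L := \|x\|_{1,[a,b]}$, \Cref{prop:factorial_decay} gives
\[
\|S(x)^{(k)}\|_{V^{\otimes k}}^2 \leq \frac{L^{2k}}{(k!)^2}
\]
for every $k\in\mathbb{N}$. Summing against the weight $\phi$ yields
\[
\|S(x)\|_\phi^2 = \sum_{k=0}^{\infty}\phi(k)\|S(x)^{(k)}\|_{V^{\otimes k}}^2 \leq \sum_{k=0}^{\infty}\frac{C^{k}\phi(k)}{(k!)^{2}},
\]
where $C := L^2 > 0$. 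The hypothesis on $\phi$, applied with this choice of $C$, ensures the right-hand side is finite, giving $S(x)\in T_\phi((V))$ as required.

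No serious technical obstacle is anticipated in the bounded-variation case; the argument collapses to one line once the factorial decay estimate is in hand. The only subtlety worth flagging concerns scope: the hypothesis on $\phi$ is tailored to the decay rate $1/(k!)^{2}$ that holds for $p=1$. If $\mathcal{S}$ is instead interpreted as $\mathcal{S}_p$ for $p\in(1,2)$, the analogous Young-integral factorial decay (cf.\ \cite[Sec.~9.1.1]{friz2010multidimensional}) delivers only the slower denominator $\Gamma(1+k/p)^{2}$, which by a Stirling comparison is not automatically compatible with the stated condition on $\phi$. One would then either strengthen the hypothesis accordingly, or reduce to the bounded-variation setting via the $q$-variation approximation used in the proofs of \Cref{lemma:repearam_invariance} and \Cref{lemma:chen}.
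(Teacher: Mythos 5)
Your proof is correct and follows the same line as the paper: apply the factorial decay estimate of \Cref{prop:factorial_decay} termwise and invoke the summability hypothesis with $C = \norm{x}_{1,[a,b]}^2$. Your closing remark about the $p\in(1,2)$ case is a fair caveat, but the paper (as its proof makes clear by using the $1$-variation norm) is implicitly working in the bounded-variation setting here, so no gap arises.
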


\begin{proof}
    For any path $x \in \wC$, the factorial decay of \Cref{prop:factorial_decay} yields 
    \begin{align*}
        \big|\big|S(x)_{a,b}\big|\big|^2_\phi = \sum_{k=0}^\infty \phi(k) \big|\big|S(x)_{a,b}^{(k)}\big|\big|^2_{V^{\otimes k}}  \leq \sum_{k=0}^\infty \phi(k) \frac{\norm{x}_{1,[a,b]}^{2k}}{(k!)^2}
    \end{align*}
    The summability condition guarantees that the series is finite.
\end{proof}

\subsection{Weighted signature kernels}

We first define a weighted signature kernels as $\phi$-inner products of a pair of STs.

\begin{definition}\label{def:phi_sigker}
    Let $\phi : \mathbb{N} \cup \{0\} \to \mathbb{R}_+$ be a weight function satisfying the condition of \Cref{lemma:condition_phi}. Then, the $\phi$-signature kernel $k_\phi : \mathcal{C} \times \mathcal{C} \to \R$ is defined for 
    \begin{equation*}
        k_\phi([x],[y]) = \left\langle S(x), S(y) \right\rangle_\phi
    \end{equation*}
    for any two unparameterised paths $[x],[y] \in \mathcal{C}$.
\end{definition}

\begin{remark}
    The previous definition of $\phi$-signature kernel can be easily extended to the setting where the weight function $\phi : \mathbb{N} \cup \{0\} \to \mathbb{R}$ is real-valued. This is done by noting that for such $\phi$, the bilinear form $\langle \cdot, \cdot \rangle_{\phi}$ of equation (\ref{eqn:weighted_inner_product}) extends to a continuous bilinear form on $T_{|\phi|}(V)$. If $\phi$ is such that $|\phi|$ satisfies the condition of \Cref{lemma:condition_phi}, then it is easy to see that the $\phi$-signature kernel 
    \begin{equation*}
        k^{x,y}_\phi(s,t) = \left\langle S(x)_{a,s}, S(y)_{a,t} \right\rangle_\phi
    \end{equation*}
    is well defined.
\end{remark} 

To simplify the notation, we will omit reference to equivalence classes and
write $k_\phi\left(  x,y \right)  $ instead of
 $k_\phi\left(  \left[  x \right]  ,\left[
y \right]  \right)$. Similarly, in computations it is usually desirable to define signature kernels directly on representative paths rather than equivalence classes. \Cref{def:phi_sigker} can be immediately extended to two continuous paths of bounded variation $x\in C_1([a,b],V)$ and $y\in C_1([c,d],V)$ 
\begin{equation*}
        k_\phi(x,y)_{s,t} = \left\langle S(x)_{a,s}, S(y)_{c,t} \right\rangle_\phi
\end{equation*}
for any $s \in [a,b]$ and $t \in [c,d]$.

The next result provides an integral equation relating the $\phi$-signature kernel to the $\phi_+$-signature kernel, obtained by shifting the weight function $\phi_+(k)=\phi(k+1)$ for all $k \in \N$. This result appeared first in \cite{cass2021general}. 

\begin{lemma}\label{lemma:sigkern_int}
Let $x \in C_1([a,b],V)$ and $y \in C_1([c,d],V)$ and assume that $\phi : \mathbb{N} \cup \{0\} \to \mathbb{R}$ is such that $|\phi|$ and $|\phi_+|$ satisfy the condition of \Cref{lemma:condition_phi}. Then the following two-parameter integral equation holds
\begin{equation}\label{eqn:kernel_shift_relation}
    k_\phi(x,y)_{s,t} = \phi(0) + \int_a^s\int_c^t k_{\phi_+}(x,y)_{u,v}\langle dx_u,dy_v \rangle_V.
\end{equation}
\end{lemma}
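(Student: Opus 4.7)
The plan is to expand both sides of \eqref{eqn:kernel_shift_relation} according to the defining series of the $\phi$-weighted inner product and match level by level. Write
\[
k_\phi(x,y)_{s,t} = \sum_{k=0}^\infty \phi(k)\,\bigl\langle S(x)_{a,s}^{(k)},\,S(y)_{c,t}^{(k)}\bigr\rangle_{V^{\otimes k}}.
\]
The $k=0$ term contributes $\phi(0)\cdot 1\cdot 1 = \phi(0)$, matching the leading term on the right-hand side. It therefore suffices to express the tail $\sum_{k\ge 1}$ as the asserted double integral.

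For each $k\ge 1$, first I would use the recursive definition of iterated integrals,
\[
S(x)_{a,s}^{(k)} = \int_a^s S(x)_{a,u}^{(k-1)}\otimes dx_u, \qquad S(y)_{c,t}^{(k)} = \int_c^t S(y)_{c,v}^{(k-1)}\otimes dy_v,
\]
which is a legitimate Young/Stieltjes integral since $x,y\in C_1$. Next, the compatibility of the Hilbert–Schmidt inner product on $V^{\otimes k}\cong V^{\otimes(k-1)}\otimes V$ with tensor products gives
\[
\bigl\langle a\otimes b,\,c\otimes d\bigr\rangle_{V^{\otimes k}} = \bigl\langle a,c\bigr\rangle_{V^{\otimes(k-1)}}\bigl\langle b,d\bigr\rangle_V.
\]
Combining this with a Fubini-type argument for Stieltjes integrals against bounded-variation integrators (legitimate because $x$ and $y$ parametrise independent variables $u,v$), one obtains
\[
\bigl\langle S(x)_{a,s}^{(k)},\,S(y)_{c,t}^{(k)}\bigr\rangle_{V^{\otimes k}} = \int_a^s\!\!\int_c^t \bigl\langle S(x)_{a,u}^{(k-1)},\,S(y)_{c,v}^{(k-1)}\bigr\rangle_{V^{\otimes(k-1)}}\bigl\langle dx_u,dy_v\bigr\rangle_V.
\]

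The main obstacle will be exchanging the sum over $k\ge 1$ with the double integral so that the resulting inner sum can be reindexed via $j=k-1$ and recognised as $k_{\phi_+}(x,y)_{u,v}$. I would handle this by dominated convergence: the factorial decay estimate of \Cref{prop:factorial_decay} yields
\[
\bigl|\phi(k)\bigr|\,\bigl\|S(x)_{a,u}^{(k-1)}\bigr\|\,\bigl\|S(y)_{c,v}^{(k-1)}\bigr\| \le |\phi(k)|\,\frac{\|x\|_{1,[a,b]}^{k-1}\|y\|_{1,[c,d]}^{k-1}}{((k-1)!)^2},
\]
uniformly in $u\in[a,s]$, $v\in[c,t]$, and summability of this bound is exactly the hypothesis that $|\phi|$ (and hence $|\phi_+|$) satisfies the condition of \Cref{lemma:condition_phi}. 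This guarantees uniform convergence of the partial sums and justifies the interchange. After reindexing and collecting terms, the tail equals
\[
\int_a^s\!\!\int_c^t \sum_{j=0}^\infty \phi_+(j)\bigl\langle S(x)_{a,u}^{(j)},S(y)_{c,v}^{(j)}\bigr\rangle_{V^{\otimes j}}\bigl\langle dx_u,dy_v\bigr\rangle_V = \int_a^s\!\!\int_c^t k_{\phi_+}(x,y)_{u,v}\bigl\langle dx_u,dy_v\bigr\rangle_V,
\]
completing the proof.
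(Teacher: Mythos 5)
Your argument is correct and follows essentially the same route as the paper's proof: peel off the $k=0$ term, use the recursive representation $S(x)^{(k)}_{a,s}=\int_a^s S(x)^{(k-1)}_{a,u}\otimes dx_u$, factor the Hilbert–Schmidt inner product across the tensor split $V^{\otimes k}\cong V^{\otimes(k-1)}\otimes V$, invoke factorial decay to justify the sum/integral interchange, and reindex $j=k-1$ to identify $k_{\phi_+}$. The only cosmetic difference is the order of operations (you establish the level-$k$ identity first and then sum, whereas the paper interchanges sum and integral before factoring), but the content is identical.
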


\begin{proof}
    The fact that the $\phi$- and $\phi_+$-signature kernels are well defined follows from the summability condition of \Cref{lemma:condition_phi}. To show the stated relation we observe that
    \begin{align*}
        k^{x,y}_\phi(s,t) &= \sum_{i=0}^\infty \phi(i) \left\langle S(x)_{a,s}^{(i)}, S(y)_{c,t}^{(i)}\right\rangle_{V^{\otimes i}} \\
        &= \phi(0) + \sum_{i=1}^\infty \phi(i) \left\langle \int_a^s S(x)_{a,u}^{(i-1)}dx_u, \int_c^t S(y)_{c,v}^{(i-1)}dy_v\right\rangle_{V^{\otimes i}}\\
        &= \phi(0) + \int_a^s \int_c^t \sum_{i=0}^\infty \phi(i+1) \left\langle S(x)_{a,u}^{(i)}dx_u,  S(y)_{c,v}^{(i)}dy_v\right\rangle_{V^{\otimes i}}\\
        &= \phi(0) + \int_a^s \int_c^t \sum_{i=0}^\infty \phi(i+1) \left\langle S(x)_{a,u}^{(i)},  S(y)_{c,v}^{(i)}\right\rangle_{V^{\otimes i}}\left\langle dx_u, dy_v\right\rangle_{V}\\
        &= \phi(0) + \int_a^s \int_c^t k_{\phi_+}(x,y)_{u,v} \left\langle dx_u, dy_v\right\rangle_{V}
    \end{align*}
    where the first and last equalities follow from the definitions of $k_\phi$ and $k_{|\phi|}$ respectively, the second equality follows from the same arguments as in the proof of \Cref{sigCDE}, in the third equality integral and sums can be exchanged thanks to the factorial decay of the terms in the ST, the fourth equality follows from a change of variable, and finally the fifth holds because for any $A,B \in V^{\otimes i-1}$ and $a,b \in V$ one has $$\langle Aa, Bb \rangle_{V^{\otimes i}} = \langle A, B \rangle_{V^{\otimes i-1}}\langle a, b \rangle_{V}.$$
\end{proof}

The special case where $\phi = \phi(0)$ is constant was treated in \cite{salvi2021signature}. In this case, one has that $\phi_+ = \phi$, hence equation (\ref{eqn:kernel_shift_relation}) reduces to the following integral equation
\begin{equation}\label{eqn:int_sigker_original}
    k_\phi(x,y)_{s,t} = \phi(0) + \int_a^s\int_c^t k_\phi(x,y)_{u,v}\langle dx_u,dy_v \rangle_V.
\end{equation}
Moreover, if $x,y$ are differentiable and $\phi \equiv 1$ one can differentiate both sides of (\ref{eqn:int_sigker_original})  with respect to $s$ and $t$ and obtain the following linear hyperbolic PDE
\begin{equation}\label{eqn:kernel_PDE}
    \frac{\partial^2 k_\phi(x,y)_{s,t}}{\partial s \partial t} = k_\phi(x,y)_{s,t} \langle \dot x_u,\dot y_v \rangle_V.
\end{equation}
with boundary conditions $k^{x,y}(0,t) = k^{x,y}(s,0) = 1$ for all $s,t \in [a,b]$.

\begin{remark}
    The PDE (\ref{eqn:kernel_PDE}) belongs to a class of hyperbolic PDEs introduced in \cite{goursat1916course} and known as Goursat problems. The existence and uniqueness of solutions of the PDE (\ref{eqn:kernel_PDE}) follow from \cite[Theorems 2 \& 4]{lees1960goursat}. The integral equation also extends beyond the case of differentiable paths to geometric rough paths, see again \cite{salvi2021signature} and \cite{cass2023fubini}.
\end{remark}

\begin{remark}
It is clear from \Cref{eqn:kernel_shift_relation} that for general weight functions $\phi$, the $\phi$-signature kernel $k_\phi(x,y)$ will not solve a PDE of the type (\ref{eqn:kernel_PDE}). Nonetheless, later in the chapter we will provide several examples of weight functions for which an efficient evaluation is still possible. 
\end{remark}

\subsection{Reproducing signature kernel Hilbert spaces}

In functional analysis, a reproducing kernel Hilbert space (RKHS) is abstractly defined as a Hilbert space of functions where all evaluation functionals are continuous. In practice, an RKHS is uniquely associated with a positive semidefinite kernel that "reproduces" every function in the RKHS, in the  sense of equation (\ref{eqn:reproducing_property}) below. 

\begin{definition}\label{def:rkhs}
    A Hilbert space $\mathcal{H}$ of functions defined on a set $\mathcal{X}$ is called reproducing kernel Hilbert space (RKHS) over $\mathcal{X}$ if, for each $x \in \mathcal{X}$, the point evaluation functional at $x$, $f \mapsto f(x)$,  is a continuous linear functional, i.e. there exists a constant $C_x \geq 0$ such that
    $$|f(x)| \leq C_x \norm{f}_\mathcal{H}, \quad \text{for all } f \in \mathcal{H}.$$
\end{definition}

Given a RKHS $\mathcal{H}$, the Riesz representation theorem (e.g. \cite[Thm. 5.25]{folland1999real}) implies that there exists a unique functional $k(x,\cdot)$ in $\mathcal{H}$ such that 
\begin{equation}\label{eqn:reproducing_property}
    \left\langle k(x,\cdot), f\right\rangle_\mathcal{H} = f(x), \quad \text{for all } f \in \mathcal{H} \text{ and } x \in \cX.
\end{equation}
This yields a symmetric kernel $k:\mathcal{X} \times \mathcal{X} \to \R$  defined for any $x,y \in \mathcal{X}$ as
\[
k(x,y)=\left\langle k(x,\cdot), k(y,\cdot)\right\rangle_\mathcal{H}.
\]
Note that $\left\langle k(x,\cdot), k(y,\cdot)\right\rangle_\mathcal{H}=k(x,\cdot)(y)=k(y,\cdot)(x)$ so that the element $k(x,\cdot)$ of $\mathcal{H}$ really is the same as the function $k(x,\cdot)$ obtained from the slices of the kernel function. 

By virtue of (\ref{eqn:reproducing_property}), the kernel $k$ is said to have the reproducing property.

We next define kernels with the additional property of being positive semidefinite.
\begin{definition}
    A kernel $k : \mathcal{X} \times \mathcal{X} \to \mathbb{R}$ is called positive semidefinite if for any $n \in \mathbb{N}$ and points $x_1,...,x_n \in \mathcal{X}$, the Gram matrix $K:=(k(x_i,x_j))_{i,j}$ is positive semidefinite, i.e. if for any $c_1,...,c_n \in \mathbb{R}$
    \begin{equation*}
        \sum_{i=1}^n\sum_{j=1}^nc_ic_jk(x_i,x_j) \geq 0.
    \end{equation*}
\end{definition}

In particular, this property holds for signature kernels as we show in the next lemma.
\begin{lemma}
    Let $\phi : \mathbb{N} \cup \{0\} \to \mathbb{R}_{+}$ be such that $\phi$ satisfies the condition of \Cref{lemma:condition_phi}. Then the $\phi$-signature kernel is symmetric and positive semidefinite.
\end{lemma}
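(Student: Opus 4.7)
The plan is to reduce both claims directly to properties of the bilinear form $\langle \cdot,\cdot\rangle_\phi$ on $T_\phi((V))$. By the hypothesis on $\phi$ and Lemma \ref{lemma:condition_phi} we have $\mathcal{S}\subset T_\phi((V))$, so for every pair of unparameterised paths $[x],[y]\in\mathcal{C}$ the expression $k_\phi(x,y)=\langle S(x),S(y)\rangle_\phi$ is a well-defined real number, and we may freely use the standard bilinearity, symmetry and non-negativity properties of $\langle\cdot,\cdot\rangle_\phi$.

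First I would handle symmetry. Each $\langle\cdot,\cdot\rangle_{V^{\otimes k}}$ is a symmetric bilinear form on $V^{\otimes k}$ (Hilbert–Schmidt inner product), and the coefficients $\phi(k)\ge 0$ are real scalars, so the termwise definition
\[
\langle v,w\rangle_\phi=\sum_{k=0}^\infty \phi(k)\langle v_k,w_k\rangle_{V^{\otimes k}}
\]
is symmetric in $v,w$. Applying this with $v=S(x)$ and $w=S(y)$ gives $k_\phi(x,y)=k_\phi(y,x)$.

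Next I would verify positive semidefiniteness. Fix $n\in\mathbb{N}$, paths $x_1,\dots,x_n\in C_1$ and scalars $c_1,\dots,c_n\in\mathbb{R}$. Since each $S(x_i)$ lies in the Hilbert space $T_\phi((V))$, bilinearity of $\langle\cdot,\cdot\rangle_\phi$ yields
\[
\sum_{i,j=1}^n c_i c_j\, k_\phi(x_i,x_j)=\sum_{i,j=1}^n c_i c_j\,\bigl\langle S(x_i),S(x_j)\bigr\rangle_\phi=\left\langle \sum_{i=1}^n c_i S(x_i),\,\sum_{j=1}^n c_j S(x_j)\right\rangle_\phi.
\]
Because $\phi(k)\ge 0$ for every $k$ and each $\langle\cdot,\cdot\rangle_{V^{\otimes k}}$ is positive semidefinite, the form $\langle\cdot,\cdot\rangle_\phi$ is itself positive semidefinite, so the right-hand side equals $\|\sum_i c_i S(x_i)\|_\phi^2\ge 0$. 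This proves that the Gram matrix $(k_\phi(x_i,x_j))_{i,j}$ is positive semidefinite.

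There is no real obstacle here; the only point to be careful about is justifying that $\sum_i c_i S(x_i)\in T_\phi((V))$ so that the $\phi$-norm makes sense, but this is immediate from the fact that $T_\phi((V))$ is a vector space containing each $S(x_i)$. Thus symmetry and positive semidefiniteness follow at once from the corresponding properties of $\langle\cdot,\cdot\rangle_\phi$.
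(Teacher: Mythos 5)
Your proof is correct and follows essentially the same route as the paper: expand the double sum by bilinearity of $\langle\cdot,\cdot\rangle_\phi$ to obtain $\|\sum_i c_i S(x_i)\|_\phi^2\ge 0$, with symmetry read off directly from the symmetric definition of the $\phi$-inner product. The only difference is that you spell out the (routine) justification that $\langle\cdot,\cdot\rangle_\phi$ is symmetric and positive semidefinite and that $\sum_i c_i S(x_i)\in T_\phi((V))$, which the paper leaves implicit.
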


\begin{proof}
    The $\phi$-signature kernel $k_\phi$ is clearly symmetric. Considering an arbitrary collection of unparameterised paths $x_1,...,x_n \in \wC$ and scalars $c_1,...,c_n \in \mathbb{R}$
    \begin{align*}
        \sum_{i,j=1}^n c_ic_k k_\phi(x_i,x_j) &= \sum_{i,j=1}^n c_ic_k \langle S(x_i),S(x_j)\rangle_\phi \\
        &= \Big\langle\sum_{i=1}^n c_i S(x_i), \sum_{j=1}^n c_j S(x_j) \Big\rangle_\phi \\
        &= \Big|\Big|\sum_{i=1}^n c_i S(x_i)\Big|\Big|^2_{\phi} \geq 0
    \end{align*}
    yields the claimed positive semidefiniteness.
\end{proof}

\begin{remark}
    Through a similar calculation one can see that any symmetric kernel that satisfies the reproducing property (\ref{eqn:reproducing_property}) is positive semidefinite. Indeed, consider $n$ points $x_1,...,x_n \in \cX$ and scalars $c_1,...,c_n \in \mathbb{R}$ and observe that
    \begin{align*}
        \sum_{i,j=1}^n c_ic_k k(x_i,x_j) &= \sum_{i,j=1}^n c_ic_k \langle k(x_i,\cdot),k(x_j,\cdot)\rangle_\mathcal{H} \\
        &= \Big\langle\sum_{i=1}^n c_i k(x_i,\cdot), \sum_{j=1}^n c_j k(x_j,\cdot) \Big\rangle_\mathcal{H} \\
        &= \Big|\Big|\sum_{i=1}^n c_i k(x_i,\cdot)\Big|\Big|^2_\mathcal{H} \geq 0
    \end{align*}
\end{remark}

A natural question to ask is whether, for a given kernel $k$, there exists a RKHS of functions such that the kernel $k$ has the reproducing property (\ref{eqn:reproducing_property}). A positive answer is provided by the Moore-Aronszajn theorem.
\begin{theorem}[\cite{aronszajn1950theory}]
    If $k : \mathcal{X} \times \mathcal{X} \to \mathbb R$ is a positive semidefinite kernel, then there exists a unique RKHS $\mathcal{H}$ such that $k$ has the reproducing property  (\ref{eqn:reproducing_property}).
\end{theorem}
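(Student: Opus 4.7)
The plan is to construct the RKHS explicitly from $k$ and then establish uniqueness via the reproducing property.

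First, I would define the pre-Hilbert space $\mathcal{H}_0 = \mathrm{span}\{k(x,\cdot) : x \in \mathcal{X}\}$ as a vector subspace of $\mathbb{R}^\mathcal{X}$, and equip it with the bilinear form determined by
\[
\langle k(x,\cdot), k(y,\cdot)\rangle_{\mathcal{H}_0} := k(x,y),
\]
extended by bilinearity. For $f = \sum_i c_i k(x_i,\cdot)$ and $g = \sum_j d_j k(y_j,\cdot)$, this gives $\langle f, g \rangle_{\mathcal{H}_0} = \sum_{i,j} c_i d_j k(x_i, y_j)$. I would verify this is well-defined (independent of the chosen representation) by observing that $\langle f, g\rangle_{\mathcal{H}_0} = \sum_j d_j f(y_j) = \sum_i c_i g(x_i)$, each expression depending only on one of the functions as an abstract map. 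Symmetry is immediate from symmetry of $k$, and positivity of the bilinear form on $\mathcal{H}_0$ is exactly the positive semidefiniteness hypothesis.

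The next step is to upgrade positive semidefiniteness to positive definiteness on $\mathcal{H}_0$, which is the heart of the construction. For any $f \in \mathcal{H}_0$ and $x \in \mathcal{X}$, the definition yields the reproducing identity $f(x) = \langle f, k(x,\cdot)\rangle_{\mathcal{H}_0}$. Applying Cauchy--Schwarz (which holds for any positive semidefinite form) gives $|f(x)|^2 \leq \langle f,f\rangle_{\mathcal{H}_0}\, k(x,x)$, so if $\langle f,f\rangle_{\mathcal{H}_0} = 0$ then $f \equiv 0$ as a function on $\mathcal{X}$. Hence $\mathcal{H}_0$ is an inner product space.

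The main obstacle is completing $\mathcal{H}_0$ to a Hilbert space of genuine functions on $\mathcal{X}$, rather than an abstract completion. I would proceed by observing that the Cauchy--Schwarz bound $|f(x)| \leq \|f\|_{\mathcal{H}_0} \sqrt{k(x,x)}$ shows that every Cauchy sequence $(f_n) \subset \mathcal{H}_0$ is pointwise Cauchy on $\mathcal{X}$, so it converges pointwise to a well-defined limit function $f:\mathcal{X} \to \mathbb{R}$. Declare $\mathcal{H}$ to consist of all such pointwise limits of $\|\cdot\|_{\mathcal{H}_0}$-Cauchy sequences, equipped with the inner product inherited by passing to the limit; standard checks (independence of the approximating sequence, completeness, density of $\mathcal{H}_0$) then make $\mathcal{H}$ a Hilbert space. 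The reproducing property extends from $\mathcal{H}_0$ to $\mathcal{H}$ by continuity of $\langle \cdot, k(x,\cdot)\rangle$, and this continuity also verifies that evaluation functionals are continuous on $\mathcal{H}$, so $\mathcal{H}$ is indeed an RKHS with reproducing kernel $k$.

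Finally, for uniqueness, suppose $\mathcal{H}'$ is another RKHS with reproducing kernel $k$. Then each $k(x,\cdot) \in \mathcal{H}'$ and the reproducing property forces $\langle k(x,\cdot), k(y,\cdot)\rangle_{\mathcal{H}'} = k(x,y) = \langle k(x,\cdot), k(y,\cdot)\rangle_{\mathcal{H}}$, so $\mathcal{H}_0 \subset \mathcal{H}'$ isometrically, and by closure $\mathcal{H} \subset \mathcal{H}'$ isometrically. If $g \in \mathcal{H}'$ is orthogonal to $\mathcal{H}$ in $\mathcal{H}'$, then $g(x) = \langle g, k(x,\cdot)\rangle_{\mathcal{H}'} = 0$ for every $x$, so $g = 0$. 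Hence $\mathcal{H}' = \mathcal{H}$ with matching inner products, completing the proof.
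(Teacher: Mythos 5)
Your construction is the standard Moore--Aronszajn argument and coincides with the existence outline the paper gives in its specialised signature-kernel notation: form the pre-Hilbert span $\mathcal{H}_0$ of kernel sections, use Cauchy--Schwarz to get pointwise control of evaluation, realise the completion concretely as pointwise limits of $\|\cdot\|_{\mathcal{H}_0}$-Cauchy sequences, and pass the reproducing identity to the limit. You additionally supply the uniqueness argument (closure of $\mathcal{H}_0$ inside any competing RKHS, then orthogonal complement is killed by reproducing), which the paper asserts in the statement but does not prove; that is a welcome completion. The one step you defer to ``standard checks''---that two Cauchy sequences in $\mathcal{H}_0$ with a common pointwise limit must satisfy $\|f_n - g_n\|_{\mathcal{H}_0}\to 0$---is precisely the lemma the paper also states without proof, so you are at the same level of rigour; if you wanted to close it, write $h_n := f_n - g_n$ and use $\|h_n\|^2 = \langle h_n - h_m, h_n\rangle + \langle h_m, h_n\rangle$, where for fixed $m$ the second term tends to $0$ by pointwise convergence of $h_n$ applied to the finitely many points carrying $h_m$, and the first is controlled by the Cauchy estimate.
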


This result is classical, therefore we do not provide a full proof here. However, it is still informative to outline how  the construction of a RKHS associated to a kernel can be specialised to the case of signature kernels. 
Let $\phi : \mathbb{N} \cup \{0\} \to \mathbb{R}$ be a weight function such that $|\phi|$ satisfies the condition of \Cref{lemma:condition_phi}. Consider the following linear space of real-valued functions on unparameterised paths 
\begin{equation}\label{eqn:pre_HS}
    \mathcal{H}^0_\phi := \text{Span}\{k_\phi(x,\cdot) : x \in \wC\}.
\end{equation} 
For functions $f = \sum_{i=1}^m\alpha_ik_\phi(x_i,\cdot)$ and $g = \sum_{j=1}^n\beta_jk_\phi(y_j,\cdot)$ in $\mathcal{H}^0_\phi$ the  expression
\begin{equation}\label{eqn:inner_product_H0}
    \langle f,g \rangle_{\mathcal{H}^0_\phi} = \sum_{i=1}^m\sum_{j=1}^n\alpha_i\beta_jk_\phi(x_i,y_j)
\end{equation} 
defines an inner product on $\mathcal{H}^0_\phi$. \Cref{eqn:inner_product_H0} immediately yields the reproducing property on $\mathcal{H}^0_\phi$, i.e.
$$\langle k_\phi(x,\cdot),f\rangle_{\mathcal{H}^0_\phi} = f(x), \quad \text{for all } f \in \mathcal{H}_\phi^0 \text{ and } x \in \wC.$$ 
By the Cauchy-Schwarz inequality, for any $x \in \wC$
\begin{equation}\label{eqn:boundness_eval_H0}
    |f(x)| \leq \sqrt{k_\phi(x,x)} \norm{f}_{\mathcal{H}^0_\phi}.
\end{equation}
Thus, every the evaluation functional $f \mapsto f(x)$ is continuous on $\mathcal{H}^0_\phi$. However, $\mathcal{H}^0_\phi$ is not yet a Hilbert space because it might fail to be complete. This can be remedied by completing $\mathcal{H}^0_\phi$. We recall that an abstract completion is constructed by considering equivalence classes of Cauchy sequences in $\mathcal{H}^0_\phi$. For any such sequence  $(f_{n})_{n \in \mathbb{N}}$  the limit $\lim_{n \to \infty} \norm{f_{n}}_{H^0_\phi}$ exists; two sequences $(f_{n})_{n \in \mathbb{N}}$ and $(g_{n})_{n \in \mathbb{N}}$ are then called equivalent if
$ \norm{f_{n}-g_{n}}_{H^0_\phi} \rightarrow 0$ as $n \rightarrow \infty $. Property (\ref{eqn:boundness_eval_H0}) however gives a concrete way of realising this space; the key observation is that if  $(f_{n})_{n \in \mathbb{N}}$ and $(g_{n})_{n \in \mathbb{N}}$ are equivalent Cauchy sequences in $\mathcal{H}_\phi^0$ then they have a common pointwise limit:
\[
f(x)=\lim_{n \rightarrow \infty}f_{n}(x)=\lim_{n \rightarrow \infty }g_{n}(x) = g(x),
\]
which follows from (\ref{eqn:boundness_eval_H0}). As such we can associate each equivalence class $\left[(f_{n})_{n \in \mathbb{N}}\right]$ with the pointwise limit function $f$. Conversely, every function $f$ that arises as a pointwise limit in this way can be associated with a unique equivalence class $\left[(f_{n})_{n \in \mathbb{N}}\right]$ of Cauchy sequences; this is a consequence of the following lemma.
\begin{lemma}
Suppose that $(f_{n})_{n \in \mathbb{N}}\ $ and $(g_{n})_{n \in \mathbb{N}}\ $ \ are two
Cauchy sequences in $\mathcal{H}_{\phi}^{0}.$ Then $f=g$ if and only if
$\lim_{n\rightarrow\infty}\left\vert \left\vert f_{n}-g_{n}\right\vert
\right\vert _{\mathcal{H}_{\phi}^{0}}=0.$
\end{lemma}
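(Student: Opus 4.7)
The plan is to split the proof into the two implications and exploit the continuity of point evaluation on $\mathcal{H}^0_\phi$ (inequality (\ref{eqn:boundness_eval_H0})) together with the reproducing property.

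For the easy direction, assume $\|f_n - g_n\|_{\mathcal{H}^0_\phi} \to 0$. By the Cauchy--Schwarz bound (\ref{eqn:boundness_eval_H0}), for every $x \in \mathcal{C}$,
\[
|f_n(x) - g_n(x)| \leq \sqrt{k_\phi(x,x)}\,\|f_n - g_n\|_{\mathcal{H}^0_\phi} \longrightarrow 0.
\]
Since $f_n(x) \to f(x)$ and $g_n(x) \to g(x)$ pointwise, this forces $f(x) = g(x)$ for every $x$, so $f = g$ as functions.

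For the converse, suppose $f = g$, and set $h_n := f_n - g_n$. Then $(h_n)$ is Cauchy in $\mathcal{H}^0_\phi$ and $h_n(x) \to 0$ for every $x$. Because each $h_n$ lies in $\mathcal{H}^0_\phi$, I can write it as a \emph{finite} linear combination $h_n = \sum_{i=1}^{m_n} \alpha_{n,i} k_\phi(x_{n,i},\cdot)$. The reproducing property on $\mathcal{H}^0_\phi$ then gives, for each fixed $n$,
\[
\langle h_n, h_m \rangle_{\mathcal{H}^0_\phi} = \sum_{i=1}^{m_n} \alpha_{n,i}\, h_m(x_{n,i}) \xrightarrow[m\to\infty]{} 0,
\]
since the sum is finite and each $h_m(x_{n,i}) \to 0$ pointwise.

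The main step is to leverage this against the Cauchy property. Given $\varepsilon > 0$, pick $N_\varepsilon$ so that $\|h_n - h_m\|_{\mathcal{H}^0_\phi} < \varepsilon$ for $n,m \geq N_\varepsilon$. For $n \geq N_\varepsilon$ fixed and any $m \geq N_\varepsilon$,
\[
\|h_n\|_{\mathcal{H}^0_\phi}^2 = \langle h_n, h_m\rangle_{\mathcal{H}^0_\phi} + \langle h_n, h_n - h_m\rangle_{\mathcal{H}^0_\phi} \leq |\langle h_n, h_m\rangle_{\mathcal{H}^0_\phi}| + \varepsilon\,\|h_n\|_{\mathcal{H}^0_\phi}.
\]
Letting $m \to \infty$ with $n$ fixed, the first term on the right vanishes by the previous display, so $\|h_n\|_{\mathcal{H}^0_\phi}^2 \leq \varepsilon\,\|h_n\|_{\mathcal{H}^0_\phi}$, giving $\|h_n\|_{\mathcal{H}^0_\phi} \leq \varepsilon$ for all $n \geq N_\varepsilon$. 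Since $\varepsilon$ is arbitrary, $\|h_n\|_{\mathcal{H}^0_\phi} \to 0$, completing the proof. The delicate point is precisely this last step: naively, the Cauchy condition only controls $\|h_n - h_m\|$ with $n,m$ jointly large, but the finite-support representation of each $h_n$ lets us exchange limits and pass $m \to \infty$ with $n$ held fixed.
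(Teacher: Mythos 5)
Your proof is correct and complete. The paper states this lemma without proof, but your argument is the standard one for this step of the RKHS completion construction: the easy direction follows directly from the evaluation bound (\ref{eqn:boundness_eval_H0}), and the converse correctly exploits the finite-support representation of elements of $\mathcal{H}^0_\phi$ together with the reproducing property to show $\langle h_n, h_m \rangle_{\mathcal{H}^0_\phi} \to 0$ for fixed $n$, then plays this off against the Cauchy condition. The remark at the end, flagging that the finite-sum representation is what justifies exchanging the $m\to\infty$ limit with the inner product against the fixed $h_n$, is exactly the right thing to highlight.
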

From this discussion we see that we can identify the completion $\mathcal{H}_\phi$ with the set of functions on $\wC$ which are pointwise limits of a Cauchy sequence in $\mathcal{H}^0_\phi$: 
\begin{equation*}
    \mathcal{H}_\phi := \{f \in \R^{\wC} : \text{ $\exists$ a Cauchy seq. } (f_n)_{n \in \N} \text{ in } \mathcal{H}^0_\phi \text{ so that } f = \lim_{n\to\infty}f_n \text{ pointwise}\} 
\end{equation*}
The inner product (\ref{eqn:inner_product_H0}) can then be extended to the following inner product on $\mathcal{H}_\phi$ 
\begin{equation*}
    \langle f, g\rangle_{\mathcal{H}_\phi} = \lim_{n\to\infty} \langle f_n, g_n\rangle_{\mathcal{H}_\phi^0}
\end{equation*}
where $(f_n)_{n \in \N}$ and $(g_n)_{n \in \N}$ are Cauchy sequences in $\mathcal{H}^0_\phi$ such that $f = \lim_{n\to\infty}f_n$ and $g = \lim_{n\to\infty}g_n$. We can  note that $\mathcal{H}_\phi^0$ is dense in $\mathcal{H}_\phi$ because
\begin{equation*}
    \norm{f-f_n}_{\mathcal{H}_\phi} = \lim_{k \to \infty}\norm{f_{n+k}-f_n}_{\mathcal{H}_\phi^0} \leq \sup_{k \in \N} \norm{f_{n+k}-f_n}_{\mathcal{H}_\phi^0} \underset{n \to \infty}{\longrightarrow} 0,
\end{equation*}
which allows to conclude that $\mathcal{H}_\phi$ is complete and is therefore a Hilbert space. It is easily checked that (\ref{eqn:boundness_eval_H0}) holds also for any  $f \in \mathcal{H}_\phi$ so that $\mathcal{H}_\phi$ is a RKHS. 

We study next in greater detail the RKHS associated to  signature kernels $k_{\phi}$. Where possible  we suppress reference to intervals and write e.g. $S(\gamma)$ instead of $S(\gamma)_{a,b}$. The general setting here will be that $E$ is a linear subspace of $T\left(  \left(
V\right)  \right)  $ which contains the range of the ST
$\mathcal{S}$ and on which an inner-product $\left\langle \cdot,\cdot
\right\rangle _{E}$ is defined. Then, we know from the general theory above that there exists a unique RKHS $\left(  \mathcal{H}%
_{E},\left\langle \cdot,\cdot\right\rangle _{\mathcal{H}_{E}}\right)$ associated to the kernel
$k_{E}:\mathcal{C}\times\mathcal{C}\rightarrow%
\mathbb{R}
$ given by
\[
k_{E}\left(x,y\right)
=\left\langle S\left(  x \right)  ,S\left(  y \right)  \right\rangle
_{E}.
\]\ This RKHS has the following properties:
\begin{enumerate}
\item If the function $k_E(x,\cdot) \in \mathbb{R}^{\mathcal{C}}$ are defined by
\[
k_E(x,\cdot): y  \mapsto k_E(x,y) = \left\langle
S\left(  x \right)  ,S\left(  y \right)  \right\rangle _{E},
\]
then the linear span of $\mathcal{H}_{E}^0 := \left\{  k_E(x,\cdot) : x  \in\mathcal{C}\right\}$ is a dense subspace of $\mathcal{H}_{E}$.

\item The reproducing property holds, i.e. for every $x$ and
$y$ in $\mathcal{C}$ 
$$\left\langle k_E(x,\cdot),k_E(y,\cdot) \right\rangle _{\mathcal{H}_{E}}=\left\langle S\left(  x \right)  ,S\left(
y \right)  \right\rangle _{E}.$$
\end{enumerate}

In the setting of weighted signature kernels we are interested in carrying out this construction when $E=T_{\phi}\left(  (V)\right)$ for weight functions $\phi$ which satisfy the conditions of Lemma \ref{lemma:condition_phi}.  We write the induced RKHS as \ $\mathcal{H}_{\phi
}$. The following lemma provides an explicit realisation of elements of this space as functions in $\mathbb{R}%
^{\mathcal{C}}$.

\begin{proposition}\label{prop:RKHS}
For every element $ f \in \mathcal{H}_{\phi}$ there is a unique $\ell_f$ in
$T_{\phi}\left(  \left(  V\right)  \right)  $ such that 
$f\left( x\right)  =\left\langle \ell_f, S\left(  x\right)
\right\rangle _{\phi}$. Furthermore, the map $\ell_f \mapsto f$ is an isomorphism between the
Hilbert spaces $T_{\phi}\left(  \left(  V\right)  \right)  $ and
$\mathcal{H}_{\phi}.$
\end{proposition}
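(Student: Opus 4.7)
The plan is to define an explicit candidate for the isomorphism $\Psi : T_\phi((V)) \to \mathcal{H}_\phi$ given by $\Psi(\ell)(x) := \langle \ell, S(x)\rangle_\phi$, and then to identify $\ell_f := \Psi^{-1}(f)$. The argument will proceed in three steps: checking that $\Psi$ is an isometry on a dense subspace, extending it by continuity, and verifying bijectivity.

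First, I would restrict $\Psi$ to the linear span $\mathrm{span}(\mathcal{S})$ of the range of the ST. For any finite combination $\ell = \sum_i c_i S(x_i)$, bilinearity gives $\Psi(\ell) = \sum_i c_i k_\phi(x_i, \cdot) \in \mathcal{H}_\phi^0$, and a direct computation using the definition (\ref{eqn:inner_product_H0}) yields
\begin{equation*}
\|\Psi(\ell)\|_{\mathcal{H}_\phi^0}^2 = \sum_{i,j} c_i c_j k_\phi(x_i,x_j) = \Big\langle \sum_i c_i S(x_i), \sum_j c_j S(x_j)\Big\rangle_\phi = \|\ell\|_\phi^2,
\end{equation*}
so $\Psi$ restricts to an isometry from $\mathrm{span}(\mathcal{S})$ onto $\mathcal{H}_\phi^0$.

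The hardest step, and the one I expect to be the main obstacle, is establishing that $\mathrm{span}(\mathcal{S})$ is dense in $T_\phi((V))$. This is where Lemma \ref{Driver} pays dividends: it gives the inclusion $T(V) \subset \mathrm{span}(\mathcal{S})$, and since $T_\phi((V))$ is by construction the completion of $T(V)$ with respect to $\|\cdot\|_\phi$, density of $\mathrm{span}(\mathcal{S})$ is immediate. Without Driver's lemma this would require a more delicate direct argument involving non-commutative formal exponentials and logarithms of signatures of linear paths.

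With density in hand, I would extend $\Psi$ to all of $T_\phi((V))$ by continuity, and verify the defining identity pointwise. For $\ell \in T_\phi((V))$ and any approximating sequence $(\ell_n) \subset \mathrm{span}(\mathcal{S})$, the isometry produced in the first step shows $(\Psi(\ell_n))$ is Cauchy in $\mathcal{H}_\phi^0$, hence converges to some $f \in \mathcal{H}_\phi$; meanwhile Cauchy-Schwarz yields $|\Psi(\ell_n)(x) - \langle \ell, S(x)\rangle_\phi| \leq \|\ell_n - \ell\|_\phi \|S(x)\|_\phi \to 0$, so the pointwise limit coincides with $\Psi(\ell)$ and identifies it with $f \in \mathcal{H}_\phi$. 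Injectivity is then immediate: $\Psi(\ell) \equiv 0$ forces $\ell$ to be $\phi$-orthogonal to the dense set $\mathrm{span}(\mathcal{S})$, hence $\ell = 0$. Surjectivity follows by reversing the construction: any $f \in \mathcal{H}_\phi$ is a pointwise limit of a Cauchy sequence $f_n = \Psi(\ell_n)$ in $\mathcal{H}_\phi^0$, the isometry transports the Cauchy property back to a sequence in $T_\phi((V))$ with limit $\ell$ satisfying $\Psi(\ell) = f$. Uniqueness of $\ell_f$ is exactly injectivity of $\Psi$.
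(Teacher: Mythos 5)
Your overall architecture — define $\Psi(\ell)(x) := \langle \ell, S(x)\rangle_\phi$, check it restricts to an isometry from $\mathrm{span}(\mathcal{S})$ onto $\mathcal{H}_\phi^0$, then extend by density and verify bijectivity — is exactly the right shape, and the isometry computation is correct. The gap is in the step you flag as ``the main obstacle'': establishing that $\mathrm{span}(\mathcal{S})$ is dense in $T_\phi((V))$.

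You argue this is immediate from Lemma~\ref{Driver} via the inclusion $T(V) \subset \mathrm{span}(\mathcal{S})$. But the proof of Lemma~\ref{Driver} does not deliver that \emph{algebraic} inclusion: Driver's argument expresses $\mathfrak{i}_1(a) = \sum_{l=0}^\infty c_l\, S\bigl((x^a)^{*l}\bigr)$ as an \emph{infinite} series of signatures converging only coordinate-wise, and it is only after applying the finite-rank projection $\pi_{\leq N}$ that the series collapses to a genuine finite linear combination inside the finite-dimensional $T_N(V)$. The coefficients of that finite combination change with $N$, so one cannot paste them together into a single element of $\mathrm{span}(\mathcal{S})$. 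In fact the inclusion $T(V) \subset \mathrm{span}(\mathcal{S})$ is false already for $V = \mathbb{R}$: writing $(0,1,0,0,\dots) = \sum_{i=1}^n c_i \exp(a_i)$ would force $\sum_i c_i a_i^k = \delta_{k,1}$ for every $k \geq 0$, and a Vandermonde argument shows no finite set $\{(c_i,a_i)\}$ can achieve this. The remark following Lemma~\ref{Driver} is pointing at exactly this: the lemma's conclusion is stated at the truncated level, and its untruncated analogue is \emph{deferred to} Proposition~\ref{prop:RKHS}, so you cannot cite it to prove Proposition~\ref{prop:RKHS}.

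What you need is the dilation/power-series argument that the paper uses to show $\overline{\mathrm{span}(\mathcal{S})}^\perp = \{0\}$ in $T_\phi((V))$: take $u = \sum_{k} u_k$ orthogonal to every $S(x)$, apply the scaling $S(\lambda x)^{(k)} = \lambda^k S(x)^{(k)}$ to obtain an entire power series
\[
f_{x,u}(\lambda) = \langle u, S(\lambda x)\rangle_\phi = \sum_{k\geq 0} \phi(k)\,\lambda^k \bigl\langle u_k, S^{(k)}(x)\bigr\rangle_{V^{\otimes k}}
\]
which vanishes identically (the summability condition on $\phi$ together with factorial decay justifies term-by-term differentiation), and conclude that each coefficient $\langle u_k, S^{(k)}(x)\rangle$ is zero. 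This is where the truncated Lemma~\ref{Driver} can legitimately enter: the vanishing for all $x$ and all $k \leq N$ puts the partial sums $u^{(N)}$ orthogonal (in $\langle\cdot,\cdot\rangle_N$) to $\mathrm{span}(\mathcal{S}_N) = T_N(V)$, forcing $u^{(N)} = 0$ for every $N$ and hence $u = 0$. This dilation trick is the ingredient your proposal is missing; once you have it, the rest of your extension and bijectivity argument goes through verbatim.
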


\begin{remark}
If $\ell_f \mapsto \ell_f^*$
denotes the usual isomorphism between $T_{\phi}\left(  \left(  V\right)
\right)  $ and its dual $T_{\phi}\left(  \left(  V\right)  \right)  ^{\ast}$,
then $f(x) = \ell_f^* \left(  S\left(  x\right)
\right)$ for every $x$ in $\mathcal{C}$.
\end{remark}

\begin{proof}
Let $W=\overline{\text{span}\left(  \mathcal{S}\right)  }$ where  the
closure is taken in $T_{\phi}\left(  \left(  V\right)  \right)  $ and let
$W^{\perp}$ be  the orthogonal complement of $W$ in $T_{\phi}\left(  \left(
V\right)  \right)  $. We first show that $W^{\perp}=\left\{  0\right\}  $
is the trivial subspace so that $W=T_{\phi}((V))$. To do so, we take $u=\sum_{k=0}^{\infty}u_{k}$ in
$W^{\perp}$ and then for any $x$ in $\wC$ observe that the function
\begin{equation}
f_{x,u}\left(  \lambda\right)  =\left\langle u,S\left(  \lambda x\right)
\right\rangle _{\phi}=\sum_{k=0}^{\infty}\phi\left(  k\right)  \lambda
^{k}\left\langle u_{k},S^{\left(  k\right)  }\left(  x\right)  \right\rangle
_{V^{\otimes k}}\label{power series}%
\end{equation}
vanishes identically on $%
\mathbb{R}
.$ The conditions on $\phi$ together with the factorial decay of the ST
terms, Lemma \ref{fac decay} , ensures that the power series in (\ref{power series}) can by
differentiated term-by-term to give
\[
\left.  \frac{\partial^{n}f_{x,u}}{\partial\lambda^{n}}\left(  \lambda\right)
\right\vert _{\lambda=0}=\phi\left(  n\right)  n!\left\langle u_{n},S^{\left(
n\right)  }\left(  \gamma\right)  \right\rangle _{V^{\otimes n}}=0,
\]
for any $n$ in $\mathbb{N}$. It follows that $\sum_{k=0}^{n}u_{k}=u^{\left(  n\right)  }\perp$ span$\left(
\mathcal{S}_{N}\right)  $ w.r.t. the inner-product $\left\langle \cdot
,\cdot\right\rangle _{n}=\sum_{k=0}^{n}\left\langle \cdot,\cdot,\right\rangle
_{V^{\otimes k}}$ for every \thinspace$n$. From Lemma \ref{Driver} we conclude
that $u^{\left(  n\right)  }=0$ for every \thinspace$n$ and hence that $u=0.$

Finally, the map $\Psi: \ell_f \mapsto f$ is an isometry, i.e.
$$\left\vert \left\vert
\Psi\left(\ell_f\right)  \right\vert \right\vert _{\mathcal{H}_{\phi}}=\left\vert
\left\vert \ell_f \right\vert \right\vert _{T_{\phi}\left(  \left(  V\right)
\right)}.$$ 
therefore it extends uniquely to an isomorphism, from which the assertion follows.
\end{proof}

\section{Universal and characteristic signature kernels}

As we have learnt from the preceding discussion, the choice of kernel will dictate the resulting RKHS; in Example \ref{hyp} below we will describe an explicit function which belongs to $\mathcal{H}_{\phi}$ for certain choices of $\phi$ but not others. In other situations we will be interested to understand how well an RKHS can approximate a function in a given class. One important such class is the family of continuous functions on compact sets, and this lead to the following notion of a universal kernel. 

\subsection{Universality of a kernel}

\begin{definition} \label{ccuniv}(Universality) Let $k:\mathcal{C}\times\mathcal{C}\rightarrow\mathbb{R}$ is a continuous,
symmetric, positive definite kernel on $\mathcal{C}$ equipped with a topology. Then we say that $k$ is
universal if, for every compact subset $\mathcal{K\subset}$ $\mathcal{C}$,
the linear span of the set $\left\{  k(\gamma,\cdot) :\gamma\in\mathcal{K}\right\}
$ is dense in $C(\mathcal{K)}$ in the the topology of uniform convergence.
\end{definition}

This notion is sometimes called cc-universality, see \cite{sriperumbudur2010hilbert}. The following proposition gives an equivalent characterisations of this concept. We again specialise to the case of signature kernels, but the argument can be repurposed to a general setting by making appropriate modifications.

As a preliminary, we will adopt the notation $k_\phi^h := \langle h, S(\cdot) \rangle_\phi$, for any $h\in T_{\phi}\left((
V)\right)$.

\begin{proposition}\label{prop: universal equivalence}
Let $k_{\phi}$ denote a signature kernel determined by a weight function $\phi$ such that the conditions of Lemma \ref{lemma:condition_phi} hold, and let $\mathcal{H}_{\phi}$ be the associated RKHS.
Assume that $\mathcal{C}$ is equipped with a topology for which the ST $S:\mathcal{C} \rightarrow T_{\phi}\left(
(V)\right)$ is continuous. Then $k_{\phi}$ is
universal if and only if for any compact subset $\mathcal{K} \subset \mathcal{C}$, the set 
\[
\mathcal{H}_{\phi}|_{\mathcal{K}} := \left\{  k_\phi^h|_{\mathcal{K}}:h\in T_{\phi}\left((
V)\right)  \right\} 
\]
 is dense in $C\left(  \mathcal{K}\right)$ for the topology of uniform convergence, wherein
$k_\phi^h|_{\mathcal{K}}$ denotes the restriction to $\mathcal{K}$ of $k_\phi^h$.
\end{proposition}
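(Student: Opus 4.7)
The plan is to study the bounded linear map $T:T_{\phi}((V))\to C(\mathcal{K})$ defined by $Th=k_{\phi}^{h}|_{\mathcal{K}}$, which links both sides of the equivalence through the identity $k_{\phi}(\gamma,\cdot)|_{\mathcal{K}}=T(S(\gamma))$. Continuity of $T$ (with $C(\mathcal{K})$ carrying the uniform topology) will follow from Cauchy--Schwarz: $|Th(\eta)|\le\|h\|_{\phi}\,\|S(\eta)\|_{\phi}\le M\|h\|_{\phi}$, where $M:=\sup_{\eta\in\mathcal{K}}\|S(\eta)\|_{\phi}$ is finite by compactness of $\mathcal{K}$ together with the assumed continuity of $S:\mathcal{C}\to T_{\phi}((V))$.

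The ``only if'' direction will then be immediate. Since $h\mapsto k_{\phi}^{h}$ is linear, $\mathcal{H}_{\phi}|_{\mathcal{K}}=T(T_{\phi}((V)))$ is a linear subspace of $C(\mathcal{K})$; it contains every $k_{\phi}(\gamma,\cdot)|_{\mathcal{K}}$ (by taking $h=S(\gamma)$), and hence contains the linear span $V_{\mathcal{K}}:=\text{span}\{k_{\phi}(\gamma,\cdot)|_{\mathcal{K}}:\gamma\in\mathcal{K}\}$. Density of $V_{\mathcal{K}}$ in $C(\mathcal{K})$ then forces density of the larger set $\mathcal{H}_{\phi}|_{\mathcal{K}}$.

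The work is in the converse. The key step is to show $\mathcal{H}_{\phi}|_{\mathcal{K}}\subset\overline{V_{\mathcal{K}}}$, where closure is taken in the uniform norm on $\mathcal{K}$; combined with the hypothesised density of $\mathcal{H}_{\phi}|_{\mathcal{K}}$ in $C(\mathcal{K})$, this is exactly universality. Let $L:=\overline{\text{span}\{S(\gamma):\gamma\in\mathcal{K}\}}\subset T_{\phi}((V))$ and orthogonally decompose $T_{\phi}((V))=L\oplus L^{\perp}$. Any $h\in L^{\perp}$ satisfies $\langle h,S(\gamma)\rangle_{\phi}=0$ for every $\gamma\in\mathcal{K}$, so $Th\equiv 0$ on $\mathcal{K}$; consequently $T(T_{\phi}((V)))=T(L)$, i.e., $\mathcal{H}_{\phi}|_{\mathcal{K}}=T(L)$. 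For arbitrary $h\in L$, pick $h_n\in\text{span}\{S(\gamma):\gamma\in\mathcal{K}\}$ with $h_n\to h$ in $T_{\phi}((V))$; writing $h_n=\sum_i c_{i,n}S(\gamma_{i,n})$, we have $Th_n=\sum_i c_{i,n}k_{\phi}(\gamma_{i,n},\cdot)|_{\mathcal{K}}\in V_{\mathcal{K}}$, and continuity of $T$ gives $Th_n\to Th$ uniformly on $\mathcal{K}$, whence $Th\in\overline{V_{\mathcal{K}}}$.

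The only conceptually subtle point, and hence the main obstacle, is noticing that even though the hypothesis supplies approximating functions $k_{\phi}^{h}|_{\mathcal{K}}$ built from arbitrary $h\in T_{\phi}((V))$, while universality demands $h$ in the much smaller span of $\{S(\gamma):\gamma\in\mathcal{K}\}$, any component of $h$ orthogonal to $L$ produces the zero function after restriction to $\mathcal{K}$. This orthogonal projection observation makes the reduction to $L$ automatic and avoids any appeal to Hahn--Banach or to endowing $\mathcal{H}_{\phi}|_{\mathcal{K}}$ with its own quotient RKHS structure.
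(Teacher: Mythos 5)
Your proof is correct, and it takes a genuinely different route from the paper's. To show that density of $\mathcal{H}_\phi|_{\mathcal{K}}$ implies universality, the paper works quantitatively: for a fixed $h$ and $\epsilon > 0$, it covers $\mathcal{K}$ by finitely many $\|\cdot\|_\phi$-balls, takes a continuous partition of unity $\{\rho_i\}$ subordinate to the cover, invokes the density hypothesis to approximate each $\rho_i$ uniformly by some $k_\phi^{g_i}$, and then assembles an explicit $f \in \mathrm{span}\{k_\phi(\gamma,\cdot)|_{\mathcal{K}}\}$ satisfying $\|k_\phi^h - f\|_\infty \leq \epsilon$. Your argument instead treats $T:T_\phi((V))\to C(\mathcal{K})$, $Th=k_\phi^h|_{\mathcal{K}}$, as a bounded linear operator (with $\|T\| \leq \sup_{\eta\in\mathcal{K}}\|S(\eta)\|_\phi$, finite by compactness and continuity of $S$) and exploits the orthogonal decomposition $T_\phi((V)) = L \oplus L^\perp$ with $L=\overline{\mathrm{span}\{S(\gamma):\gamma\in\mathcal{K}\}}$: $T$ annihilates $L^\perp$, and for $h\in L$ the density of $\mathrm{span}\{S(\gamma):\gamma\in\mathcal{K}\}$ in $L$ together with continuity of $T$ pushes the approximants into $V_{\mathcal{K}}$. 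This is shorter and in fact sharper: you obtain the \emph{unconditional} identity $\overline{V_{\mathcal{K}}} = \overline{\mathcal{H}_\phi|_{\mathcal{K}}}$ in the uniform topology for every compact $\mathcal{K}$, which makes the stated equivalence immediate, and you make no use of the density hypothesis in the containment step. What the paper's covering argument buys in exchange is an explicit, constructive error bound relating the quality of approximation of a given $k_\phi^h$ to the covering number of $\mathcal{K}$ and the norms involved, which the abstract operator argument does not produce.
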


\begin{proof}
That the condition is sufficient to ensure that $k_{\phi}$ is universal is clear from the
inclusion $\mathcal{S\subset}T_{\phi}\left((  V) \right)  .$ To show that it is
also necessary we will  prove that for any $h\in T_{\phi}\left(  (V)\right)  $
the function $k_\phi^h$ belongs to the closure of the span of $\left\{
k_\phi(\gamma, \cdot):\gamma\in\mathcal{K}\right\}  $ in the uniform topology on
$C\left(  \mathcal{K}\right)  .$ To this end, we fix $\epsilon>0$ and make use
of the compactness of $\mathcal{K}$ to first find a finite collection
$\left\{  \gamma_{i}\right\}  _{i=1}^{n}\subset\mathcal{K}$ such that
$\mathcal{K}$ is covered by the open sets
\begin{align*}
U_{i} &  =\left\{  \gamma\in\mathcal{K}:\left\vert \left\vert S\left(
\gamma_{i}\right)  -S\left(  \gamma\right)  \right\vert \right\vert _{\phi
}<\delta\left(  \epsilon\right)  \right\}  =S^{-1}\left(  B_{\left\Vert
\cdot\right\Vert _{\phi}}\left(  S\left(  \gamma_{i}\right)  ,\delta\left(
\epsilon\right)  \right)  \right)  \cap\mathcal{K},\text{ with}\\
\text{ }\delta\left(  \epsilon\right)   &  =\frac{\epsilon}{4\left\Vert
h\right\Vert _{\phi}}.
\end{align*}
We note that the openness of $U_{i}$ in $\mathcal{K}$ is guaranteed by the
continuity of the ST $S:$ $\mathcal{C}\rightarrow T_{\phi}\left(
\left(  V\right)  \right)  $ from which we also have
\[
K:=\sup_{\gamma\in\mathcal{K}}\left\Vert S\left(  \gamma\right)  \right\Vert
_{\phi}<\infty.
\]
The compactness of $\mathcal{K}$ also ensures the existence of a continuous
partition of unity $\left\{  \rho_{i}\right\}  _{i=1}^{n}$ which is
subordinate to the cover $\left\{  U\right\}  _{i=1}^{n}.$ This means that
each $\rho_{i}:\mathcal{K\rightarrow}\left[  0,1\right]  $ belongs to
$C\left(  \mathcal{K}\right)  ,$ is such that supp$\rho_{i}\subset U_{i},$ and
that  $\sum_{i=1}^{n}\rho_{i}\left(  \gamma\right)  =1$ for all $\gamma$ in
$\mathcal{K\,}$. The universality of $k_{\phi}$ guarantees that we can find
$g_{i}$ in $T_{\phi}\left((  V )\right)  $ for every $i=1,...,n$ so that
\begin{equation}
\sup_{\gamma\in\mathcal{K}}\left\vert \rho_{i}\left(  \gamma\right)
- k_\phi^{g_i}\left(  \gamma\right)  \right\vert \leq\frac{\epsilon}{2n\left\Vert
h\right\Vert _{\phi}K}.\label{est unity}%
\end{equation}
We can then define a function $f$ in the span of $\left\{  k_\phi(\gamma,\cdot)
:\gamma\in\mathcal{K}\right\}  $ by
\begin{equation}
f\left(  \gamma\right)  =\sum_{i=1}^{n}c_{i}k_\phi^{g_{i}}\left(  \gamma
_{i}\right)  k_\phi(\gamma_{i},\gamma)  \,\text{, where }c_{i}%
=\frac{k_\phi^h\left(  \gamma_{i}\right)  }{k_\phi(\gamma_i, \gamma_i)}=\frac{k_\phi^h\left(  \gamma_{i}\right)  }{\left\Vert S\left(
\gamma_{i}\right)  \right\Vert _{\phi}^{2}}.\label{k}%
\end{equation}
By simple estimates we obtain
\begin{equation}
\left\vert k_\phi^h\left(  \gamma\right)  -f\left(  \gamma\right)  \right\vert
\leq\sum_{i=1}^{n}\rho_{i}\left(  \gamma\right)  \left\vert k_\phi^h\left(
\gamma\right)  -c_{i}k_\phi(\gamma_{i}, \gamma)  \right\vert
+\sum_{i=1}^{n}\left\vert c_{i}\right\vert \left\vert \rho_{i}\left(
\gamma\right)  -k_\phi^{g_{i}}\left(  \gamma_{i}\right)  \right\vert \left\vert
k_\phi(\gamma_i, \gamma)  \right\vert .\label{bound part}%
\end{equation}
While for each $\gamma$ in $U_{i}$ we have the bound
\[
\left\vert k_\phi^h\left(  \gamma\right)  -c_{i}k_\phi(\gamma_i, \gamma)  \right\vert \leq\left\vert k_\phi^h\left(  \gamma\right)
-k_\phi^h\left(  \gamma_{i}\right)  \right\vert +\left\vert c_{i}\right\vert
\left\vert k_\phi(\gamma_i, \gamma_i)  -k_\phi(\gamma_i, \gamma)  \right\vert \leq\frac{\epsilon}{2},
\]
which we can use in (\ref{bound part}) with (\ref{est unity}), (\ref{k}) and
the estimate $\left\vert k_\phi(\gamma_i, \gamma)  \right\vert \leq
K\left\Vert S\left(  \gamma_{i}\right)  \right\Vert _{\phi}$ to obtain
\[
\left\vert k_\phi^h\left(  \gamma \right)  -f\left(  \gamma\right)
\right\vert \leq\frac{\epsilon}{2}+\sum_{i=1}^{n}\frac{\epsilon}{2n}=\epsilon.
\]
\end{proof}

We prove that the class of signature kernels we have considered are universal in the sense defined above.

\begin{theorem}\label{universal}(Universality of signature kernels)
Let $k_{\phi}$ denote a signature kernel determined by a weight function $\phi$ such that the conditions of Lemma \ref{lemma:condition_phi} hold, and let $\mathcal{H}_{\phi}$ denote the associated RKHS.
Assume that $\mathcal{C}$ is equipped with a topology with respect to which the ST $S:\mathcal{C} \rightarrow T_{\phi}\left(
(V)\right)$ is continuous. Then, $k_{\phi}$ is universal in the sense of Definition \ref{ccuniv}. 
\end{theorem}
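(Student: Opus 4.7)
The plan is to invoke Proposition \ref{prop: universal equivalence}, which reduces universality of $k_\phi$ to showing that for every compact $\mathcal{K} \subset \mathcal{C}$ the set $\mathcal{H}_\phi|_\mathcal{K} = \{k_\phi^h|_\mathcal{K} : h \in T_\phi((V))\}$ is uniformly dense in $C(\mathcal{K})$. I would establish density already at the level of the smaller subset $\mathcal{A} := \{k_\phi^h|_\mathcal{K} : h \in T(V)\}$ consisting only of those $h$ that are tensor polynomials; density of $\mathcal{A}$ in $C(\mathcal{K})$ trivially upgrades to density of $\mathcal{H}_\phi|_\mathcal{K}$ in $C(\mathcal{K})$.

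The key observation is that $\mathcal{A}$ is, up to a harmless relabelling dictated by $\phi$, the same algebra that appears in Theorem \ref{thm: universal approximation}. Fix an orthonormal basis $\{e_i\}$ of $V$ and its induced tensor basis $\{e_I\}$. Because $\phi(k) > 0$ for every $k \in \N \cup \{0\}$, the componentwise rescaling
\[
h=\sum_I h_I e_I \;\longmapsto\; f_h := \sum_I \phi(|I|)\,h_I\,e_I^*
\]
is a linear bijection between tensor polynomials in $T(V)$ and dual polynomials in $T(V^*) \cong T((V))^*$ (via the identification in the remark following the dual-space lemma), and it satisfies
\[
k_\phi^h(\gamma) = \langle h, S(\gamma)\rangle_\phi = (f_h, S(\gamma)) = \Phi_{f_h}(\gamma).
\]
Consequently $\mathcal{A} = \{\Phi_f|_\mathcal{K} : f \in T((V))^*\}$, which is exactly the class $\mathcal{A}_\mathcal{K}$ of Theorem \ref{thm: universal approximation}.

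To apply that theorem in the present topological setting I need to verify that every element of $\mathcal{A}$ lies in $C(\mathcal{K})$. For $f \in T((V))^*$ arising from a polynomial in $T(V^*)$, the pairing $(f,\cdot)$ coincides with the $\phi$-inner product against a finitely supported element of $T_\phi((V))$, so it is a continuous linear functional on the Hilbert space $T_\phi((V))$; composing with the hypothesised continuous map $S:\mathcal{C} \to T_\phi((V))$ shows $\Phi_f$ is continuous on $\mathcal{C}$, and hence on $\mathcal{K}$. Theorem \ref{thm: universal approximation} then delivers the required uniform density of $\mathcal{A}$ in $C(\mathcal{K})$, and Proposition \ref{prop: universal equivalence} completes the proof.

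The main technical content is packaged inside Theorem \ref{thm: universal approximation}: the shuffle identity of Theorem \ref{lemma:shuffle_identity} guarantees that $\mathcal{A}$ is a subalgebra, constants come from $h = \phi(0)^{-1}\mathbf{1}$, and point separation from the fact that any two distinct signatures differ in some coordinate iterated integral. I anticipate no serious obstacle beyond the bookkeeping associated with the bijection $h \leftrightarrow f_h$ (whose invertibility is exactly what requires the strict positivity $\phi > 0$ built into \Cref{defintion: weighted spaces}) and the routine continuity check.
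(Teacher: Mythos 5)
Your proof is correct, and it takes a genuinely different (and slicker) route than the paper's. The paper also reduces to Proposition \ref{prop: universal equivalence}, but it then verifies directly that $\mathcal{A} := \{k_\phi^h|_\mathcal{K} : h \in T(V)\}$ is a subalgebra of $C(\mathcal{K})$ by introducing a \emph{weighted} shuffle product $\shuffle_\phi$ on $T(V)$, defined level-wise by $h \shuffle_\phi g = \frac{\phi(n)\phi(m)}{\phi(n+m)}\, h \shuffle g$ for $h \in V^{\otimes n}$, $g \in V^{\otimes m}$, so that $\langle h, S(\gamma)\rangle_\phi \langle g, S(\gamma)\rangle_\phi = \langle h \shuffle_\phi g, S(\gamma)\rangle_\phi$, and then applies Stone--Weierstrass to $\mathcal{A}$ directly. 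You instead absorb the weight entirely into the rescaling bijection $h \mapsto f_h = \sum_I \phi(|I|)\,h_I\,e_I^*$, under which $k_\phi^h = \Phi_{f_h}$, so that $\mathcal{A}$ coincides with the class $\mathcal{A}_\mathcal{K}$ of Theorem \ref{thm: universal approximation} and the result follows wholesale from that theorem (which internally uses the \emph{ordinary} shuffle identity). The two routes are technically equivalent --- your bijection intertwines $\shuffle_\phi$ on $T(V)$ with $\shuffle$ on $T(V^*)$ --- but yours avoids introducing a new product and delegates all the Stone--Weierstrass bookkeeping to the earlier theorem, whereas the paper's version has the mild advantage of making the $\phi$-shuffle structure explicit, which the authors apparently wanted to record. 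Your continuity verification (a polynomial $f_h \in T(V^*)$ is represented as $\langle h, \cdot\rangle_\phi$ against the finite-norm element $h \in T(V) \subset T_\phi((V))$, hence bounded, hence $\Phi_{f_h} = $ bounded functional composed with the continuous $S$) is exactly what is needed to meet the ``$\mathcal{A}_K \subset C(K)$'' hypothesis of Theorem \ref{thm: universal approximation}, and your appeal to strict positivity $\phi > 0$ for invertibility of $h \mapsto f_h$ is the same hypothesis the paper uses implicitly in forming the ratio $\phi(n)\phi(m)/\phi(n+m)$.
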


\begin{proof}
The continuity and symmetry of $k_{\phi}:\mathcal{C}\times\mathcal{C}
\rightarrow\mathbb{R}$ follows from the second assumption and $k_{\phi}\left(
\gamma,\sigma\right)  =\left\langle S\left(  \gamma\right)  ,S\left(
\sigma\right)  \right\rangle _{\phi}.$ Taking advantage of Proposition
\ref{prop: universal equivalence} it suffices to prove that the set $\left\{ k_\phi^h|_\mathcal{K} : h \in
T_{\phi}\left((  V)\right) \right\}  $ is dense in $C\left(  \mathcal{K}%
\right).$ To do so we note that for any $h$ and $g$ in $T\left(  V\right)  $
and $\gamma$ in $\mathcal{K}$ it holds that
\[
\left\langle h,S\left(  \gamma\right)  \right\rangle _{\phi}\left\langle
g,S\left(  \gamma\right)  \right\rangle_{\phi}=\left\langle h\shuffle
_{\phi}g  ,S\left(  \gamma\right)  \right\rangle _{\phi},
\]
where $\shuffle_{\phi}:T\left(  V\right)  \times T\left(  V\right)  \rightarrow
T\left(  V\right)  $ is the bilinear map determined by%
\[
h\shuffle_{\phi}g  =\frac{\phi\left(  n\right)
\phi\left(  m\right)  }{\phi\left(  n+m\right)  }h\shuffle g, \text{ for } h \in V^{\otimes n}, g \in V^{\otimes m}.
\]
Note that when $\phi \equiv 1$
this is just the usual shuffle product. 

From this it follows that $\left\{ k^h_\phi|_\mathcal{K} : h \in
T_{\phi}\left((  V)\right) \right\}  $  contains the algebra 
\[
\mathcal{A} = \left\{k^h_\phi|_\mathcal{K} : h \in T\left(  V\right)
\right\}.
\]
We can then conclude by using the Stone-Weierstrass Theorem,
since $\mathcal{A}$ can be seen to contain the constant function
$k_\phi^{\mathbf{1}}|_{\mathcal{K}}$ and also to separate points in $\mathcal{K}$.

\end{proof}

\begin{remark}
There are different ways to choose a topology on $\mathcal{C}$. We have seen some discussion of this already in Section \ref{topology}; we refer to \cite{cass2022topologies} for a deeper discussion.
\end{remark}

Beyond consideration of its universality, an important factor in the selection of an appropriate kernel is the efficiency with which its RKHS can represent functions of interest. In principle functions which can be represented by a single element of  $\mathcal{H}%
_{\phi}$ for one choice of $\phi$ may have a much more complex (and only approximate) representation using RHKS elements for another choice. The following example, which is originally from \cite{cass2021general}, gives a concrete illustration of the principle.
\begin{example}[\cite{cass2021general}] \label{hyp}
(Hyperbolic development map) Let $V=\mathbb{R}^{d}$ with the Euclidean inner product and $N:=d+1$. Suppose that $F:\mathbb{R}^{d}\rightarrow \mathcal{M}_{N}$ is the linear map taking values in the space $\mathcal{M}_{N}$ of $N$ by $N$ real matrices and being defined by%
\[
F\left(  x\right)  =\left(
\begin{array}
[c]{cc}%
0 & x\\
x^{T} & 0
\end{array}
\right)  .
\]
There exists a unique solution $y=\left(  y^{i,j}\right)  _{i,j=1,...,N}$ in
$\mathcal{M}_{N}$ to the CDE
\[
dy_{t}=F\left(  d\gamma_{t}\right)  \cdot y_{t}\text{, started at }%
y_{0}=I_{N}\text{ with }t\in\left[  0,1\right]  ,
\]
where $\cdot$ denotes matrix multiplication and $I_{N}$ is the identity
matrix in $\mathcal{M}_{N}$. The real-valued function defined by
\[
\Psi :\gamma\mapsto y_{1}^{N,N}%
\]
is invariant on tree-like equivalence classes and it therefore induces a function on $\mathbb R^{\mathcal{C}}$. It can be shown that this function has the form%
\[
\Psi\left(  \left[  \gamma\right]  \right)  =\left\langle \mathbb{E}\left[
S\left(  \circ B\right)  _{0,1}\right]  ,S\left(  \gamma\right)  \right\rangle
_{\phi}\text{ with }\phi\left(  k\right)  :=2^{k/2}\left(  \frac{k}{2}\right)
!\text{ for }k\in%
\mathbb{N}
\cup\left\{  0\right\}  ,
\]
where $S\left(  \circ B\right)  _{0,1}$ denotes the Stratonovich ST of
$d$-dimensional standard Brownian motion. Using the Fawcett-Lyons-Victoir formula \cite{fawcett2002problems} (see Exercise \ref{ex:expected_sig}) we
have that
\[
A:=\mathbb{E}\left[  S\left(  \circ B\right)  _{0,1}\right]  =\exp\left(
\frac{1}{2}\sum_{i=1}^{d}e_{i}^2\right),
\]
which is easily verified to be in $T_{\phi}\left(  (V) \right)  $ and hence $\Psi$ is in $\mathcal{H}_{\phi}$. 

On the other hand, $\Psi$ is not in the RKHS $\mathcal{H}_{\mathbf 1}$, corresponding to $\phi \equiv 1$. We can prove this by contradiction: if $\Psi$ were in this RKHS then, by Proposition \ref{prop:RKHS}, there would need to exist $\tilde{A}$ in $T_{\mathbf 1} \left( ( V )\right)$ which allows $\Psi$ to be realised
as $\Psi \left(  \left[  \gamma\right]  \right)  = \langle \tilde
{A},S\left(  \gamma\right) \rangle_{\mathbf 1} $. Any such $\tilde{A}%
=\sum_{k\geq0}\tilde{A}_{k}$ would then need to satisfy (again, see Exercise \cref{ex:expected_sig})
\[
\tilde{A}_{2k}=2^{k}k!\pi_{2k}\exp\left(  \frac{1}{2}\sum_{i=1}^{d}e_{i}%
^{2}\right)  =\left(  \sum_{i=1}^{d}e_{i}^{2}\right)  ^{k},
\]
where $\pi_{n}:T((V)) \mapsto V^{\otimes n}$ is the canonical projection. This would imply that 
\[
\sum_{k=0}^{\infty} ||\tilde{A}_{2k}||^{2}=\infty
\]
which would contradict $\tilde{A}$ being an element of $T_{\mathbf{1}} \left( ( V )\right)$. 
\end{example}

\subsection{Characteristic kernels}

Another important property that a kernel can possess is that of being characteristic. Loosely speaking, a kernel $k$ is said characteristic to a topological space if for any (Borel) probability measure $\mu$ on that space, the $\mu$-expectation of the feature map $k(x,\cdot)$ uniquely characterises $\mu$, i.e it distinguishes the measure from any other measure. In what follows we make this statement rigorous for signature kernels.

Throughout this section we will assume that that $\phi$ is a weight function satisfying the conditions of Lemma \ref{lemma:condition_phi} and that $\mathcal{C}$ is equipped with a topology with respect to which the ST $S:\mathcal{C} \rightarrow T_{\phi}\left(
(V)\right)$ is continuous. We denote by $\mathcal{P}(\mathcal{K})$ the set of Borel probability measures on a compact set $\mathcal{K} \subset \mathcal{C}$, which by the Riesz representation theorem is the dual space of $C(\mathcal{K})$. Unless stated otherwise, we endow $\mathcal{P}(\mathcal{K})$ with the topology of weak convergence. Because any continuous function on a compact set is also bounded, for any measure $\mu \in \mathcal{P}(\mathcal{K})$ the following condition holds 
\begin{equation}\label{eqn:finite_exp_kernel}
    \mathbb{E}_{x \sim \mu}\left[\sqrt{k_\phi(x,x)}\right] < +\infty.
\end{equation}
To define what we mean by characteristic kernel, it is convenient to introduce the notion of \emph{kernel mean embedding} (KME).
\begin{definition}\label{def:KME}
    The $\phi$-signature kernel mean embedding (KME) is the function $M^\phi :\mathcal{P}(\mathcal{K}) \to \mathcal{H}_\phi$ that maps a Borel probability measure $\mu \in \mathcal{P}(\mathcal{K})$ to the element
    \begin{equation}\label{eqn:KME}
        M^\phi_\mu := \mathbb{E}_{x \sim \mu}[k_\phi(x,\cdot)] \in \mathcal{H}_\phi
    \end{equation}
    which is well-defined as a Bochner integral since the condition (\ref{eqn:finite_exp_kernel}) holds. 
    We say that the $\phi$-signature kernel $k_\phi$ is characteristic to the compact set $\mathcal{K}$ if the map $M^\phi$ is injective on $\mathcal{P}(\mathcal{K})$.
\end{definition}

An important result from the theory of kernels relates to the equivalence between the notions of universality, characteristicness and strict positive-definiteness of a kernel. In fact, these three notions can be shown to be equivalent for kernels defined over general locally convex topological vector spaces \cite[Theorem 6]{simon2018kernel}. In the following result we consider the simplified setting of compact sets of unparameterised paths. 

\begin{theorem}\cite[Simplified version of Theorem 6]{simon2018kernel}
    The following three statements are equivalent:
    \vspace{-0.3cm}
    \begin{enumerate}
        \item $k_\phi$ is universal.
        \item $k_\phi$ is characteristic to $\mathcal{P}(\mathcal{K})$.
        \item $k_\phi$ is strictly positive definite on $\mathcal{K}$.
    \end{enumerate}
\end{theorem}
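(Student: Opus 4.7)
The plan is to establish the cyclic chain (1) $\Rightarrow$ (2) $\Rightarrow$ (3) $\Rightarrow$ (1) via duality together with the structural results already proved in the chapter.

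For (1) $\Rightarrow$ (2), I would assume universality and take $\mu,\nu\in\mathcal{P}(\mathcal{K})$ with $M^{\phi}_{\mu}=M^{\phi}_{\nu}$. Evaluating this identity pointwise at $x\in\mathcal{K}$, via the reproducing property and Fubini (legitimate since $k_\phi$ is continuous and $\mathcal{K}$ compact), yields $\int_\mathcal{K} k_\phi(x,y)\,d(\mu-\nu)(y)=0$ for every $x$. Thus the finite signed Borel measure $\eta:=\mu-\nu$ annihilates $\mathrm{span}\{k_\phi(x,\cdot):x\in\mathcal{K}\}$. Since this span is uniformly dense in $C(\mathcal{K})$ by (1), and $\eta$ acts as a bounded linear functional on $(C(\mathcal{K}),\|\cdot\|_\infty)$ of norm $\|\eta\|_{\mathrm{TV}}$, it must vanish identically on $C(\mathcal{K})$; the Riesz--Markov--Kakutani theorem then forces $\eta=0$.

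For (2) $\Rightarrow$ (3), I would assume $k_\phi$ is characteristic and suppose toward contradiction that some distinct $x_1,\dots,x_n\in\mathcal{K}$ and non-zero coefficients $(c_i)$ satisfy $\sum_{i,j}c_ic_j k_\phi(x_i,x_j)=0$. Setting $f:=\sum_i c_i k_\phi(x_i,\cdot)$, this gives $\|f\|_{\mathcal{H}_\phi}^{2}=0$, whence $f\equiv 0$ on $\mathcal{K}$. The key observation is that the constant function $\mathbf{1}_\mathcal{K}$ lies in $\mathcal{H}_\phi$: since $\langle\mathbf{1},S(x)\rangle_\phi=\phi(0)$, the element $\phi(0)^{-1}k^{\mathbf{1}}\equiv 1$ is in $\mathcal{H}_\phi$ by Proposition \ref{prop:RKHS}. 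Pairing $f$ with this constant in the RKHS yields $\sum_i c_i=\langle f,1\rangle_{\mathcal{H}_\phi}=0$, so the index sets $I^{\pm}:=\{i:\pm c_i>0\}$ are both non-empty with common mass $A:=\sum_{I^+}c_i=-\sum_{I^-}c_i>0$. The atomic probability measures $\mu:=A^{-1}\sum_{I^+}c_i\delta_{x_i}$ and $\nu:=A^{-1}\sum_{I^-}(-c_i)\delta_{x_i}$ are then distinct (disjoint supports), and the identity $f\equiv 0$ on $\mathcal{K}$ translates directly into $M^{\phi}_{\mu}=M^{\phi}_{\nu}$, contradicting (2).

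For (3) $\Rightarrow$ (1), the cleanest route is to appeal to Theorem \ref{universal}, which already establishes universality of $k_\phi$ over any compact $\mathcal{K}$ unconditionally, by using the sub-algebra generated by $\{k^h_\phi|_\mathcal{K}:h\in T(V)\}$ under the shuffle product $\shuffle_\phi$ together with the Stone--Weierstrass theorem. The main obstacle to an intrinsic proof of this implication (one that genuinely uses (3)) is the well-known gap between the finite-sample strict positive-definiteness of (3) and the integral strict positive-definiteness that would be required to deduce, via a Hahn--Banach annihilator on $C(\mathcal{K})$, that no non-zero signed Borel measure $\eta$ can satisfy $\iint k_\phi\,d\eta\,d\eta=0$; bridging this gap in general demands a delicate weak-$*$ approximation of $\eta$ by atomic signed measures, which the signature-kernel structure sidesteps entirely via the shuffle algebra.
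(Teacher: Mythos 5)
The paper does not prove this theorem; it is cited directly from \cite{simon2018kernel}, so there is no internal proof to compare against. Your proof is correct and stands on its own, so let me review it on its merits.

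Your arguments for $(1)\Rightarrow(2)$ and $(2)\Rightarrow(3)$ are both sound and standard. For $(1)\Rightarrow(2)$ you correctly reduce to the statement that the signed measure $\eta=\mu-\nu$ annihilates the dense span of kernel sections and hence vanishes by Riesz--Markov--Kakutani; the Fubini interchange between Bochner integration and point evaluation is legitimate because $k_\phi$ is bounded and continuous on the compact $\mathcal{K}$. For $(2)\Rightarrow(3)$ the key step, that the constant function lies in $\mathcal{H}_\phi$ (via $k_\phi^{\mathbf 1}=\phi(0)$ together with Proposition \ref{prop:RKHS}), is exactly what is needed to deduce $\sum_ic_i=0$, which in turn lets you split the atomic signed measure $\sum c_i\delta_{x_i}$ into two mutually singular probability measures with equal kernel mean embeddings. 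This is the right argument and it does rely, implicitly, on $\phi(0)>0$, which is indeed built into the hypotheses through the nondegeneracy of $\langle\cdot,\cdot\rangle_\phi$.

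For $(3)\Rightarrow(1)$ your shortcut is logically valid and, honestly, the appropriate one: Theorem \ref{universal} already proves that $k_\phi$ is universal under the standing assumptions on $\phi$ and the topology of $\mathcal{C}$, so the equivalence degenerates for signature kernels into the statement that all three properties hold unconditionally. You are right to flag that an intrinsic proof of $(3)\Rightarrow(1)$ for arbitrary kernels is a subtler matter: the implication used in \cite{simon2018kernel} hinges on integral strict positive definiteness rather than the finite-sample version stated here, and closing that gap in general requires an approximation of arbitrary signed Radon measures by atomic ones. That observation is apt and is worth keeping as a cautionary remark, but it is not a defect of your proof, since the signature-specific Theorem \ref{universal} makes it unnecessary. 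One minor presentational note: for $(2)\Rightarrow(3)$ you should make explicit the normalisation that all $c_i$ may be taken nonzero without loss of generality (otherwise drop the zero coefficients), since the subsequent argument that both $I^+$ and $I^-$ are non-empty relies on it.
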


\begin{remark}
    We note that the previous result holds under the assumption that the RKHS $\mathcal{H}_\phi|_{\mathcal{K}} \hookrightarrow C(\mathcal{K})$ is continuously embedded in $C(\mathcal{K})$, i.e. that $\mathcal{H}_\phi|_{\mathcal{K}}  \subset C(\mathcal{K})$ and that the topology of $\mathcal{H}_\phi|_{\mathcal{K}} $ is stronger than the topology induced by $C(\mathcal{K})$. The reader is invited to show this statement in \Cref{ex:cont_emb}.
\end{remark}

The RKHS distance between two KMEs yields a semi-metric on measures, often called the \emph{maximum mean discrepancy} (MMD), which has found numerous applications in machine learning and nonparametric testing \cite{gretton2012kernel} and that we introduce next. 

\begin{definition}\label{def:MMD}
    For any two Borel probability measures $\mu,\nu \in \mathcal{P}(\mathcal{K})$, the maximum mean discrepancy (MMD) is defined as 
    \begin{equation}\label{eqn:mmd_}
        d_\phi\left(\mu,\nu\right) = \sup_{f\in\mathcal{F}}(\mathbb{E}_{x \sim \mu}[f(x)]  - \mathbb{E}_{y \sim \nu}[f(y)] )
    \end{equation}
    where $\mathcal{F}$ is the unit ball in $\mathcal{H}_\phi$
\end{definition}

It is possible to show (see \Cref{ex:kme_mmd}) that, under appropriate assumptions, the MMD in equation (\ref{eqn:mmd_}) admits the following equivalent expression 
\begin{equation}\label{eqn:mmd}
    d_\phi(\mu, \nu) := \norm{M^\phi_\mu - M^\phi_\nu}_{\mathcal{H}_\phi}.
\end{equation}
Thus, the squared MMD admits the following expression
\begin{equation}\label{eqn:mmd_squared}
    \normalfont
    d_\phi(\mu,\nu)^2 = \mathbb{E}_{x,x' \sim \mu}[k_\phi(x,x')] - 2\mathbb{E}_{x,y \sim \mu \times \nu}[k_\phi(x,y)] + \mathbb{E}_{y,y' \sim \nu}[k_\phi(y,y')],
\end{equation}
where  $x',y'$ are independent copy of $x,y$ with  distribution $\mu, \nu$, respectively.

Consequently, given $m$ sample paths $\{x^i\}_{i=1}^m \sim \mu$ and $n$ sample paths $\{y^j\}_{j=1}^n \sim \nu$, an unbiased estimator of the MMD is given by the following expression
\begin{equation}\label{eqn:estimator_mmd}
    \normalfont
    \hat{d}_\phi(\mu,\nu)^2 = \frac{1}{m(m-1)} \sum_{j\neq i} k(x^i,x^j) - \frac{2}{mn}\sum_{i,j} k(x^i,y^j)  + \frac{1}{n(n-1)}\sum_{j\neq i} k(y^i,y^j).
\end{equation}

\begin{lemma}\label{lemma:char}
For any Borel probability measures $\mu,\nu \in \mathcal{P}(\mathcal{K})$ the following  holds
\normalfont
$$d_\phi(\mu,\nu)=0 \iff \mu = \nu.$$
\end{lemma}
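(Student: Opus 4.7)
The plan is to observe that this is essentially a direct corollary of the universality of $k_\phi$ (Theorem \ref{universal}), combined with the equivalence between universality and the characteristic property stated just before. The forward implication (that $\mu=\nu$ implies $d_\phi(\mu,\nu)=0$) is immediate from the defining expression (\ref{eqn:mmd_squared}), so all the work lies in the reverse implication.

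For the reverse implication, I would proceed directly from the variational form (\ref{eqn:mmd_}) without invoking the characteristic/universal equivalence theorem as a black box. Suppose $d_\phi(\mu,\nu)=0$. Taking any $f \in \mathcal{H}_\phi$ and rescaling to lie in the unit ball $\mathcal{F}$, the definition of the supremum gives $\mathbb{E}_{x\sim\mu}[f(x)] = \mathbb{E}_{y\sim\nu}[f(y)]$ for every $f \in \mathcal{H}_\phi$. Restricting these expectations to $\mathcal{K}$ (where both measures are supported), this says that the restricted functions $\mathcal{H}_\phi|_{\mathcal{K}}$ cannot distinguish $\mu$ from $\nu$ as elements of $C(\mathcal{K})^*$.

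Now I invoke Theorem \ref{universal}: $k_\phi$ is universal, so $\mathcal{H}_\phi|_{\mathcal{K}}$ is dense in $C(\mathcal{K})$ under the uniform topology. Given any $g \in C(\mathcal{K})$ and $\epsilon>0$, pick $f \in \mathcal{H}_\phi$ with $\|g - f|_{\mathcal{K}}\|_\infty < \epsilon$. Since $\mu(\mathcal{K})=\nu(\mathcal{K})=1$, a standard estimate gives
\[
\left| \int g\, d\mu - \int g\, d\nu \right| \leq \left| \int (g-f)\, d\mu \right| + \left| \int f\, d\mu - \int f\, d\nu \right| + \left| \int (f-g)\, d\nu \right| \leq 2\epsilon,
\]
and letting $\epsilon \to 0$ gives $\int g\, d\mu = \int g\, d\nu$ for all $g \in C(\mathcal{K})$.

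Finally, I would conclude by the Riesz representation theorem: two Borel probability measures on the compact (hence in particular a well-behaved Hausdorff topological) space $\mathcal{K}$ that agree on $C(\mathcal{K})$ must coincide. The only subtle point, and the one I would flag as the main technical check, is that the equivalence $d_\phi(\mu,\nu) = \|M^\phi_\mu - M^\phi_\nu\|_{\mathcal{H}_\phi}$ in (\ref{eqn:mmd}) (which makes (\ref{eqn:mmd_}) finite and the supremum attained) relies on the integrability condition (\ref{eqn:finite_exp_kernel}), which holds because $\mathcal{K}$ is compact and $k_\phi$ is continuous — but for the argument above this is strictly unnecessary since I work from (\ref{eqn:mmd_}) directly.
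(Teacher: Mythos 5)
Your proof is correct and follows essentially the same route as the paper's: approximate an arbitrary $f \in C(\mathcal{K})$ uniformly by an RKHS element via universality (Theorem \ref{universal}), split $\left|\int f\,d\mu - \int f\,d\nu\right|$ into three terms by the triangle inequality, kill the middle term using $d_\phi(\mu,\nu)=0$, and conclude by the fact that Borel probability measures on $\mathcal{K}$ agreeing on $C(\mathcal{K})$ coincide (the paper cites \cite[Lemma 9.3.2]{dudley2018real} where you invoke Riesz representation -- both are fine). The only cosmetic difference is that you extract $\mathbb{E}_\mu[f]=\mathbb{E}_\nu[f]$ for $f \in \mathcal{H}_\phi$ directly from the variational form (\ref{eqn:mmd_}) using the symmetry of the unit ball, whereas the paper goes through $M^\phi_\mu = M^\phi_\nu$ and the reproducing property; these are equivalent and your flagged concern about (\ref{eqn:finite_exp_kernel}) is correctly resolved by compactness of $\mathcal{K}$.
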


\begin{proof} (Adapted from \cite{gretton2007kernel})
Clearly, if $\mu = \nu$ then $d_\phi(\mu,\nu)=0$. By \Cref{universal}, the $\phi$ signature kernel $k_\phi$ is universal, hence for any $\epsilon > 0$ and $f \in C(\mathcal{K})$ there exists $g \in \mathcal{H}_\phi$ such that 
$\sup_{x \in \mathcal{K}} |f(x) - g(x)| < \epsilon.$
By the triangle inequality
\begin{align*}
    |\mathbb{E}_{x \sim \mu}[f(x)] - \mathbb{E}_{y \sim \nu}[f(y)]| &\leq |\mathbb{E}_{x \sim \mu}[f(x)] - \mathbb{E}_{x \sim \mu}[g(x)]| \\
    &+ |\mathbb{E}_{x \sim \mu}[g(x)] - \mathbb{E}_{y \sim \nu}[g(y)]| \\
    &+ |\mathbb{E}_{y \sim \nu}[g(y)] - \mathbb{E}_{y \sim \nu}[f(y)]|
\end{align*}
The first and third term satisfy
\begin{align*}
     |\mathbb{E}_{x \sim \mu}[f(x)] - \mathbb{E}_{x \sim \mu}[g(x)]| &\leq \mathbb{E}_{x \sim \mu}|f(x)-g(x)| \leq \epsilon 
     \\
    |\mathbb{E}_{y \sim \nu}[g(y)] - \mathbb{E}_{y \sim \nu}[f(y)]| &\leq \mathbb{E}_{y \sim \nu}|g(y)-f(y)| \leq \epsilon 
\end{align*}
For the second term we have that
\begin{equation*}
    |\mathbb{E}_{x \sim \mu}[g(x)] - \mathbb{E}_{y \sim \nu}[g(y)]| = \langle g, M^\phi_\mu - M^\phi_\nu\rangle_{\mathcal{H}_\phi} = 0 
\end{equation*}
because by assumption $d_\phi(\mu,\nu) = 0$, therefore $M^\phi_\mu = M^\phi_\nu$. Hence
\begin{equation*}
    |\mathbb{E}_{x \sim \mu}[f(x)] - \mathbb{E}_{y \sim \nu}[f(y)]| \leq 2 \epsilon
\end{equation*}
which implies $\mu=\nu$ by \cite[Lemma 9.3.2]{dudley2018real}\label{lemma:hypo}, stating that for any Borel probability measures $\mu,\nu$ on $\wC$, the equality $\mu = \nu$ holds if and only if $\mathbb{E}_{x \sim \nu}[f(x)] = \mathbb{E}_{y \sim \mu}[f(y)]$ for all $f \in C_b(\wC)$, i.e. bounded continuous real-valued functions on $\wC$.
\end{proof}

\begin{remark}
    Without the compactness assumption on $\mathcal{K}$, \Cref{lemma:char} no longer holds, as shown by the following counter-example (\cite[Example B.5]{chevyrev2022signature}). Define $X,Y : [0,1] \to \R^2$ by $X_t = tM^\top$ and $Y_t = tN^\top$ where $M = (M_1,M_2)$ are two-independent lognormal random variables and $N=(N_1,N_2)$ has density 
    \begin{equation*}
        p_N(n_1,n_2) = p_M(n_1,n_2)\prod_{i=1}^2(1 + \sin(2\pi n_i))
    \end{equation*}
    where $p_M$ is the density of the lognormal $M$. It is easy to see that the expected STs of $X$ and $Y$ coincide and therefore that $d_{\phi}(\mu_X,\mu_Y) = 0$, although $X,Y$ have different laws $\mu_X \neq \mu_Y$. We note that it is nonetheless possible to restore characteristicness of the signature kernel in several ways; we refer the interested reader to \cite{chevyrev2022signature, cuchiero2023global}.
\end{remark}

It is natural to ask when convergence in MMD metric is equivalent to weak convergence. In that case, one says that the corresponding kernel metrizes the weak convergence of probability measures. This question has been deeply studied for general kernels in several articles. In particular, \cite{sriperumbudur2010hilbert} presented sufficient conditions under which the MMD metrizes the weak convergence when the underlying space is either $\mathbb R^d$ or a compact metric space. These results were then generalised in \cite{sriperumbudur2016optimal} where the authors showed that any continuous, bounded, integrally strictly positive
definite kernel over a locally compact Polish space such that RKHS-functions are continuous and vanish at infinity metrizes the weak convergence. These results were generalised even further in \cite{simon2020metrizing} where the Polish assumption is dropped in favour of more general Hausdorff spaces. In the following result, we place ourselves in the simplified setting of \cite{sriperumbudur2010hilbert} and show that $\phi$-signature kernels metrize the weak convergence of compactly supported Borel probability measures in $\mathcal{P}(\mathcal{K})$.

\begin{lemma}\label{lemma:kme_weakly_cont}
The kernel mean embedding $M^\phi$ is a  continuous map from $\mathcal{P}(\mathcal{K})$ endowed with the topology of weak convergence to $\mathcal{H}_\phi$ with its Hilbert norm topology.
\end{lemma}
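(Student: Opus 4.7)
The plan is to show sequential continuity of $M^\phi$, which is sufficient since $\mathcal{K}$ is compact inside a Lusin space and therefore $(\mathcal{P}(\mathcal{K}), \text{weak topology})$ is metrizable by the facts recalled in the previous chapter. Suppose $\mu_n \Rightarrow \mu$ in $\mathcal{P}(\mathcal{K})$. Starting from the definition of the KME as a Bochner integral in $\mathcal{H}_\phi$, I would use the reproducing property together with Fubini for Bochner integrals to expand
\begin{equation*}
\|M^\phi_{\mu_n} - M^\phi_\mu\|^2_{\mathcal{H}_\phi} = \mathbb{E}_{x,x' \sim \mu_n}[k_\phi(x,x')] - 2\, \mathbb{E}_{\mu_n \otimes \mu}[k_\phi] + \mathbb{E}_{y,y' \sim \mu}[k_\phi(y,y')],
\end{equation*}
which is exactly the squared-MMD identity (\ref{eqn:mmd_squared}) between $\mu_n$ and $\mu$.

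The next step is to observe that $k_\phi$ is continuous and bounded on $\mathcal{K} \times \mathcal{K}$. Continuity is immediate from the factorisation $k_\phi(\cdot,\cdot) = \langle S(\cdot), S(\cdot)\rangle_\phi$, the joint continuity of the inner product on $T_\phi((V))$, and the assumed continuity of the ST $S: \mathcal{C} \to T_\phi((V))$; boundedness then follows from compactness of $\mathcal{K}$, which also verifies the integrability condition (\ref{eqn:finite_exp_kernel}) needed to make the Bochner integral $M^\phi_\mu$ well defined.

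The analytical core is showing that each of the three terms converges. For the cross term, I would set $g(x) := \int_\mathcal{K} k_\phi(x,y)\, d\mu(y)$; dominated convergence together with continuity and boundedness of $k_\phi$ gives $g \in C(\mathcal{K})$, whence weak convergence yields $\mathbb{E}_{\mu_n \otimes \mu}[k_\phi] = \int g\, d\mu_n \to \int g\, d\mu = \mathbb{E}_{\mu \otimes \mu}[k_\phi]$. For the diagonal term, I would invoke the standard fact that $\mu_n \Rightarrow \mu$ implies $\mu_n \otimes \mu_n \Rightarrow \mu \otimes \mu$ in $\mathcal{P}(\mathcal{K} \times \mathcal{K})$; this can be verified by a Stone-Weierstrass argument, since the linear span of the simple tensors $\{(x,y) \mapsto f(x)g(y) : f, g \in C(\mathcal{K})\}$ is a point-separating subalgebra containing constants, hence uniformly dense in $C(\mathcal{K} \times \mathcal{K})$, and on such test functions product convergence reduces to a product of scalar convergences. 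Applying this with the continuous bounded test function $k_\phi$ gives $\mathbb{E}_{\mu_n \otimes \mu_n}[k_\phi] \to \mathbb{E}_{\mu \otimes \mu}[k_\phi]$, and combining the three limits gives $\|M^\phi_{\mu_n} - M^\phi_\mu\|_{\mathcal{H}_\phi} \to 0$.

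The main subtlety will be justifying the product convergence step cleanly, since we have not imposed a metric on $\mathcal{C}$ a priori; the Stone-Weierstrass route above avoids any recourse to metrizability of $\mathcal{K}\times\mathcal{K}$. The rest of the argument is a direct unrolling of the MMD identity together with continuity and boundedness of $k_\phi$ on the compact $\mathcal{K}$.
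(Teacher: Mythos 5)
Your proof takes essentially the same route as the paper: expand $\|M^\phi_{\mu_n}-M^\phi_\mu\|^2_{\mathcal{H}_\phi}$ via the squared-MMD identity, observe that $k_\phi$ is bounded and continuous on the compact $\mathcal{K}\times\mathcal{K}$, and pass to the limit term by term using weak convergence. The only differences are that the paper invokes weak convergence of the product measures $\mu_n\otimes\mu_n\Rightarrow\mu\otimes\mu$ and $\mu_n\otimes\mu\Rightarrow\mu\otimes\mu$ directly for all three terms and leaves that fact unjustified, whereas you supply a Stone--Weierstrass proof of the product convergence and, for the cross term, substitute an equivalent dominated-convergence argument; you also flag the reduction from continuity to sequential continuity, a point the paper passes over silently.
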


\begin{proof}
Consider a sequence $(\mu_n)_{n \in \mathbb{N}}$ of Borel probability measures on $\mathcal{P}(\mathcal{K})$ converging weakly to a measure $\mu \in \mathcal{P}(\mathcal{K})$. Then
\begin{align*}
    \norm{M^\phi_{\mu_n} - M^\phi_{\mu}}_{\cH_\phi}^2 &= \bE_{(x,x') \sim \mu_n \times \mu_n}[k_\phi(x,x')] - 2\bE_{(x,y) \sim \mu_n \times \mu}[k_\phi(x,x')] \\ &+ \bE_{(y,y') \sim \mu \times \mu}[k_\phi(y,y')]
\end{align*}
where $\mu_n \times \mu_n$ denotes the product measure of $\mu_n$ itself; similarly for $\mu_n \times \mu$ and $\mu \times \mu$. Because $(\mu_n)_{n \in \mathbb{N}}$ converges weakly  in $\mathcal{P}(\mathcal{K})$ to $\mu$, the sequence of product measures $(\mu_n \times \mu_n)_{n \in \mathbb{N}}$ converges weakly  in $\mathcal{P}(\mathcal{K}) \times \mathcal{P}(\mathcal{K})$ to $\mu \times \mu$; similarly for $(\mu_n \times \mu)_{n \in \mathbb{N}}$. By continuity of $k_\phi$ and compactness of $\mathcal{K}$, $k_\phi$ is a bounded continuous function in both of its variables, thus by definition of weak convergence it follows that 
\begin{align*}
    &\bE_{(x,x') \sim \mu_n \times \mu_n}[k_\phi(x,x')] \to \bE_{(x,x') \sim \mu \times \mu}[k_\phi(x,x')], 
    \\
    &\bE_{(x,x') \sim \mu_n \times \mu}[k_\phi(x,x')] \to \bE_{(x,x') \sim \mu \times \mu}[k_\phi(x,x')] 
\end{align*}
as $n \to +\infty$. Thus, $\norm{M^\phi_{\mu_n} - M^\phi_{\mu}}_{\cH_\phi}^2 \to 0$ as $n \to +\infty$ concluding the proof.

\end{proof}

\begin{lemma}
    $\phi$-signature kernels metrize the weak topology on $\mathcal{P}(\mathcal{K})$, i.e. the $\phi$-signature kernel MMD is equivalent to the topology of weak convergence on $\mathcal{P}(\mathcal{K})$.
\end{lemma}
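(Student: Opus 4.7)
The plan is to show that the kernel mean embedding $M^\phi : \mathcal{P}(\mathcal{K}) \to \mathcal{H}_\phi$ is a homeomorphism onto its image, where $\mathcal{P}(\mathcal{K})$ carries the topology of weak convergence and the image inherits the norm topology from $\mathcal{H}_\phi$. Since by definition $d_\phi(\mu,\nu) = \|M^\phi_\mu - M^\phi_\nu\|_{\mathcal{H}_\phi}$, this is exactly the statement that MMD metrizes weak convergence.

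First, I would assemble the three ingredients needed to apply the standard topological fact that a continuous injection from a compact space into a Hausdorff space is a homeomorphism onto its image. Continuity of $M^\phi$ from the weak topology to the Hilbert norm topology is exactly Lemma \ref{lemma:kme_weakly_cont}. Injectivity of $M^\phi$ follows from Lemma \ref{lemma:char}: if $M^\phi_\mu = M^\phi_\nu$ then $d_\phi(\mu,\nu)=0$, and hence $\mu=\nu$. The target space $\mathcal{H}_\phi$, being a normed space, is Hausdorff.

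The remaining ingredient is compactness of $\mathcal{P}(\mathcal{K})$ in the weak topology. Since $\mathcal{K}$ is a compact subset of the Hausdorff space $(\mathcal{C},\chi_{\text{pr}})$, it is itself compact and so, in particular, the family $\{\mu\} \subset \mathcal{P}(\mathcal{K})$ is trivially tight (take $K=\mathcal{K}$ in the definition of tightness). By Prohorov's Theorem, the whole space $\mathcal{P}(\mathcal{K})$ is relatively compact in the weak topology. Because $\mathcal{K}$ is closed in the Lusin space $\mathcal{C}$ it is itself Lusin, and so the weak topology on $\mathcal{P}(\mathcal{K})$ is metrisable; in particular $\mathcal{P}(\mathcal{K})$ is sequentially Hausdorff, and closedness of the set of probability measures under weak limits then gives that $\mathcal{P}(\mathcal{K})$ is compact.

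Combining these facts, $M^\phi$ is a continuous bijection from the compact space $\mathcal{P}(\mathcal{K})$ onto its image $M^\phi(\mathcal{P}(\mathcal{K})) \subset \mathcal{H}_\phi$, which is Hausdorff. It is then a standard exercise that $M^\phi$ is a homeomorphism onto its image: any closed subset of $\mathcal{P}(\mathcal{K})$ is compact, its image under the continuous map $M^\phi$ is compact in $\mathcal{H}_\phi$ and hence closed, so $M^\phi$ is a closed map. Hence $\mu_n \Rightarrow \mu$ in $\mathcal{P}(\mathcal{K})$ if and only if $M^\phi_{\mu_n} \to M^\phi_\mu$ in $\mathcal{H}_\phi$, which is exactly $d_\phi(\mu_n,\mu) \to 0$. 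The main subtlety in this argument is ensuring that the domain $\mathcal{P}(\mathcal{K})$ really is compact and Hausdorff in the weak topology; once those topological facts are in hand, the rest of the proof is a direct application of the compact-to-Hausdorff homeomorphism principle.
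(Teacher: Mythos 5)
Your proof is correct, but it takes a genuinely different route from the paper's. The paper adapts the argument of Sriperumbudur et al.\ directly: one direction is the continuity of $M^\phi$ (Lemma \ref{lemma:kme_weakly_cont}); for the other, given $d_\phi(\mu_n,\mu)\to 0$, it uses universality of $k_\phi$ to approximate any $f\in C_b(\mathcal{K})$ uniformly by some $g\in\mathcal{H}_\phi$, and then bounds $|\mu_n(f)-\mu(f)|\leq 2\epsilon + \|g\|_{\mathcal{H}_\phi}d_\phi(\mu_n,\mu)$ by Cauchy--Schwarz on the reproducing-kernel representation. Your approach instead packages the whole statement as the homeomorphism principle for a continuous bijection from a compact space to a Hausdorff space: $M^\phi$ is continuous (Lemma \ref{lemma:kme_weakly_cont}), injective (Lemma \ref{lemma:char}), and defined on the compact space $\mathcal{P}(\mathcal{K})$, hence a homeomorphism onto its image in $\mathcal{H}_\phi$. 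This is a clean, soft-topological argument, and the compactness of $\mathcal{P}(\mathcal{K})$ that it needs is already invoked elsewhere in the text (in the proof of \Cref{thm:kernel_DR}). The paper's argument is more elementary and makes the dependence on universality explicit; yours is more conceptual and shorter once the topological ingredients are in place, though note that the injectivity you import from \Cref{lemma:char} was itself established via universality, so the same engine powers both proofs at a deeper level. Your discussion of compactness of $\mathcal{P}(\mathcal{K})$ via Prohorov could be trimmed: applying Prohorov with $X=\mathcal{K}$ makes $\mathcal{P}(\mathcal{K})$ the whole space, so relative compactness is compactness directly and no closedness argument is needed.
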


\begin{proof}(Adapted from \cite{sriperumbudur2010hilbert})
    Clearly, if a sequence $\{\mu_n\}_{n \in \mathbb N}$ of  measures converges weakly to a measure $\mu$ in $\mathcal{P}(\mathcal{K})$, then by the same argument as in \Cref{lemma:kme_weakly_cont}, the MMD $d_\phi(\mu_n, \mu) \to 0$ as $n \to \infty$. Conversely, since by \Cref{universal} $\mathcal{H}_\phi|_\mathcal{K}$ is dense in $C_b(\mathcal{K})$ in the topology of uniform convergence, one has that for any $f \in C_b(\mathcal{K})$ and any $\epsilon >  0$, there exists $g \in \mathcal{H}_\phi$ such that $\sup_{x \in \mathcal{K}} |f(x) - g(x)| < \epsilon.$ Hence
    \begin{align*}
        |\mu_n(f) - \mu(f)| &= |\mu_n(f-g) + \mu(g-f) + (\mu_n(g) - \mu(g))| \\
        &\leq \mu_n(|f-g|) + \mu(|f-g|) + |\mu_n(g) - \mu(g)|\\
        &\leq 2 \epsilon + |\mu_n(g) - \mu(g)| \\
        &\leq 2 \epsilon + \norm{g}_{\mathcal{H}_\phi} d_\phi(\mu_n, \mu).
    \end{align*}
    Since $d_\phi(\mu_n, \mu) \to 0$ as $n \to \infty$, the sequence  $\{\mu_n\}_{n \in \mathbb N}$ converges weakly to $\mu$.
\end{proof}

\subsection{Hypothesis testing}

This approach, and variants of it, have been used in the kernel learning to propose statistics for goodness-of-fit tests, see e.g. \cite{gretton2012kernel}. The goal in these
tasks is to use the MMD to design the rejection regions for hypothesis tests to determine whether an observed sample is drawn from a known target distribution. 

In the setting which is of interest to us, namely when the distributions are supported on (compact sets of) unparameterised paths, one is  given i.i.d. sample paths $\{x^1,...,x^m\} \sim \mu$ and $\{y^1,...,y^n\} \sim \nu$ we are interested in testing the \emph{null hypothesis} $H_0 : \mu = \nu$ against the \emph{alternative hypothesis} $H_1 : \mu \neq \nu$. This is achieved by comparing the test statistics $\hat{d}_\phi(\mu,\nu)$ with a particular threshold: if the threshold is exceeded, then the test rejects the null hypothesis. 

The \emph{acceptance region} of the test is the set of scalars below the threshold. A type I error is when $H_0$ is rejected despite being true. A type II error is when $H_0$ is accepted despite being false. The \emph{level} $\alpha$ of a test is an upper bound on the probability of a Type I error: this is a design parameter of the test which must be set in advance, and is used to determine the threshold to which we compare the test statistic. The \emph{power} of a statistical test is defined as $1$ minus the type II error. A test of level $\alpha$ is consistent if it achieves a zero type II error in the limit of infinite data samples. 

\cite[Theorem 10]{gretton2012kernel} states that the estimator (\ref{eqn:estimator_mmd}) is \emph{consistent}, which means that it converges in probability to the true value of the MMD when the number of sample paths goes to infinity. Assuming for simplicity that $m=n$ and that $k_\phi(x_i,y_j)<M$ for all $i=1,...,m$ and $j=1,...,n$, \cite[Corollary 11]{gretton2012kernel} states that the MMD-based hypothesis test of level $\alpha$ for the null hypothesis $H_0$ has the acceptance region 
$$\hat{d}_\phi(\mu,\nu)^2<\frac{4M}{\sqrt{m}}\sqrt{\log(\alpha^{-1})}.$$

We note that the above considerations do not immediately apply to the special case where $\mu$ is determined by the Wiener measure, because the latter is not compactly supported. Nonetheless, the result of \cite{chevyrev2016characteristic}
ensures that $d_{\phi}\left(  \mathcal{W},\nu\right)  =0$ if and only if $\mathcal{W=\nu}
$ still holds. The estimation $\hat{d}_\phi(\mu,\nu)$ from a sample $\left\{y^1,...,y^n\right\}$ drawn from $\nu$ involves especially handling the term
\[
\left\langle \mathbb{E}_{\Gamma\sim\mathcal{W}}[S\left(  \Gamma\right)
],S\left(  y^i \right)  \right\rangle _{\phi}.
\]
which is a statistic than can be used as a goodness-of-fit test to assess how likely are sample paths $\left\{y^1,...,y^n\right\}$ to be Brownian paths. Example \ref{hyp} can be extended to derive a closed-form formula for this expression across a selection of weighted signature kernels. See also \Cref{ex:expected_sig_continued} for an application to cubature on Wiener spaces.

\begin{remark}
Other types of hyptothesis tests can be constructed using the signature kernel to test independence \cite{gretton2007kernel} and conditional independence \cite{zhang2012kernel} for path-valued random variables. 
\end{remark}

\subsection{Distribution regression}

Distribution Regression (DR) refers to the supervised learning problem of regressing from probability measures to scalar targets. In practice, DR becomes useful when labels are only available for groups of inputs rather than for individual inputs. 

In this context, elements of a set are viewed as samples from an underlying probability measure \cite{szabo2016learning, muandet2012learning, flaxman2015machine, smola2007hilbert}. The DR framework can be intuitively summarized as a two-step procedure. Firstly, a probability measure $\mu$ supported on $\mathcal{X}$ is mapped to a point in an RKHS $\mathcal{H}_1$ by means of a kernel mean embedding $\mu \to M_\mu = \int_\mathcal{X} k_1(\cdot,x)\mu(dx)$, where $k_1: \mathcal{X} \times \mathcal{X} \to  \mathbb{R}$ is the associated reproducing kernel. Secondly, the regression is finalized by approximating a function $F:\cH_1 \to\mathbb{R}$ via a minimization of the form $F \approx \argmin_{g \in  \cH_2}\sum_{i=1}^M(y^i - g(M_{\mu^i}))^2$, resulting in a procedure involving a second kernel $k_2:\cH_1 \times \cH_1 \to \mathbb{R}$ with RKHS $\cH_2$.

\begin{figure*}[h]
\centering
\includegraphics[width=\textwidth,trim={0 4.1cm 0 0},clip]{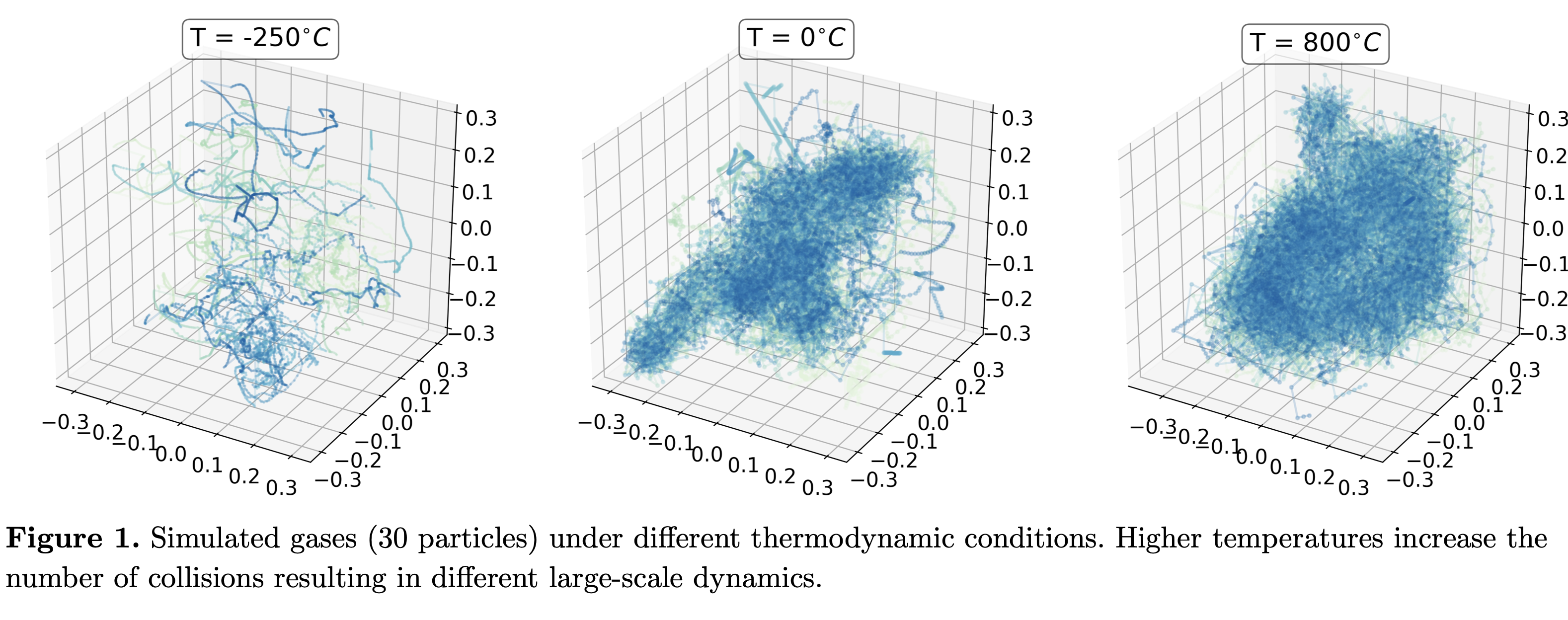}
\caption{\small Simulation of the trajectories traced by $20$ particles of an ideal gas in a $3$-d box under different thermodynamic conditions. Higher temperatures equate to a higher internal energy in the system which increases the number of collisions resulting in different large-scale dynamics of the gas. Figure taken from \cite{lemercier2021distribution}.}
\label{fig:ideal_gas}
\end{figure*}

Here, we are  interested in the case where inputs are Borel probability measures on $\wC$. For instance, in thermodynamics one may be interested in determining the temperature of a gas from the set of trajectories described by its particles \cite{reichl1999modern} as depicted in \Cref{fig:ideal_gas}. For other examples we refer the reader to \cite{lemercier2021distribution}. To address the challenge of DR on paths, we will make use of signature kernels to define a universal kernel on Borel probability measures on $\wC$. The following is \cite[Theorem 3.3]{lemercier2021distribution} adapted to the setting of $\phi$-signature kernels


\begin{theorem}\label{thm:kernel_DR} Let  $\mathcal{P}(\wC)$ denote the space of Borel probability measures on $\wC$ endowed with the topology of weak convergence. Let $k_{\phi}$ denote a signature kernel determined by a weight function $\phi$ such that the condition of Lemma \ref{lemma:condition_phi} holds. Let $f : \mathbb{R} \to \mathbb{R}$ be an entire function, i.e. analytic and with infinite radius of convergence. Then, the kernel $K_\phi:\mathcal{P}(\wC)\times\mathcal{P}(\wC)\to\mathbb{R}$  defined as 
\begin{equation}\label{eqn:univ_kernel}
\normalfont
K_\phi(\mu,\nu) =  f\big(\left\langle M^\phi_\mu, M^\phi_\nu\right\rangle_{\mathcal{H}_\phi}\big)
\end{equation}
is cc-universal on $C(\mathcal{P}(\wC))$ in the same sense as per \Cref{ccuniv}, i.e. for every compact subset $\mathcal{K} \subset \mathcal{C}$,
the associated RKHS is dense in the space of weakly continuous functions from $\mathcal{P}(\mathcal{K})$ to $\mathbb{R}$ in the topology of uniform convergence.
\end{theorem}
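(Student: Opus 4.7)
The plan is to pull back through the kernel mean embedding and then apply a Stone--Weierstrass argument on the image in $\mathcal{H}_\phi$, in close analogy with the universality proof for signature kernels (Theorem \ref{universal}), only lifted one level up to probability measures. Since $\mathcal{K}\subset\mathcal{C}$ is compact, Prokhorov's theorem ensures that $\mathcal{P}(\mathcal{K})$ endowed with the weak topology is compact. The KME $M^\phi:\mathcal{P}(\mathcal{K})\to\mathcal{H}_\phi$, $\mu\mapsto M^\phi_\mu$, is continuous by \Cref{lemma:kme_weakly_cont} and injective by the characteristic property (\Cref{lemma:char}). A continuous injection from a compact Hausdorff space onto its image is a homeomorphism, so $\tilde{\mathcal{K}}:=M^\phi(\mathcal{P}(\mathcal{K}))\subset\mathcal{H}_\phi$ is compact, and $F\mapsto F\circ M^\phi$ is an isometric isomorphism between $C(\tilde{\mathcal{K}})$ and $C(\mathcal{P}(\mathcal{K}))$ with their uniform topologies. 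It therefore suffices to prove density of the linear span of $\{h\mapsto f(\langle h_0,h\rangle_{\mathcal{H}_\phi}) : h_0\in\tilde{\mathcal{K}}\}$ in $C(\tilde{\mathcal{K}})$.

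For this I would run Stone--Weierstrass on the algebra $\mathcal{B}$ generated by these reproducing functions together with the constants. Continuity of each $h\mapsto f(\langle h_0,h\rangle)$ on $\tilde{\mathcal{K}}$ is immediate from continuity of $f$ and of the inner product. Point separation follows from the fact that if $h\neq h'$ in $\tilde{\mathcal{K}}\subset\mathcal{H}_\phi$ then some $h_0\in\tilde{\mathcal{K}}$ satisfies $\langle h_0,h\rangle\neq\langle h_0,h'\rangle$, using that the closed linear span of $\tilde{\mathcal{K}}$ contains $M^\phi_\mu-M^\phi_\nu$ for all $\mu,\nu\in\mathcal{P}(\mathcal{K})$, which together with density of signatures in $T_\phi((V))$ from \Cref{prop:RKHS} ensures enough separating directions. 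Since $f$ is entire and non-constant, a real rescaling $h_0\mapsto\lambda h_0$ guarantees that $f$ does not identify the two inner-product values. Compactness of $\tilde{\mathcal{K}}$ and the Stone--Weierstrass theorem then give uniform density of $\mathcal{B}$ in $C(\tilde{\mathcal{K}})$.

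What remains is to show that the closure of $\mathcal{B}$ coincides with the uniform closure of the linear span of the reproducing functions---only then does the universality of $K_\phi$ in the sense of \Cref{ccuniv} follow. Expanding the Taylor series $f(x)=\sum_{n\ge 0}a_n x^n$ and using $\langle h_0,h\rangle^n=\langle h_0^{\otimes n},h^{\otimes n}\rangle$, a polarisation argument---differentiating $\lambda\mapsto f(\langle\lambda h_0,h\rangle)$ at $\lambda=0$, which is legitimised by entireness of $f$ combined with the factorial-decay structure underlying \Cref{lemma:condition_phi}---extracts each monomial $h\mapsto\langle h_0,h\rangle^n$ from finite linear combinations of functions of the form $f(\langle\lambda h_0,\cdot\rangle)$. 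Pointwise products of such monomials again reduce, via the tensor-power identification, to reproducing functions after enlarging $h_0$ into a tensor, playing the role that the shuffle-product identity played in the proof of \Cref{universal}. This final identification is the main obstacle: one must argue carefully that the tensored base elements so produced can themselves be realised, up to uniform approximation on $\tilde{\mathcal{K}}$, by elements of $\tilde{\mathcal{K}}$ rather than on all of $\mathcal{H}_\phi$. Once this is verified, the algebra closure and the linear-span closure agree, and the theorem follows from the Stone--Weierstrass density established in the previous step.
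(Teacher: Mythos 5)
Your proposal takes a genuinely different route from the paper: the paper simply applies \cite[Theorem 2.2]{christmann2010universal} directly after verifying its hypotheses (that $\mathcal{P}(\mathcal{K})$ is weakly compact, that $\mathcal{H}_\phi$ is separable, and that $\mu \mapsto M^\phi_\mu$ is continuous and injective, the latter two by \Cref{lemma:kme_weakly_cont} and \Cref{lemma:char}), whereas you attempt to re-derive the universality of the Taylor-type kernel from scratch via Stone--Weierstrass on the algebra $\mathcal{B}$ generated by the kernel slices $h \mapsto f(\langle h_0, h\rangle_{\mathcal{H}_\phi})$ over the compact image $\tilde{\mathcal{K}}=M^\phi(\mathcal{P}(\mathcal{K}))$. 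The first two reduction steps you make are sound, and in fact your attribution is slightly cleaner than the paper's (injectivity of $M^\phi$ is \Cref{lemma:char}, not \Cref{lemma:kme_weakly_cont}).

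However, the proof is incomplete, for precisely the reason you flag in the final paragraph. Stone--Weierstrass gives density of the \emph{algebra} $\mathcal{B}$ in $C(\tilde{\mathcal{K}})$, while \Cref{ccuniv} requires density of the \emph{linear span} of the slices, i.e.\ of the RKHS of $K_\phi$. These closures are not obviously the same: a pointwise product $f(\langle h_0,\cdot\rangle)\,f(\langle h_1,\cdot\rangle)$ need not lie in the RKHS of $K_\phi$, so $\mathcal{B}$ is not contained in the RKHS and the density of $\mathcal{B}$ does not transfer. (For the signature kernel itself this obstruction was resolved by the shuffle identity, which places products of linear functionals of the signature back inside the same class, as used in the proof of \Cref{universal}; no analogous algebraic identity is available for the Taylor-type kernel $f(\langle\cdot,\cdot\rangle)$.) Your polarisation sketch --- differentiating $\lambda \mapsto f(\langle \lambda h_0, h\rangle)$ to isolate the monomials $\langle h_0, h\rangle^n$ --- points toward the right mechanism, but the step you identify as ``the main obstacle'' is exactly the substance of \cite[Theorem 2.2]{christmann2010universal}: one must show that the closed linear span of the feature-map image $\Phi(\mu)=(\sqrt{a_n}(M^\phi_\mu)^{\otimes n})_n$ in $\bigoplus_n\mathcal{H}_\phi^{\otimes n}$ is large enough, which requires $a_n > 0$ and a careful argument that you leave unfinished. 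As written the proposal therefore has a genuine gap, and a complete proof would either cite the Christmann--Steinwart result as the paper does, or carry out that argument in full rather than passing through the algebra $\mathcal{B}$.
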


\begin{proof}[Proof of \Cref{thm:kernel_DR}]
By \cite[Theorem 2.2]{christmann2010universal} if $\widehat{\mathcal{K}}$ is a compact metric space and $\mathcal{H}$ is a separable Hilbert space such that there exists a continuous and injective map $\rho : \widehat{\mathcal{K}} \to \mathcal{H}$, then the RKHS of any kernel $k : \widehat{\mathcal{K}}\times \widehat{\mathcal{K}} \to \mathbb{R}$ of the form
\begin{equation*}
    k(x,y) = f(\langle \rho(x), \rho(y)\rangle_\mathcal{H})
\end{equation*}
is dense in $C(\widehat{\mathcal{K}})$ in the topology of uniform convergence. If $\mathcal{K} \subset \wC$ is a compact set of unparameterised paths, then the set $\mathcal{P}(\mathcal{K})$ is weakly-compact (e.g. \cite[Theorem 10.2]{walkden2014ergodic}). 
By \Cref{lemma:kme_weakly_cont} the signature kernel mean embedding map is injective and weakly continuous. Furthermore $\mathcal{H}_\phi$ is a Hilbert space with a countable basis, hence it is separable. Setting $\widehat{\mathcal{K}}=\mathcal{P}(\mathcal{K}), \mathcal{H} = \mathcal{H}_\phi$ and $\rho : \mu \to M^\phi_{\mu}$  concludes the proof. 
\end{proof}

\begin{example}
By \Cref{thm:kernel_DR}, the following Gaussian type kernel 
\begin{equation*}
    \normalfont
    K_\phi(\mu,\nu) =  \exp\left(-\frac{d_\phi(\mu,\nu)^2}{\sigma^2}\right), \quad \sigma \in \mathbb{R},
\end{equation*}
is a universal kernel on $\mathcal{P}(\wC)$, where $d_\phi$ is the $\phi$-signature MMD of equation (\ref{eqn:mmd_}). 
\end{example}

\begin{remark}
    In light of Theorem \ref{thm:kernel_DR}, DR on sequential data can be performed via any kernel method for regression such as SVM, kernel ridge etc. For examples of DR on streams from  thermodynamics, mathematical finance and agricultural science we refer the interested reader to \cite{lemercier2021distribution} and to \cite{cochrane2021sk} for an application to cybersecurity.
\end{remark}

\section{Computing signature kernels}

In this section we present various approaches for computing signature kernels.

\subsection{Truncated signature kernels}

A simple way for approximating a $\phi$-signature kernel of two paths $x,y \in C_1([a,b],V)$ amounts to compute the $\phi$-inner product between the two TSTs truncated at some level $n\in\N$, i.e.
\begin{equation} \label{trunestimate}
    k^{n,x,y}_\phi(s,t) = \sum_{i=1}^n \phi(i) \left\langle S(x)^i_{a,s}, S(y)^i_{a,t}\right\rangle_{V^{\otimes i}}
\end{equation}
The following lemma states that the truncated $\phi$-signature kernel converges to the $\phi$-signature kernel as the truncation $n$ goes to infinity.
\begin{lemma}
    Let $x,y \in C_1([a,b],V)$ and assume that $\phi : \mathbb{N} \cup \{0\} \to \mathbb{R}_{+}$ is such that $\phi$ satisfies the condition of \Cref{lemma:condition_phi}. Then the truncated $\phi$-signature kernel converges to the $\phi$-signature kernel with the following approximation error bound
    \begin{equation*}
        \left|k^{x,y}_\phi(s,t) - k^{n,x,y}_\phi(s,t)\right| \leq \sum_{i=n+1}^\infty \phi(i) \frac{(\norm{x}_{1,[a,s]}\norm{y}_{1,[a,t]})^i}{(i!)^2}
    \end{equation*}
\end{lemma}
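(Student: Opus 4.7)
The plan is to simply write the difference $k^{x,y}_{\phi}(s,t) - k^{n,x,y}_{\phi}(s,t)$ as the tail of the defining series, and then estimate it term by term using Cauchy--Schwarz together with the factorial decay bound of Proposition~\ref{prop:factorial_decay}. Since $\phi \geq 0$, this will be almost mechanical.

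First, by the definition of the $\phi$-signature kernel (Definition~\ref{def:phi_sigker}) and of the $\phi$-inner product (Definition~\ref{defintion: weighted spaces}), together with the definition of the truncated version in equation~(\ref{trunestimate}), I would write
\begin{equation*}
k^{x,y}_{\phi}(s,t) - k^{n,x,y}_{\phi}(s,t) = \sum_{i=n+1}^{\infty}\phi(i)\left\langle S(x)^{(i)}_{a,s},\,S(y)^{(i)}_{a,t}\right\rangle_{V^{\otimes i}},
\end{equation*}
where the convergence of this tail is guaranteed by Lemma~\ref{lemma:condition_phi} applied to both $x$ and $y$ (so both signatures lie in $T_\phi((V))$ and their $\phi$-inner product converges absolutely).

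Next, I would apply the triangle inequality and then the Cauchy--Schwarz inequality on each level $V^{\otimes i}$ (which carries the Hilbert--Schmidt inner product) to obtain
\begin{equation*}
\left|k^{x,y}_{\phi}(s,t) - k^{n,x,y}_{\phi}(s,t)\right| \leq \sum_{i=n+1}^{\infty}\phi(i)\,\bigl\|S(x)^{(i)}_{a,s}\bigr\|_{V^{\otimes i}}\,\bigl\|S(y)^{(i)}_{a,t}\bigr\|_{V^{\otimes i}}.
\end{equation*}

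Finally, I would invoke the factorial decay estimate of Proposition~\ref{prop:factorial_decay} applied to $x$ on $[a,s]$ and to $y$ on $[a,t]$, namely $\|S(x)^{(i)}_{a,s}\|_{V^{\otimes i}} \leq \|x\|_{1,[a,s]}^{i}/i!$ and analogously for $y$, to conclude
\begin{equation*}
\left|k^{x,y}_{\phi}(s,t) - k^{n,x,y}_{\phi}(s,t)\right| \leq \sum_{i=n+1}^{\infty}\phi(i)\,\frac{(\|x\|_{1,[a,s]}\,\|y\|_{1,[a,t]})^{i}}{(i!)^{2}},
\end{equation*}
which is the required bound. Convergence of the truncated kernel to $k^{x,y}_{\phi}(s,t)$ as $n\to\infty$ is then immediate from the hypothesis on $\phi$ in Lemma~\ref{lemma:condition_phi} with $C = \|x\|_{1,[a,s]}\|y\|_{1,[a,t]}$, which ensures the right-hand side tends to zero. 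There is no real obstacle here---the only point requiring any care is ensuring that the Cauchy--Schwarz step is legitimate on $V^{\otimes i}$, which follows from the Hilbert--Schmidt structure introduced earlier.
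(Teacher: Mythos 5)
Your proof follows the paper's argument exactly: write the difference as the tail of the defining series, apply the triangle inequality and Cauchy--Schwarz on each $V^{\otimes i}$, then invoke the factorial decay of Proposition~\ref{prop:factorial_decay} applied on $[a,s]$ and $[a,t]$, and conclude convergence from the summability hypothesis on $\phi$. Nothing to add.
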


\begin{proof}
    By Cauchy-Schwarz we get
    \begin{align*}
         \left|k^{x,y}_\phi(s,t) - k^{n,x,y}_\phi(s,t)\right| 
         &\leq \sum_{i=n+1}^\infty |\phi(i)| \left|\big\langle S(x)^i_{a,s}, S(y)^i_{a,t}\rangle_{V^{\otimes i}}\right| \\
         &\leq \sum_{i=n+1}^\infty |\phi(i)| \norm{S(x)^i_{a,s}}_{V^{\otimes i}}\norm{S(y)^i_{a,t}}_{V^{\otimes i}}\\
         &\leq \sum_{i=n+1}^\infty |\phi(i)| \frac{\norm{x}_{1,[a,s]}^i\norm{y}_{1,[a,t]}^i}{(i!)^2}
    \end{align*}
    where the last inequality follows from  \Cref{prop:factorial_decay}. The fact that this sum is finite is a consequence of the summability assumption on $\phi$.
\end{proof}

Computing signature kernels by taking inner products of truncated signatures is somewhat unsatisfying because it still relies on some means of computing iterated
integrals. This in general is both computationally expensive from a space complexity viewpoint but is also goes against the general philosophy of of kernel methods which is to avoid a direct evaluation of the underlying feature map. These drawback are particularly apparent in situations where the input streams take their values in a high dimensional space $V$ due to the exponential explosion of terms in the ST. It also affects considerations of accuracy. As can be see from the estimate in (\ref{trunestimate}) the $n$-truncated signature kernel will be good approximation to its untruncated counterpart only when $n!$ starts to dominates the $n^{\text{th}}$ power of the product of the length of the two paths $x$ and $y$. In particular, for paths having very large $1$-variation a significant number of signature levels may be needed. 

\subsection{Finite difference schemes for the signature kernel PDE}\label{sec:finite_difference}

In view of our prior discussion, it is natural to ask whether it is possible to obtain a kernel trick allowing to compute the signature kernel without referring back to the ST. In the case where the weight function is constant, we have already established that, for any two paths $x,y \in C_1([a,b],V)$, the $\phi$-signature kernel $k^{x,y}$ satisfies the integral equation (\ref{eqn:int_sigker_original}), which is equivalent to PDE (\ref{eqn:kernel_PDE}) if the paths are differentiable. 

Consider the practical situation where $x,y$ are obtained by piecewise linear interpolation of two time series $\mathbf x = ((s_0,\mathbf x_0), ..., (s_m,\mathbf x_m))$ and  $\mathbf y = ((t_0,\mathbf y_0), ..., (t_n,\mathbf t_n))$, with $s_0=t_0=a$ and $s_m=t_n=b$.  

Because for any $0 \leq i \leq m$ and $0 \leq j \leq n$ the restrictions $x|_{[s_i,s_{i+1}]}$ and $y|_{[s_i,s_{i+1}]}$ are linear paths in $V$, for $s_i\leq u \leq s \leq s_{i+1}$ and $t_j \leq v \leq t \leq t_{j+1}$, the signature kernel integral equation (\ref{eqn:int_sigker_original}) becomes
\begin{equation}\label{eqn:integral_form}
    k^{x,y}(s,t) =  k^{x,y}(s,v) +  k^{x,y}(u,t) -  k^{x,y}(u,v) + C \int_u^s\int_v^t  k^{x,y}(p,q) dpdq
\end{equation}
where $C = \left\langle \dot x_s, \dot y_t \right\rangle_V$ is a constant. The double integral in equation (\ref{eqn:integral_form}) can be approximated using numerical integration methods (see \cite{day1966finitediff, wazwaz1993finitediff}). For example, using a trapezoidal rule, (\ref{eqn:integral_form}) is approximated by the following explicit finite difference scheme
\begin{align}\label{eqn:finite_difference}
    k^{x,y}(s,t) \approx k^{x,y}(s, v) + k^{x,y}(u,t) - k^{x,y}(u,v) 
    + \frac{C}{2}\left(k^{x,y}(s, v) + k^{x,y}(u, t)\right)(s-u)(t-v).
\end{align}

\begin{remark}
    The finite difference scheme of equation (\ref{eqn:finite_difference}) can be applied, for some $\lambda\in\mathbb{N} \cup \{0\}$, on a dyadically refined computational grid $\mathcal{D}^\lambda$ of the form
    $$\mathcal{D}^\lambda = \bigcup_{\substack{0\leq i\leq m \\ 0\leq j\leq n}}  \left\{\left(s_i + p\,2^{-\lambda}(s_{i+1} - s_i), t_j + q\,2^{-\lambda}(t_{j+1} - t_j)\right)\right\}_{0\,\leq\, p, q\, \leq\, 2^\lambda}.$$ 
    It is shown in \cite[Theorem 3.5]{salvi2021signature} that be the resulting numerical solution $\hat{k}^{x,y}_\lambda$ converges globally on $[a,b]^2$ to the true signature kernel $k^{x,y}$ with the following uniform error rates
    \begin{equation}
    \sup_{\mathcal{D}}\big|k^{x,y}(s, t) - \hat{k}^{x,y}(s, t)\big| \lesssim \frac{1}{2^{2\lambda}},\ \ \ \text{for all $\lambda \geq 0$}
    \end{equation}
\end{remark}

\subsection{Weighted signature kernels as averaged PDE solutions}

In this section we show how $\phi$-signature kernels can be represented, under suitable integrability conditions, as the average of rescaled PDE solutions whenever the sequence $\{\phi(k) : k = 0,1,...\}$ coincides with the sequence of moments of a random variable. We have already seen that, in the special case where $\phi$ is constant, the computation collapses to solving a single PDE, making the resulting signature kernel particularly appealing for scaling computations to large datasets. 

Next we prove that the $\phi$-signature kernel  can be computed as the average of rescaled PDE solutions whenever the sequence $(\phi(n))_{n \in N}$ coincides with the sequence of moments of a random variable. This result is due to \cite{cass2021general}.

\begin{proposition}\label{prop:averaged_pde}
    Let $\pi$ be a real-valued random variable with finite moments of all orders. Consider the weight functions
    \begin{equation*}
        \phi(k) = \mathbb{E} [\pi^k] \quad \text{and} \quad \psi(k) =  \mathbb{E} [|\pi|^k]
    \end{equation*}
    such that $\psi$ satisfies the condition of \Cref{lemma:condition_phi}. Then, for any $x,y \in C_1([a,b],V)$ the $\phi$-signature kernel is well-defined and satisfies
    \begin{equation}\label{eqn:average_PDE}
        k_\phi^{x,y}(s,t) = \mathbb E_\pi [k^{\pi x, y}(s,t)] = \mathbb E_\pi [k^{x, \pi y}(s,t)]
    \end{equation}
\end{proposition}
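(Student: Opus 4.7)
The plan is to exploit the homogeneity of iterated integrals under scalar multiplication of the driving path, which gives $S(\lambda x)^{(k)}_{a,s} = \lambda^k S(x)^{(k)}_{a,s}$ for any scalar $\lambda \in \mathbb{R}$ and any $k \in \mathbb{N} \cup \{0\}$. This follows immediately from the definition of iterated integrals and the linearity of Lebesgue-Stieltjes integration with respect to the integrator. Consequently, for fixed $x,y \in C_1([a,b],V)$ and any realisation of $\pi$, the (constant-weight) signature kernel expands as
\begin{equation*}
k^{\pi x, y}(s,t) = \sum_{k=0}^{\infty} \left\langle S(\pi x)^{(k)}_{a,s}, S(y)^{(k)}_{a,t}\right\rangle_{V^{\otimes k}} = \sum_{k=0}^{\infty} \pi^k \left\langle S(x)^{(k)}_{a,s}, S(y)^{(k)}_{a,t}\right\rangle_{V^{\otimes k}}.
\end{equation*}

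The main task is then to justify interchanging the expectation $\mathbb{E}_\pi$ with the infinite sum. For this I would apply Fubini/Tonelli using $\psi$ as the dominating weight. By Cauchy--Schwarz and the factorial decay estimate of Proposition \ref{prop:factorial_decay},
\begin{equation*}
\sum_{k=0}^{\infty} \mathbb{E}\!\left[|\pi|^k\right] \left|\left\langle S(x)^{(k)}_{a,s}, S(y)^{(k)}_{a,t}\right\rangle_{V^{\otimes k}}\right| \leq \sum_{k=0}^{\infty} \psi(k) \frac{\|x\|_{1,[a,s]}^{k}\|y\|_{1,[a,t]}^{k}}{(k!)^{2}},
\end{equation*}
which is finite by the assumption that $\psi$ satisfies the condition of Lemma \ref{lemma:condition_phi} (applied with the constant $C = \|x\|_{1,[a,b]}\|y\|_{1,[a,b]}$). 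This absolute summability allows term-by-term integration, yielding
\begin{equation*}
\mathbb{E}_\pi\!\left[k^{\pi x, y}(s,t)\right] = \sum_{k=0}^{\infty} \mathbb{E}[\pi^k] \left\langle S(x)^{(k)}_{a,s}, S(y)^{(k)}_{a,t}\right\rangle_{V^{\otimes k}} = \sum_{k=0}^{\infty} \phi(k) \left\langle S(x)^{(k)}_{a,s}, S(y)^{(k)}_{a,t}\right\rangle_{V^{\otimes k}} = k_\phi^{x,y}(s,t),
\end{equation*}
where the same absolute summability bound shows $k_\phi^{x,y}(s,t)$ is well defined (since $|\phi(k)| \leq \psi(k)$). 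The identity $\mathbb{E}_\pi[k^{x, \pi y}(s,t)] = k_\phi^{x,y}(s,t)$ follows identically, by applying the scaling to $y$ instead of $x$ and using the symmetry of the Hilbert--Schmidt inner product on each tensor level.

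The only real subtlety is ensuring that the dominating series is finite; this is precisely what the hypothesis on $\psi$ guarantees, and the rest is a routine application of Fubini's theorem together with the homogeneity of the signature. No additional obstacle is anticipated.
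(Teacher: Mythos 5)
Your proof is correct and follows essentially the same route as the paper: both rest on the scaling identity $S(\lambda x)^{(k)} = \lambda^k S(x)^{(k)}$ and an interchange of expectation and sum justified by the summability hypothesis on $\psi$. You spell out the Fubini/Tonelli justification (via Cauchy--Schwarz and the factorial decay estimate) somewhat more explicitly than the paper, which simply cites the condition on $\psi$ and refers to an exercise for the rescaling step; this is a welcome elaboration but not a different argument.
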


\begin{proof}
    The function $|\phi|$ satisfies the condition of \Cref{lemma:condition_phi} because $\psi$ does, therefore the $\phi$-signature kernel is well-defined. In addition we have
    \begin{align*}
        k_\phi^{x,y}(s,t) &= \sum_{i=0}^\infty \mathbb{E} [\pi^i] \left\langle S(x)^i_{a,s}, S(y)^i_{a,t}\right\rangle_{V^{\otimes i}} \\
        &= \mathbb{E}_\pi \left[\sum_{i=0}^\infty \pi^i \left\langle S(x)^i_{a,s}, S(y)^i_{a,t}\right\rangle_{V^{\otimes i}}\right] \\
        &= \mathbb{E}_\pi \left[\sum_{i=0}^\infty  \left\langle S(\pi x)^i_{a,s}, S(y)^i_{a,t}\right\rangle_{V^{\otimes i}}\right]\\
        &= \mathbb{E}_\pi \left[k^{\pi x, y}(s,t)\right]
    \end{align*}
    where the first and last equalities follow from the definition of $\phi$- and ordinary signature kernels respectively, the second equality holds by Fubini's theorem which is applicable because $\psi$ satisfies the condition of \Cref{lemma:condition_phi} and the third follows from the same arguments as in the proof of \Cref{exercise:sigker_rescaling}. 
\end{proof}

If the random variable $\pi$ has a known probability density function to sample from, the expectation in equation (\ref{eqn:average_PDE}) can be computed by numerical methods such as Monte Carlo method or Gaussian quadrature procedure by averaging PDE numerical solutions presented in \Cref{sec:finite_difference}.

The integral in \Cref{prop:averaged_pde} can be numerically approximated by making repetitive calls to the finite difference scheme of \Cref{sec:finite_difference} and then averaging PDE solutions using Gaussian Quadrature rules (see for example \cite{suli2003introduction}). For a general weight function $\psi \in L^1((\alpha,\beta))$, consider a family of polynomials $\{P_n : n\geq 0\}$ that are orthogonal in $L^2(\psi(\omega)d\omega, (\alpha,\beta))$. Then, classical Gaussian Quadrature produces a set of points $\omega_0, ..., \omega_n$ (the roots of $P_{n+1}$) and a set of weights $\psi_1,...,\psi_n$ such that 
\begin{equation*}
    \int_\alpha^\beta k^{\omega x, y}(s,t) \psi(\omega)d\omega \approx \sum_{k=0}^n\psi_k k^{\omega_k x, y}(s,t)
\end{equation*}
For explicit error rates of such approximation see \cite[Section 4]{cass2021general}.


\section{Learning with signature kernels}

The celebrated representer theorem states that a large class of optimization problems with RKHS regularizers have solutions
that can be expressed as finite linear combinations of kernel evaluations centered at training points.

\subsection{Representer theorems}\label{sec:representer_theorem}

We present this result in the context of unparameterised $\phi$-signature kernels.

\begin{theorem}\label{thm:representer}
    Let $\phi : \mathbb{N} \cup \{0\} \to \mathbb{R}$ be such that $|\phi|$ satisfies the condition of \Cref{lemma:condition_phi}. Consider a set $\{(x^i, y^i) \in \wC \times \R \}_{i=1}^N$ of input-output pairs, where the inputs are unparameterised paths and outputs are scalars. Let  $\ell : (\wC \times \R^2)^N \to \R$ be an arbitrary function and let $\mathcal{L} : \R^\wC \to \R$ be the following functional 
    \begin{equation}\label{eqn:repr_functional_}
        \mathcal{L}(f) = \ell\left((x^1,y^1,f(x^1)), ..., (x^N,y^N,f(x^N))\right)
    \end{equation}
    Then, any minimiser
    \begin{equation}\label{eqn:repr_functional}
        f^* = \argmin_{f \in \mathcal{H}_\phi} \mathcal{L}(f) + \lambda \norm{f}_{\mathcal{H}_\phi}^2, \quad \lambda > 0,
    \end{equation}
    if it exists, is such that $f^* \in \mathcal{A}$ where
    \begin{equation*}
        \mathcal{A} := \text{Span}\{ k_\phi(\cdot,x^i) : i=1,...,N\}.
    \end{equation*}
\end{theorem}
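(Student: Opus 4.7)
The approach will be the classical orthogonal-projection argument that underpins every representer theorem. The set $\mathcal{A}$ is the linear span of finitely many vectors $k_\phi(\cdot,x^1),\ldots,k_\phi(\cdot,x^N)$ in $\mathcal{H}_\phi$, and hence is a finite-dimensional (therefore closed) linear subspace. Since $\mathcal{H}_\phi$ is a Hilbert space, the projection theorem gives a direct sum decomposition $\mathcal{H}_\phi=\mathcal{A}\oplus\mathcal{A}^\perp$, so every $f\in\mathcal{H}_\phi$ admits a unique decomposition $f=f_\parallel+f_\perp$ with $f_\parallel\in\mathcal{A}$ and $f_\perp\in\mathcal{A}^\perp$.

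The key observation is that the functional $\mathcal{L}$ depends on $f$ only through the $N$ evaluations $f(x^1),\ldots,f(x^N)$, and by the reproducing property established for $\mathcal{H}_\phi$ we have, for each $i$,
\[
f(x^i) \;=\; \bigl\langle f,\,k_\phi(\cdot,x^i)\bigr\rangle_{\mathcal{H}_\phi} \;=\; \bigl\langle f_\parallel,\,k_\phi(\cdot,x^i)\bigr\rangle_{\mathcal{H}_\phi} + \bigl\langle f_\perp,\,k_\phi(\cdot,x^i)\bigr\rangle_{\mathcal{H}_\phi} \;=\; f_\parallel(x^i),
\]
because $k_\phi(\cdot,x^i)\in\mathcal{A}$ and $f_\perp\perp\mathcal{A}$. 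Substituting into \eqref{eqn:repr_functional_} yields $\mathcal{L}(f)=\mathcal{L}(f_\parallel)$, i.e.\ the loss term is completely insensitive to the component $f_\perp$.

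Next I would use the Pythagorean identity $\|f\|_{\mathcal{H}_\phi}^2=\|f_\parallel\|_{\mathcal{H}_\phi}^2+\|f_\perp\|_{\mathcal{H}_\phi}^2$ to analyse the regulariser. Combining with the previous step,
\[
\mathcal{L}(f)+\lambda\|f\|_{\mathcal{H}_\phi}^2 \;=\; \mathcal{L}(f_\parallel)+\lambda\|f_\parallel\|_{\mathcal{H}_\phi}^2 + \lambda\|f_\perp\|_{\mathcal{H}_\phi}^2 \;\geq\; \mathcal{L}(f_\parallel)+\lambda\|f_\parallel\|_{\mathcal{H}_\phi}^2,
\]
with equality (using $\lambda>0$) if and only if $f_\perp=0$. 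Consequently, if $f^*$ is any minimiser of \eqref{eqn:repr_functional}, its orthogonal component $f^*_\perp$ must vanish, for otherwise $f^*_\parallel$ would attain a strictly smaller value, contradicting minimality. Hence $f^*=f^*_\parallel\in\mathcal{A}$, which is the desired conclusion.

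There is no real obstacle: the only subtlety is recording that $\mathcal{A}$ is closed (automatic by finite-dimensionality) so that the orthogonal decomposition is legitimate, and invoking the reproducing property proved earlier (Proposition \ref{prop:RKHS} and the construction of $\mathcal{H}_\phi$) to justify the identity $f(x^i)=\langle f,k_\phi(\cdot,x^i)\rangle_{\mathcal{H}_\phi}$. The strict positivity of $\lambda$ is what converts the inequality into strict inequality off $\mathcal{A}$ and thus forces every minimiser, rather than merely some minimiser, into $\mathcal{A}$.
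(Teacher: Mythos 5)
Your proof follows the same orthogonal-projection argument the paper uses (decompose $f$ into its component in $\mathcal{A}$ and its component in $\mathcal{A}^\perp$, observe via the reproducing property that the loss term is insensitive to the orthogonal component, and then compare norms). Your version is if anything slightly more complete than the paper's sketch, since you make explicit the Pythagorean identity and the role of $\lambda>0$ in forcing $f^*_\perp = 0$ for \emph{every} minimiser rather than merely showing that the projection onto $\mathcal{A}$ does at least as well.
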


The proof of a representer theorem of this form is classical, see for example \cite{scholkopf2001generalized}. The overall idea goes as follows. First one notes that any $f \in \mathcal{H}_\phi = \mathcal{A} \oplus \mathcal{A}^\top$ can be decomposed as $f = g + f^\perp$ where $g \in \mathcal{A}$ and $f^\perp \in \mathcal{A}^\top$. Hence, $f^{\perp}(x^j)=0$, and so
$f(x^j) = g (x^j)$, for all $j=1,...,N$, from which it follows that $\mathcal{L}\left(  g\right)  =\mathcal{L}\left(
f\right) $. In addition, any $f$ in $\mathcal{H}_{\phi}$ satisfies $\left\vert \left\vert
f\right\vert \right\vert _{\mathcal{H}_{\phi}}\geq\left\vert \left\vert
g\right\vert \right\vert _{\mathcal{H}_{\phi}}$, which leads to the inequality 
$$\mathcal{L}\left(  g\right)
+\lambda\left\vert \left\vert g\right\vert \right\vert _{\mathcal{H}_{\phi}%
}^{2}\leq\mathcal{L}\left(  f\right)  +\lambda\left\vert \left\vert
f\right\vert \right\vert _{\mathcal{H}_{\phi}}^{2}.$$

    
    

 
\begin{remark}\label{rq:existence_uniqueness}
    A solution to the optimisation problem (\ref{eqn:repr_functional}) in \Cref{thm:representer} might not exist in general, and if it does it might not be unique. A sufficient condition to ensure existence and uniqueness is to require that the functional $f \mapsto \mathcal{L}(f) + \lambda \norm{f}_{H_\phi}^2$ is continuous and convex from $\mathcal{H}_\phi$ to $\R$, see \Cref{ex:existence_uniqueness_RT}.
\end{remark}

Several popular techniques can be cast in such regularization framework.

\begin{example}
    Regularised least squares, also known as (kernel) ridge regression, corresponds to the following choice of functional in equation (\ref{eqn:repr_functional_})
    \begin{equation*}
        \ell\left((x^1,y^1,f(x^1)), ..., (x^N,y^N,f(x^N))\right) =  \frac{1}{N}\sum_{i=1}^N (y^i-f(x^i))^2.
    \end{equation*}
    By \Cref{thm:representer} and \Cref{rq:existence_uniqueness}, we know a unique minimiser of (\ref{eqn:repr_functional}) exists and is of the form $f^* = \sum_{i=1}^N\alpha_i k_\phi(x^i,\cdot)$ for some vector  $\boldsymbol \alpha = (\alpha_1,...,\alpha_N) \in \R^N$ of scalars. Substituting this representation of $f^*$ in (\ref{eqn:repr_functional}) yields the finite-dimensional optimisation problem
    \begin{equation}\label{eqn:kernel_ridge}
        \min_{\boldsymbol \alpha \in \R^N} \frac{1}{N}\norm{\boldsymbol y - \boldsymbol K \boldsymbol \alpha}_{\R^N}^2 + \lambda \boldsymbol \alpha^\top \boldsymbol K\boldsymbol \alpha,
    \end{equation}
    where $\boldsymbol y = (y^1,...,y^N)$ and $\boldsymbol K \in \R^{N \times N}$ is the Gram matrix with $i,j$ entry equal to $k_\phi(x^i,x^j)$. Equation (\ref{eqn:kernel_ridge})  is a convex optimisation problem and its solution can found for example by differentiating with respect to $\boldsymbol \alpha$  equating to zero, leading to the following linear system of $N$ equations
    \begin{equation*}
        (\boldsymbol K + N\lambda\boldsymbol I) \boldsymbol \alpha =  \boldsymbol y.
    \end{equation*}
\end{example}

\begin{example}
    Given binary labels $y^i = \pm 1$, the support vector machine (SVM) classifier can be interpreted as a regularization method corresponding to the following choice of functional in equation (\ref{eqn:repr_functional_})
     \begin{equation*}
        \ell\left((x^1,y^1,f(x^1)), ..., (x^N,y^N,f(x^N))\right) = \sum_{i=1}^N \max\{0, 1-y^if(x^i)\}.
    \end{equation*}
\end{example}

\newpage

\section{Exercises}

\begin{exercise}\label{ex:kernel_linear}
Let $x,y \in C_1([a,b],V)$ be two linear paths with (constant) derivatives $\dot x \equiv d_x$ and $\dot y \equiv d_y$. 

\begin{enumerate}
\item Show that the $\mathbf{1}$-signature kernel satisfies the following PDE
\begin{equation}\label{eqn:linear_sigker}
    \frac{\partial^2 k^{x,y}}{\partial s \partial t} = C_3 k^{x,y}, \quad \text{where } C_3 = \left\langle d_x, d_y \right\rangle_V,
\end{equation}
and with boundary conditions $k^{x,y}(0,t) = k^{x,y}(s,0) = 1$ for all $s,t \in [a,b]$.
\item  Show that 
\begin{equation*}
k^{x,y}(s,t) =\sum_{k=0}^\infty\frac{
\left(st\left\langle d_x,d_y\right\rangle_V \right)^k}{\left( k!\right) ^{2}}%
=I_{0}\left( 2\sqrt{st\left\langle d_x,d_y\right\rangle_V }\right) ,
\end{equation*}
where $I_{0}$ is the modified Bessel function of the first kind of order $0$.

\item Recall that $I_{0}$ satisfies $z^{2}I_{0}^{\prime \prime }\left(
z\right) +zI_{0}^{\prime }\left( z\right) -z^{2}I_{0}\left( z\right) =0.$
Taking $x = y$ and $s=t$, and  $h\left( t\right) = k^{x,x}\left( t,t\right)$, find a second-order differential equation satisfied by $h$.

\item Show that the solution to the following PDE
\begin{equation*}
\frac{\partial^2 z}{\partial s\partial t} = \left\langle
d_x,d_y\right\rangle_V z,
\end{equation*}
with $(s,t) \in [a,b] \times [c,d]$ and nonlinear boundary conditions 
\begin{equation*}
z(s,c) = f(s), \quad z(a,t) = g(t)
\end{equation*}
such that $f,g$ are differentiable functions, is given by 
\begin{align*}
    z(s,t) = z(a,c) k^{x,y}(s-a,t-c) &+ \int_{a}^{s} k^{x,y}(s-r,t-c) f'(r)dr \\
    &+ \int_{c}^{t} k^{x,y}(s-a,t-r) g'(r)dr.
\end{align*}
\item How does the analysis change for $\phi$-signature kernel $k_\phi^{x,y}$ with $\phi(k) = k!$
\end{enumerate}
\end{exercise}

\begin{exercise}\label{exercise:sigker_rescaling}
    Let $\phi : \mathbb{N} \cup \{0\} \to \mathbb{R}$ be such that $|\phi|$ satisfies the condition of \Cref{lemma:condition_phi}. Let $\theta \in \R$ and let  $\theta\phi : \mathbb{N} \cup \{0\} \to \mathbb{R}$ be a weight function such that $(\theta\phi)(k) = \theta^k\phi(k)$. Show that for any $x,y \in C_1([a,b],V)$ and for every $(s,t) \in [a,b]^2$ the following relations hold 
    \begin{equation*}
        k^{x,y}_{\theta \phi}(s,t) = k^{\theta x,y}_{\phi}(s,t) = k^{x,\theta y}_{\phi}(s,t)
    \end{equation*}
    where the path $\theta x \in C_1([a,b],V)$ is  obtained by pointwise multiplication of the path $x$ by the scalar $\theta$. 
\end{exercise}

\begin{exercise}\label{ex:exp}
    Show that if $\phi(k) = (k/2)!$ then the $\phi$-signature kernel satisfies  equation (\ref{eqn:average_PDE}) where $\pi \sim \text{Exp}(1)$  is an exponential random variable.
\end{exercise}

\begin{exercise}\label{ex:2d_pde}
    Suppose that $x,y \in C_1(V)$ are differentiable and let $$f_\omega(s,t):=\text{Re}[k^{\exp(i\omega)x,y}(s,t)] \quad \text{and} \quad g_\omega(s,t):=\text{Im}[k^{\exp(i\omega)x,y}(s,t)].$$
    Show that $(f_\omega, g_\omega)$ solves a two-dimensional PDE.
\end{exercise}

\begin{exercise}\label{ex:cont_emb}
    Let $\phi$ be a weight function satisfying the conditions of Lemma \ref{lemma:condition_phi}. Assume that $\mathcal{C}$ is equipped with a topology with respect to which the ST $S:\mathcal{C} \rightarrow T_{\phi}\left( (V)\right)$ is continuous. Let $\mathcal{K} \subset \mathcal{C}$ be a compact set. Show that the RKHS $\mathcal{H}_\phi|_{\mathcal{K}} \hookrightarrow C(\mathcal{K})$ is continuously embedded in $C(\mathcal{K})$, i.e. that $\mathcal{H}_\phi|_{\mathcal{K}}  \subset C(\mathcal{K})$ and that the topology of $\mathcal{H}_\phi|_{\mathcal{K}} $ is stronger than the topology induced by $C(\mathcal{K})$.
\end{exercise}

\begin{exercise}\label{ex:expected_sig_continued}
    This is a continuation of \Cref{ex:expected_sig}. Let $\phi : \mathbb{N} \cup \{0\} \to \mathbb{R}_+$ be such that $\phi(k) = (k/2)!$. Let $(T_\phi(\mathbb{R}^d), \left\langle \cdot, \cdot \right\rangle_\phi)$ be the completion of the tensor algebra $T(\mathbb{R}^d)$ with respect to the inner product $\left\langle \cdot, \cdot \right\rangle_\phi$.
    \begin{enumerate}
        \item Prove that $\mathbb{E}\left[
    S\left( B\right)_{0,t}\right] \in T_\phi(\mathbb{R}^d)$.
    \item For any path $x \in C_1([0,1],\mathbb{R}^d)$ derive an expression for $\left\langle \mathbb{E}\left[
    S\left( B\right)_{0,1}\right], S(x)_{0,1} \right\rangle_\phi$. 
    \item Let $\{x^1,...,x^n\} \subset (C_1([0,1],\mathbb{R}^d))^n$ be a collection of paths of with distinct STs. Let $C_n$ be the set of probability measures supported on the set $\{x^1,...,x^n\}$. Show that the following optimisation has a unique minimiser
    \begin{equation*}
        \mu^* = \argmin_{\mu \in C_n} \norm{\mathbb{E}\left[
           S(B)_{0,1}\right] - \mathbb{E}_{x \sim \mu}\left[ S(x)_{0,1}\right]}_\phi^2.
    \end{equation*}
    \end{enumerate}
\end{exercise}

\vspace{0.5cm}
\begin{exercise}\label{ex:kme_mmd}
    Let $\mu, \nu$ be two Borel probability measures on $\wC$ and let $\phi$ be a weight function satisfying the condition of \Cref{lemma:condition_phi}. Assume that 
    $$\mathbb{E}_{x \sim \mu}[\sqrt{k_\phi(x,x)}] < \infty \quad \text{and} \quad \mathbb{E}_{x \sim \nu}[\sqrt{k_\phi(x,x)}] < \infty.$$
    \begin{enumerate}
        \item Show that the $\phi$-MMD of equation (\ref{eqn:mmd}) admits the following  expression
        $$d_\phi(\mu, \nu) = \norm{M^\phi_\mu - M^\phi_\nu}_{\mathcal{H}_\phi},$$
        where $M^\phi_\mu, M^\phi_\nu \in \mathcal{H}_\phi$ are the $\phi$-kernel mean embeddings of $\mu, \nu$ respectively.
        \item Show that the squared MMD has the following expression
        \begin{equation*}
            \normalfont
            d_\phi(\mu,\nu)^2 = \mathbb{E}_{x,x' \sim \mu}[k_\phi(x,x')] - 2\mathbb{E}_{x,y \sim \mu \times \nu}[k_\phi(x,y)] + \mathbb{E}_{y,y' \sim \nu}[k_\phi(y,y')],
        \end{equation*}
        where where $x'$ (resp. $y'$) is an independent copy of $x$ (resp. $y$) with the same distribution $\mu$ (resp. $\nu$).
        \item Given $m$ sample paths $\{x^i\}_{i=1}^m \sim \mu$ and $n$ samples $\{y^j\}_{j=1}^n \sim \nu$, show that the following expression provides an unbiased estimator of the MMD         
        \begin{equation*}
            \hat{d}_\phi(\mu,\nu)^2 = \frac{1}{m(m-1)} \sum_{j\neq i} k_\phi(x^i,x^j) - \frac{2}{mn}\sum_{i,j} k_\phi(x^i,y^j)  + \frac{1}{n(n-1)}\sum_{j\neq i} k_\phi(y^i,y^j).
        \end{equation*}
    \item Show that the estimator obtained in 3. is asymptotically consistent, i.e. that it converges in probability to the squared MMD $d_\phi(\mu,\nu)^2$ as $m,n \to +\infty$.
    \end{enumerate}
\end{exercise}

\begin{exercise}\label{ex:complex_ker}
    Let $\phi : \mathbb{Z} \to \mathbb{C}$ be a complex-valued weight function and define the following kernel function
\begin{equation}\label{eqn:complex_ker}
    k^{x,y}_\phi(s,t) = \left\langle S(x)_{a,s}, S(y)_{a,t}\right\rangle_\phi := \sum_{k=-\infty}^\infty \phi(k)\big\langle S(x)_{a,s}^{|k|}, S(y)_{a,t}^{|k|} \big\rangle_{V^{\otimes |k|}}
\end{equation}
Let $x,y \in C_1([a,b],V)$ be two continous paths of bounded variation. Let $\phi : \mathbb{Z} \to \mathbb{C}$ be a complex-valued weight function such that $\{\phi(k) : k \in \mathbb{Z}\}$ are the Fourier coefficients of some bounded integrable function $f : [-\pi, \pi] \to \mathbb{C}$, i.e. 
    $$f(\omega) = \sum_{k=-\infty}^\infty \phi(k) e^{ik\omega}$$
    Show that the kernel in (\ref{eqn:complex_ker}) satisfies the following relation
    $$k^{x,y}_\phi(s,t) = \frac{1}{2 \pi} \int_{-\pi}^\pi \left( k^{\exp(-i\omega)x,y}(s,t) + k^{\exp(i\omega)x,y}(s,t) \right)f(\omega) d\omega - \phi(0)$$
\end{exercise}

\begin{exercise}
Let $E:=T_{\phi }\left( \left( V\right) \right) $ be the Hilbert space in Definition \ref{defintion: weighted spaces}
in which $\phi \equiv 1$. Consider another Hilbert space containing $T\left(
E\right) $, the tensor algebra over $E.$ To distinguish the tensor product
spaces on $E$ from those involving the underlying space $V$, we use $%
\boxtimes $ instead of $\otimes $ so that e.g. $T\left( E\right) =\oplus
_{k=0}^{\infty }E^{\boxtimes k}.$ Let $\mathcal{A}$ denote the set of all
finite indices in $\cup _{k=0}^{\infty }%
\mathbb{N}
_{0}^{k}$ and, for a given $A=\left( a_{1},...a_{k}\right) $ in $\mathcal{A}$%
, let $V^{\boxtimes A}:=\boxtimes _{i=1}^{k}V^{\otimes a_{i}}.$ Define an
inner-product on $\oplus _{A\in \mathcal{A}}V^{\boxtimes A}$ between $%
v=\sum_{A\in \mathcal{A}_{F}}v_{A}$ and $u=\sum_{A\in \mathcal{A}_{F}}u_{A}$%
, for any finite subset $\mathcal{A}_{F}\subset \mathcal{A}$, by  
\[
\left\langle v,u\right\rangle _{\mathcal{H}_{\psi }}:=\sum_{k=0}^{\infty
}\psi \left( k\right) \sum_{A\in \mathcal{A}_{F}:\left\vert A\right\vert
=k}\left\langle v_{A},u_{A}\right\rangle _{V^{\boxtimes A}},
\]%
where $\left\langle \cdot ,\cdot \right\rangle _{V^{\boxtimes A}}$ denotes
the canonical inner product on $V^{\boxtimes A}:$ for $A=\left(
a_{1},...a_{k}\right) \in 
\mathbb{N}
_{0}^{k}$%
\[
\left\langle v_{a_{i}}\boxtimes ...\boxtimes v_{a_{k}},u_{a_{i}}\boxtimes
...\boxtimes u_{a_{k}}\right\rangle =\prod_{i=1}^{k}\left\langle
v_{a_{i}},u_{a_{i}}\right\rangle _{E}\text{.}
\]%
Let $\mathcal{H}_{\psi }$ be the completion of this space. 

\begin{enumerate}[(i)]
\item Suppose  $\psi :%
\mathbb{N}
_{0}^{k}\rightarrow \mathbb{R}_{+}.$ Show that the linear map $F:T\left(
E\right) \longrightarrow \mathcal{H}_{\psi }$determined on $E^{\boxtimes k}$
by    
\[
F\left( v_{1}\boxtimes ...\boxtimes v_{k}\right)
=\sum_{r_{1},...,r_{k}=0}^{\infty }v_{1}^{r_{1}}\boxtimes ...\boxtimes
v_{k}^{r_{k}},\text{ for }v_{i}=\sum_{r=0}^{\infty }v_{i}^{r}
\]%
is well defined and satisfies  
\[
\left\langle F\left( v\right) ,F\left( w\right) \right\rangle _{\mathcal{H}%
_{\psi }}=\left\langle v,w\right\rangle _{T_{\psi }\left( \left( E\right)
\right) }.
\]%
Hence deduce that $F$ is an isomorphism between the Hilbert spaces $T_{\psi
}\left( \left( V\right) \right) $ and $\mathcal{H}_{\psi }$ 

\item Suppose that $\exp _{\boxtimes }:T\left( \left( E\right) \right)
\rightarrow T\left( \left( E\right) \right) $ denotes the exponential map
defined using $\boxtimes$, cf. (\ref{eqn:tensor_exp_}). Let $\phi :$ $\mathcal{C\rightarrow }$ $%
T\left( \left( E\right) \right) $ be defined by $\phi =F\circ \exp
_{\boxtimes }\circ S$ . Prove that $\phi \left( \mathcal{C}\right) \subset 
\mathcal{H}_{\psi }$ and, for the choice of weight function $\psi \left(
k\right) =\frac{1}{k!}$ for $k$ in $%
\mathbb{N}
_{0},$ show that 
\[
\left\langle \phi \left( \gamma \right) ,\phi \left( \sigma \right)
\right\rangle _{\psi }=\exp \left( \left\langle S\left( \gamma \right)
,S\left( \sigma \right) \right\rangle _{T_{\phi }\left( \left( V\right)
\right) }\right) 
\]

\item Let $k:\mathcal{C\times }\mathcal{C\rightarrow 
\mathbb{R}
}_{+}$ denote the kernel,%
\[
k\left( \gamma ,\sigma \right) =\exp \left( -\frac{1}{2\sigma ^{2}}%
\left\vert \left\vert S\left( \gamma \right) -S\left( \sigma \right)
\right\vert \right\vert _{T_{\phi }\left( \left( V\right) \right)
}^{2}\right) .
\]%
Using the same choice of $\psi $ as previously, show that $k$ is associated
to the feature map $\phi :$ $\mathcal{C\rightarrow H}_{\psi }$ defined by $%
\phi \left( \gamma \right) =\sum_{A\in \mathcal{A}}\phi \left( \gamma
\right) _{A}$where 
\[
\phi \left( \gamma \right) _{A}=\frac{\exp \left( -\frac{1}{2\sigma ^{2}}%
\left\vert \left\vert S\left( \gamma \right) \right\vert \right\vert
_{T_{\phi }\left( \left( V\right) \right) }^{2}\right) }{\sigma ^{\left\vert
A\right\vert }\sqrt{\left\vert A\right\vert !}}\boxtimes _{i\in
A}S\left( \gamma \right)^{(i)} 
\]
\end{enumerate}
\end{exercise}

\begin{exercise}\label{ex:existence_uniqueness_RT}
Show that if the functional $f \mapsto \mathcal{L}(f) + \lambda \norm{f}_{H_\phi}^2$ in \Cref{thm:representer} is continuous and convex from $H_\phi$ to $\R$, then the optimisation problem (\ref{eqn:repr_functional}) has a unique solution. 
\end{exercise}

\begin{exercise}[* \cite{cirone2023neural}]
    Let $x : [0,1] \to \R^d$ be continuously differentiable. For any $N \in \mathbb N$ consider the randomised signature process $S^N(x) : [0,1] \to \R^N$ defined as the solution of the following linear CDE
    \[
       S^{N}_t(x) = S^N_0 + \frac{1}{\sqrt{N}}\sum_{k=1}^d \int_0^t  A_k S^{N}_{\tau}(x) dx_{\tau}^k
    \]
    where $A_1,...,A_d \in \R^{N \times N}$ are random matrices with iid entries sampled from a standard normal distribution $[A_{k}]_{i,j} \sim \mathcal{N}(0,\frac{1}{N})$ for $k=1,...,d$ and initial condition $[S_0^N]_{i}\sim \mathcal{N}(0,1)$, for  $i,j=1,...,N$.
    \begin{enumerate}
        \item Show that for any $s,t \in [0,1]$ 
        \begin{equation*}
            \lim_{N \to \infty} \mathbb E\Big[
        \frac{1}{N}\left\langle S^N_s(x), S^N_t(y) \right\rangle_{\R^N}\Big] = 
        k^{x,y}_1(s,t)
        \end{equation*}
        where $k^{x,y}_1$ is the original signature kernel.
        \item Fix $m \in \mathbb N$ continuously differentiable paths $x_1, \dots, x_m : [0,1] \to \R^d$. Let $\phi \in \R^N$ be a random vector with iid entries sampled from $\mathcal{N}(0,\frac{1}{N})$ and define the random $\Phi^N \in \R^N$ with entries
        \[
            \Phi^N_i := \left\langle \phi, S^N_1(x_i)\right\rangle_{\R^N}.
        \]
        Show that $\Phi^N$ converge in distribution to the mulivariate normal $\mathcal{N}(0,\mathbf K)$ with covariance matrix $\mathbf K \in \R^{m \times m}$ with entries
        \begin{equation*}
            \mathbf K_{i,j} = k^{x_i,x_j}_1(1,1).
        \end{equation*}
        for any $i,j=1,...,m$
    \end{enumerate}
For an extension to the non-Gaussian case and other classes of random matrix ensembles, the reader can consult \cite{cass2024free}.
\end{exercise}

\newpage

\chapter{Rough paths and deep learning}

The symbiosis of differential equations and deep learning has become a topic of great interest in recent years. In particular, \textit{neural differential equations} (NDEs)
demonstrate that neural networks and differential equation are two sides
of the same coin \cite{chen2018neural, kidger2020neural, morrill2021neural, salvi2022neural, cirone2023neural, hoglund2023neural}. Traditional parameterised differential equations are a
special case. Many popular neural networks, such as residual and recurrent networks, are discretisations of differential equations. There are many types of NDEs. Rough path theory offers a way of analysing them under the same theoretical frameworks. For a brief account of recent applications of rough path theory to machine learning we refer the reader to \cite{fermanian2023new}.

\section{Rough differential equations (RDEs)}

We begin this section by recalling what we mean by solution of a controlled differential equation (CDE) driven by a continuous path of bounded variation. Several definitions are required. Recall that a collection of continuous vector fields  $f=\left(f_1, \ldots, f_d\right)$ on $\mathbb{R}^e$ can be viewed as a map
$$
f: b \in \mathbb{R}^e \mapsto\left\{a=\left(a_1, \ldots, a_d\right) \in \mathbb R^d \mapsto \sum_{i=1}^d f_i(b) a_i\right\} \in \mathcal L\left(\mathbb{R}^d, \mathbb{R}^e\right),
$$
where we equip $\mathcal L\left(\mathbb{R}^d, \mathbb{R}^e\right)$ with the operator norm, defined as follows
$$
\|f(b)\|:=\sup_{a \in \mathbb{R}^d:\|a\|=1}\left\|\sum_{i=1}^d f_i(b) a_i\right\| .
$$
Let $x:[0,T] \to \mathbb R^d$ be a continuous path of bounded variation, and consider the CDE
\begin{equation}\label{eqn:CDE}
    dy_t = f(y_t)dx_t \equiv \sum_{i=1}^df_i(y_t)dx^i_t, \quad y_0 \in \mathbb R^e,
\end{equation}
understood as a Riemann-Stieltjes integral equation
$$
y_t = y_0 + \int_0^T f(y_s)dx_s.
$$
A collection of vector fields $f: \mathbb{R}^e \rightarrow \mathcal L\left(\mathbb{R}^d, \mathbb{R}^e\right)$ is called \emph{bounded} if
$$
\|f\|_{\infty}:=\sup_{b \in \mathbb{R}^e}\|f(b)\|<+\infty.
$$
For any $U \subset \mathbb{R}^e$ we define the \emph{1-Lipschitz norm} (in the sense of Stein) by
$$
\|f\|_{\operatorname{Lip}^1(U)}:=\max \left\{\sup _{b, c \in U: b \neq c} \frac{\|f(b)-f(c)\|}{\|b-c\|}, \sup_{b \in U}\|f(b)\|\right\}.
$$
We say that $f \in \operatorname{Lip}^1\left(\mathbb{R}^e\right)$ if $\|f\|_{\operatorname{Lip}^1} := \|f\|_{\operatorname{Lip}^1\left(\mathbb{R}^e\right)}<+\infty$ and $f \in \operatorname{Lip}^1_{loc}\left(\mathbb R^e\right)$ if $\|f\|_{\operatorname{Lip}^1(U)}<\infty$ for all bounded subsets $U \subset \mathbb{R}^e$. The concept of Lip$^1$ regularity will later be generalized to $\operatorname{Lip}^\gamma$ in the sense of Stein, for any $\gamma \geq 1$. Note that 1-Lipschitz functions are Lipschitz continuous functions that are also bounded.

Existence and uniqueness results for solutions of the CDE (\ref{eqn:CDE}) are classical and not the main objective of this book. For completeness, we provide only a summary here, and refer the interested reader to \cite[Chapter 3]{friz2010multidimensional} for details about the proofs.

\begin{itemize}
    \item Global existence is guaranteed if the vector fields $f$ are continuous and bounded \cite[Theorem 3.4]{friz2010multidimensional}.
    \item If one only assumes continuity of the vector fields then existence holds up to an explosion time, i.e. there exists either a global solution or there exists $\tau \in [0,T]$ and a solution $y:[0,\tau) \to \mathbb R^e$ such that $y$ is a solution on $[0,t]$ for any $t 
    \in (0,\tau)$ and $\lim_{t \to \tau} |y_t| = +\infty$ \cite[Theorem 3.6]{friz2010multidimensional}. 
    \item If in addition one assumes a linear growth condition of the form $\|f_i(b)\| \leq A(1 + \norm{b})$ for some $A \geq 0$ and for all $y \in \mathbb R^e$, then it can be shown that explosion cannot happen \cite[Theorem 3.7]{friz2010multidimensional}.
    \item If the vector fields are  $1$-Lipschitz, then a unique global solution exists \cite[Theorem 3.8]{friz2010multidimensional}, while if they are only locally $1$-Lipschitz then a unique solution exists up to a possible explosion time \cite[Corollary 3.9]{friz2010multidimensional}. Explosion can be ruled out also under an additional linear-growth condition similar to the above.
\end{itemize}

In addition, by \cite[Theorem 3.19]{friz2010multidimensional} the map $(y_0, f, x) \mapsto y$ is Lipschitz continuous from $\mathbb R^e \times (\text{Lip}^1(\mathbb R^e, \mathcal{L}(\mathbb R^d, \mathbb R^e)), \norm{\cdot}_\infty) \times (C_1([0,T], \mathbb R^d), \norm{\cdot}_{1\text{-var}})$ to $(C_1([0,T], \mathbb R^e), \norm{\cdot}_{1\text{-var}})$
with explicit bound
\begin{equation*}
    \norm{y^1 - y^2}_{1\text{-var}} \leq 2\left(v\ell\norm{y^1_0 - y^2_0} + v\norm{x^1 - x^2}_{1\text{-var}} + \ell \norm{f^1 - f^2}_\infty \right) e^{3v\ell}
\end{equation*}
where 
\begin{equation*}
    \max_{i=1,2}\left\{\norm{f^i}_{\text{Lip}^1}\right\} \leq v, \quad \max_{i=1,2}\left\{\norm{x^i}_{1\text{-var}}\right\} \leq \ell.
\end{equation*}
and 
\begin{equation*}
    dy_t^i = f^i(y^i_t)dx^i_t, \quad \text{started at }  \ y^i_0 \in \mathbb R^e, \quad \text{for } i=1,2.
\end{equation*}

\subsection{Rough paths}

We denote by $\lfloor p \rfloor$ the largest integer which is less than or equal to p, and by $\Delta_T$  the following set
\begin{equation}
    \Delta_T = \{(s,t) \in [0,T]^2 : s \leq t \}. 
\end{equation}

\begin{definition}[\textbf{Control}]\label{def:control}
A continuous control is a continuous non-negative function $\omega : \Delta_T \to [0, + \infty)$ which is super-additive in the sense that
\begin{equation}
    \omega(s,t) + \omega(t,u) \leq \omega(s,u), \ \ \ \forall 0 \leq s \leq t \leq u \leq T
\end{equation}
and for which $\omega(t,t)=0$ for all $t \in [0,T]$.
\end{definition}

\begin{definition}[\textbf{Rough path}]\label{def:rough_path}
Let $p \geq 1$ and let $\omega$ be a control. A $p$-rough path over $\mathbb{R}^d$ controlled by $\omega$ is a continuous map $\mathbf x : \Delta_T \to T^{\lfloor p \rfloor}(\mathbb R^d)$ satisfying the following two conditions:
\begin{enumerate}
    \item $\mathbf x_{s,u}\otimes \mathbf x_{u,t} = \mathbf x_{s,t}$ for all $0 \leq s \leq u \leq t \leq T$;
    \item it has finite $p$-variation on $\Delta_T$ controlled by $\omega$, in the sense
    \begin{equation}\label{eqn:control_bound}
    \|\pi_i\textbf{x}_{s,t}\|_{(\mathbb{R}^d)^{\otimes i}} \leq C\omega(s,t)^{i/p}, \ \ \forall (s,t) \in \Delta_T, \ \forall i=1,...,\lfloor p \rfloor
    \end{equation}
    where $C>0$ is a  constant that does not depends on $\mathbf{x}$.
\end{enumerate}
 We denote the space of $p$-rough paths over $\mathbb R^d$ by $\Omega_p(\mathbb R^d)$.
\end{definition}

Hence, a $p$-rough path is a continuous map from $\Delta_T$ to $T^{\lfloor p \rfloor}(\mathbb R^d)$ that satisfies the algebraic condition 1. called \emph{(multiplicativity)} and the analytic condition 2.

\begin{definition}[\textbf{$p$-variation metric}]\label{def:p_var_metric}
    The homogeneous $p$-variation metric of two $p$-rough paths $\mathbf{x}, \mathbf{y} : \Delta_T \to T^{\lfloor p \rfloor}(\mathbb R^d)$ is defined as follows
    \begin{equation}
    d_p(\mathbf{x}, \mathbf{y}) = \max_{1 \leq i \leq \lfloor p \rfloor}\sup_{\mathcal{D} \subset 
    [0,T]}\left(\sum_{t_k \in \mathcal{D}}\norm{\pi_i\mathbf{x}_{t_k, t_{k+1}} - \pi_i\mathbf{y}_{t_k, t_{k+1}}}^{p/i}_{(\mathbb R^d)^{\otimes i}}\right)^{1/p}
    \end{equation}
    where the supremum is taken over all partitions $\mathcal{D}$ of the interval $[0,T]$.
\end{definition}

It can be observed from Definition \ref{def:rough_path} that the less regular the rough path $\mathbf{x}$ is the more terms are needed to describe it in addition to the underlying path increments $\mathbf{x}^{(1)}_{s,t}$. The next theorem is an important result in rough path theory. It states that a $p$-rough path can be extended uniquely to a $q$-rough path if $p \leq q$. 

\begin{theorem}[\textbf{Extension Theorem}]\cite[Theorems 3.7]{lyons2007differential}\label{thm:extension_theorem}
Let $\mathbf{x}$ be a $p$-rough path controlled by a control $\omega$. Let $q \geq p$. Then, there exists a unique extension of $\mathbf{x}$ denoted by
$$S^{\lfloor q \rfloor}(\mathbf x): \Delta_T \to T^{\lfloor q \rfloor}(\mathbb R^d)$$ 
such that
\begin{enumerate}
    \item $S^{\lfloor q \rfloor}(\mathbf x)_{s,u} \otimes S^{\lfloor q \rfloor}(\mathbf x)_{u,t} = S^{\lfloor q \rfloor}(\mathbf x)_{s,t}$ for all $0 \leq s \leq u \leq t \leq T$;
    \item $\| \pi_iS^{\lfloor q \rfloor}(\mathbf x)_{s,t} \|^{i}_{V^{\otimes i}} \leq C \omega(s,t)^{i/p}$, for some $C>0$, and all $(s,t) \in \Delta_T$, $i \leq \lfloor q \rfloor$;
     \item $\pi_{\leq \lfloor p \rfloor} \mathbf x_{s,t} = \pi_{\leq \lfloor p \rfloor} S^{\lfloor q \rfloor}(\mathbf x)_{s,t}$ for all $(s,t) \in \Delta_T$.
\end{enumerate}
\end{theorem}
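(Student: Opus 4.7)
The plan is to induct on the levels, extending one at a time from $\lfloor p\rfloor$ up to $\lfloor q\rfloor$. Once a new level $n+1$ has been constructed so as to preserve multiplicativity and to satisfy the $p$-variation bound at level $n+1$, the resulting enriched map is itself a $p$-rough path taking values in $T^{n+1}(\mathbb R^d)$, and the same construction then produces level $n+2$, and so on. Hence it suffices to handle a single inductive step: given $\mathbf x$ defined on levels $0,\dots,n$ with $n\geq\lfloor p\rfloor$, construct a unique level $n+1$. For a partition $\mathcal D=\{s=t_0<\dots<t_r=t\}$ of $[s,t]\subset[0,T]$, I would define the Riemann-sum candidate
$$
X^{\mathcal D}_{s,t}:=\pi_{n+1}\bigl(\mathbf x_{t_0,t_1}\otimes\cdots\otimes \mathbf x_{t_{r-1},t_r}\bigr),
$$
and take $\pi_{n+1}S^{\lfloor q\rfloor}(\mathbf x)_{s,t}$ to be its limit as the mesh of $\mathcal D$ tends to zero.

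The crucial technical observation is that when $\mathcal D'=\mathcal D\cup\{t^*\}$ is obtained by inserting a single point $t^*\in(t_{i-1},t_i)$, the multiplicativity of $\mathbf x$ on levels $0,\dots,n$ forces the tensor $\mathbf x_{t_{i-1},t_i}-\mathbf x_{t_{i-1},t^*}\otimes\mathbf x_{t^*,t_i}$ to vanish on all levels $\leq n$; consequently $X^{\mathcal D}_{s,t}-X^{\mathcal D'}_{s,t}$ depends only on the $(n+1)$-st component of that difference. The bound (\ref{eqn:control_bound}) together with super-additivity of $\omega$ yields
$$
\bigl\|X^{\mathcal D}_{s,t}-X^{\mathcal D'}_{s,t}\bigr\|\leq C\,\omega(t_{i-1},t_i)^{(n+1)/p}.
$$
Since $n\geq\lfloor p\rfloor$, the exponent $\theta:=(n+1)/p$ is strictly greater than $1$, and this is the single fact that drives the entire argument.

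The main obstacle is to leverage this one-point-insertion bound into a uniform Cauchy estimate for arbitrary refinements. I would use the standard Young-Lyons \emph{minimal-mass} trick: if $\mathcal D\subset\mathcal D'$ with $|\mathcal D'\setminus\mathcal D|=k$, super-additivity guarantees the existence of an interval of $\mathcal D'$ whose $\omega$-mass does not exceed $\omega(s,t)/k$; removing such an interval incrementally and summing the resulting errors gives
$$
\bigl\|X^{\mathcal D}_{s,t}-X^{\mathcal D'}_{s,t}\bigr\|\leq C\,\omega(s,t)^{\theta}\sum_{k\geq 2}\frac{1}{k^{\theta}},
$$
where the series converges precisely because $\theta>1$. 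Applying this with $\mathcal D=\{s,t\}$ (using the formal convention $\pi_{n+1}\mathbf x_{s,t}:=0$) shows that $\{X^{\mathcal D'}_{s,t}\}$ is Cauchy as $|\mathcal D'|\to 0$, and the limit $X_{s,t}$ satisfies $\|X_{s,t}\|\leq C\,\omega(s,t)^{\theta}$, settling item 2 at level $n+1$.

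It remains to verify Chen's relation and uniqueness. Item 3 is immediate by construction. For item 1, given $s\leq u\leq t$ and partitions $\mathcal D_1$ of $[s,u]$ and $\mathcal D_2$ of $[u,t]$, one writes $X^{\mathcal D_1\cup\mathcal D_2}_{s,t}$ as the $(n+1)$-st component of the tensor product of the full (up-to-level-$n$) Riemann tensors, uses the multiplicativity of $\mathbf x$ on the lower levels, and passes to the mesh-zero limit to obtain $\pi_{n+1}S^{\lfloor q\rfloor}(\mathbf x)_{s,t}=\pi_{n+1}(S^{\lfloor q\rfloor}(\mathbf x)_{s,u}\otimes S^{\lfloor q\rfloor}(\mathbf x)_{u,t})$. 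For uniqueness, if $\mathbf y$ is any other extension satisfying items 1--3, the difference $D_{s,t}:=\pi_{n+1}\mathbf y_{s,t}-X_{s,t}$ obeys $\|D_{s,t}\|\leq 2C\,\omega(s,t)^{\theta}$; since the lower levels of $\mathbf y$ and of the constructed extension both agree with $\mathbf x$, Chen's relation at level $n+1$ collapses to additivity $D_{s,t}=D_{s,u}+D_{u,t}$, and summing along a partition of mesh going to zero while using $\theta>1$ forces $D\equiv 0$. This completes the inductive step and hence the theorem.
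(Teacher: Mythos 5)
Your proposal is correct and is essentially the canonical Lyons--Young proof of the extension theorem; the paper itself does not supply a proof but cites \cite[Theorem 3.7]{lyons2007differential}, and your argument reproduces the strategy used there faithfully: the one-level-at-a-time induction, the Riemann tensor-product approximations with the formal convention $\pi_{n+1}\mathbf{x}\equiv 0$, the one-point-insertion estimate whose only nontrivial contribution sits at level $n+1$ by multiplicativity, the minimal-mass (superadditivity) removal argument producing a convergent $\zeta(\theta)$-type series precisely because $\theta=(n+1)/p>1$ when $n\geq\lfloor p\rfloor$, and the additivity-plus-$\theta>1$ argument for uniqueness. No gaps.
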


The solution theory for rough differential equations originally developed in \cite{lyons1998differential} makes use of the following subset of rough paths.

\begin{definition}[\textbf{Geometric $p$-rough path}]\label{def:geom_rough_path}
A geometric $p$-rough path is a $p$-rough path  expressed as the limit in $p$-variation of a sequence $(\pi_{\leq \lfloor p \rfloor} \circ S(x^n))$ of truncated signatures of bounded variation paths $(x^n)$. We denote by $G \Omega_p = G \Omega_p (\mathbb R^d)$ the space of geometric $p$-rough paths on $\mathbb R^d$. In other words, $G\Omega_p$ is the closure of $\{\pi_{\leq \lfloor p \rfloor} \circ S(x) \mid x \in \Omega_1\}$ in $p$-variation. 
\end{definition}


The extension map in \Cref{thm:extension_theorem} induces a map $(G \Omega_p(\mathbb R^d), d_p) \to (G \Omega_q(\mathbb R^d), d_q)$ sending a geometric $p$-rough path $\mathbf{x}$ to the geometric $q$-rough path $S^{\lfloor q \rfloor}(\mathbf{x})$ which is continuous in $p$-variation; see \cite[Theorems 3.10]{lyons2007differential} for a proof. In the special case where $p \in [1,2)$, the extension map is just the ST.

Let $f : \mathbb R^e \to \mathcal{L}(\mathbb R^d,\mathbb R^e)$ be smooth and recall from Chapter $1$ that this map can be also seen as a linear map from $\mathbb R^d$ into the space $C^{\infty}(\mathbb R^e)$ of smooth vector fields over $\mathbb R^e$. In what follows we will give meaning to the \emph{rough differential equation} (RDE)
\begin{equation}\label{eqn:RDE}
    dy_t = f(y_t)d\mathbf{x}_t, \quad y_0 = a \in \mathbb R^e
\end{equation}
driven by a geometric rough path $\mathbf x$. In addition to Lyons’ original work \cite{lyons1998differential}, various approaches to solve RDEs have been introduced in the literature. They can be split into methods that define solutions as either
\begin{enumerate}
    \item fixed points of Picard iterations \cite{lyons1998differential, lyons2007differential, gubinelli2004controlling, friz2020course}, or
    \item limit of solutions of ODEs \cite{friz2010multidimensional, bailleul2015flows} or discrete approximations \cite{davie2007differential}.
\end{enumerate}

The precise treatment of these different appraches to RDEs is beyond the scope of this book. We will discuss the first approach informally, while the second approach will be presented in more details. Our choice is justified by the fact that it will be more convenient to use the second approach to develop efficient backpropagation procedures for neural RDEs discussed in the next section.

\paragraph{RDE solutions as fixed points of Picard iterations} The basic idea of \cite{lyons1998differential, lyons2007differential} is to transform the RDE (\ref{eqn:RDE}) into an integral equation of the form
\begin{equation}\label{eqn:rough_integral}
    \mathbf{z} = \int h(\mathbf{z})d\mathbf{x}
\end{equation}
where $\mathbf{z}$ is a rough path over $\mathbb R^d \oplus \mathbb R^e$ and $h$ is given by
\begin{equation*}
    h(x_1,y_1)(x_2,y_2) = (x_2, f(y_1 + a)x_2)
\end{equation*}
One can then use the theory of rough integration to give meaning to the \emph{rough integral} in (\ref{eqn:rough_integral}). A solution of the RDE (\ref{eqn:RDE}) is then defined to be a rough path $\mathbf{z}$ which satisfies (\ref{eqn:rough_integral}) and whose projection is a rough path over $\mathbb R^d$ that coincides with $\mathbf{x}$. In particular, the solution is itself a rough path. Thus, one can consider the projection of $\mathbf{z}$ to $\mathbb R^e$ in
order to obtain an ordinary solution path $y$ in $\mathbb R^e$. The existence of solutions is shown using a suitably adapted Picard iteration argument.

More recently, \cite{gubinelli2004controlling} introduced the concept of \emph{controlled rough paths}.  Controlled rough paths are defined with respect to a reference rough path. In the context of RDEs this reference path is given by the driving rough path $\mathbf{x}$. Let us sketch the main idea for the case $p \in[2,3)$; this case was treated fully in \cite{friz2020course}. Thus, assume that $\mathbf{x}$ is a $p$-rough path for $p \in[2,3)$ controlled by $\omega$. A continuous path $y:[0, T] \rightarrow \mathbb R^e$ of finite $p$-variation is said to be controlled by $\mathbf{x}$ if there exists a continuous path $y^{\prime}:[0, T] \rightarrow \mathcal{L}(\mathbb R^d, \mathbb R^e)$ of finite $p$-variation also controlled by $\omega$, a remainder $R^y: \Delta_T \to \mathbb R^e$ and a constant $C>0$ such that
\begin{equation}\label{eqn:controlled_rough_path}
    y_t-y_s=y_s^{\prime} \pi_1 \mathbf{x}_{s, t} + R_{s, t}^y \quad \text { and } \quad\left\|R_{s, t}^y\right\| \leq C\omega(s,t)^{2 / p}
\end{equation}
hold for all $(s, t) \in \Delta_T$. One can interpret (\ref{eqn:controlled_rough_path}) as a generalisation of Taylor's theorem where the path $y$ is not approximated by polynomials but by the homogeneous terms of the rough path $\mathbf{x}$. One can then define integrals of controlled rough paths (and functions thereof) with respect to the reference path $\mathbf{x}$. Therefore, we can rewrite the RDE (\ref{eqn:RDE}) in integral form
\begin{equation}\label{eqn:integral_rde}
    y_t=a + \int_0^T f\left(y_s\right) \mathrm{d} \mathbf{x}_s
\end{equation}
and treat this equation as a fixed point equation in the space of controlled rough paths and apply a Picard iteration to obtain a solution. It turns out \cite[Theorem 8.4]{friz2010multidimensional} that if $y$ solves (\ref{eqn:integral_rde}), then $y^{\prime}$ is given by $y_s^{\prime}=f\left(y_s\right)$. The concept of controlled rough paths has been further extended to signals evolving in time and space in \emph{regularity structures} \cite{hairer2014theory}, which provides a solution theory to a large class of singular partial differential equations.

\subsection{RDE solutions as limits of approximations}

The main idea here is to approximate the rough path $\mathbf{x}$ by limit points of sequences of smooth paths in $p$-variation. The solution of the original RDE can then be defined as the uniform limit of the smooth CDE solutions in the sense of \Cref{thm:ULT} below. Before stating it we recall the definition of $\gamma$-Lipschitz function (in the sense of Stein).

\begin{definition}[\textbf{Lipschitz function}]
Let $V,W$ be two normed space and let $\gamma>0$. A function $g : V \to W$ is called $\gamma$-Lipschitz if $g$ is $\lfloor \gamma \rfloor$ times continuously differentiable and such that there exists a constant $M \geq 0$ such that the supremum norm of its $k^{th}$ derivative, $k=0,...,\lfloor \gamma \rfloor$, and the $(\gamma - \lfloor \gamma \rfloor)$-H\"older norm of its $\lfloor \gamma \rfloor^{th}$ derivative are bounded by $M$. The smallest $M$ satisfying these conditions is the $\gamma$-Lipschitz norm of $g$, denoted by $\norm{g}_{Lip^\gamma}:=\norm{g}_{Lip^\gamma(V, W)}$. We denote by $\text{Lip}^\gamma(V,W)$ the space of $\gamma$-Lipschitz functions from $V$ to $W$. 
\end{definition}

This definition applies in particular to the context of RDEs of the form (\ref{eqn:RDE}), where $V = \mathbb R^e$, $W = \mathcal{L}(\mathbb R^d, \mathbb R^e)$ with operator norm. Saying that the collections of vector fields $f = (f_1,...,f_d) \in \text{Lip}^\gamma(\mathbb R^e, \mathcal L(\mathbb R^d, \mathbb R^e))$ is equivalent to $f_1,...,f_d \in \text{Lip}^\gamma(\mathbb R^e,\mathbb R^e)$.

We now make precise the meaning of the RDE (\ref{eqn:RDE}) where $f : \mathbb R^e \to \mathcal{L}(\mathbb R^d, \mathbb R^e)$ is a family of sufficiently nice vector fields and $\mathbf x$ is a geometric $p$-rough path. The following is essentially
an existence and uniqueness result for such RDEs. Our precise definition of RDE solution will follow as it is based entirely on this result.

\begin{theorem}[\textbf{Universal Limit Theorem}]\cite[Theorems 10.29 and 10.57]{friz2010multidimensional}\label{thm:ULT}
Let $p \geq 1$, let $\mathbf{x} \in G\Omega_p(V)$ be a geometric $p$-rough path, and let $f : \mathbb R^e \to \mathcal L(\mathbb R^d, \mathbb R^e)$ be either linear or $\gamma$-Lipschitz with $\gamma > p$. Let $(x^N)$ be a sequence of continuous bounded variation paths such that the sequence $(\pi_{\leq \lfloor p \rfloor} \circ S(x^N))$ converges in $p$-variation to $\mathbf{x}$. Then, there exists $y \in C^{p-var}([0,T],\mathbb R^e)$ such that the sequence $(y^N)$ defined by
\begin{equation}\label{eqn:seq_cdes}
    dy^N_t = f(y^N_t)dx^N_t, \quad y_0^N = y_0 \in \mathbb R^e.
\end{equation}
converges uniformly to $y$ on $[0,T]$. Furthermore, the  map $(y_0, f, \mathbf x) \mapsto y$ is (at least) locally Lipschitz continuous in all its parameters.
\end{theorem}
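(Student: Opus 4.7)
The strategy is to show that the sequence $(y^N)$ of smooth CDE solutions is Cauchy in the uniform norm; the limit $y$ will then exist, depend only on $\mathbf{x}$ (not on the chosen approximating sequence), and inherit local Lipschitz dependence on $(y_0, f, \mathbf{x})$ from the quantitative estimates used to prove the Cauchy property. The central tool is a stability estimate for \emph{smooth} CDEs phrased entirely through the $p$-variation distance of the truncated signatures $\pi_{\le \lfloor p \rfloor} S(x^N)$, which is Cauchy by hypothesis.

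\textbf{Step 1 (uniform $p$-variation bound).} Each $y^N$ exists globally: if $f$ is Lip$^\gamma$ it is bounded so the classical theory recalled above the theorem applies, and if $f$ is linear a Gr\"onwall argument rules out blow-up. For the \emph{rough} estimate I combine the Taylor expansion of $f$ to order $\lfloor p \rfloor$ with the Chen identity (\Cref{lemma:chen}) to obtain, on every subinterval $[s,t]$, a local Davie-type expansion
\begin{equation*}
y^N_{t} - y^N_{s} \;=\; \sum_{k=1}^{\lfloor p \rfloor} f^{\otimes k}(y^N_s)\bigl[\pi_k S(x^N)_{s,t}\bigr] \;+\; R^{N}_{s,t}, \qquad \|R^N_{s,t}\| \le C\, \omega_N(s,t)^{\gamma/p},
\end{equation*}
where $\omega_N(s,t) := \max_{1 \le k \le \lfloor p \rfloor} \|\pi_k S(x^N)_{s,t}\|^{p/k}$, the constant $C$ depends only on $\|f\|_{\mathrm{Lip}^\gamma}$, and $\gamma/p > 1$. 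Convergence of the truncated signatures in $p$-variation yields a single super-additive control $\omega$ dominating every $\omega_N$. A greedy-partition argument (extract subintervals on which $\omega$ equals a small threshold, sum local increments, use super-additivity) turns the local expansion into a uniform-in-$N$ bound on $\|y^N\|_{p\text{-var};[0,T]}$. In the linear case, a preliminary Gr\"onwall step first gives $\|y^N\|_\infty \le C$, after which $f$ restricted to this bounded region behaves as a genuine Lip$^\gamma$ map and the same expansion applies.

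\textbf{Step 2 (Cauchy property and limit).} Subtracting the local expansions for $y^N$ and $y^M$, and controlling the cross-differences $f^{\otimes k}(y^N_s) - f^{\otimes k}(y^M_s)$ through the Lip$^\gamma$ hypothesis, yields on each subinterval $[s,t]$
\begin{equation*}
\bigl\|(y^N - y^M)_{s,t}\bigr\| \;\lesssim\; d_p\!\bigl(\pi_{\le \lfloor p \rfloor} S(x^N),\, \pi_{\le \lfloor p \rfloor} S(x^M)\bigr)\,\omega(s,t)^{1/p} \;+\; \|y^N_s - y^M_s\|\,\omega(s,t)^{1/p}.
\end{equation*}
A greedy-partition Gr\"onwall iteration absorbs the $\|y^N_s - y^M_s\|$ term and produces
\begin{equation*}
\|y^N - y^M\|_{\infty;[0,T]} \;\le\; K(p, T, f, \mathbf{x})\, d_p\!\bigl(\pi_{\le \lfloor p \rfloor} S(x^N),\, \pi_{\le \lfloor p \rfloor} S(x^M)\bigr),
\end{equation*}
which vanishes as $N, M \to \infty$. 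Hence $(y^N)$ converges uniformly to a continuous limit $y$, and lower semi-continuity of the $p$-variation seminorm combined with Step 1 places $y$ in $C^{p\text{-var}}([0,T],\mathbb{R}^e)$. Running the same comparison across two driving rough paths, two vector fields, and two initial conditions (each driver first approximated by bounded-variation paths) yields the claimed local Lipschitz continuity of the solution map, with constants uniform on bounded neighbourhoods in parameter space.

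\textbf{Main obstacle.} The technical heart of the argument is the local remainder estimate in Step 1 with exponent $\gamma/p$ strictly greater than one: balancing the Taylor expansion of $f$ to order $\lfloor \gamma \rfloor$ (available because $\gamma > p$) against the super-additive control $\omega_N$, and keeping the iterated-integral bookkeeping delivered by Chen's relation tight enough to preserve that exponent, is the real work. A secondary subtlety is that the Gr\"onwall iteration in Step 2 must be performed in a manner compatible with the greedy partition so that errors at successive scales telescope rather than compound. Finally, the linear case cannot be treated by a single Lip$^\gamma$ estimate because global Lipschitz regularity fails at infinity, and must be reduced to the Lip$^\gamma$ setting by the separate a~priori exponential bound described above before the main expansion can be deployed.
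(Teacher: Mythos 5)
The paper does not prove this theorem; it cites it from Friz and Victoir \cite{friz2010multidimensional} (Theorems 10.29 and 10.57) and uses it as a black box. Your sketch faithfully reconstructs the Friz--Victoir architecture: Davie's local expansion with super-linear remainder exponent $\gamma/p>1$, super-additive controls with a greedy-partition (sewing) globalisation, a quantitative Cauchy comparison in the $d_p$-metric of truncated signatures, and a separate reduction for linear vector fields via an a priori exponential bound. You have correctly identified the Davie estimate as the technical heart, and the overall plan is sound.

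Two imprecisions are worth flagging. First, for nonlinear $f$ the expansion coefficients are not the iterated tensor maps $f^{\otimes k}$ that the paper introduces in Section~1 --- those are defined for \emph{linear} $f$ in the Picard iteration --- but the iterated vector-field compositions $\Phi_f\bigl(\pi_k S(x^N)_{s,t}\bigr)(y^N_s)$ that appear in the Euler scheme of equation~(\ref{eqn:Euler_scheme}) later in the chapter; conflating the two is harmless notationally but wrong if read literally. Second, your Step~2 local inequality compresses the Friz--Victoir comparison: the genuine estimate carries contributions of order $\omega(s,t)^{k/p}$ for every $k\le \lfloor p \rfloor$, and it is the super-linear remainder exponent $\gamma/p>1$ (not the $\omega^{1/p}$ prefactor alone) that makes the greedy Gr\"onwall iteration close. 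Both are matters of bookkeeping rather than substance.
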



\begin{remark}
    We note that to only ensure existence of solutions the regularity assumption on $f$ can be relaxed to $f \in \text{Lip}^{\gamma-1}(\mathbb R^e, \mathcal L(\mathbb R^d, \mathbb R^e))$ as proved in \cite[Theorem 10.16]{friz2010multidimensional}. Also note that as for CDEs driven by continuous bounded variation paths, if the vector fields are only locally $\gamma$-Lipschitz, i.e. $f \in \text{Lip}^{\gamma-1}_{loc}(\mathbb R^e, \mathcal L(\mathbb R^d, \mathbb R^e))$, then a (local) solution might exist only up to an explosion time.
\end{remark}

\Cref{thm:ULT} yields the following notion of solution to a RDE.

 \begin{definition}[\textbf{RDE solution}]\label{def:RDE_sol}
     Let $\mathbf x \in G\Omega_p(\mathbb R^d)$ be a geometric $p$-rough path. We say that $y \in C^{p-var}([0,T],\mathbb R^e)$ is a solution to the RDE
     \begin{equation*}
        dy_t = f(y_t)d\mathbf{x}_t, \quad y_0 = a \in \mathbb R^e
    \end{equation*}
     if $y$ belongs to the set of (uniform) limit points constructed in \Cref{thm:ULT}. In particular, if $f : \mathbb R^e \to \mathcal L(\mathbb R^d, \mathbb R^e)$ is linear or $\gamma$-Lipschitz with $\gamma > p$, then $y$ is unique.
 \end{definition}

 \begin{remark}\label{rk:full_rde}
     The notion of RDE solution presented in \Cref{def:RDE_sol} maps a geometric $p$-rough path to a $\mathbb R^e$-valued continuous path of finite $p$-variation. However, it might be desirable to construct a "full" solution also as a geometric rough path. This is the case, for example, if one is interested in using a solution to a first RDE to be the driving signal for a second RDE. We will need to consider such scenario in the proof of \Cref{thm:RDE_adjoint}. More precisely, we will say that $\mathbf y \in G\Omega(\mathbb R^e)$ is the (full) solution to the RDE
     \begin{equation}\label{eqn:RDE_full}
         d \mathbf y_t = f(\mathbf y_t)d\mathbf x_t, \quad \text{started at } \ \mathbf y_0 \in \pi_{\leq \lfloor p \rfloor} \circ S(\Omega_1(\mathbb R^e))
     \end{equation}
     if there exists a sequence $\{x^N\}$ of continuous bounded variation paths such that the sequence $(\pi_{\leq \lfloor p \rfloor} \circ S(x^N))$ converges in $p$-variation to $\mathbf{x}$ and such that the sequence $(\mathbf y_0 \cdot \pi_{\leq \lfloor p \rfloor} \circ S (y^N))$ converges uniformly on $[0,T]$ to $\mathbf y$ as $N \to \infty$, where $\{y^N\}$ are the solutions to the CDEs (\ref{eqn:seq_cdes}), with $y_0 = \pi_1 \mathbf y_0$.
     
     As remarked in \cite[Theorem 10.38]{friz2010multidimensional}, (full) RDE solutions are simply RDE solutions in the sense of \Cref{def:RDE_sol} but along modified vector fields as
     \begin{align*}
         d \mathbf y_t = d\left( \mathbf y_0 \cdot \pi_{\leq \lfloor p \rfloor} \circ S(y)_{0,t} \right) = \mathbf y_0 \cdot \pi_{\leq \lfloor p \rfloor} \circ S(y)_{0,t} \cdot dy_u = \mathbf y_t \cdot f(\pi_1 \mathbf y_t) d\mathbf x_t.
     \end{align*}
 \end{remark}


\subsection{Stratonovich SDEs as RDEs}

In this section we consider an important instance of a rough path, namely the one associated with Brownian motion. We let $(\Omega, \mathcal{F}, \mathbb{P}; \{\mathcal{F}\}_{t \geq 0})$ be a filtered probability space containing a $d$-dimensional Brownian motion. 

\begin{theorem}[\textbf{Brownian motion as a rough path}]\cite[Corollary 13.22]{friz2010multidimensional}\label{thm:BM_RP}
Let $W$ be a standard $d$-dimensional
Brownian motion and $B^N$ be the piecewise linear path with $N$ pieces that coincides with $W$ on the uniform partition $\mathcal{D}_N:=\{0=t_0<t_1<....<t_N=1\}$, with $t_k=kh$ and step-size $h = 1/N$. Then, there exists a random geometric $p$-rough path $\mathbf{W}$ with $p\in[2,3)$ such that for almost all $\omega \in \Omega$
\begin{equation}
    d_p(S^2(W^N)(\omega),\mathbf{W}(\omega)) \to 0 \quad \text{ as } \quad N \to \infty
\end{equation}
and
\begin{equation}
    \mathbf{W}(\omega)_{s,t} = \left(1,(W_t - W_s)(\omega), \left(\int_s^T(W_u - W_s)\otimes \circ \ dW_u\right)(\omega)\right)
\end{equation}
where the stochastic integral is in the Stratonovich sense. The geometric rough path $\mathbf{W}$ is often referred to as Stratonovich enhanced Brownian motion. 
\end{theorem}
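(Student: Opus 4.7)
The plan is to first identify the candidate limit rough path $\mathbf{W}$ explicitly, then show it has the required regularity and lies in the closure used to define geometric rough paths, and finally establish the $p$-variation convergence of $S^2(W^N)$ to $\mathbf{W}$ via a Kolmogorov-type criterion for rough paths.

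First, I would define
\[
\mathbf{W}_{s,t} := \left(1,\; W_t - W_s,\; \int_s^t (W_u - W_s) \otimes \circ \, dW_u\right),
\]
where the second-level stochastic integral is taken in the Stratonovich sense, so that its $(i,j)$-coordinate equals $\int_s^t(W_u - W_s)^i \, dW_u^j + \tfrac{1}{2}\delta_{ij}(t-s)$. Standard properties of Stratonovich integration yield a continuous modification on $\Delta_T$; Chen's multiplicativity $\mathbf{W}_{s,u}\otimes \mathbf{W}_{u,t}=\mathbf{W}_{s,t}$ follows at once from additivity of the Stratonovich integral combined with the elementary identity $(W_u-W_s)\otimes(W_t-W_u)$ appearing in the cross term, exactly as in the proof of Chen's relation in Lemma \ref{lemma:chen}.

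Second, I would establish the uniform-in-$N$ moment bounds
\[
\mathbb{E}\!\left[\lVert \pi_1 S^2(W^N)_{s,t}\rVert^q\right] \leq C_q |t-s|^{q/2}, \qquad \mathbb{E}\!\left[\lVert \pi_2 S^2(W^N)_{s,t}\rVert^q\right] \leq C_q |t-s|^{q},
\]
for every $q\geq 1$, together with the identical bounds for $\mathbf{W}_{s,t}$. The level-one bound is immediate from Brownian scaling. For level two one splits according to whether $[s,t]$ contains grid points of $\mathcal{D}_N$: on a single linear piece the iterated integral is given in closed form by $\tfrac{1}{2}(W^N_t-W^N_s)^{\otimes 2}$, and on multiple pieces one combines Chen's relation with independence of Brownian increments and the BDG inequality. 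Next, I would show the pointwise $L^q$ convergence $S^2(W^N)_{s,t}\to \mathbf{W}_{s,t}$ for each fixed $(s,t)$: this is the Wong--Zakai phenomenon, where passing from smooth to rough paths produces the Stratonovich (rather than It\^o) correction because iterated integrals of piecewise linear approximations are symmetric under integration-by-parts.

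Third, I would invoke the Kolmogorov-type criterion for rough paths (e.g.\ Friz--Victoir, Theorem~3.1 together with Corollary~13.21), which upgrades the uniform moment control and pointwise $L^q$ convergence established above to almost-sure convergence of $S^2(W^N)$ to $\mathbf{W}$ in the homogeneous $d_p$-metric for every $p\in(2,3)$. Since each $W^N$ is of bounded variation, this places $\mathbf{W}$ in $G\Omega_p(\mathbb{R}^d)$ by Definition \ref{def:geom_rough_path}, completing the proof. The main technical hurdle is the quantitative second-level estimate: controlling $\mathbb{E}[\lVert \pi_2 S^2(W^N)_{s,t} - \pi_2 \mathbf{W}_{s,t}\rVert^q]$ uniformly in $N$ with the correct scaling $|t-s|^q$ requires isolating the off-diagonal cancellations in the Itô--Stratonovich decomposition and tracking the mesh parameter $h=1/N$ so that the correction $\tfrac{1}{2}\delta_{ij}(t-s)$ arises in the limit rather than a spurious $h$-dependent bias.
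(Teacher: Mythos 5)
The paper does not prove this theorem itself; it is stated and attributed directly to Friz--Victoir \cite[Corollary 13.22]{friz2010multidimensional}, so there is no ``paper's own proof'' to compare against. Your sketch reproduces the standard Friz--Victoir strategy in the correct logical order: define the candidate Stratonovich enhancement $\mathbf{W}$, verify multiplicativity via additivity of the Stratonovich integral, establish uniform-in-$N$ $L^q$-moment bounds at both levels (with the scalings $|t-s|^{q/2}$ and $|t-s|^q$), prove pointwise $L^q$ convergence of $S^2(W^N)_{s,t}\to\mathbf{W}_{s,t}$ as a Wong--Zakai statement, and then invoke the Kolmogorov--Lamperti criterion for rough paths (the ``Kolmogorov tightness + pointwise convergence $\Rightarrow$ $d_p$-convergence'' mechanism) to pass to a.s.\ convergence in homogeneous $p$-variation for $p\in(2,3)$, whence $\mathbf{W}\in G\Omega_p$ by definition. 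This is exactly how the cited result is proved.

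Two small imprecisions worth flagging. First, the It\^o--Stratonovich correction $\tfrac{1}{2}\delta_{ij}(t-s)$ appears only on the diagonal $i=j$; off-diagonal entries of $\pi_2$ carry no correction, so ``off-diagonal cancellations in the It\^o--Stratonovich decomposition'' is not quite the right description --- what actually needs care off-diagonal is the variance bound for the martingale part of $\pi_2 S^2(W^N)_{s,t}$, uniformly in $N$, which you get from BDG and independence of increments across grid cells. Second, the Kolmogorov criterion you cite requires the moment exponent $q$ large enough that $q/2 - 1 > 1/p'$ for the target variation parameter $p' < p$; since Brownian increments have moments of all orders this is free, but the proof should state it. Neither of these is a genuine gap in the argument --- your outline is sound.
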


Note that the piecewise linear interpolation in \Cref{thm:BM_RP} can be replaced by other kinds of approximation and the same convergence holds. Also, it is not necessary for the partition to be uniform; other examples include dyadic \cite[Proposition 3.6]{friz2020course}. 

To put simply, this theorem states that Brownian motion can be approximated (in a rough path sense) by a sequence of bounded variation paths. This is particularly helpful within stochastic analysis as it allows one to construct pathwise solutions for SDEs governed by sufficiently regular vector fields. Importantly for us, the above theory applies directly to (Stratonovich) SDEs as Brownian motion can be viewed as a geometric $p$-rough path with $p \in (2, 3)$ by \Cref{thm:BM_RP}.

The following is a consequence of the universal limit theorem \ref{thm:ULT} for SDEs.

\begin{theorem}
    Suppose that $\mu \in Lip^1(\mathbb{R}^{e+1},\mathbb{R}^e)$ and $\sigma \in Lip^{2 + \epsilon}(\mathbb{R}^{e+1}, \mathcal L(\mathbb{R}^d,\mathbb{R}^e))$. Let $\{W^N\}_{N \geq 1}$ be a sequence of piecewise linear paths converging to the Stratonovich enhanced Brownian motion $\mathbf{W}$. Let $\{y^N\}_{N \geq 1}$ be the sequence of solutions to the following CDE
    \begin{equation}
        dy^N_t = \mu(t,y^N_t)dt + \sigma(t,y^N_t)dW^N_t, \quad y^N_0 = a \in \mathbb{R}^e.
    \end{equation}
    Then, $y^N$ converges uniformly to a process $y:[0,T]\to \mathbb{R}^e$ which coincides almost surely with the strong solution of the Stratonovich SDE
    \begin{equation}
        dy_t = \mu(t,y_t)dt + \sum_{i=1}^d\sigma_i(t,y_t)\circ dW^i_t, \quad y_0 = a.
    \end{equation}
\end{theorem}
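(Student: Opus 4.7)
The plan is to recast the CDE as an RDE driven by a time-augmented rough path and then combine Theorem~\ref{thm:BM_RP} with the Universal Limit Theorem~\ref{thm:ULT}. First I would eliminate the explicit time dependence in the coefficients by enlarging the state: let $\tilde y^N_t := (t, y^N_t) \in \mathbb R^{e+1}$ and introduce the time-extended driving path $\widehat W^N_t := (t, W^N_t) \in \mathbb R^{d+1}$. In these variables the CDE becomes $d\tilde y^N_t = \tilde f(\tilde y^N_t) d\widehat W^N_t$ with a single vector field $\tilde f : \mathbb R^{e+1} \to \mathcal L(\mathbb R^{d+1}, \mathbb R^{e+1})$ whose component along the "time" coordinate of the driver is $(1, \mu(\cdot))$ and whose components along the Brownian coordinates are $(0, \sigma_i(\cdot))$. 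Because the first coordinate of $\widehat W^N$ is of bounded variation, the only extra regularity imposed on $\tilde f$ by ULT is that of the $\sigma$-block, namely Lip$^{2+\epsilon}$; the drift block only needs Lip$^{1}$, which is the standard regularity needed to drive a CDE by a bounded variation path and which fits into the Friz--Victoir framework for RDEs with drift.

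Next I would argue convergence of the drivers in rough path sense. By Theorem~\ref{thm:BM_RP} the step-2 signatures $S^2(W^N)$ converge almost surely in $p$-variation to the Stratonovich enhancement $\mathbf W$ for some $p \in (2,3)$. Appending the deterministic, bounded variation time coordinate does not disturb this: the cross iterated integrals between $t$ and $W^N$ are ordinary Young integrals (and likewise for the limit), and they converge to $\int_0^\cdot s\,dW_s$ jointly in $p$-variation by continuity of Young pairing. Hence $S^2(\widehat W^N) \to \widehat{\mathbf W}$ in $d_p$ almost surely. By the Universal Limit Theorem~\ref{thm:ULT} (in the variant allowing a bounded variation component and a rough component), applied pathwise on the full probability one set where the rough path convergence holds, the classical CDE solutions $\tilde y^N$ converge uniformly on $[0,T]$ to the unique solution $\tilde y$ of the RDE $d\tilde y_t = \tilde f(\tilde y_t)\,d\widehat{\mathbf W}_t$. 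Projecting onto the last $e$ coordinates recovers a continuous process $y$ which is the uniform limit of $y^N$.

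The remaining and main obstacle is to identify this RDE solution $y$ with the strong Stratonovich SDE solution $y^{\mathrm{strat}}$. For this I would argue as follows. Both processes are adapted to the Brownian filtration and have continuous sample paths. For each smooth approximation $W^N$, the classical Wong--Zakai theorem (or direct comparison of Riemann--Stieltjes with Stratonovich integrals) gives that the classical ODE solution $y^N$ converges in probability, uniformly on $[0,T]$, to $y^{\mathrm{strat}}$. Combined with the almost sure uniform convergence $y^N \to y$ obtained above, uniqueness of limits forces $y = y^{\mathrm{strat}}$ almost surely. The delicate point here is matching the regularity hypotheses of the two convergence statements: the Wong--Zakai half of the argument typically needs $\sigma \in C^2$ so that the Stratonovich correction is well-defined, which is exactly what the assumption $\sigma \in \mathrm{Lip}^{2+\epsilon}$ secures, while the Lip$^{1}$ drift is enough for the (pathwise) Young integration of $\mu(s,y_s)\,ds$. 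Once this is in place the identification, and hence the theorem, follows.
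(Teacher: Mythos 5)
The paper states this theorem without proof, so there is no in-text argument to compare against; I will evaluate your proposal on its own terms. Your route — state-augmentation to absorb time-dependence, Young-pairing to lift $(t, W^N_t)$ to a level-two rough path, the Universal Limit Theorem applied $\omega$-by-$\omega$ on the full-measure event where $S^2(W^N) \to \mathbf W$ in $d_p$, then projection and identification — is the standard Friz--Victoir argument and is essentially correct.

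Two points deserve sharpening. First, as you notice, Theorem~\ref{thm:ULT} as stated in these notes requires the \emph{whole} vector field $\tilde f$ to be $\operatorname{Lip}^\gamma$ with $\gamma > p$, which would force the drift $\mu$ into $\operatorname{Lip}^{2+\epsilon}$ as well. The theorem you actually need is the ``RDE with drift'' version (Friz--Victoir, Chapter~12), where the coordinate of the driver paired with $\mu$ is permitted to be of bounded variation and only $\operatorname{Lip}^1$ regularity is imposed on the corresponding vector-field block. You flag this correctly, but it is worth being explicit that this is not a direct corollary of Theorem~\ref{thm:ULT}: the proof of the drift variant redoes the error estimates using the mixed control $\omega(s,t) = \lVert t-s\rVert + \lVert \mathbf W\rVert_{p\text{-var};[s,t]}^p$ rather than a single $p$-variation control. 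Second, your identification step via Wong--Zakai plus uniqueness of limits is logically sound (Wong--Zakai for piecewise linear approximations on a nested sequence of partitions is classical and predates rough paths), but it is slightly indirect: the more native argument is to show that rough integration against the Stratonovich-enhanced rough path $\mathbf W$ of a process controlled by $\mathbf W$ agrees almost surely with the Stratonovich integral — this is \cite[Theorem~17.3]{friz2010multidimensional} and immediately gives $y = y^{\mathrm{strat}}$ without re-proving a convergence statement you already have. Either route closes the gap, so this is a matter of economy rather than correctness.
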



\section{Numerical RDE solvers}

Recall the identification of a vector field $g = (g^1,...,g^e)^\top : \mathbb R^e \to  \mathbb R^e$ with the first order differential operator
\begin{equation*}
    g(y) = \sum_{j=1}^e g^j(y)\frac{\partial}{\partial y^j}.
\end{equation*}
Given a family of smooth vector fields $f = (f_1,...,f_d)$ on $\mathbb R^e$, it will be convenient in the sequel to define the algebra homomorphism $\Phi_f$ from $T((\mathbb R^d))$ into the space of differential operators generated by
\begin{equation*}
    \Phi_f(\mathbf 1) = I_e \quad \text{and} \quad \Phi_f(e_i) = f_i
\end{equation*}
where $e_1,...,e_d$ are orthogonal basis elements of $\mathbb R^d$. Thus, with this notation, for any multi-index $(i_1,...,i_k) \in \{1,...,d\}^k$ we have that
\begin{equation*}
    \Phi_f(e_1 ... e_k) = f_{i_1}...f_{i_k}.
\end{equation*}
where the right-hand-side is understood as the composition of smooth differential operators.

\subsection{Euler scheme}


A natural step-$N$ approximation for the solution of the RDE
\begin{equation*}
    dy_t = f(y_t)d\mathbf x_t, \quad y_s \in \mathbb R^e
\end{equation*}
is given by the \emph{step-$N$ Euler scheme}
$y_t \approx \mathcal{E}_{s,t}^N(y_s; f, \mathbf x)$ where
\begin{align}
    \mathcal{E}_{s,t}^N(y_s; f, \mathbf x) := \Phi_f\Big(S^N(\mathbf x)_{s,t} \Big) (y_s).
    \label{eqn:Euler_scheme}
\end{align}
The reader is invited to derive a first error estimate for the Euler scheme in \Cref{ex:Euler_estimate} for paths of bounded variation. It turns out that a similar error estimate also holds for RDEs driven by geometric rough paths as stated in the following lemma.

\begin{lemma}[\textbf{Local error of Euler scheme}]\cite[Corollary 10.17]{friz2010multidimensional}\label{lemma:local_error_Euler}
    Let $\mathbf{x} \in G\Omega(\mathbb R^d)$ be a geometric $p$-rough path and $f = (f_1,...,f_d)$ a family of $Lip^{\gamma-1}$ vector fields over $\mathbb R^e$ with $\gamma > p$. Let $y$ be the (not necessarily unique) solution to the RDE (\ref{eqn:RDE}). Then there exists a constant $C = C(p,\gamma)$ such that
    \begin{equation*}
        \| y_t - \mathcal{E}_{s,t}^{\lfloor \gamma \rfloor}(y_s; f, \mathbf x) \| \leq C\left(\|f\|_{\operatorname{Lip}^{\gamma-1}} \|\mathbf x\|_{p-var;[s,t]}\right)^\gamma.
    \end{equation*}
\end{lemma}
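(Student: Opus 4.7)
The plan is to reduce the estimate to the bounded-variation case by smooth approximation and then to use a Taylor-type expansion of the CDE flow, controlled via Lyons' factorial-decay for iterated integrals of the rough-path extension. By the definition of geometric $p$-rough paths (Def. \ref{def:geom_rough_path}), I pick a sequence $x^n\in C_1([0,T],\mathbb R^d)$ with $\pi_{\leq\lfloor p\rfloor}\circ S(x^n)\to\mathbf{x}$ in $p$-variation. Lyons' extension theorem (Thm. \ref{thm:extension_theorem}) and its continuity in $p$-variation then give $S^{\lfloor\gamma\rfloor}(x^n)\to S^{\lfloor\gamma\rfloor}(\mathbf{x})$ in $p$-variation, so in particular $\|\mathbf{x}^n\|_{p\text{-var};[s,t]}\to\|\mathbf{x}\|_{p\text{-var};[s,t]}$. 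By the universal limit theorem (Thm. \ref{thm:ULT}) and Def. \ref{def:RDE_sol}, after passing to a subsequence the CDE solutions $y^n$ converge uniformly to the RDE solution $y$. Since $\mathcal{E}^{\lfloor\gamma\rfloor}_{s,t}(a;f,\cdot)$ depends continuously on its rough-path input and on $a$, it suffices to prove the asserted inequality for bounded-variation $x^n$ with a constant independent of $n$, and then let $n\to\infty$.

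Fix a bounded-variation $x=x^n$. Iterating the defining integral equation $y_t=y_s+\int_s^t f(y_u)\,dx_u$ exactly $\lfloor\gamma\rfloor+1$ times and collecting terms via the definition of $\Phi_f$ and of the iterated integrals $S^{(k)}(x)_{s,t}$ produces the identity
$$y^n_t=\mathcal{E}^{\lfloor\gamma\rfloor}_{s,t}(y^n_s;f,x)+R^n_{s,t},$$
where the remainder $R^n_{s,t}$ decomposes naturally into a leading term involving $\Phi_f^{(\lfloor\gamma\rfloor+1)}(y^n_s)$ applied to $S^{(\lfloor\gamma\rfloor+1)}(x)_{s,t}$, together with a $(\lfloor\gamma\rfloor+1)$-fold iterated integral whose innermost integrand is the defect $\Phi_f^{(\lfloor\gamma\rfloor+1)}(y^n_{u_0})-\Phi_f^{(\lfloor\gamma\rfloor+1)}(y^n_s)$ acting on $dx_{u_0}\otimes\cdots\otimes dx_{u_{\lfloor\gamma\rfloor}}$.

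Now I bound each piece uniformly in $n$. For the leading term I invoke Thm. \ref{thm:extension_theorem} to obtain $\|S^{(\lfloor\gamma\rfloor+1)}(\mathbf{x}^n)_{s,t}\|\le C\|\mathbf{x}^n\|_{p\text{-var};[s,t]}^{\lfloor\gamma\rfloor+1}$ and the Lip$^{\gamma-1}$ hypothesis to control $\|\Phi_f^{(\lfloor\gamma\rfloor+1)}\|_\infty\le C\|f\|_{\operatorname{Lip}^{\gamma-1}}^{\lfloor\gamma\rfloor+1}$. Since $\lfloor\gamma\rfloor+1\ge\gamma$ and one may assume $\|\mathbf{x}^n\|_{p\text{-var};[s,t]}\le 1$ (by splitting $[s,t]$ at the cost of a universal constant), this term fits the target bound. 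For the defect integral, the Lip$^{\gamma-1}$ assumption makes $\Phi_f^{(\lfloor\gamma\rfloor+1)}$ $(\gamma-\lfloor\gamma\rfloor)$-H\"older, and combined with the a priori estimate $\|y^n_{u}-y^n_s\|\lesssim\|f\|_{\operatorname{Lip}^{\gamma-1}}\|\mathbf{x}^n\|_{p\text{-var};[s,u]}$ coming from the universal limit theorem, this produces an extra factor of $\|\mathbf{x}^n\|_{p\text{-var};[s,t]}^{\gamma-\lfloor\gamma\rfloor}$; multiplied by the factorial-decay bound on the remaining $\lfloor\gamma\rfloor$-fold iterated integrals, this yields $C(p,\gamma)\bigl(\|f\|_{\operatorname{Lip}^{\gamma-1}}\|\mathbf{x}^n\|_{p\text{-var};[s,t]}\bigr)^\gamma$. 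Passing to the limit $n\to\infty$ via the continuity arguments of the first paragraph delivers the stated inequality. The main obstacle is the careful bookkeeping in this last step: extracting precisely the H\"older factor $\|\mathbf{x}^n\|_{p\text{-var};[s,t]}^{\gamma-\lfloor\gamma\rfloor}$ from the defect, and doing so with a constant that depends only on $p$ and $\gamma$ rather than on the smooth approximation $x^n$.
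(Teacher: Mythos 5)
The paper gives no proof of this lemma; it is cited directly from Friz--Victoir, so there is no in-paper argument to compare against. The general shape of your plan (reduce to bounded-variation drivers by geometric approximation, do a Taylor-type iteration of the integral equation, and harvest the extra power $\gamma-\lfloor\gamma\rfloor$ from a H\"older bound) is the right one, but your expansion goes one level too deep, and that level requires a derivative of $f$ that the hypothesis $f\in\operatorname{Lip}^{\gamma-1}$ does not supply. Recall that $\Phi_f(e_{i_1}\otimes\cdots\otimes e_{i_k})$, applied to the identity, involves up to $k-1$ derivatives of $f$. Under $\operatorname{Lip}^{\gamma-1}$ one has exactly $\lfloor\gamma-1\rfloor=\lfloor\gamma\rfloor-1$ bounded derivatives, the last being $(\gamma-\lfloor\gamma\rfloor)$-H\"older; so $\Phi_f^{(\lfloor\gamma\rfloor)}$ is well defined (this is precisely what makes $\mathcal{E}^{\lfloor\gamma\rfloor}$ meaningful), but $\Phi_f^{(\lfloor\gamma\rfloor+1)}$, which needs $\lfloor\gamma\rfloor$ derivatives, is not. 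Both the "leading term $\Phi_f^{(\lfloor\gamma\rfloor+1)}(y^n_s)$ applied to $S^{(\lfloor\gamma\rfloor+1)}$" and the defect integrand $\Phi_f^{(\lfloor\gamma\rfloor+1)}(y^n_{u_0})-\Phi_f^{(\lfloor\gamma\rfloor+1)}(y^n_s)$ therefore do not exist under the stated regularity, and the argument collapses at this point.

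The repair is to stop one iteration earlier. After $\lfloor\gamma\rfloor$ expansions one obtains, for smooth $x^n$,
\begin{equation*}
y^n_t - \mathcal{E}^{\lfloor\gamma\rfloor}_{s,t}(y^n_s;f,x^n)
=\int_{s<u_1<\cdots<u_{\lfloor\gamma\rfloor}<t}
\bigl[\Phi_f^{(\lfloor\gamma\rfloor)}(y^n_{u_{\lfloor\gamma\rfloor}})-\Phi_f^{(\lfloor\gamma\rfloor)}(y^n_s)\bigr]
\,dx^n_{u_{\lfloor\gamma\rfloor}}\cdots dx^n_{u_1},
\end{equation*}
whose integrand involves only $\lfloor\gamma\rfloor-1$ derivatives of $f$. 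The extra power $\gamma-\lfloor\gamma\rfloor$ then comes from the $(\gamma-\lfloor\gamma\rfloor)$-H\"older modulus of $D^{\lfloor\gamma\rfloor-1}f$ together with an a priori bound on $\|y^n_{u}-y^n_s\|$ --- there is no separate leading term to isolate. Two further points you should tighten: (i) your a priori bound $\|y^n_u-y^n_s\|\lesssim\|f\|\,\|\mathbf{x}^n\|_{p\text{-var};[s,u]}$ cannot come from the trivial estimate $\|f\|_\infty\|x^n\|_{1\text{-var}}$, because the $1$-variation of the approximants is not uniformly controlled as $n\to\infty$; you need a genuine $p$-variation estimate for RDE solutions, uniform along the approximating sequence; (ii) the reduction "assume $\|\mathbf{x}^n\|_{p\text{-var};[s,t]}\le 1$ by splitting $[s,t]$" is not needed once the H\"older bound is used to produce $(\|f\|\,\|\mathbf{x}\|)^\gamma$ directly, and as stated it is not obviously free of extra, $n$-dependent constants.
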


Iterations of the step-$N$ Euler scheme over a partition $\mathcal{D} = \{0=t_0<t_1<...<t_n=T\}$ leads to an approximate solution $y^{\text{Euler};N,\mathcal{D}}$ over the entire interval $[0,T]$ defined for any $N \geq p$ and at any $t_k \in \mathcal{D}$ as
\begin{equation*}
     y^{\text{Euler};N,\mathcal{D}}_T := \mathcal{E}^N_{t_0,t_1}(...\mathcal{E}^N_{t_1,t_2}(\mathcal{E}^N_{t_{k-1},t_k}(y_0;f,\mathbf x); f,\mathbf x);...;f,\mathbf x).
\end{equation*}

\begin{lemma}[\textbf{Global error of Euler scheme}]\cite[Theorem 10.33]{friz2010multidimensional}
    Under the assumptions as in \Cref{lemma:local_error_Euler}, there exists a constant $C = C(p,\gamma)$ such that
    \begin{equation}\label{eqn:global_error_Euler}
        \| y_T -y^{\text{Euler};\lfloor \gamma \rfloor,\mathcal{D}}_T \| \leq Ce^{C\omega(0,T)}\sum_{i=1}^n\omega(t_{i-1},t_i)^{(\lfloor \gamma \rfloor +1)/p}
    \end{equation}
    with the control $\omega(s,t) = \|f\|^p_{\operatorname{Lip}^{\gamma}} \|\mathbf x\|^p_{p-var;[s,t]}$.
\end{lemma}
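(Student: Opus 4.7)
The bound is a classical one-step-method estimate: it pairs the \emph{local} truncation error from the previous lemma with a \emph{stability} (Lipschitz-in-initial-condition) estimate for the true RDE flow, and telescopes over the partition $\mathcal D$. Write $\Psi_{s,t}:\mathbb{R}^e\to\mathbb{R}^e$ for the time-$s$-to-$t$ solution map of the RDE, and $\mathcal{E}_{s,t}^{N}$ for the step-$N$ Euler map of \eqref{eqn:Euler_scheme}; by construction $y_{t_i}=\Psi_{t_{i-1},t_i}(y_{t_{i-1}})$ and $y^{\text{Euler};N,\mathcal D}_{t_i}=\mathcal{E}_{t_{i-1},t_i}^{N}(y^{\text{Euler};N,\mathcal D}_{t_{i-1}})$. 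Set $\epsilon_i:=\|y_{t_i}-y^{\text{Euler};N,\mathcal D}_{t_i}\|$ with $N=\lfloor\gamma\rfloor$, and $\omega_i:=\omega(t_{i-1},t_i)$.

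\textbf{Telescoping decomposition.} The first step is to add and subtract $\Psi_{t_{i-1},t_i}(y^{\text{Euler};N,\mathcal D}_{t_{i-1}})$ to obtain
\begin{equation*}
\epsilon_i \;\leq\; \underbrace{\big\|\Psi_{t_{i-1},t_i}(y_{t_{i-1}})-\Psi_{t_{i-1},t_i}(y^{\text{Euler};N,\mathcal D}_{t_{i-1}})\big\|}_{\text{propagation of prior error}} \;+\; \underbrace{\big\|\Psi_{t_{i-1},t_i}(y^{\text{Euler};N,\mathcal D}_{t_{i-1}})-\mathcal{E}_{t_{i-1},t_i}^{N}(y^{\text{Euler};N,\mathcal D}_{t_{i-1}})\big\|}_{\text{local truncation error on }[t_{i-1},t_i]}.
\end{equation*}
The second term is controlled directly by the local-error lemma (applied with initial value $y^{\text{Euler};N,\mathcal D}_{t_{i-1}}$ rather than $y_{t_{i-1}}$; the estimate is uniform in starting point), yielding a bound $C\,\omega_i^{(\lfloor\gamma\rfloor+1)/p}$ after absorbing $\|f\|_{\operatorname{Lip}^\gamma}$ into the control $\omega$.

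\textbf{Stability and Gronwall iteration.} For the first (propagation) term, I would invoke the local Lipschitz dependence of the RDE flow on its initial condition, which for $\gamma$-Lipschitz vector fields takes the form $\|\Psi_{s,t}(a)-\Psi_{s,t}(a')\|\leq \exp\!\big(K\omega(s,t)\big)\|a-a'\|$ for a constant $K=K(p,\gamma)$; this is a direct consequence of the Universal Limit Theorem's Lipschitz statement (and of \cite[Thm.~10.26]{friz2010multidimensional}) combined with the fact that $\omega(s,t)=\|f\|^{p}_{\operatorname{Lip}^\gamma}\|\mathbf{x}\|^p_{p\text{-var};[s,t]}$ is a bonafide control. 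Plugging these two estimates back into the telescope gives the recursion
\begin{equation*}
\epsilon_i \;\leq\; e^{K\omega_i}\epsilon_{i-1} \;+\; C\,\omega_i^{(\lfloor\gamma\rfloor+1)/p}, \qquad \epsilon_0=0.
\end{equation*}
Iterating and using super-additivity of $\omega$, $\sum_{j=i+1}^{n}\omega_j\leq \omega(0,T)$, produces
\begin{equation*}
\epsilon_n \;\leq\; \sum_{i=1}^{n}\exp\!\Big(K\sum_{j=i+1}^{n}\omega_j\Big)\,C\,\omega_i^{(\lfloor\gamma\rfloor+1)/p} \;\leq\; C\,e^{K\omega(0,T)}\sum_{i=1}^{n}\omega_i^{(\lfloor\gamma\rfloor+1)/p},
\end{equation*}
which is exactly \eqref{eqn:global_error_Euler} after relabelling constants.

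\textbf{Main obstacle.} The routine piece is the telescoping and the discrete Gronwall; the substantive difficulty is the stability estimate for the flow with the exponential factor $e^{K\omega(s,t)}$ in the \emph{control} (not merely in $t-s$). For CDEs driven by bounded-variation paths this is the estimate quoted after the definition of CDE solution earlier in this chapter, but in the rough regime it must be lifted along approximating sequences $x^N\to\mathbf{x}$ via the Universal Limit Theorem, ensuring the Lipschitz constant is stable under the $p$-variation limit. Once that ingredient is in hand, everything else is bookkeeping.
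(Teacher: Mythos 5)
The paper itself offers no proof of this lemma; it is simply cited from \cite[Theorem 10.33]{friz2010multidimensional}, so your argument is necessarily a reconstruction rather than something to be checked against the authors' text. Your overall strategy — telescope across the partition, bound the new one-step contribution by the local error estimate, propagate the accumulated error via Lipschitz stability of the RDE flow in its initial condition, and close with a discrete Gronwall argument using super-additivity of the control — is indeed the standard route, and it is essentially the one used in Friz--Victoir. The flow-stability estimate $\|\Psi_{s,t}(a)-\Psi_{s,t}(a')\|\le e^{K\omega(s,t)}\|a-a'\|$ is available under $\operatorname{Lip}^\gamma$ regularity (and is one of the places where the extra regularity beyond $\operatorname{Lip}^{\gamma-1}$ is actually used), and the Gronwall iteration you set up is correct.

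There is, however, a concrete gap in the step where you invoke Lemma \ref{lemma:local_error_Euler}. That lemma bounds the one-step defect by $C\bigl(\|f\|_{\operatorname{Lip}^{\gamma-1}}\|\mathbf{x}\|_{p\text{-var};[t_{i-1},t_i]}\bigr)^{\gamma}$, i.e.\ by $C\,\omega_i^{\gamma/p}$ after absorbing $\|f\|$ into the control. You assert that this yields $C\,\omega_i^{(\lfloor\gamma\rfloor+1)/p}$, but since $\gamma<\lfloor\gamma\rfloor+1$ we have $\gamma/p<(\lfloor\gamma\rfloor+1)/p$, so $\omega_i^{(\lfloor\gamma\rfloor+1)/p}<\omega_i^{\gamma/p}$ for small step controls and the claimed exponent is strictly sharper than what Lemma \ref{lemma:local_error_Euler} delivers. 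Running your telescope with the exponent the local lemma actually provides produces the weaker global bound $C e^{K\omega(0,T)}\sum_i\omega_i^{\gamma/p}$, not \eqref{eqn:global_error_Euler}. To obtain the stated exponent $(\lfloor\gamma\rfloor+1)/p$ you would need a sharper local estimate — of the type in \cite[Theorem 10.26]{friz2010multidimensional}, which gives $C\,\omega(s,t)^{(N+1)/p}$ for the step-$N$ Euler scheme under $\operatorname{Lip}^\gamma$ vector fields with $N<\gamma$ — and you should either cite that directly or acknowledge that your argument, as written from Lemma \ref{lemma:local_error_Euler} alone, proves a slightly weaker statement than \eqref{eqn:global_error_Euler}.
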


\begin{remark}
    Runge-Kutta schemes for RDEs have been studied in \cite{redmann2022runge}.
\end{remark}

\subsection{The log-ODE method}



The log-ODE method is an effective method for approximating the solution of a CDE by reducing it to an ODE \cite{gaines1997variable, gyurko2008numerical, gyurko2010rough, janssen2012order, morrill2021neural}. This method is based on another path-transform, called \emph{log-signature transform} (logST) which efficiently encodes the same integral information as the ST but it removes certain polynomial redundancies, such as
\begin{equation*}
    \int_0^T\int_0^s d x_u^{i} d x_s^{j} + \int_0^T\int_0^s d x_u^{j} d x_s^{i} = x_t^{i}x_t^{j} - x_0^{i}x_0^{j}, \quad i,j\in\{1,\cdots, d\}.
\end{equation*}
To this end, recall that the extended tensor algebra $T((\mathbb R^d))$ has a well-defined notion of logarithm defined in equation (\ref{eqn:tensor_log}) as well as its truncated version. 


It is well-known that the logarithm is the inverse of the exponential map when restricted to the set of grouplike elements. The image of the latter by the logarithm map is a linear subspace of $T((\mathbb{R}^d))$ corresponding to the \emph{free Lie algebra} $\mathfrak L(\mathbb R^d)$. It is the smallest Lie subalgebra of $T((\mathbb{R}^d))$ under the Lie bracket. We denote by $\mathfrak L^N(\mathbb R^d)$ the truncated Lie algebra. A clear presentation of the background to this can be found in  \cite{reutenauer2003free} and \cite{salvi2023structure}. 

The vector space $\mathfrak L^N(\mathbb R^d)$ has dimension $\beta(m, N)$ with
\begin{equation*}
    \beta(m, N) = \sum_{k = 1}^N \frac{1}{k} \sum_{i | k} \mu\left(\frac{k}{i}\right) m^i
\end{equation*}
where $\mu$ the M{\"o}bius function. Bases of this space are known as \emph{Hall bases} \cite{reutenauer2003free, reizenstein2017calculation}. One of the most well-known bases is the \emph{Lyndon basis}, denoted by $\mathcal{B}^N$, and indexed by \emph{Lyndon words}. A Lyndon word is any word which occurs lexicographically strictly earlier than any word obtained by cyclically rotating its elements.

Note that, by freeness of $\mathfrak L^N(\mathbb R^d)$ as a Lie algebra, $\Phi_f$ maps elements of $\mathfrak L^N(\mathbb R^d)$ to first-order differential operators which can be associated with vector fields.

The image of a signature under the logarithm (or its representation in a Hall basis) is called the \emph{log-signature transform} (logST). Note that $\beta(m,N)<N$, i.e. that the truncated log-signature can store the same amount of information as the truncated signature but it requires less storage memory.

The \emph{step-$N$ log-ODE method} of the RDE (\ref{eqn:RDE}) over an interval $[s,t]$ is given by 
\begin{equation}
    y_t \approx \mathcal{L}^N_{s,t}(y_s; f, \mathbf x) := z_1
\end{equation}
where $z : [0,1] \to \mathbb R^e$ is the solution of the following ODE on $[0,1]$
\begin{align}\label{eq:standardlogode}
dz_u = F^N_{s,t}\left(z_u; f, \mathbf x\right) du, \quad z_0 = y_s
\end{align}
and where the vector field $F^N_{s,t}(\cdot \ ; f, \mathbf x) : \mathbb{R}^e\rightarrow\mathbb{R}^e$ is defined from the log-signature as
\begin{equation}\label{eq:logodevectfield}
F^N_{s,t}\left(z; f, \mathbf x\right) := \Phi_f\Big(\log^N (S(\mathbf x)_{s,t}) \Big) (z).
\end{equation}
The next theorem quantifies the approximation error of the \emph{log-ODE method} in terms of the regularity of the systems vector field $f$ and control path $\mathbf x$.

\begin{theorem}[\textbf{Local error of the log-ODE method}]\label{thm:logODEthm}
Let $p \geq 1$. Let $\mathbf{x} \in G\Omega_p(\mathbb{R}^d)$ be a geometric $p$-rough path. Let $f \in Lip^\gamma(\mathbb{R}^e, L(\mathbb{R}^d,\mathbb{R}^e))$, with $\gamma > p$. Let $y$ be the solution to the RDE (\ref{eqn:RDE}). Then, there is a constant $C = C(p,\gamma)$ such that 
\begin{equation}
\norm{y_t - \mathcal{L}^{\floor \gamma}_{s,t}(y_s; f, \mathbf x)} \leq C\left(\norm{f}_{\mathrm{Lip}^\gamma}\norm{\mathbf{x}}_{p\text{-var};[s,t]}\right)^\gamma.
\label{eq:local_logodeestimate}
\end{equation}
\end{theorem}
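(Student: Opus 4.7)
My plan is to bound the log-ODE error by comparing the log-ODE step $\mathcal{L}^{\lfloor \gamma \rfloor}_{s,t}(y_s; f, \mathbf{x})$ with the Euler step $\mathcal{E}^{\lfloor \gamma \rfloor}_{s,t}(y_s; f, \mathbf{x})$, and then invoking the local Euler estimate of \Cref{lemma:local_error_Euler}. Set $L := \log^{\lfloor \gamma \rfloor}(S(\mathbf{x})_{s,t})$, so that the log-ODE output $z_1$ is the time-$1$ flow of the autonomous smooth vector field $V := \Phi_f(L)$ on $\mathbb{R}^e$. Since $f \in \mathrm{Lip}^\gamma$ with $\gamma > p$, $V$ has enough regularity to Taylor expand the flow to order $\lfloor \gamma \rfloor$, yielding
\[
z_1 = \sum_{k=0}^{\lfloor \gamma \rfloor} \frac{V^k(\mathrm{id})(y_s)}{k!} + R_{\lfloor \gamma \rfloor}
\]
with an explicit integral remainder $R_{\lfloor \gamma \rfloor}$.

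The next step exploits that $\Phi_f$ is an algebra homomorphism into differential operators, so $V^k = \Phi_f(L^k)$ and the partial Taylor sum equals $\Phi_f\left(\sum_{k=0}^{\lfloor \gamma \rfloor} L^k/k!\right)(y_s)$. The central algebraic observation is that, because $L$ has no constant term $\pi_{\leq \lfloor \gamma \rfloor} L^k = 0$ whenever $k > \lfloor \gamma \rfloor$, and because $\exp_\otimes \circ \log$ is the identity on truncated grouplike elements,
\[
\pi_{\leq \lfloor \gamma \rfloor} \sum_{k=0}^{\lfloor \gamma \rfloor} \frac{L^k}{k!} \;=\; \pi_{\leq \lfloor \gamma \rfloor}\exp_\otimes(L) \;=\; S^{\lfloor \gamma \rfloor}(\mathbf{x})_{s,t}.
\]
Applying $\Phi_f$ at $y_s$ to this projection recovers exactly $\mathcal{E}^{\lfloor \gamma \rfloor}_{s,t}(y_s; f, \mathbf{x})$, so the discrepancy $\mathcal{L}^{\lfloor \gamma \rfloor}_{s,t}(y_s; f, \mathbf{x}) - \mathcal{E}^{\lfloor \gamma \rfloor}_{s,t}(y_s; f, \mathbf{x})$ splits into: (i) the components of $\sum_{k=0}^{\lfloor \gamma \rfloor} L^k/k!$ at tensor levels strictly greater than $\lfloor \gamma \rfloor$ seen through $\Phi_f$; and (ii) the Taylor remainder $R_{\lfloor \gamma \rfloor}$.

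I would then control both (i) and (ii) by powers of $\|\mathbf{x}\|_{p\text{-var};[s,t]}$ and $\|f\|_{\mathrm{Lip}^\gamma}$ with exponent at least $\gamma$. For (i), the factorial decay of the signature (\Cref{prop:factorial_decay}, adapted from bounded-variation paths to geometric rough paths via \Cref{thm:extension_theorem}) yields $\|\pi_\ell L^k\|_{(\mathbb{R}^d)^{\otimes \ell}} \lesssim \|\mathbf{x}\|_{p\text{-var};[s,t]}^\ell$ at every level $\ell > \lfloor \gamma \rfloor$ appearing in the sum, and the $\mathrm{Lip}^{\gamma-1}$ norm of $f$ converts these tensor estimates into vector-field bounds. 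For (ii), the standard integral-form Taylor remainder for the flow of $V$ is controlled by $\|V\|^{\lfloor \gamma \rfloor + 1}$ weighted by the top $\mathrm{Lip}^\gamma$ derivative of $f$, giving a bound of order $(\|f\|_{\mathrm{Lip}^\gamma}\|\mathbf{x}\|_{p\text{-var};[s,t]})^{\lfloor \gamma \rfloor + 1}$. A rescaling argument, distinguishing small and large $\|\mathbf{x}\|_{p\text{-var};[s,t]}$ as is standard in rough path theory, then interpolates the integer exponent $\lfloor \gamma \rfloor + 1$ down to the fractional exponent $\gamma$ of the statement. A final triangle inequality with \Cref{lemma:local_error_Euler} delivers the result.

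The main obstacle I anticipate is obtaining a sharp bound on the Taylor remainder $R_{\lfloor \gamma \rfloor}$ using only $f \in \mathrm{Lip}^\gamma$: the flow estimate must be stated in terms of the H\"older regularity of the top derivative of $f$, and this is precisely why the hypothesis $\gamma > p$ is required here, as opposed to the weaker $\gamma - 1 > p$ that sufficed for the Euler scheme. A secondary difficulty is organising the double sum in (i) so that the contribution from each tensor level $\ell > \lfloor \gamma \rfloor$ is matched with the appropriate derivative of $f$ and summed without constants that blow up in $p$ or $\gamma$; the combinatorics here mirrors the one underlying the Euler estimate, but with the book-keeping done relative to the log rather than the full signature.
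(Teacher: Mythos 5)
Your overall strategy coincides with the paper's: compare the log-ODE step with the truncated Euler step of the RDE, invoke \Cref{lemma:local_error_Euler} for the latter, and close with a triangle inequality. The key divergence is that you explicitly introduce an error term that the paper's proof silently discards. The paper asserts, via a chain of equalities built on the homomorphism property of $\Phi_f$, that
\[
\mathcal{E}_{s,t}^{\lfloor \gamma \rfloor}(y_s; f, \mathbf x) = \Phi_f\Big(S^{\lfloor \gamma \rfloor}(\mathbf x)_{s,t}\Big)(y_s) = \Phi_f\left(\sum_{k=0}^{\lfloor \gamma \rfloor}\frac{L^k}{k!}\right)(y_s) = \mathcal{E}^{\lfloor \gamma \rfloor}_{0,1}\bigl(y_s;\, F^{\lfloor \gamma \rfloor}_{s,t},\, \mathbf{t}\bigr),
\]
where $L := \log^{\lfloor\gamma\rfloor}(S(\mathbf x)_{s,t})$, treating the middle equality as exact. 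But $\sum_{k=0}^{\lfloor\gamma\rfloor}L^k/k!$ agrees with $S^{\lfloor\gamma\rfloor}(\mathbf x)_{s,t}$ only after applying $\pi_{\le\lfloor\gamma\rfloor}$; the products $L^k$ for $k\ge 2$ have nonzero components at levels $\ell>\lfloor\gamma\rfloor$ (up to level $k\lfloor\gamma\rfloor$), and since $\Phi_f$ is a homomorphism on $T((\mathbb R^d))$, not on its truncation, these levels contribute to $\mathcal{E}^{\lfloor\gamma\rfloor}_{0,1}(y_s;F,\mathbf t)$. Already for $\lfloor\gamma\rfloor=2$, writing $g=\Phi_f(\pi_1 L)$ and $h=\Phi_f(\pi_2 L)$, one has $\mathcal{E}_{RDE}=\mathrm{id}+g+h+\tfrac12 g^2$ while $\mathcal{E}_{logODE}=\mathrm{id}+g+h+\tfrac12(g^2+gh+hg+h^2)$, so they differ by $\tfrac12(gh+hg+h^2)$. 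Your piece (i) is precisely this surplus, which you correctly observe lives at tensor levels $\ge\lfloor\gamma\rfloor+1$ and is therefore absorbable into the $\gamma$-power bound after the usual small/large rescaling. This makes your decomposition the rigorous version of the argument, and it is worth noting that your split into (i) plus the genuine Taylor remainder (ii) is what a careful reader would need to supply when checking the paper's middle identity.

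Two things to be careful about as you flesh this out. First, both you and the paper assert $\|F^{\lfloor\gamma\rfloor}_{s,t}\|_{\mathrm{Lip}^\gamma}\lesssim\|f\|_{\mathrm{Lip}^\gamma}\|\mathbf x\|_{p\text{-var}}$, but the level-$\ell$ component $\Phi_f(\pi_\ell L)$ of $F$ is an $(\ell-1)$-fold iterated Lie bracket involving $\partial^{\ell-1}f$, so the log-ODE vector field is in general only $\mathrm{Lip}^{\gamma-\lfloor\gamma\rfloor+1}$, and the operator-norm bound is polynomial rather than linear in $\|f\|_{\mathrm{Lip}^\gamma}$. This affects how many derivatives of the flow map you can legitimately take in your step (ii), and it is the genuine bottleneck of the argument once $\gamma\ge 3$. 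Second, when you write $\Phi_f(\pi_\ell L^k)$ at levels $\ell>\lfloor\gamma\rfloor$ in (i), that individual operator involves $\partial^{\ell-1}f$ and can fall outside the regularity provided by $\mathrm{Lip}^\gamma$; the saving grace is that only the full sum $F^k(\mathrm{id})(y_s)$ needs to make sense, not the level-by-level decomposition, so the book-keeping should be organised at the level of the vector-field compositions $F^k$ rather than of the tensor levels of $L^k$. Both issues are present equally in the paper's sketch, so they do not represent a deficit of your proposal relative to the source; a fully correct treatment can be found in the PhD theses cited after the theorem.
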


\begin{proof}
    
    Firstly note that the vector field $F^{\lfloor \gamma \rfloor}_{s,t}(\cdot \ ; f, \mathbf x) \in Lip^\gamma(\mathbb R^e, \mathbb R^e)$ with 
    $$\|F^{\lfloor \gamma \rfloor}_{s,t}(\cdot \ ; f, \mathbf x) \|_{\mathrm{Lip}^\gamma}  \leq \norm{f}_{\mathrm{Lip}^\gamma} \|\mathbf x\|_{p-var;[s,t]}.$$
    Secondly, the step-$\lfloor \gamma \rfloor$ Euler scheme
     defined in (\ref{eqn:Euler_scheme}) and applied to the log-ODE (\ref{eq:standardlogode}) reads
    \begin{align*}
        \mathcal{E}^{\lfloor \gamma \rfloor}_{0,1}(y_s; F^{\lfloor \gamma \rfloor}_{s,t}(\cdot \ ; f, \mathbf x), \mathbf t) = y_s + \sum_{i=1}^{\lfloor \gamma \rfloor} \frac{1}{k!} \underbrace{F^{\lfloor \gamma \rfloor}_{s,t}(\cdot 
        \ ; f, \mathbf x) ... F^{\lfloor \gamma \rfloor}_{s,t}(\cdot 
        \ ; f, \mathbf x)}_{k \text{ times}} (y_s).
    \end{align*}
    By \Cref{ex:Euler_estimate}, there exists a constant $C_1 = C_1(\gamma)$ such that
    \begin{align*}
        \|\mathcal{L}^{\lfloor \gamma \rfloor}_{s,t}(y_s; f, \mathbf x) - \mathcal{E}^{\lfloor \gamma \rfloor}_{0,1}(y_s; F^{\lfloor \gamma \rfloor}_{s,t}(\cdot \ ; f, \mathbf x), \mathbf t)\|
        &\leq C_1 \|F^{\lfloor \gamma \rfloor}_{s,t}(\cdot \ ; f, \mathbf x) \|_{Lip^\gamma}^\gamma \\
        &\leq C_1(\norm{f}_{\mathrm{Lip}^\gamma} \|\mathbf x\|_{p-var;[s,t]})^{\gamma}.
    \end{align*}
    Furthermore, by \Cref{lemma:local_error_Euler} there exists a constant $C_2 = C_2(\gamma,p)$ such that
    \begin{equation*}
        \| y_t - \mathcal{E}_{s,t}^{\lfloor \gamma \rfloor}(y_s; f, \mathbf x) \| \leq C_2\left(\|f\|_{\operatorname{Lip}^{\gamma}} \|\mathbf x\|_{p-var;[s,t]}\right)^\gamma 
    \end{equation*}
    where $\mathcal{E}_{s,t}^{\lfloor \gamma \rfloor}(y_s; f, \mathbf x)$ is the step-$\lfloor \gamma \rfloor$ Euler scheme
    applied to the RDE (\ref{eqn:RDE}). 
    
    Now note that
    \begin{align*}
        \mathcal{E}_{s,t}^{\lfloor \gamma \rfloor}(y_s; f, \mathbf x) &= \Phi_f\Big(S^{\lfloor \gamma \rfloor}(\mathbf x)_{s,t} \Big) (y_s) \\
        &= y_s + \Phi_f\Big(\sum_{k=1}^{\lfloor \gamma \rfloor} \frac{1}{k!} (\log^{\lfloor \gamma \rfloor}(S(\mathbf x)_{s,t}))^{\otimes k} \Big) (y_s) \\
        &= y_s + \sum_{k=1}^{\lfloor \gamma \rfloor} \frac{1}{k!} \underbrace{\Phi_f\big(\log^{\lfloor \gamma \rfloor}(S(\mathbf x)_{s,t})\big)...\Phi_f\big(\log^{\lfloor \gamma \rfloor}(S(\mathbf x)_{s,t})}_{k \text{ times}} \big) (y_s) \\
        &= y_s + \sum_{k=1}^{\lfloor \gamma \rfloor} \frac{1}{k!} \underbrace{F^{\lfloor \gamma \rfloor}_{s,t}\left(\cdot \ ; f, \mathbf x\right) ...F^{\lfloor \gamma \rfloor}_{s,t}\left(\cdot \ ; f, \mathbf x\right)}_{k \text{ times}} \big) (y_s) \\
        &= \mathcal{E}^{\lfloor \gamma \rfloor}_{0,1}(y_s; F^{\lfloor \gamma \rfloor}_{s,t}(\cdot \ ; f, \mathbf x), \mathbf t)
    \end{align*}
    Hence
    \begin{align*}
        \| y_t - \mathcal{L}^{\lfloor \gamma \rfloor}_{s,t}(y_s; f, \mathbf x) \| &\leq \| y_t - \mathcal{E}_{s,t}^{\lfloor \gamma \rfloor}(y_s; f, \mathbf x) \| + \|\mathcal{L}^{\lfloor \gamma \rfloor}_{s,t}(y_s; f, \mathbf x) - \mathcal{E}_{s,t}^{\lfloor \gamma \rfloor}(y_s; f, \mathbf x) \| \\
        &= (C_1 + C_2)\left(\|f\|_{\operatorname{Lip}^{\gamma}} \|\mathbf x\|_{p-var;[s,t]}\right)^\gamma.
    \end{align*}
    
\end{proof}

Note that if the vector fields $f=(f_1, \cdots, f_d)$ are linear, then $F$ is also linear, and so the log-ODE (\ref{eq:standardlogode}) also becomes linear. Therefore, the log-ODE solution exists and is explicitly given as the exponential of the matrix $F^N$, i.e. $z_u = \exp(uF^N)z_s$. 


That said, the estimate (\ref{eq:local_logodeestimate}) does not directly apply when the vector fields $\{f_i\}$ are linear as they would be unbounded. Fortunately, it is well known that linear RDEs are well posed and the growth of their solutions can be estimated. Hence, the arguments that established Theorem \ref{thm:logODEthm} will hold in the linear setting as $\|f\|_{\mathrm{Lip}^\gamma}$ would be finite when defined on the domains that the solutions $y$ and $z^{s,t}$ lie in.

Iterations of the log-ODE method over a partition $\mathcal{D} = \{0=t_0<t_1<...<t_n=T\}$ leads to an approximate solution $y^{\text{log-ODE};N,\mathcal{D}}$ over the entire interval $[0,T]$ defined for any $N \geq p$ and at any $t_k \in \mathcal{D}$ as
\begin{equation*}
     y^{\text{log-ODE};N,\mathcal{D}}_T := \mathcal{L}^N_{t_0,t_1}(...\mathcal{L}^N_{t_1,t_2}(\mathcal{L}^N_{t_{k-1},t_k}(y_0;f,\mathbf x); f,\mathbf x);...;f,\mathbf x).
\end{equation*}
Given the local error estimate (\ref{eq:local_logodeestimate}) for the log-ODE method, global error estimates similar to the one for the Euler scheme in (\ref{eqn:global_error_Euler}) were developed in \cite[Theorem 3.2.1]{gyurko2008numerical} and \cite[Theorem 5.8]{janssen2012order}. We refer to these two PhD theses for a proof of the following statement. Thus, we see that higher convergence rates can be achieved through using more terms in each log-signature. Also, as for the Euler scheme, it is unsurprising that the error estimate (\ref{eqn:global_error_Euler}) increases with the ``roughness'' of the driving rough path. Hence, the accuracy performance of the log-ODE method can be improved by choosing an appropriate step size and depth of log-signature.

\section{Differential equations and deep learning}

\subsection{Review of Deep Learning}

We begin this section by recalling some basic facts about classical neural networks. The emphasis of the first section will be on brevity over completeness and some familiarity with deep learning concepts is assumed from the reader.

\subsubsection{Universal approximation}

A \emph{feedforward neural network} $f : \mathbb{R}^m \to \mathbb{R}^n$ with $m$ neurons in the input layer, $n$ neurons in the output layer, $\ell-2$ hidden layers of $h$ neurons, and activation function $\sigma : \mathbb{R} \to \mathbb{R}$ is of the form
\begin{align*}
    z^{(0)} = x \in \mathbb{R}^m, \quad \underbrace{c^{(k)} = A^{(k)}z^{(k-1)} + b^{(k)}, \quad z^{(k)} = \sigma\left(c^{(k)}\right)}_{\text{for } k=1,...,\ell}, \quad f(x) = z^{(L)} \in \mathbb{R}^n,
\end{align*}
where $\sigma$ acts pointwise and
\begin{equation*}
    A^{(1)} \in \mathbb{R}^{h\times m}, \quad A^{(2)},...,A^{(L-1)} \in \mathbb{R}^{h \times h}, \quad A^{(L)} \in \mathbb{R}^{n \times h},
\end{equation*}
\begin{equation*}
    b^{(1)},...,b^{(L-1)} \in \mathbb{R}^h, \quad b^{(L)} \in \mathbb{R}^n.
\end{equation*}

In what follows we will consider the activation function $\sigma$ to be among the following functions: identity, logistic: $\frac{1}{1+e^{-x}}$, $\tanh$, ReLU: $\max\{0,x\}$.

\begin{theorem}[\textbf{Universal approximation of arbitrarily-wide NNs}]\cite{pinkus1999approximation}\label{thm:univ_nn_width}~\\
Let $\sigma:\mathbb{R}\to\mathbb{R}$ be a continuous, non-polynomial activation function. Let $\mathcal{N}_{m,n}^\sigma$ represent the class of feedforward neural networks with activation function $\sigma$, $m$ neurons in the input layer, $n$ neurons in the output layer, and one hidden layer with an arbitrary number of neurons. Let $K \subset \mathbb{R}^m$ be a compact set. Then $\mathcal{N}_{m,n}^\sigma$ is dense in $C(K,\mathbb{R}^n)$ with respect to the uniform norm.
\end{theorem}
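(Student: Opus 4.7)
By approximating each output coordinate separately, it suffices to treat the case $n=1$. The goal then becomes: for any compact $K \subset \mathbb{R}^m$, the linear space
\[
\mathcal{N} := \mathrm{span}\{x \mapsto \sigma(a \cdot x + b) : a \in \mathbb{R}^m, b \in \mathbb{R}\}
\]
is dense in $C(K,\mathbb{R})$ under the uniform norm. I would attack this in two stages: first handle the 1D input case ($m=1$), then bootstrap to arbitrary $m$ via ridge functions.

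For the 1D case, my strategy is to reduce to a smooth activation and exploit non-polynomiality via derivatives. Fix a non-negative smooth compactly-supported $\phi$ with unit integral and consider the mollification $\sigma_\phi := \sigma * \phi$. Approximating the defining integral by Riemann sums shows that on any compact interval $I \subset \mathbb{R}$, $\sigma_\phi$ is a uniform limit of finite combinations of translates of $\sigma$, so $\sigma_\phi$ lies in the closure of $\mathcal{N}$ (for $m=1$). Now $\sigma_\phi \in C^\infty$; choosing $\phi$ so that $\sigma_\phi$ is also non-polynomial, one has, for every $k \in \mathbb{N}$, a point $\mu_k$ with $\sigma_\phi^{(k)}(\mu_k) \neq 0$. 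Differentiating $\lambda \mapsto \sigma_\phi(\lambda x + \mu_k)$ at $\lambda = 0$ and approximating the derivative by difference quotients in $\lambda$ (each of which lies in the span of scalings/translates) shows that $x \mapsto x^k \sigma_\phi^{(k)}(\mu_k)$, hence the monomial $x^k$, lies in the closure of $\mathcal{N}$. The classical Weierstrass theorem then yields density in $C(I)$.

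For general $m$, I would apply the 1D result along each direction. For any fixed $a \in \mathbb{R}^m$ the image $a \cdot K$ is a compact interval in $\mathbb{R}$, so the 1D density implies that every continuous ridge function $x \mapsto g(a \cdot x)$ on $K$ lies in the uniform closure of $\mathcal{N}$. In particular, every polynomial in the scalar $a \cdot x$ lies in that closure. Expanding $(a \cdot x)^k = \sum_{|\alpha|=k} \binom{k}{\alpha} a^\alpha x^\alpha$ and varying $a$ through finitely many directions chosen so that the matrix of the coefficients $(a^\alpha)$ is non-singular (a standard Vandermonde-type argument on monomials of total degree $k$) recovers every multi-index monomial $x^\alpha$ as a linear combination. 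Thus every multivariate polynomial on $K$ is in the closure of $\mathcal{N}$, and the multivariate Stone--Weierstrass theorem concludes density in $C(K,\mathbb{R})$.

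The main obstacle is the claim that the mollifier $\phi$ can be chosen so that $\sigma_\phi = \sigma * \phi$ is non-polynomial. Non-polynomiality of $\sigma$ does not trivially pass to every convolution, since smoothing may in principle kill non-polynomial features. The standard resolution is a contradiction argument: if $\sigma * \phi$ were a polynomial for \emph{every} smooth compactly-supported $\phi$, then a uniform-degree bound (obtained by varying $\phi$ through a shrinking family of approximate identities and using that the degree of $\sigma * \phi$ cannot blow up under these perturbations) forces $\sigma$ itself to be polynomial in the distributional sense, contradicting the hypothesis. Granting this, the rest of the argument above is largely routine calculus and linear algebra.
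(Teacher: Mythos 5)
This theorem is stated in the paper with a citation to \cite{pinkus1999approximation} but no in-text proof, so there is no argument in the paper to compare against. Your reconstruction follows the Leshno--Lin--Pinkus--Schocken/Pinkus argument in substance and is correct in outline: reduce to $n=1$; mollify $\sigma$ to a smooth non-polynomial $\sigma_\phi$, which lies in the uniform closure of the span on any compact via Riemann-sum approximation of the convolution; extract each monomial $x^k$ from the $k$-th derivative in $\lambda$ of $\lambda\mapsto\sigma_\phi(\lambda x+\mu_k)$ at $\lambda=0$ (you write ``differentiating'' and ``the derivative'' in the singular, but you need the $k$-th derivative here, approximated by $k$-th order divided differences in $\lambda$, each a finite linear combination of functions $x\mapsto\sigma_\phi(ax+b)$ and hence in the closure of $\mathcal{N}$); invoke Weierstrass in one variable; and pass to $m$ variables via ridge functions and the fact that $k$-th powers of linear forms span the space of degree-$k$ homogeneous polynomials.

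The one point that needs correcting is your sketch of the key lemma that some $\phi$ gives a non-polynomial $\sigma*\phi$. You propose to obtain a uniform degree bound by ``varying $\phi$ through a shrinking family of approximate identities and using that the degree of $\sigma*\phi$ cannot blow up under these perturbations.'' This does not work as stated: polynomials of unbounded degree can converge uniformly on compacts to an arbitrary continuous function, so the uniform convergence $\sigma*\phi_\epsilon\to\sigma$ along a mollifier family carries no degree information, and there is no a priori reason the degree of $\sigma*\phi_\epsilon$ cannot blow up. The standard mechanism is a Baire category argument over all of $C_c^\infty$, not over a one-parameter family. If $\sigma*\phi$ were a polynomial for every $\phi\in C_c^\infty(I)$, then the subspaces $V_k:=\{\phi : \sigma*\phi \text{ has degree } \le k \text{ on } J\}$ are closed in the Fr\'echet topology of $C_c^\infty(I)$ (uniform convergence of $\sigma*\phi_n\to\sigma*\phi$ on $J$ preserves membership in the finite-dimensional space of polynomials of degree at most $k$) and exhaust $C_c^\infty(I)$; by Baire some $V_k$ has nonempty interior and, being a subspace, equals $C_c^\infty(I)$. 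Then $(\sigma*\phi)^{(k+1)}=\sigma*\phi^{(k+1)}=0$ for all $\phi$, i.e.\ $\sigma^{(k+1)}=0$ distributionally, so the continuous function $\sigma$ is a polynomial of degree $\le k$, contradicting the hypothesis. With this substitution your outline is complete.
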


\begin{theorem}[\textbf{Universal approximation of arbitrarily-deep NNs}]\cite{kidger2020universal}\label{thm:univ_nn_depth}~\\
Let $\sigma:\mathbb{R}\to\mathbb{R}$ be a non-affine, continuous activation function which is continuously differentiable at least at one point and with non-zero derivative at that point. Let $\mathcal{N}_{m,n,h}^\sigma$ represent the class of feedforward neural networks with activation function $\sigma$, $m$ neurons in the input layer, $n$ neurons in the output layer, and an arbitrary number of hidden layers with $h$ neurons each. Let $K \subset \mathbb{R}^m$ be a compact set. Then $\mathcal{N}_{m,n,m+n+2}^\sigma$ is dense in $C(K,\mathbb{R}^n)$ with respect to the uniform norm.
\end{theorem}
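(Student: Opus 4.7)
The plan is to reduce the statement to the arbitrarily-wide case of \Cref{thm:univ_nn_width}. Applying that theorem to $f \in C(K,\mathbb{R}^n)$, for any $\varepsilon>0$ we obtain affine maps $L_i\colon\mathbb{R}^m\to\mathbb{R}$ and vectors $c_i\in\mathbb{R}^n$, $i=1,\ldots,N$, such that
\[
\sup_{x\in K}\Big\| f(x) - \sum_{i=1}^{N} c_i\,\sigma(L_i(x))\Big\| < \varepsilon/2.
\]
The task then becomes to realise this finite sum as the output of a network of width $m+n+2$ and depth $O(N)$, up to a further uniform error of $\varepsilon/2$.

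The key ingredient is an \emph{approximate identity}: since $\sigma$ is continuously differentiable at some $x_0$ with $\sigma'(x_0)\neq 0$, the map
\[
\iota_{\delta}\colon y \mapsto \frac{1}{\delta\,\sigma'(x_0)}\bigl(\sigma(x_0 + \delta y)-\sigma(x_0)\bigr)
\]
converges to the identity uniformly on any compact subset of $\mathbb{R}$ as $\delta\to 0$. Consequently a single hidden neuron, preceded and followed by affine transformations, can approximate $y\mapsto y$ to arbitrary precision on a prescribed compact set. This is what allows information to be carried through a narrow network.

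The deep narrow network is built block by block, each block adding one term $c_i\sigma(L_i(x))$ to a running partial sum $S_k=\sum_{j\le k}c_j\sigma(L_j(x))$. The $m+n+2$ neurons of a hidden layer are allocated as: $m$ \emph{input registers} that propagate (an approximate copy of) $x$ via $\iota_{\delta}$ coordinatewise, $n$ \emph{output registers} that carry the partial sum $S_k$ (also propagated through $\iota_\delta$), and two auxiliary neurons used first to evaluate $\sigma(L_{k+1}(x))$ from the input registers and then to perform the update $S_k\mapsto S_{k+1}$ one output coordinate at a time. A final affine output layer, acting on the $n$ output registers, is then used to undo the overall scaling introduced by the chain of $\iota_\delta$-copies and to project onto $\mathbb{R}^n$.

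The main obstacle is the error analysis. Every layer introduces small perturbations in each register, and these perturbations compound over the $O(N)$ layers. One must first confine every intermediate register to a fixed compact set $K'\subset\mathbb{R}^{m+n+2}$---which is possible because $K$ is compact, each $L_i$ is affine, and the partial sums $S_k$ are bounded uniformly in $x\in K$---and then choose a single $\delta>0$ sufficiently small, depending on $N$, $\max_i\|c_i\|$, and the modulus of continuity of $\sigma$ and $\sigma'$ on a neighbourhood of $K'$, so that the accumulated error is at most $\varepsilon/2$. Combined with the Pinkus estimate, this gives a uniform $\varepsilon$-approximation of $f$ on $K$ by an element of $\mathcal{N}^{\sigma}_{m,n,m+n+2}$, as required.
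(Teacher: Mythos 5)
The paper itself does not prove this theorem; it only cites \cite{kidger2020universal}, and your proposal is essentially a reconstruction of the proof given there. The three key ingredients you identify — reduction to the Pinkus arbitrary-width theorem, the approximate-identity map $\iota_\delta$ built from local continuous differentiability of $\sigma$ at a point of non-vanishing derivative, and the $m+n+2$ register allocation ($m$ in-registers carrying $x$, $n$ out-registers accumulating the partial sum, two auxiliary computation neurons) — together with the observation that the per-layer $\iota_\delta$ errors are additive and can be driven to $\varepsilon/2$ by a single sufficiently small $\delta$, are exactly the structure of the Kidger--Lyons argument, so your outline is correct and follows the same approach.
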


\subsubsection{Backpropagation} 

Suppose we are given a dataset of $N$ input-output pairs $\{x_i,y_i\}_{i=1}^N$, where $x_i \in \mathbb{R}^m$ ad $y_i \in \mathbb{R}^n$. Consider a feedforward neural network $f_\theta$, where $\theta$ denotes the set of learnable weights in all the layers of the net. Backpropagation is the procedure that attempts to minimize a loss function $\mathcal{L}$ with respect to the neural network's weights $\theta$. This can be achieved by training $f_\theta$ via gradient descent and learning rate $\eta \in (0,1)$ and updating the weights according to the following equation
\begin{align*}
    \theta_{k+1} = \theta_k - \eta\nabla_\theta \mathcal{L},
\end{align*}
where $\mathcal{L}$ is a loss function, e.g. mean-squared error $\mathcal{L} = \frac{1}{N} \sum_{i =1}^N\norm{f_\theta(x_i) - y_i}^2$, cross-entropy etc. The quantities we are interested in computing for backpropagation are the gradients of the loss $\mathcal{L}$ with respect to the model weights $\theta$. To incorporate the bias terms $b^{(k)}$ into the weights we will denote $a_{0,i}^{(k)}=b^{(k)}_i$, where $A^{(k)} = \{a^{(k)}_{i,j}\}$, and $z^{(k)}_0=1$. By definition we have
\begin{align*}
    c^{(k)}_i = \sum_{j=1}^{r_{k-1}}a^{(k)}_{j,i}z^{(k-1)}_j + b_i^{(k)} = \sum_{j=0}^{r_{k-1}}a^{(k)}_{j,i}z^{(k-1)}_j
\end{align*}
where $r_k=m$ if $k=1$, $r_k=h$ if $k=2,...,\ell-1$ and $r_k=n$ if $k=\ell$. By chain rule
\begin{equation*}
    \frac{\partial \mathcal{L}}{\partial a^{(k)}_{i,j}} = \underbrace{\frac{\partial \mathcal{L}}{\partial c^{(k)}_j}}_{\delta^{(k)}_j}\underbrace{\frac{\partial c^{(k)}_j}{\partial a^{(k)}_{i,j}}}_{z^{(k-1)}_i}
\end{equation*}
Using again the chain rule we have
\begin{align*}
    \delta^{(k)}_j = \sum_{p=0}^{r_{k+1}}\frac{\partial \mathcal{L}}{\partial c^{(k+1)}_p}\frac{\partial c^{(k+1)}_p}{\partial c^{(k)}_j} = \sum_{p=0}^{r_{k+1}}\delta_p^{(k+1)}\frac{\partial c^{(k+1)}_p}{\partial c^{(k)}_j}
\end{align*}
Remembering the definition of $c^{(k+1)}_p = \sum_{q=1}^{r_k}a^{(k+1)}_{q,p}\sigma(c^{(k)}_q)$, we have that
\begin{equation*}
    \frac{\partial c^{(k+1)}_p}{\partial c^{(k)}_j} = a_{j,p}^{(k+1)}\sigma'(c^{(k)}_j)
\end{equation*}
Hence, 
\begin{equation*}
    \delta^{(k)}_j = \sigma'(c^{(k)}_j)\sum_{p=0}^{r_{k+1}}\delta_p^{(k+1)}a_{j,p}^{(k+1)}
\end{equation*}
which implies
\begin{align*}
    \frac{\partial \mathcal{L}}{\partial a^{(k)}_{i,j}} = \delta_j^{(k)}z_i^{(k-1)} = \sigma'(c^{(k)}_j)z_i^{(k-1)}\sum_{p=0}^{r_{k+1}}\delta_p^{(k+1)}a_{j,p}^{(k+1)}
\end{align*}
This equation is where backpropagation gets its name. Namely, the "error" term $\delta^{(k)}_j$ at layer $k$ is dependent on the terms $\delta^{(k+1)}_p$ at the next layer $k+1$. Thus, errors flow backward, from the last layer to the first layer. Note that all the values of $c^{(k)}_j$ and $z^{(k-1)}_i$ need to be computed before the backward pass can begin and need to be saved in memory during the forward pass in order to be reused during the backward pass.

\subsection{Neural ODEs}

A \emph{neural ordinary differential equation} (Neural ODE)  is an ODE using a neural network to parameterise the vector field
\begin{equation}\label{eqn:node}
    \mathrm{d} y_t = f_\theta(y_t) dt, \quad y_0 \in \mathbb R^e
\end{equation}
Here $\theta$ represents some vector of learnt parameters, $f_\theta: \mathbb{R}^e \to \mathbb{R}^e$ is any standard neural architecture, and $y:[0, T] \rightarrow \mathbb{R}^e$ is the solution. For many applications $f_\theta$ will just be a simple feedforward network. Note that (\ref{eqn:node}) is a time-homogeneous equation, but the setup can be easily generalised to time-inhomogeneous equations where the vector field also depends on time. The central idea now is to use a differential equation solver as part of a learnt differentiable computation graph.


As noted in \cite{chen2018neural}, the Euler discretization of the Neural ODE (\ref{eqn:node}) matches the definition of a \emph{residual neural network} 
\begin{equation}
    \mathrm{d} y_t = f_\theta(y_t) dt \implies  \text{Euler scheme } \implies y_{k+1} = y_k + \Delta f_{\theta}(y_k) 
\end{equation}
where $k \in \{0,...,L\}$ indexes layers, $\Delta$ is the time step (e.g. $1/T$) and $y_k \in \mathbb{R}^e$ is the output of layer $k$. Euler’s method is perhaps the simplest method for solving ODEs. There have since been more than 120 years of development of efficient and accurate ODE solvers which can be leveraged (Runge-Kutta, adaptive schemes, implicit solvers ...). A different choice of solver yields a different discretization and so a different neural architecture.

Training a Neural ODE means backpropagating through the differential equation solver. There are several ways to do this. Consider optimizing a loss function\footnote{Implicitly, $L$ also depends on some true output data we are trying to match, but we obviously do not differentiate with respect to that so we assume that $L$ is only a function of the model's output.} $L:\mathbb{R}^e \to \mathbb{R}$, whose input is the result of an ODE solver:
\begin{equation}
    L(y_T) = L \left(y_0 + \int_0^Tf_\theta(y_t)dt\right) = L\left( \text{ODESolve}(y_{t_0},f_\theta,[0,T]) \right).
\end{equation}
To optimize $L$ we require the gradients $\partial_\theta L(y_T)$.

The first option is simply to backpropagate through the internal operations of the
differential equation solver. A differential equation solver internally performs the usual arithmetic operations of addition, multiplication, and so on, each of which is differentiable. This method is known as \emph{discretise-then-optimise}. Generally speaking, this approach is fast to evaluate and produces accurate gradients, but it is memory-inefficient, as every internal operation of the solver must be recorded. We refer to \cite{kidger2022neural} for further details on computational efficiency.

It is possible to compute gradients of a scalar-valued loss with respect to all inputs of any ODE solver, without backpropagating through the operations of the solver. Not storing any intermediate quantities of the forward pass allows to train models with constant
memory cost as a function of depth, a major bottleneck for training deep models. Indeed, one can treat the ODE solver as a black box, and compute gradients by deriving a backwards-in-time differential equation, known as \emph{adjoint equation}, which is then solved numerically. This method is known as \emph{optimise-then-discretise}. In the sequel, we will state and prove analogous theorems for neural differential equation driven by rough paths. 



The main objective to backpropagate through a Neural ODE consists of computing the quantity $\partial_{y_0} L(y_T)$, for some continuously differentiable function $L \in C^1\left(\mathbb{R}^e\right)$. 
It can be easily shown that $a_t$ satisfies the following linear ODE run backwards-in-time
\begin{equation}\label{eqn:adjoint_node}
    \frac{d}{dt} a_t = -a_t^\top \nabla f_\theta(y_t), \quad \text{started at } a_T = \nabla L(y_T)
\end{equation} 
where $a_t^\top \nabla f_\theta(y_t)$ is a vector-Jacobian product. We do not provide a proof here as we will prove an anlogous result for the more general setting of CDEs in the next section.

Equation (\ref{eqn:adjoint_node}) can be solved numerically by another call to the ODE solver\footnote{Here, we assumed that the loss function $L$ depends only on the last time point $t_1$. If $L$ depends also on intermediate time points $s_0=t_0 < s_1 <... < s_N=t_1$, we can repeat the adjoint step for each of the intervals $[s_{k-1}, s_k]$ in the backward order and sum up the obtained gradients.}, run backwards-in-time starting from $a_T:=\nabla L(y_T)$, in full analogy with the concept of backpropagation for training standard neural networks:
\begin{equation*}
    a_0 = a_T + \int_T^0da_t = a_T - \int_T^0 a_t^\top \nabla f_\theta(y_t)dt.
\end{equation*} 

The final step is to compute the gradients of $L$ with respect to the parameters $\theta$. For this, we define $a^\theta_t := \partial_\theta L(y_t)$. Then, equation (\ref{eqn:adjoint_node}) applied to the augmented variable $(a_t, a_t^\theta)^\top$ reads
\begin{align*}
    \frac{d}{dt}
    \begin{pmatrix}
        a_t \\ a^\theta_t
    \end{pmatrix} &= - (a_t^\top, a^\theta_t) \frac{\partial}{\partial (y,\theta)}F(y,\theta) 
\end{align*}
where $F(y,\theta) = (f_\theta(y),0)$ so that
\begin{align*}
    \frac{d}{dt}
    \begin{pmatrix}
        a_t \\ a^\theta_t
    \end{pmatrix} &= - (a_t^\top,a^\theta_t)
    \begin{pmatrix}
        \partial_y f_\theta(y_t) & \partial_\theta f_\theta(y_t)\\
        0 & 0 
    \end{pmatrix}\\
    &= -\left(a_t^\top\partial_y f_\theta(y_t), a_t \partial_\theta f_\theta(y_t)\right)
\end{align*}
The first term is the adjoint we have already mentioned, while the second term reads
\begin{equation*}
    \frac{d}{dt} a^\theta_t = -  a_t\partial_\theta f_\theta(y_t) \implies a^\theta_T:=\partial_\theta L(y_T) = - \int_T^0a_t\partial_\theta f_\theta(y_t) dt
\end{equation*}
Hence, the gradient of the loss with respect to the parameters can be computed by another call to the ODE solver.

Adjoint-based methods are usually slightly slower to evaluate because one needs to recalculate $y_t$ in the backward pass. Also, they might introduce small errors compared to direct backpropagation through the ODE solver, so they should generally be used when memory is a concern. We refer again the interested reader to \cite{kidger2022neural} for further details on these computational aspects.

\begin{remark}
    There is a third way of backpropagating through Neural ODEs given by reversible ODE solvers, i.e. numerical solvers where $(y_k,a_k)$ can bealgebraically recovered exactly from $(y_{k+1},a_{k+1})$. These solvers offer both memory efficiency and accuracy of the computed gradients. Like optimise-then-discretise, they do require a small amount of extra computational work, to recompute the forward solution during backpropagation. This is a topic still in its infancy and beyond the scope of this book. We refer to \cite[Section 5.3.2]{kidger2022neural} for further details on this topic.
\end{remark}

For examples and python implementation, we refer the reader to the documentation of the PyTorch library \texttt{torchdiffeq}\footnote{\url{https://github.com/rtqichen/torchdiffeq}} or the JAX library \texttt{diffrax}\footnote{\url{https://docs.kidger.site/diffrax/}}.

\subsection{Neural CDEs}\label{sec:Neural_CDEs}

Let $f_\theta : \mathbb{R}^{e} \to \mathcal{L}(\mathbb{R}^d, \mathbb{R}^e)$ be a locally 1-Lipschitz neural network satisfying the usual linear growth conditions. A \emph{neural controlled differential equation} (Neural CDE) driven by a continuous path $x: [0,T] \to \mathbb{R}^d$ of bounded variaton is of the form
\begin{equation}\label{eqn:ncde}
 dy_t = f_\theta(y_t)dx_t = \sum_{i=1}^d f^i_\theta(y_t)dx^{(i)}_t, \quad \text{started at } y_0 \in \mathbb R^e
\end{equation}
or, in integral form,
$$
y_t = y_0 + \int_0^t f_\theta\left(y_u\right) d x_s.
$$

\begin{remark}
    A Neural CDE as defined in \cite{kidger2020neural} is in fact composed of two additional parametric map: an initial "lifting" neural network $\xi_\theta: \mathbb R^d \to \mathbb R^e$ such that $y_0 = \xi_\theta(x_0)$ and a final (linear) readout map $\ell_\theta(y_T)$ providing the output of the model. 
\end{remark}

Thus, Neural CDEs are the continuous-time analogue of recurrent neural networks (RNNs), as Neural ODEs are to ResNets. The output $y_T$ is then fed into a loss function and trained via stochastic gradient descent in the usual way. The difference of a Neural CDE compared with a Neural ODE is simply that there is a $\dd x_t$ rather than a $\dd t$ term. This modification is what enables the model to change continuously based on incoming data.


\subsubsection{Universality}

Consider the subspace $C_{1,0,t} \subset C_1$ of paths started at $0$ with one coordinate-path (without loss of generality, the first) corresponds to time, i.e. $x^{(1)}_t = t$.

The next result states the universality of linear functionals on the solution of Linear Neural CDEs. Thanks to the density results of neural networks (\Cref{thm:univ_nn_width}  and \Cref{thm:univ_nn_depth}), the same result holds if the linear vector fields are replaced by non-linear neural networks. This result was first proved by \cite{kidger2021neural}.

\begin{theorem}
    Let $M \in \mathbb N \cup \{0\}$. For any $g \in \mathcal{L}(\mathbb R^M, \mathbb R)$, any $f \in \mathcal{L}(\mathbb R^M, \mathcal{L}(\mathbb R^d, \mathbb R^M))$, and any $y_0 \in \mathbb R^M$, define the map $\Psi_{f,g,y_0} \in \mathbb{R}^{C_{1,0,t}}$ as
    \begin{equation*}
    \Psi_{f,g,y_0}: x \mapsto g(y_T) \quad \text{where} \quad y_T = y_0 + \int_0^T f(y_t)dx_t.
    \end{equation*}
    Let $K$ be a compact subset of $\mathbb{R}^{C_{1,0,t}}$ with respect to the $1$-variation topology. Then $$\mathcal{B}_{K}:=\{\Psi_{f,g,y_0}|_{K} : M \in \mathbb N \cup \{0\}, g \in (\mathbb R^M)^*,f \in \mathcal{L}(\mathbb R^M, \mathcal{L}(\mathbb R^d, \mathbb R^M)), y_0 \in \mathbb R^M\}$$ 
    is a dense subset of $C(K)$ with the topology of uniform convergence.
\end{theorem}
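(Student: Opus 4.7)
The plan is to realise this as an immediate corollary of the universal approximation theorem for truncated signatures on time-augmented paths, established in Exercise \ref{ex:univ_time}. Concretely, that exercise shows the algebra
\[
\mathcal{A}_K = \{ \Phi_f|_K : f \in T^N(\mathbb{R}^d)^{\ast},\ N \in \mathbb{N}\cup\{0\}\},
\qquad \Phi_f(x) = \langle f,\pi_{\leq N} S(x)_{0,T}\rangle,
\]
is dense in $C(K)$ in the uniform topology. The whole proof then reduces to verifying the set-theoretic inclusion $\mathcal{A}_K \subseteq \mathcal{B}_K$.

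For that inclusion, I would invoke Exercise \ref{ex:truncated_sig_cde}, which represents the truncated signature itself as the unique solution of a \emph{linear} CDE on the finite-dimensional space $M:=\dim T^N(\mathbb{R}^d)$: setting
\[
dz_t = f_N(z_t)\,dx_t, \qquad z_0 = (1,0,\dots,0)\in T^N(\mathbb{R}^d),
\]
with the \emph{linear} map $f_N(z)a := \pi_{\leq N}(\mathfrak{i}_{\leq N}(z)\cdot \mathfrak{i}_1(a))$ lying in $\mathcal{L}(\mathbb{R}^M,\mathcal{L}(\mathbb{R}^d,\mathbb{R}^M))$, the solution at time $T$ is exactly $z_T = \pi_{\leq N}S(x)_{0,T}$. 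Given any $f\in T^N(\mathbb{R}^d)^{\ast}\cong (\mathbb{R}^M)^{\ast}$, take $g:=f\in \mathcal{L}(\mathbb{R}^M,\mathbb{R})$ and $y_0:= (1,0,\dots,0)$. Then
\[
\Phi_f(x) = \langle f, \pi_{\leq N} S(x)_{0,T}\rangle = g(z_T) = \Psi_{f_N,g,y_0}(x),
\]
so every element of $\mathcal{A}_K$ is also an element of $\mathcal{B}_K$.

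Finally, I would note that each $\Psi_{f,g,y_0}$ really does lie in $C(K)$: since the vector fields are linear (hence locally $1$-Lipschitz with linear growth), the Itô--Lyons map $C_1([0,T],\mathbb{R}^d)\to \mathbb{R}^M$, $x\mapsto y_T$, is continuous in $1$-variation by the classical stability estimate recalled at the start of Section~3.1, and composition with the linear (hence continuous) $g$ preserves continuity. Combining this with the inclusion above gives
\[
\mathcal{A}_K \subseteq \mathcal{B}_K \subseteq C(K),
\]
and density of $\mathcal{A}_K$ in $C(K)$ forces density of $\mathcal{B}_K$.

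The only genuinely delicate point is verifying that the linear CDE characterisation of the truncated signature (Exercise \ref{ex:truncated_sig_cde}) carries over without fuss, so that the formal identifications $T^N(\mathbb{R}^d)\cong \mathbb{R}^M$ and $\mathcal{L}(T^N(\mathbb{R}^d),\mathcal{L}(\mathbb{R}^d,T^N(\mathbb{R}^d)))\cong \mathcal{L}(\mathbb{R}^M,\mathcal{L}(\mathbb{R}^d,\mathbb{R}^M))$ are genuine, but this is a routine bookkeeping argument. I would not expect to need Stone--Weierstrass or any further density machinery here, since Exercise \ref{ex:univ_time} has already absorbed that work; the whole content of the theorem at this level of generality is the embedding of truncated-signature functionals inside the Linear Neural CDE class.
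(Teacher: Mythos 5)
Your proposal is correct and follows essentially the same route as the paper: both invoke Exercise \ref{ex:univ_time} for density of the truncated-signature functionals $\mathcal{A}_K$, then use Exercise \ref{ex:truncated_sig_cde} to exhibit each such functional as a linear Neural CDE readout, giving $\mathcal{A}_K\subseteq\mathcal{B}_K$. The only addition in your write-up is the explicit check that $\Psi_{f,g,y_0}\in C(K)$ via the Lipschitz stability of the Itô--Lyons map, which the paper leaves implicit.
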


 \begin{proof}
     For any $N \in \mathbb N \cup \{0\}$, and any $g \in T^N(\mathbb R^d)^{*}$, define the function $\Phi_g \in \mathbb{R}^{C_{1,0,t}}$ as $\Phi_g: x \mapsto \left(g, \pi_{\leq N} \circ S\left( x\right) \right)$. By \Cref{ex:truncated_sig_cde}, the truncated signature $\pi_{\leq N} \circ S(x)_{0,t}$ is the unique solution to the linear CDE
    \begin{equation*}
        dy_t = f(y_t)dx_t, \quad \text{started at } y_0 = (1,0,...,0) \in T^N(\mathbb R^d)
    \end{equation*}
    where the linear map $f \in \mathcal{L}(T^N(\mathbb R^d), \mathcal{L}\left( \mathbb R^d, T^N(\mathbb R^d) \right))$ is given by
    \begin{equation*}
        f \left( y \right)a = \pi_{\leq N}(\mathfrak{i}_{\leq N}(y) \cdot \mathfrak{i}_1(a)).
    \end{equation*}
    Thus, setting $M = \frac{d^{N+1} - 1}{d - 1}$ and identifying $T^N(\mathbb R^d)$ with $\mathbb R^M$, we have $\Phi_g = \Psi_{f,g,y_0}$.

    Let $K$ be a compact subset of $\mathbb{R}^{C_{1,0,t}}$ with respect to the $1$-variation topology. By \Cref{ex:univ_time}, $\mathcal{A}_{K}:=\{\Phi_g|_{K}: g\in T^N(\mathbb R^d)^{*}, N \in \mathbb N \cup \{0\}\}$ is a dense subset of $C(K)$ with the topology of uniform convergence. We have just proved that $\mathcal{A}_{K} \subset \mathcal{B}_{K}$, from which the result follows.
 \end{proof}

\subsubsection{Backpropagation}

As for Neural ODEs, there are two main options to backpropagate through a Neural CDE. The first option is simply to backpropagate through the internal operations of the differential equation solver. The second method involves deriving a backwards-in-time adjoint equation.


We generalise our notation for Neural CDEs by writing $y^{s, a, x}$ to mean the solution to
$$
y_t^{s, a, x} = a + \int_s^t f_\theta\left(y_u^{s, a, x}\right) d x_u .
$$
It is well-known that the flow map $\Phi_t(s,a,x) \equiv y_t^{s, a, x}$ is differentiable in $a$ (e.g. by \cite[Theorem 4.4]{friz2010multidimensional}) and the Jacobian
$$J_t^{s, a, x}:= \partial_a \Phi_t(s, a, x) \in \mathcal{L}\left(\mathbb{R}^e, \mathbb{R}^e\right)$$
itself satisfies the linear CDE 
\begin{equation}\label{CDE:Jacobian}
    d J_t^{s,a,x} = \sum_{i=1}^d \nabla f_\theta^i \left(y_t\right) \cdot J_t^{s,a,x} d x_t^{(i)}, \quad J_s^{s,a,x}=I_e
\end{equation}
where $J$ is regarded as an $e$ by $e$ real matrix and $\cdot$ denotes the matrix product and $I$ is the identity matrix in $\mathbb R^m$. 

To simplify notation, we remove the dependence on $a$ and $x$ from the Jacobian which simply denote by $J_t^s$ from now on. If $s=0$ we will simply write $J_t$. By letting 
$$z_t:=\int_0^t \sum_{i=1}^d \nabla f_\theta^i\left(y_u\right) d x_u^{(i)}\in \mathcal{L}\left(\mathbb{R}^e, \mathbb{R}^e\right)$$
we can re-write equation (\ref{CDE:Jacobian}) as a linear CDE driven by $z$
$$
d J_t=d z_t \cdot J_t, \quad J_0=I_e.
$$
The following lemma shows that the Jacobian admits an inverse and that this inverse also solves a linear CDE driven by $z$.
\begin{lemma}\label{lemma:inv_jac}
    For every $t \in[0, T]$ the matrix $J_t$ is non-singular. Furthermore, its inverse $M_t=J_t^{-1}$ is the solution to the equation
    \begin{equation}\label{CDE:inverse}
        d M_t=-M_t \cdot d z_t, \quad M_0=I_e.
    \end{equation}
\end{lemma}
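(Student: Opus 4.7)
The plan is to introduce $M_t$ as the unique solution of the linear matrix CDE (\ref{CDE:inverse}) directly, and then show that the product $M_t J_t$ is identically $I_e$ on $[0,T]$ via a classical integration-by-parts / Leibniz computation. Once $M_t J_t = I_e$ is established, the fact that $M_t$ and $J_t$ are square matrices of the same size will simultaneously give us that $J_t$ is non-singular and that its inverse coincides with the solution $M_t$ to the prescribed CDE.

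First I would justify the existence and uniqueness of a global solution to (\ref{CDE:inverse}). Since $x$ is of bounded variation and $\nabla f_\theta^i(y_u)$ is bounded on the compact set traced out by $y$, the $\mathcal{L}(\mathbb{R}^e,\mathbb{R}^e)$-valued path $z$ is itself of bounded variation. The equation (\ref{CDE:inverse}) is linear in $M$, so the classical Picard theorem for linear Young CDEs (e.g.\ the version of Theorem 1.28 of \cite{lyons2007differential} used earlier in Theorem \ref{sigCDE}) gives a unique global solution $M:[0,T]\to \mathcal{L}(\mathbb{R}^e,\mathbb{R}^e)$.

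The core of the argument is then to apply the standard product rule for Stieltjes integrals (both $J$ and $M$ being of bounded variation since they solve linear CDEs driven by the bounded-variation path $z$) to the matrix product $M_t J_t$:
\[
d(M_t J_t) \;=\; (dM_t)\cdot J_t + M_t\cdot (dJ_t) \;=\; -M_t\cdot dz_t\cdot J_t + M_t\cdot dz_t \cdot J_t \;=\; 0.
\]
Integrating from $0$ to $t$ and using $M_0 J_0 = I_e$ yields $M_t J_t = I_e$ for every $t \in [0,T]$. Since $M_t$ and $J_t$ are both $e \times e$ matrices, this one-sided identity is enough to conclude that each is invertible with $J_t^{-1} = M_t$ (and in particular $J_t M_t = I_e$ too), which is precisely the statement of the lemma.

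The only delicate point is really bookkeeping: because the integrands are matrix-valued and matrix multiplication is non-commutative, one has to preserve the left/right placement of $M_t$ and $J_t$ in the product rule, and the required cancellation $-M_t \, dz_t\, J_t + M_t\, dz_t\, J_t = 0$ works exactly because of the ordering in the two defining equations $dJ_t = dz_t\cdot J_t$ and $dM_t = -M_t\cdot dz_t$. No rough path machinery is needed at this stage since we are in the bounded-variation setting; the same strategy will, however, extend to the rough/RDE setting later (cf.\ Remark \ref{rk:full_rde}) by replacing the classical product rule with its rough-path counterpart.
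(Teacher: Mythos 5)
Your proof is correct and rests on the same central computation as the paper: differentiating a product of $J$ and $M$ via the Stieltjes product rule and observing that the non-commutativity works out exactly so the two terms cancel. The paper actually carries out \emph{two} such computations: first for $A_t := J_t \cdot M_t$, which yields $dA_t = [\,dz_t, A_t\,]$ and requires a short uniqueness argument to conclude $A_t \equiv I_e$, and then for $B_t := M_t \cdot J_t$, which cancels to zero outright. You observe, correctly, that only the second (and easier) computation is needed: once $M_t J_t = I_e$ is established, the elementary finite-dimensional fact that a one-sided inverse of a square matrix is automatically a two-sided inverse gives both the non-singularity of $J_t$ and $J_t^{-1} = M_t$. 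This is a genuine streamlining, as it sidesteps the adjoint-map equation and its accompanying uniqueness step entirely. The price you pay is that you lean on a purely algebraic property of square matrices rather than giving a self-contained CDE argument, but in this finite-dimensional setting that is a perfectly sound trade. Your preliminary remark that $z$ has bounded variation (hence $J$, $M$ are continuous BV paths and the classical Leibniz rule applies without cross-variation corrections) is the right justification and is left implicit in the paper.
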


\begin{proof}
    Define $M$ to be the unique solution to (\ref{CDE:inverse}) then $A_t:=J_t \cdot M_t$ satisifes
    \begin{equation}\label{eqn:ad}
        d A_t=d z_t \cdot J_t \cdot M_t-J_t \cdot M_t \cdot d z_t=\operatorname{ad}_{d z_t}\left(A_t\right), \quad A_0=I_e
    \end{equation}
    where the adjoint map is defined by $\operatorname{ad}_A: B \longmapsto[A, B]:=A \cdot B-B \cdot A$, for $A$ and $B$ in $\mathbb{R}^{e \times e}$. The linear equation (\ref{eqn:ad}) has a unique solution; it is easily seen that this solution is $A_t \equiv I_e$. By the same argument, $B_t:=M_t \cdot J_t$ solves
    $$
    d B_t=-M_t \cdot d z_t \cdot J_t+M_t \cdot d z_t \cdot J_t=0,  \quad B_0=I_e
    $$
    so that $I_e = J_t \cdot M_t=M_t \cdot J_t$ for all $t$ in $[0, T]$.
\end{proof}

We use Lemma \ref{lemma:inv_jac} to prove the following theorem, which provides a differential equation for the ajoint process of a Neural CDE. The proof is slightly different from the one  proposed in \cite[Appedix C.3]{kidger2022neural}.

\begin{theorem}[\textbf{Adjoint equation for a Neural CDE}]\label{thm:adj_CDE}
    Consider the Neural CDE (\ref{eqn:ncde}) driven by a continuous path $x:[0,T]\to\mathbb{R}^d$ of bounded variation and with continuously differentiable neural vector fields $f_\theta$. Let $L:\mathbb{R}^e \to \mathbb{R}$ be a continuously differentiable loss function. Then, the adjoint process $a_t := \partial_{y_t} L\left(y_T\right)$ satisfies the following backwards-in-time linear CDE
    \begin{equation}\label{eqn:adjoint_ncde}
        da_t = -\sum_{i=1}^d a_t^\top \nabla f_\theta^{i}(y_t)dx^i_t, \quad a_T = \nabla L(y_T).
    \end{equation}
\end{theorem}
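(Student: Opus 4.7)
The plan is to use the Jacobian machinery already set up in Lemma \ref{lemma:inv_jac} combined with a chain-rule expression for the adjoint. By definition of the flow map $\Phi_t(s,a,x) = y_t^{s,a,x}$ and the fact that $y_T = \Phi_T(t, y_t, x)$, the chain rule gives
\begin{equation*}
a_t^\top = \partial_{y_t} L(y_T) = \nabla L(y_T)^\top \cdot J_T^t,
\end{equation*}
where $J_T^t = \partial_{y_t} \Phi_T(t, y_t, x)$ is the Jacobian of the flow from time $t$ to time $T$ evaluated along the solution trajectory.

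Next I would exploit the flow (cocycle) property of solutions to the CDE: for any $0 \le t \le T$ one has $\Phi_T(0, y_0, x) = \Phi_T(t, \Phi_t(0, y_0, x), x)$, and hence by the chain rule $J_T^0 = J_T^t \cdot J_t^0$. Using the invertibility established in Lemma \ref{lemma:inv_jac}, this yields the key identity
\begin{equation*}
J_T^t = J_T^0 \cdot (J_t^0)^{-1} = J_T \cdot M_t,
\end{equation*}
where $J_t := J_t^0$ and $M_t = J_t^{-1}$. Substituting gives
\begin{equation*}
a_t^\top = \nabla L(y_T)^\top \cdot J_T \cdot M_t,
\end{equation*}
so that $a_t^\top = C \cdot M_t$ with $C := \nabla L(y_T)^\top \cdot J_T$ a constant (in $t$) row vector.

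The conclusion then follows by direct differentiation using Lemma \ref{lemma:inv_jac}. Since $dM_t = -M_t \cdot dz_t$, we obtain
\begin{equation*}
da_t^\top = C \cdot dM_t = -C \cdot M_t \cdot dz_t = -a_t^\top \cdot dz_t = -\sum_{i=1}^d a_t^\top \, \nabla f_\theta^i(y_t) \, dx_t^i.
\end{equation*}
The terminal condition $a_T = \nabla L(y_T)$ is immediate since $J_T^T = I_e$, equivalently $M_T J_T = I_e$ so that $C \cdot M_T = \nabla L(y_T)^\top$.

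The main subtlety, rather than an obstacle, is ensuring the chain-rule manipulations are legitimate in the pathwise (Riemann--Stieltjes/Young) sense. This is covered by the $C^1$-dependence of the flow on initial data guaranteed by the standing regularity assumptions ($f_\theta$ continuously differentiable, $x$ of bounded variation, $L \in C^1$), which is precisely what allows equation (\ref{CDE:Jacobian}) and Lemma \ref{lemma:inv_jac} to be invoked. Everything else is algebraic manipulation of identities already available in the excerpt.
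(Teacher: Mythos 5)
Your proof is correct and takes essentially the same route as the paper: both express $a_t^\top = \nabla L(y_T)^\top J_T^0 M_t$ via the chain rule and the cocycle property, then differentiate using the backward equation $dM_t = -M_t\,dz_t$ from Lemma \ref{lemma:inv_jac} (the paper phrases this last step by introducing the auxiliary process $Y_t = u^\top M_t$ with $u = J_T^\top \nabla L(y_T)$, but this is the same computation). Incidentally, your final display carries the correct minus sign, whereas the paper's last line drops it; your version matches the theorem statement.
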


\begin{proof}
    By definition of the adjoint process and by the chain rule we have
    $$
    a_t = \nabla L\left(y_T\right)^\top \cdot J^{t}_T=\nabla L\left(y_T\right)^\top \cdot J_T^{0} \cdot\left(J_t^{0}\right)^{-1}=\nabla L\left(y_T\right)^\top \cdot J_T^0 \cdot M_t^0 .
    $$
    As $M$ solves 
    (\ref{CDE:inverse}), for any $u \in \mathbb{R}^e$ the process $Y_t=u^\top M_t$ satisfies
    $$
    d Y_t=-Y_t d z_t, \quad  Y_0=u .
    $$
    Applying this with the choice $u=J_T^\top \nabla L\left(y_T\right)$ we obtain that $a^\top=Y$ so that 
    $$
    d a_t = -a^\top_t d z_t=\sum_{i=1}^d a_t^\top \nabla f_\theta^i\left(y_t\right) d x_t^{(i)} \quad \text{and} \quad a_T=\nabla L\left(y_T\right).
    $$
\end{proof}

For examples of python implementation we refer the reader to the documentation of the PyTorch library \texttt{torchcde}\footnote{\url{https://github.com/patrick-kidger/torchcde}} or the JAX library \texttt{diffrax}\footnote{\url{https://docs.kidger.site/diffrax/}}.






The adjoint equation (\ref{eqn:adjoint_ncde}) for Neural CDEs driven by bounded variation paths can be generalised to neural differential equations driven by (geometric) rough paths, dubbed Neural RDEs, which we discuss next.

\subsection{Neural RDEs}

Let $p \geq 1$, let $\mathbf x$ be a geometric $p$-rough path on $\mathbb R^d$ and let $f_\theta : \mathbb{R}^{e} \to \mathcal{L}(\mathbb{R}^d, \mathbb{R}^e)$ be a $\text{Lip}^\gamma$ neural network, with $\gamma > p$. A \emph{neural rough differential equation} (Neural RDE) driven by $\mathbf x$ is of the form
\begin{equation}\label{eqn:nrde}
 dy_t = f_\theta(y_t)d\mathbf x_t, \quad y_0 \in \mathbb R^e
\end{equation}
where the solution is in the sense of \Cref{def:RDE_sol}. Next, we derive the adjoint equation for the Neural RDE (\ref{eqn:nrde}). A variant of the proof for Neural SDEs appear in \cite[Appendix C.3]{kidger2022neural}.

\begin{theorem}[\textbf{Adjoint equation for a Neural RDE}]\label{thm:RDE_adjoint} Let $p \geq 2$, let $\mathbf x$ be a geometric $p$-rough path on $\mathbb R^d$, let $f_\theta \in \text{Lip}^\gamma(\mathbb{R}^{e}, \mathcal{L}(\mathbb{R}^d, \mathbb{R}^e))$ be a neural network, with $\gamma > p$, and let $L:\mathbb{R}^e \to \mathbb{R}$ denote a continuously differentiable  loss function. Then the adjoint process $a_t := \partial_{y_t}L(y_T)$ of the Neural RDE (\ref{eqn:nrde}) coincides with the solution of the following backwards-in-time linear RDE
\begin{equation}\label{eqn:adj_rough}
    da_t = - a_t^\top \nabla f_\theta(y_t) d \mathbf{x}_t, \quad a_T = \nabla L(y_T).
\end{equation}
\end{theorem}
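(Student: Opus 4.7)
The strategy is to lift the Neural CDE adjoint calculation (Theorem \ref{thm:adj_CDE}) to the rough path setting by approximation, leveraging the Universal Limit Theorem (Theorem \ref{thm:ULT}). The key observation is that the CDE proof used only two structural facts: the flow has a Jacobian which satisfies a linear equation driven by the same signal, and this Jacobian is invertible with inverse satisfying a natural dual linear equation. Both facts have rough path analogues, and the bridge between them is supplied by the continuity of the Itô--Lyons map with respect to the driving rough path, the initial condition, and the vector fields.

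First, I would establish that the Jacobian $J_t := \partial_{y_0}\Phi_t(0, y_0, \mathbf{x})$ exists, is continuous, and coincides with the (unique) solution of the linear RDE
\begin{equation*}
    dJ_t = \nabla f_\theta(y_t) \cdot J_t \, d\mathbf{x}_t, \quad J_0 = I_e.
\end{equation*}
Choose a sequence of bounded-variation paths $x^N$ with $\pi_{\leq \lfloor p \rfloor} \circ S(x^N) \to \mathbf{x}$ in $p$-variation, and consider the augmented CDE for the pair $(y^N, J^N)$ driven by $x^N$. Since $f_\theta \in \mathrm{Lip}^\gamma$ with $\gamma > p \geq 2$, the augmented vector fields are at least $\mathrm{Lip}^{\gamma-1}$ with $\gamma - 1 > p - 1$, so Theorem \ref{thm:ULT} applied to the joint system yields uniform convergence of $(y^N, J^N)$ to the full RDE solution $(y, J)$, and classical results for linear RDEs (see e.g.\ \cite[Chapter 10]{friz2010multidimensional}) identify $J$ with the flow derivative.

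Next, I would transfer Lemma \ref{lemma:inv_jac} verbatim: defining $z_t := \int_0^t \nabla f_\theta(y_u)\, d\mathbf{x}_u$ as a rough integral, the linear RDE for $J$ is $dJ_t = dz_t \cdot J_t$, and the linear RDE $dM_t = -M_t \cdot dz_t$ with $M_0 = I_e$ admits a unique solution. Writing down the RDEs satisfied by $J \cdot M$ and $M \cdot J$ and invoking uniqueness of linear RDEs shows $M_t = J_t^{-1}$. Then, as in the CDE proof, setting $a_t^\top := \nabla L(y_T)^\top \cdot J_T \cdot M_t$ and using $d M_t = - M_t \cdot dz_t$ immediately gives
\begin{equation*}
    da_t^\top = -a_t^\top \, dz_t = -a_t^\top \, \nabla f_\theta(y_t)\, d\mathbf{x}_t,
\end{equation*}
with terminal condition $a_T = \nabla L(y_T)$. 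The chain-rule identity $a_t = \partial_{y_t} L(y_T) = (\partial_{y_t} y_T)^\top \nabla L(y_T) = J_T^t{}^\top \nabla L(y_T)$ together with $J_T^t = J_T \cdot J_t^{-1} = J_T \cdot M_t$ confirms that this $a$ is the desired adjoint process.

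The principal technical obstacle is ensuring the flow differentiability and the linear RDE calculus genuinely apply in the geometric $p$-rough path setting under the regularity assumption $f_\theta \in \mathrm{Lip}^\gamma$ with $\gamma > p$. This requires carefully tracking that the augmented vector fields arising in the Jacobian and its inverse equations retain enough regularity (at least $\mathrm{Lip}^{\gamma-1}$) for Theorem \ref{thm:ULT} to apply, and that the linear growth of these equations precludes explosion. These ingredients are standard in the rough path literature but represent the substantive work; once they are in place, the adjoint identity falls out by the same algebraic manipulation as in the bounded-variation case.
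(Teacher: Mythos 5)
Your proposal uses the same core ingredients as the paper — approximate by bounded-variation paths, invoke the Universal Limit Theorem (Theorem \ref{thm:ULT}), and exploit the Jacobian/inverse-Jacobian structure from the CDE proof — but it is organized so that the algebra takes place at the rough-path level rather than at the bounded-variation level. The paper never differentiates, multiplies, or inverts rough-path objects directly: it proves uniform convergence of the approximate adjoints $a^N$ (seen as solutions of \emph{linear} CDEs driven by the auxiliary paths $z^N$, whose truncated signatures are shown to converge to a geometric rough path $\mathbf{z}$), then separately proves uniform convergence of $J^{s,N}$ and $M^{s,N}=(J^{s,N})^{-1}$, and finally identifies the limit of $a^N_t = \nabla L(y^N_T)^\top J^{0,N}_T M^{0,N}_t$ with $\partial_{y_t}L(y_T)$ by continuity of $L$ and its derivative. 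Your version instead proposes to establish the rough Jacobian RDE and its inverse and then "write down the RDEs satisfied by $J\cdot M$ and $M\cdot J$", which implicitly requires a Leibniz/product rule for geometric rough paths — a true fact for geometric rough paths, but one the notes have not developed, and precisely what the paper's limit-passing is designed to avoid having to justify. There is also a regularity-bookkeeping slip: to apply Theorem \ref{thm:ULT} directly to the augmented $(y,J)$ system with $\mathrm{Lip}^{\gamma-1}$ vector fields one needs $\gamma - 1 > p$, not your stated $\gamma - 1 > p - 1$; the paper gets away with $\gamma > p$ because after constructing the rough path $\mathbf{z}$ it only ever applies the Universal Limit Theorem to RDEs whose vector fields (in $a$, $J$, or $M$) are genuinely linear, which is the other alternative the theorem allows. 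So your outline captures the right idea, but the paper's decision to defer all the calculus to the bounded-variation approximations and then pass to the limit is what makes the argument close under the stated hypotheses.
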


\begin{proof}
Let $\{y^N\}_{N \geq 1}$ be the sequence of solutions to the following CDEs
\begin{equation*}
    dy_t^N = f_\theta(y_t^N)d x^N_t, \quad y_0^N = y_0 \in \mathbb{R}^n
\end{equation*}
where $(x^N)$ is a sequence of continuous bounded variation paths such that the sequence $(\pi_{\leq \lfloor p \rfloor} \circ S(x^N))$ converges to the rough path $\mathbf{x}$ in $p$-variation. Then, by \Cref{thm:ULT}, we have that the corresponding sequence of CDE solutions $\{y^N\}_{N\geq1}$ converges in $p$-variation to the solution $y$ of the RDE (\ref{eqn:nrde}).

By \Cref{thm:adj_CDE}, each adjoint process $a_t^N:=\partial_{y^N_t}L(y^N_T)$ satisfies the backwards-in-time linear CDE
\begin{equation*}
    da_t^N = - (a^N)^\top_t \nabla f_\theta(y_t^N) dx^N_t = -\sum_{i=1}^d (a^N)^\top_t \nabla f^i_\theta(y_t^N) d(x^N)^{(i)}_t, \quad a_T^N = \nabla L(y^N_T).
\end{equation*}
As in the proof of \Cref{thm:adj_CDE}, by letting 
$$z^N_t:=\int_0^t \sum_{i=1}^d \nabla f_\theta^i\left(y^N_s\right) d (x^N)_s^{(i)} \in \mathcal{L}\left(\mathbb{R}^e, \mathbb{R}^e\right)$$
we can rewrite the adjoint equation for $a^N$ as follows
\begin{equation}\label{eqn:m4}
    da^N_t = -(a^N)^\top_t dz_t^N.
\end{equation}
Note that since $f_\theta$ is bounded, the path $t \mapsto z^N_t$ is of bounded variation.


Consider the solutions $(z^N, \tilde z^N)$ of the following CDEs driven by $(x^N, y^N)$
\begin{align*}
    d z^N_t &= \sum_{i=1}^d \nabla f_\theta^i\left(\tilde z^N_t\right) d (x^N)_t^{(i)}\\
    d \tilde z^N_t &= dy^N_t
\end{align*}

Since $f_\theta \in \text{Lip}^\gamma(\mathbb{R}^{e}, \mathcal{L}(\mathbb{R}^d, \mathbb{R}^e))$ it follows that  $\nabla f_\theta \in \text{Lip}^{\gamma-1}(\mathbb{R}^{e}, \mathcal{L}(\mathbb{R}^d, \mathcal{L}(\mathbb{R}^e, \mathbb{R}^e)))$. Thus, by \Cref{thm:ULT} and \Cref{rk:full_rde}, the sequence $(z^N)$ is such that the sequence $(\pi_{\leq \lfloor p \rfloor} \circ S(z^N))$ converges in $p$-variation to a geometric rough path $\mathbf{z}$. 

Therefore, by \Cref{thm:ULT}, there exists $y \in C^{p\text{-var}}([0,T],\mathbb R^e)$ such that the sequence $(a^N)$ converges uniformly to the solution of the linear RDE run backwards-in-time
\begin{equation*}
    da_t = - (a)^\top_t \nabla f_\theta(y_t)d\mathbf x_t, \quad a_T = \nabla L(y_T).
\end{equation*}
It remains to show that $a_t = \partial_{y_t}L(y_T)$.

Recall that by \cite[Theorem 4.4]{friz2010multidimensional} the Jacobians $J_t^{s, y_s^N, x, N} = J_t^{s,N}$ satisfy the CDEs
\begin{equation*}
    d J_t^{s,N} = \nabla f_\theta\left(y_t^N\right) \cdot J_t^{s,N} d x_t^N, \quad J_s^N = I_e.
\end{equation*}
Considering the augmented path $(J^{s,N}, y^N)$, we can invoke again \Cref{thm:ULT} to show that the sequence $(J^{s,N})$  converges uniformly to the solution in $C^{p\text{-var}}([0,T], \mathcal{L}(\mathbb R^e,\mathbb R^e))$ of the following linear RDE
\begin{equation*}
    d J^s_t =  \nabla f_\theta\left(y_t\right) \cdot J_t^s d \mathbf x_t, \quad J_s^s = I_e.
\end{equation*}
Similarly, by \Cref{lemma:inv_jac}, the inverses $M^{s,N} := (J^{s,N})^{-1}$ exist and satisfy the CDEs
\begin{equation*}
    d M_t^{s,N}=-M_t^{s,N} \cdot d z^N_t, \quad M_s^{s,N}=I_e.
\end{equation*}
Therefore, by \Cref{thm:ULT}, there exists $M^s \in C^{p\text{-var}}([0,T], \mathcal{L}(\mathbb R^e,\mathbb R^e))$ such that the sequence $(M^{s,N})$ converges uniformly to the solution of the linear RDE 
\begin{equation*}
    dM^s_t = - M^s_t \cdot d\mathbf z_t, \quad M_s^s = I_e.
\end{equation*}
Because $L$ is continuously differentiable, we have that the sequence
$$
a_t^N = \nabla L\left(y^N_T\right)^\top \cdot J^{t,N}_T=\nabla L\left(y_T^N\right)^\top \cdot J_T^{0,N} \cdot\left(J_t^{0,N}\right)^{-1}=\nabla L\left(y_T^N\right)^\top \cdot J_T^{0,N} \cdot M^{0,N}_t
$$
converges uniformly to 
$$\nabla L\left(y_T\right)^\top \cdot J^0_T \cdot M^0_t = \partial_{y_t}L(y_T)$$ 
as $N \to \infty$, which concludes the proof.
\end{proof}

\subsection{Neural SDEs as generative models for time series}\label{sec:Neural_SDEs}

A Neural SDE is a model of the form
\begin{equation}\label{eqn:Neural_SDE}
    dy_t = \mu_\theta(y_t)dt + \sigma_\theta(y_t)\circ dW_t, \quad y_0 \in \mathbb R^e
\end{equation}
where
\begin{equation*}
    \mu_\theta : \mathbb{R}^e\to \mathbb{R}^e \quad \text{and} \quad \sigma_\theta : \mathbb{R}^e \to \mathcal{L}(\mathbb{R}^d, \mathbb{R}^e)
\end{equation*}
are $\text{Lip}^1$ and $\text{Lip}^{2+\epsilon}$ neural networks respectively, and where $W$ is a $d$-dimensional Brownian motion. We note that the initial condition $y_0$ can be either deterministic or random. In the latter case, to ensure existence and uniqueness of solutions we require in addition that $\mathbb{E}[\|y_0\|^2]<\infty$. As for Neural CDEs, a Neural SDE as defined in the literature is in fact composed of two additional parametric map: an initial "lifting" neural network and a final (linear) readout map providing the output of the model. For the sake of simplicity we omit these add-ins in our presentation.

Informally, given a target distribution $y^{\text{true}}$ on pathspace, the Neural SDE model (\ref{eqn:Neural_SDE}) can be trained so that $y$ should have approximately the same distribution as the target $y^{\text{true}}$, for some notion of closeness in the space of distributions on path space. So a Neural SDE can be seen as a natural generative model for time series data. We only summarise the main approaches that have been proposed in the literature and refer the interested reader to the relevant papers for further details about the implementations of these models.

In \cite{issa2024non} the authors propose to train a Neural SDE by minimizing the  objective
\begin{equation*}
     \min_\theta \tilde d_\phi\left(y,y^{\text{true}}\right)
\end{equation*}
where $\tilde d_\phi$ is a variant of the $\phi$-signature kernel MMD from Chapter 2, and where the dependence of $y$ on the parameters $\theta$ is implicit through (\ref{eqn:Neural_SDE}).

The option followed in \cite{kidger2021neural} is to train the Neural SDE as a Generative Adversarial Network (GAN) \cite{goodfellow2014generative} using the Wasserstein distance. The discriminator of a Wasserstein-GAN requires a class of test functions that maps a path to $\mathbb{R}^e$. The choice made in \cite{kidger2021neural} is to use Neural CDEs as such test functions. The objective function to train a Neural SDE-GAN is of the form 
\begin{equation*}
    \min_\theta\max_\phi \left(\mathbb{E}_y[F_\phi(y)] - \mathbb{E}_{y^{\text{true}}}[F_\phi(y^{\text{true}})]\right)
\end{equation*}
where $F_\phi$ is the solution map of a Neural CDE parameterised by some parameters $\phi$.

In \cite{arribas2020sigsdes} the vector fields of a Neural SDEs are replaced by a linear functional on the signature of the driving Brownian motion. The resulting Sig-SDE model is fit to financial data by matching option prices observed in the market.
\section{Exercises}

\begin{exercise}
    Let $x : [0,T] \to V$ be a smooth path. Show that $\mathbf X : \Delta_T \to T^1(V)$ defined by $\mathbf X_{s,t} = (1, x_t - x_s)$ is a geometric $1$-rough path controlled by a control $\omega : \Delta_T \to \mathbb R_+$ defined by $\omega(s,t) = |t-s|$.
\end{exercise}

\begin{exercise}
    Show that the map $\mathbf X : \Delta_T \to T^2(V)$ defined as
    \begin{equation*}
        \mathbf X_{s,t} = \left(1, \begin{pmatrix}
            0 \\ 0
        \end{pmatrix}, (t-s)\begin{pmatrix}
            0 & -1 \\
            1 & 0
        \end{pmatrix}\right)
    \end{equation*}
    is a geometric $2$-rough path controlled by $\omega : \Delta_T \to \mathbb R_+$ defined by $\omega(s,t) = |t-s|$.
\end{exercise}

\begin{exercise}
    Consider the two sequence of paths of bounded variation
    \begin{align*}
        x^n_t = \frac{1}{n}\left(\cos(2\pi n^2 t), \sin(2\pi n^2 t) \right), \quad y^n_t = \frac{1}{n}\left(\cos(2\pi n^2 t), -\sin(2\pi n^2 t) \right)
    \end{align*}
    Prove that the path $(x^n, y^n) \in \Omega_1(\mathbb R^2 \oplus \mathbb R^2)$ converges in $(p+\epsilon)$-variation to a rough path that is not the zero path.
\end{exercise}

\begin{exercise}\label{ex:Euler_estimate}
    Let $\gamma>1$ and let $f=\left(f_1, ..., f_d\right)$ be a collection of $\operatorname{Lip}^{\gamma-1}$ vector fields over $\mathbb R^e$. Let $x : [s,t] \to \mathbb R^d$ be a continuous paths of bounded variation. Show that there exists a constant $C=C(\gamma)$ such that,
    $$
    \left\|y_t -  \mathcal{E}^{\lfloor \gamma \rfloor}_{s,t}(y_s;f,x) \right\| \leq C\left(\|f\|_{\operatorname{Lip}^{\gamma-1}} \int_s^t\left|d x_r\right|\right)^\gamma.
    $$
    where $\mathcal{E}^{\lfloor \gamma \rfloor}_{s,t}(y_s;f,x)$ is the Euler scheme defined in equation (\ref{eqn:Euler_scheme}).
\end{exercise}


\begin{exercise}
    Let $\mu_\theta \in Lip^1(\mathbb{R}^{e+1},\mathbb{R}^e)$ and $\sigma_\theta \in Lip^{2 + \epsilon}(\mathbb{R}^{e+1}, \mathcal L(\mathbb{R}^d,\mathbb{R}^e))$ be two neural networks. Let $L:\mathbb{R}^e \to \mathbb{R}$ be a continuously differentiable  loss function. Show that the adjoint process $a_t := \partial_{y_t}L(y_T)$ of the Neural Stratonovich SDE
    \begin{equation*}
        dy_t = \mu_\theta(t,y_t)dt + \sum_{i=1}^d\sigma^i_\theta(t,y_t)\circ dW^i_t, \quad y_0 = a
    \end{equation*}
     coincides with the solution of the following backwards-in-time linear Stratonovich SDE
    \begin{equation*}
        da_t = - a_t^\top \nabla \mu_\theta(t,y_t) dt - \sum_{i=1}^d a_t^\top \nabla \sigma_\theta^i(t,y_t) \circ d W_t, \quad a_T = \nabla L(y_T).
    \end{equation*}
\end{exercise}

\begin{exercise}[Stochastic normalising flows]
Let $p \in [2,3)$, and let $\boldsymbol{x}$ be a geometric $p$-rough path on $\mathbb R^d$, and assume there exists a sequence $(x^n)$ of $C^1(\mathbb R^d)$  paths such that $(\pi_{\leq 2} \circ S(x^n))$ converges in $p$-variation to $\mathbf x$. Let $\gamma > p$ and $f_\theta \in Lip^{\gamma}(\mathbb R^e, \mathcal{L}(\mathbb R^d, \mathbb R^e))$ be a neural network. Consider the following Neural RDE
$$ \mathrm{d} z_t =f_\theta\left(z_t\right) \mathrm{d} \boldsymbol{x}_t, \quad z_0 \sim p_0$$
and the sequence of CDEs
$$\mathrm{d} z^n_t =f_\theta \left(z_t^n\right) \mathrm{d} x^n_t \quad z^n_0 = z_0$$
where $p_0$ is a density on $\mathbb{R}^d$ such that $\log p_0$ is continuous. Prove the following.

\begin{enumerate}
    \item The law of $z_t^n$ admits a differentiable density $p_t^n$, given by 
    $$\frac{d}{dt}\log p_t(z^n_t) = \nabla_{z^n_t} \cdot \left(f_\theta(z^n_t)\frac{d}{dt}x^n_t\right).$$
    \item The law $z_t$ admits a density $p_t$ satisfying for any $x \in \mathbb{R}^d$
    $$\sup _{t \in[0, T]}\left|\log p_t^n(x)-\log p_t(x)\right| \rightarrow 0, \quad \text{as } n \rightarrow \infty.$$
    \item The path $t \mapsto \log p_t\left(z_t\right)$ is the unique solution to the rough differential equation
    $$
    \mathrm{d} \log p_t\left(z_t\right)=-\nabla_{z_t} \cdot\left(f_\theta\left(z_t\right) \mathrm{d} \boldsymbol{x}_t\right).
    $$
    \item For any continuously differentiable loss function $L: \mathbb{R}^{d+1} \rightarrow \mathbb{R}$ 
    $$
    \partial_\theta L\left(z_t^n, \log p_t^n\left(z_t^n\right)\right) \rightarrow \nabla_\theta L\left(z_t, \log p_t\left(z_t\right)\right), \quad \text{as } n \rightarrow \infty.
    $$
\end{enumerate}

\end{exercise}


\bibliographystyle{plain}
\bibliography{book}

\begin{thebibliography}{10}

\bibitem{aronszajn1950theory}
Nachman Aronszajn.
\newblock Theory of reproducing kernels.
\newblock {\em Transactions of the American mathematical society}, 68(3):337--404, 1950.

\bibitem{arribas2020sigsdes}
Imanol~Perez Arribas, Cristopher Salvi, and Lukasz Szpruch.
\newblock Sig-sdes model for quantitative finance.
\newblock In {\em ACM International Conference on AI in Finance}, 2020.

\bibitem{bailleul2015flows}
Isma{\"e}l Bailleul.
\newblock Flows driven by rough paths.
\newblock {\em Revista matem{\'a}tica iberoamericana}, 31(3):901--934, 2015.

\bibitem{boedihardjo2016signature}
Horatio Boedihardjo, Xi~Geng, Terry Lyons, and Danyu Yang.
\newblock The signature of a rough path: uniqueness.
\newblock {\em Advances in Mathematics}, 293:720--737, 2016.

\bibitem{bogachev2007}
Vladimir~Igorevich Bogachev and Maria Aparecida~Soares Ruas.
\newblock {\em Measure theory}, volume~1.
\newblock Springer, 2007.

\bibitem{cass2024wiener}
Thomas Cass and Emilio Ferrucci.
\newblock On the wiener chaos expansion of the signature of a gaussian process.
\newblock {\em Probability Theory and Related Fields}, pages 1--39, 2024.

\bibitem{cass2021general}
Thomas Cass, Terry Lyons, and Xingcheng Xu.
\newblock Weighted signature kernels.
\newblock {\em Annals of Applied Probability 34(1A): 585-626}, 2024.

\bibitem{cass2023signature}
Thomas Cass, Remy Messadene, and William~F Turner.
\newblock Signature asymptotics, empirical processes, and optimal transport.
\newblock {\em Electronic Journal of Probability}, 28:1--19, 2023.

\bibitem{cass2023fubini}
Thomas Cass and Jeffrey Pei.
\newblock A fubini type theorem for rough integration.
\newblock {\em Revista Mathematica Iberoamericana}, 39(2), 2023.

\bibitem{cass2024free}
Thomas Cass and William~F Turner.
\newblock Free probability, path developments and signature kernels as universal scaling limits.
\newblock {\em arXiv preprint arXiv:2402.12311}, 2024.

\bibitem{cass2022topologies}
Thomas Cass and William~F Turner.
\newblock Topologies on unparameterised path space.
\newblock {\em Journal of Functional Analysis, 286(4)}, 2024.

\bibitem{chen1954iterated}
Kuo-Tsai Chen.
\newblock Iterated integrals and exponential homomorphisms.
\newblock {\em Proceedings of the London Mathematical Society}, 3(1):502--512, 1954.

\bibitem{chen1957integration}
Kuo-Tsai Chen.
\newblock Integration of paths, geometric invariants and a generalized baker-hausdorff formula.
\newblock {\em Annals of Mathematics}, 65(1):163--178, 1957.

\bibitem{chen1961formal}
Kuo-tsai Chen.
\newblock Formal differential equations.
\newblock {\em Annals of Mathematics}, pages 110--133, 1961.

\bibitem{chen2001collected}
Kuo-Tsai Chen and Philippe Tondeur.
\newblock {\em Collected Papers of KT Chen}.
\newblock Springer Science \& Business Media, 2001.

\bibitem{chen2018neural}
Ricky~TQ Chen, Yulia Rubanova, Jesse Bettencourt, and David~K Duvenaud.
\newblock Neural ordinary differential equations.
\newblock {\em Advances in neural information processing systems}, 31, 2018.

\bibitem{chevyrev2016characteristic}
Ilya Chevyrev, Terry Lyons, et~al.
\newblock Characteristic functions of measures on geometric rough paths.
\newblock {\em The Annals of Probability}, 44(6):4049--4082, 2016.

\bibitem{chevyrev2022signature}
Ilya Chevyrev and Harald Oberhauser.
\newblock Signature moments to characterize laws of stochastic processes.
\newblock {\em Journal of Machine Learning Research}, 23(176):1--42, 2022.

\bibitem{christmann2010universal}
Andreas Christmann and Ingo Steinwart.
\newblock Universal kernels on non-standard input spaces.
\newblock In {\em Advances in neural information processing systems}, pages 406--414, 2010.

\bibitem{cirone2023neural}
Nicola~Muca Cirone, Maud Lemercier, and Cristopher Salvi.
\newblock Neural signature kernels as infinite-width-depth-limits of controlled resnets.
\newblock 2023.

\bibitem{cochrane2021sk}
Thomas Cochrane, Peter Foster, Varun Chhabra, Maud Lemercier, Cristopher Salvi, and Terry Lyons.
\newblock Sk-tree: a systematic malware detection algorithm on streaming trees via the signature kernel.
\newblock {\em arXiv preprint arXiv:2102.07904}, 2021.

\bibitem{cuchiero2023global}
Christa Cuchiero, Philipp Schmocker, and Josef Teichmann.
\newblock Global universal approximation of functional input maps on weighted spaces.
\newblock {\em arXiv preprint arXiv:2306.03303}, 2023.

\bibitem{cuturi2007kernel}
Marco Cuturi, Jean-Philippe Vert, Oystein Birkenes, and Tomoko Matsui.
\newblock A kernel for time series based on global alignments.
\newblock In {\em 2007 IEEE International Conference on Acoustics, Speech and Signal Processing-ICASSP'07}, volume~2, pages II--413. IEEE, 2007.

\bibitem{davie2007differential}
Alexander~M Davie.
\newblock Differential equations driven by rough paths: an approach via discrete approximation.
\newblock {\em arXiv preprint arXiv:0710.0772}, 2007.

\bibitem{day1966finitediff}
J.~T. Day.
\newblock A runge-kutta method for the numerical solution of the goursat problem in hyperbolic partial differential equations.
\newblock {\em The Computer Journal}, 9:81--83, 1966.

\bibitem{diehl2019invariants}
Joscha Diehl and Jeremy Reizenstein.
\newblock Invariants of multidimensional time series based on their iterated-integral signature.
\newblock {\em Acta Applicandae Mathematicae}, 164(1):83--122, 2019.

\bibitem{Driver}
Bruce Driver.
\newblock Personal communication.

\bibitem{dudley2018real}
Richard~M Dudley.
\newblock {\em Real analysis and probability}.
\newblock CRC Press, 2018.

\bibitem{dyson1949radiation}
Freeman~J Dyson.
\newblock The radiation theories of tomonaga, schwinger, and feynman.
\newblock {\em Physical Review}, 75(3):486, 1949.

\bibitem{esig}
Terry~Lyons et~al.
\newblock Coropa computational rough paths (software library).
\newblock 2010.

\bibitem{fawcett2002problems}
Thomas Fawcett.
\newblock {\em Problems in stochastic analysis: Connections between rough paths and non-commutative harmonic analysis}.
\newblock PhD thesis, University of Oxford, 2002.

\bibitem{fermanian2023new}
Adeline Fermanian, Terry Lyons, James Morrill, and Cristopher Salvi.
\newblock New directions in the applications of rough path theory.
\newblock {\em IEEE BITS the Information Theory Magazine}, 2023.

\bibitem{flaxman2015machine}
Seth~R Flaxman.
\newblock {\em Machine learning in space and time}.
\newblock PhD thesis, Ph. D. thesis, Carnegie Mellon University, 2015.

\bibitem{fliess1983algebraic}
Michel Fliess, Moustanir Lamnabhi, and Fran{\c{c}}oise Lamnabhi-Lagarrigue.
\newblock An algebraic approach to nonlinear functional expansions.
\newblock {\em IEEE transactions on circuits and systems}, 30(8):554--570, 1983.

\bibitem{folland1999real}
Gerald~B Folland.
\newblock {\em Real analysis: modern techniques and their applications}, volume~40.
\newblock John Wiley \& Sons, 1999.

\bibitem{friz2020course}
Peter~K Friz and Martin Hairer.
\newblock {\em A course on rough paths}.
\newblock Springer, 2020.

\bibitem{friz2010multidimensional}
Peter~K Friz and Nicolas~B Victoir.
\newblock {\em Multidimensional stochastic processes as rough paths: theory and applications}, volume 120.
\newblock Cambridge University Press, 2010.

\bibitem{gaines1997variable}
Jessica~G Gaines and Terry~J Lyons.
\newblock Variable step size control in the numerical solution of stochastic differential equations.
\newblock {\em SIAM Journal on Applied Mathematics}, 57(5):1455--1484, 1997.

\bibitem{goodfellow2014generative}
Ian Goodfellow, Jean Pouget-Abadie, Mehdi Mirza, Bing Xu, David Warde-Farley, Sherjil Ozair, Aaron Courville, and Yoshua Bengio.
\newblock Generative adversarial nets.
\newblock {\em Advances in neural information processing systems}, 27, 2014.

\bibitem{goursat1916course}
Edouard Goursat.
\newblock {\em A Course in Mathematical Analysis: pt. 2. Differential equations.[c1917}, volume~2.
\newblock Dover Publications, 1916.

\bibitem{graham2013sparse}
Benjamin Graham.
\newblock Sparse arrays of signatures for online character recognition.
\newblock {\em arXiv preprint arXiv:1308.0371}, 2013.

\bibitem{gretton2012kernel}
Arthur Gretton, Karsten~M Borgwardt, Malte~J Rasch, Bernhard Sch{\"o}lkopf, and Alexander Smola.
\newblock A kernel two-sample test.
\newblock {\em The Journal of Machine Learning Research}, 13(1):723--773, 2012.

\bibitem{gretton2007kernel}
Arthur Gretton, Kenji Fukumizu, Choon Teo, Le~Song, Bernhard Sch{\"o}lkopf, and Alex Smola.
\newblock A kernel statistical test of independence.
\newblock {\em Advances in neural information processing systems}, 20, 2007.

\bibitem{gubinelli2004controlling}
Massimiliano Gubinelli.
\newblock Controlling rough paths.
\newblock {\em Journal of Functional Analysis}, 216(1):86--140, 2004.

\bibitem{gyurko2008numerical}
Lajos~Gergely Gyurk{\'o}.
\newblock Numerical methods for approximating solutions to rough differential equations.
\newblock {\em .}, 2008.

\bibitem{gyurko2010rough}
Lajos~Gergely Gyurk{\'o} and Terry Lyons.
\newblock Rough paths based numerical algorithms in computational finance.
\newblock {\em Contemporary Mathematics}, 515:17--46, 2010.

\bibitem{hairer2014theory}
Martin Hairer.
\newblock A theory of regularity structures.
\newblock {\em Inventiones mathematicae}, 198(2):269--504, 2014.

\bibitem{hambly2010uniqueness}
Ben Hambly and Terry Lyons.
\newblock Uniqueness for the signature of a path of bounded variation and the reduced path group.
\newblock {\em Annals of Mathematics}, pages 109--167, 2010.

\bibitem{hoglund2023neural}
Melker Hoglund, Emilio Ferrucci, Camilo Hernandez, Aitor~Muguruza Gonzalez, Cristopher Salvi, Leandro Sanchez-Betancourt, and Yufei Zhang.
\newblock A neural rde approach for continuous-time non-markovian stochastic control problems.
\newblock {\em arXiv preprint arXiv:2306.14258}, 2023.

\bibitem{horvath2023optimal}
Blanka Horvath, Maud Lemercier, Chong Liu, Terry Lyons, and Cristopher Salvi.
\newblock Optimal stopping via distribution regression: a higher rank signature approach.
\newblock {\em arXiv preprint arXiv:2304.01479}, 2023.

\bibitem{issa2024non}
Zacharia Issa, Blanka Horvath, Maud Lemercier, and Cristopher Salvi.
\newblock Non-adversarial training of neural sdes with signature kernel scores.
\newblock {\em Advances in Neural Information Processing Systems}, 36, 2024.

\bibitem{janssen2012order}
Arend Janssen.
\newblock {\em Order book models, signatures and numerical approximations of rough differential equations}.
\newblock PhD thesis, Oxford University, UK, 2012.

\bibitem{kawski1997noncommutative}
Matthias Kawski and H{\'e}ctor~J Sussmann.
\newblock Noncommutative power series and formal lie-algebraic techniques in nonlinear control theory.
\newblock {\em Operators, Systems and Linear Algebra: Three Decades of Algebraic Systems Theory}, pages 111--128, 1997.

\bibitem{kidger2022neural}
Patrick Kidger.
\newblock On neural differential equations.
\newblock {\em arXiv preprint arXiv:2202.02435}, 2022.

\bibitem{bonnier2019deep}
Patrick Kidger, Patric Bonnier, Imanol Perez~Arribas, Cristopher Salvi, and Terry Lyons.
\newblock Deep signature transforms.
\newblock In {\em Advances in Neural Information Processing Systems}, volume~32, 2019.

\bibitem{kidger2021neural}
Patrick Kidger, James Foster, Xuechen Li, Harald Oberhauser, and Terry Lyons.
\newblock Neural sdes as infinite-dimensional gans.
\newblock {\em arXiv preprint arXiv:2102.03657}, 2021.

\bibitem{signatory}
Patrick Kidger and Terry Lyons.
\newblock {Signatory: differentiable computations of the signature and logsignature transforms, on both CPU and GPU}.
\newblock {\em arXiv:2001.00706}, 2020.

\bibitem{kidger2020universal}
Patrick Kidger and Terry Lyons.
\newblock Universal approximation with deep narrow networks.
\newblock In {\em Conference on learning theory}, pages 2306--2327. PMLR, 2020.

\bibitem{kidger2020neural}
Patrick Kidger, James Morrill, James Foster, and Terry Lyons.
\newblock Neural controlled differential equations for irregular time series.
\newblock {\em arXiv preprint arXiv:2005.08926}, 2020.

\bibitem{kiraly2019kernels}
Franz~J Kir{\'a}ly and Harald Oberhauser.
\newblock Kernels for sequentially ordered data.
\newblock {\em Journal of Machine Learning Research}, 2019.

\bibitem{lees1960goursat}
Milton Lees.
\newblock The goursat problem.
\newblock {\em Journal of the Society for Industrial and Applied Mathematics}, 8(3):518--530, 1960.

\bibitem{lemercier2021siggpde}
Maud Lemercier, Cristopher Salvi, Thomas Cass, Edwin~V Bonilla, Theodoros Damoulas, and Terry Lyons.
\newblock Siggpde: Scaling sparse gaussian processes on sequential data.
\newblock In {\em International Conference on Machine Learning}. PMLR, 2021.

\bibitem{lemercier2021distribution}
Maud Lemercier, Cristopher Salvi, Theodoros Damoulas, Edwin Bonilla, and Terry Lyons.
\newblock Distribution regression for sequential data.
\newblock In {\em International Conference on Artificial Intelligence and Statistics}, pages 3754--3762. PMLR, 2021.

\bibitem{leslie2001spectrum}
Christina Leslie, Eleazar Eskin, and William~Stafford Noble.
\newblock The spectrum kernel: A string kernel for svm protein classification.
\newblock In {\em Biocomputing 2002}, pages 564--575. World Scientific, 2001.

\bibitem{lodhi2002text}
Huma Lodhi, Craig Saunders, John Shawe-Taylor, Nello Cristianini, and Chris Watkins.
\newblock Text classification using string kernels.
\newblock {\em Journal of machine learning research}, 2(Feb):419--444, 2002.

\bibitem{lyons1998differential}
Terry~J Lyons.
\newblock Differential equations driven by rough signals.
\newblock {\em Revista Matem{\'a}tica Iberoamericana}, 14(2):215--310, 1998.

\bibitem{lyons2007differential}
Terry~J Lyons, Michael Caruana, and Thierry L{\'e}vy.
\newblock {\em Differential equations driven by rough paths}.
\newblock Springer, 2007.

\bibitem{lyons2018inverting}
Terry~J Lyons and Weijun Xu.
\newblock Inverting the signature of a path.
\newblock {\em Journal of the European Mathematical Society}, 20(7):1655--1687, 2018.

\bibitem{magnus1954exponential}
Wilhelm Magnus.
\newblock On the exponential solution of differential equations for a linear operator.
\newblock {\em Communications on pure and applied mathematics}, 7(4):649--673, 1954.

\bibitem{manten2024signature}
Georg Manten, Cecilia Casolo, Emilio Ferrucci, S{\o}ren~Wengel Mogensen, Cristopher Salvi, and Niki Kilbertus.
\newblock Signature kernel conditional independence tests in causal discovery for stochastic processes.
\newblock {\em arXiv preprint arXiv:2402.18477}, 2024.

\bibitem{morrill2021neural}
James Morrill, Cristopher Salvi, Patrick Kidger, and James Foster.
\newblock Neural rough differential equations for long time series.
\newblock In {\em International Conference on Machine Learning}, pages 7829--7838. PMLR, 2021.

\bibitem{muandet2012learning}
Krikamol Muandet, Kenji Fukumizu, Francesco Dinuzzo, and Bernhard Sch{\"o}lkopf.
\newblock Learning from distributions via support measure machines.
\newblock In {\em Advances in neural information processing systems}, pages 10--18, 2012.

\bibitem{noble2006support}
William~S Noble.
\newblock What is a support vector machine?
\newblock {\em Nature biotechnology}, 24(12):1565--1567, 2006.

\bibitem{pannier2024path}
Alexandre Pannier and Cristopher Salvi.
\newblock A path-dependent pde solver based on signature kernels.
\newblock {\em arXiv preprint arXiv:2403.11738}, 2024.

\bibitem{pinkus1999approximation}
Allan Pinkus.
\newblock Approximation theory of the mlp model in neural networks.
\newblock {\em Acta numerica}, 8:143--195, 1999.

\bibitem{redmann2022runge}
Martin Redmann and Sebastian Riedel.
\newblock Runge-kutta methods for rough differential equations.
\newblock {\em Journal of Stochastic Analysis}, 3(4):6, 2022.

\bibitem{reichl1999modern}
Linda~E Reichl.
\newblock A modern course in statistical physics, 1999.

\bibitem{reizenstein2017calculation}
Jeremy Reizenstein.
\newblock Calculation of iterated-integral signatures and log signatures.
\newblock {\em arXiv preprint arXiv:1712.02757}, 2017.

\bibitem{reizenstein2018iisignature}
Jeremy Reizenstein and Benjamin Graham.
\newblock The iisignature library: efficient calculation of iterated-integral signatures and log signatures.
\newblock {\em arXiv preprint arXiv:1802.08252}, 2018.

\bibitem{reutenauer2003free}
Christophe Reutenauer.
\newblock Free {L}ie algebras.
\newblock In {\em Handbook of algebra}, volume~3, pages 887--903. Elsevier, 2003.

\bibitem{salvi2021signature}
Cristopher Salvi, Thomas Cass, James Foster, Terry Lyons, and Weixin Yang.
\newblock The signature kernel is the solution of a goursat pde.
\newblock {\em SIAM Journal on Mathematics of Data Science}, 3(3):873--899, 2021.

\bibitem{salvi2023structure}
Cristopher Salvi, Joscha Diehl, Terry Lyons, Rosa Preiss, and Jeremy Reizenstein.
\newblock A structure theorem for streamed information.
\newblock {\em Journal of Algebra}, 634:911--938, 2023.

\bibitem{salvi2022neural}
Cristopher Salvi, Maud Lemercier, and Andris Gerasimovics.
\newblock Neural stochastic pdes: Resolution-invariant learning of continuous spatiotemporal dynamics.
\newblock {\em Advances in Neural Information Processing Systems}, 35:1333--1344, 2022.

\bibitem{salvi2021higher}
Cristopher Salvi, Maud Lemercier, Chong Liu, Blanka Hovarth, Theodoros Damoulas, and Terry Lyons.
\newblock Higher order kernel mean embeddings to capture filtrations of stochastic processes.
\newblock {\em arXiv preprint arXiv:2109.03582}, 2021.

\bibitem{scholkopf2001generalized}
Bernhard Sch{\"o}lkopf, Ralf Herbrich, and Alex~J Smola.
\newblock A generalized representer theorem.
\newblock In {\em Computational Learning Theory: 14th Annual Conference on Computational Learning Theory, COLT 2001 and 5th European Conference on Computational Learning Theory, EuroCOLT 2001 Amsterdam, The Netherlands, July 16--19, 2001 Proceedings 14}, pages 416--426. Springer, 2001.

\bibitem{scholkopf2004kernel}
Bernhard Sch{\"o}lkopf, Koji Tsuda, Jean-Philippe Vert, et~al.
\newblock {\em Kernel methods in computational biology}.
\newblock MIT press, 2004.

\bibitem{simon2020metrizing}
Carl-Johann Simon-Gabriel, Alessandro Barp, Bernhard Sch{\"o}lkopf, and Lester Mackey.
\newblock Metrizing weak convergence with maximum mean discrepancies.
\newblock {\em arXiv preprint arXiv:2006.09268}, 2020.

\bibitem{simon2018kernel}
Carl-Johann Simon-Gabriel and Bernhard Sch{\"o}lkopf.
\newblock Kernel distribution embeddings: Universal kernels, characteristic kernels and kernel metrics on distributions.
\newblock {\em The Journal of Machine Learning Research}, 19(1):1708--1736, 2018.

\bibitem{smola2007hilbert}
Alex Smola, Arthur Gretton, Le~Song, and Bernhard Sch{\"o}lkopf.
\newblock A hilbert space embedding for distributions.
\newblock In {\em International Conference on Algorithmic Learning Theory}, pages 13--31. Springer, 2007.

\bibitem{sriperumbudur2016optimal}
Bharath Sriperumbudur.
\newblock On the optimal estimation of probability measures in weak and strong topologies.
\newblock {\em Bernoulli}, 22(3):1839--1893, 2016.

\bibitem{sriperumbudur2010hilbert}
Bharath~K Sriperumbudur, Arthur Gretton, Kenji Fukumizu, Bernhard Sch{\"o}lkopf, and Gert~RG Lanckriet.
\newblock Hilbert space embeddings and metrics on probability measures.
\newblock {\em The Journal of Machine Learning Research}, 11:1517--1561, 2010.

\bibitem{suli2003introduction}
Endre S{\"u}li and David~F Mayers.
\newblock {\em An introduction to numerical analysis}.
\newblock Cambridge university press, 2003.

\bibitem{szabo2016learning}
Zolt{\'a}n Szab{\'o}, Bharath~K Sriperumbudur, Barnab{\'a}s P{\'o}czos, and Arthur Gretton.
\newblock Learning theory for distribution regression.
\newblock {\em The Journal of Machine Learning Research}, 17(1):5272--5311, 2016.

\bibitem{toth2020bayesian}
Csaba Toth and Harald Oberhauser.
\newblock Bayesian learning from sequential data using gaussian processes with signature covariances.
\newblock In {\em International Conference on Machine Learning}, pages 9548--9560. PMLR, 2020.

\bibitem{walkden2014ergodic}
Charles Walkden.
\newblock Ergodic theory.
\newblock {\em Lecture Notes University of Manchester}, 2014.

\bibitem{wazwaz1993finitediff}
A.~M. Wazwaz.
\newblock On the numerical solution for the goursat problem.
\newblock {\em Applied Mathematics and Computation}, 59:89--95, 1993.

\bibitem{zhang2012kernel}
Kun Zhang, Jonas Peters, Dominik Janzing, and Bernhard Sch{\"o}lkopf.
\newblock Kernel-based conditional independence test and application in causal discovery.
\newblock {\em arXiv preprint arXiv:1202.3775}, 2012.

\end{thebibliography}

\end{document}